\definecolor{lightgray}{gray}{0.92}
\definecolor{mydarkblue}{rgb}{0.0, 0.0, 0.7}
\newcommand\footnoteref[1]{\protected@xdef\@thefnmark{\ref{#1}}\@footnotemark}
\newcommand{\eqobs}{\sim_\textnormal{obs}}
\newcommand{\eqdiff}{\sim_\textnormal{diff}}
\newcommand{\eqcon}{\sim_\textnormal{con}}
\newcommand{\eqperm}{\sim_\textnormal{perm}}
\newcommand{\eqlin}{\sim_\textnormal{lin}}
\newcommand{\supp}{\textnormal{supp}}
\renewcommand{\subset}{\subseteq}
\def\1{\bm{1}}
\def\bfT{{\bm{s}}}
\def\Pa{{\bf Pa}}
\def\Ch{{\bf Ch}}
\def\bflambda{{\bm{\lambda}}}
\def\vmu{{\bm{\mu}}}
\def\vtheta{{\bm{\theta}}}
\def\vdelta{{\bm{\delta}}}
\def\vsigma{{\bm{\sigma}}}
\def\vgamma{{\bm{\gamma}}}
\def\va{{\bm{a}}}
\def\vb{{\bm{b}}}
\def\vc{{\bm{c}}}
\def\vd{{\bm{d}}}
\def\ve{{\bm{e}}}
\def\vf{{\bm{f}}}
\def\vg{{\bm{g}}}
\def\vh{{\bm{h}}}
\def\vn{{\bm{n}}}
\def\vs{{\bm{s}}}
\def\vv{{\bm{v}}}
\def\vw{{\bm{w}}}
\def\vx{{\bm{x}}}
\def\vy{{\bm{y}}}
\def\vz{{\bm{z}}}
\def\bff{{\bm{f}}}
\def\bfv{{\bm{v}}}
\def\mA{{\bm{A}}}
\def\mB{{\bm{B}}}
\def\mC{{\bm{C}}}
\def\mD{{\bm{D}}}
\def\mE{{\bm{E}}}
\def\mG{{\bm{G}}}
\def\mH{{\bm{H}}}
\def\mI{{\bm{I}}}
\def\mK{{\bm{K}}}
\def\mL{{\bm{L}}}
\def\mM{{\bm{M}}}
\def\mP{{\bm{P}}}
\def\mS{{\bm{S}}}
\def\mU{{\bm{U}}}
\def\mV{{\bm{V}}}
\def\mW{{\bm{W}}}
\def\mX{{\bm{X}}}
\DeclareMathAlphabet{\mathsfit}{\encodingdefault}{\sfdefault}{m}{sl}
\SetMathAlphabet{\mathsfit}{bold}{\encodingdefault}{\sfdefault}{bx}{n}
\def\gA{{\mathcal{A}}}
\def\gB{{\mathcal{B}}}
\def\gG{{\mathcal{G}}}
\def\gI{{\mathcal{I}}}
\def\gJ{{\mathcal{J}}}
\def\gL{{\mathcal{L}}}
\def\gN{{\mathcal{N}}}
\def\gO{{\mathcal{O}}}
\def\gX{{\mathcal{X}}}
\def\gZ{{\mathcal{Z}}}
\def\sE{{\mathbb{E}}}
\def\sI{{\mathbb{I}}}
\def\sN{{\mathbb{N}}}
\def\sP{{\mathbb{P}}}
\def\sR{{\mathbb{R}}}
\def\sV{{\mathbb{V}}}
\def\sZ{{\mathbb{Z}}}
\newcommand{\vecspan}{\mathrm{span}}
\newcommand{\seb}[1]{\textcolor{red}{[#1]}}
\let\c@example\relax
\let\c@theorem\relax
\let\thetheorem\relax
\newtheorem{example}{Example} 
\newtheorem{theorem}{Theorem}
\newtheorem{lemma}{Lemma} 
\newtheorem{proposition}{Proposition} 
\newtheorem{remark}{Remark}
\newtheorem{definition}{Definition}
\newtheorem{assumption}{Assumption}
\DeclareMathSymbol{\shortminus}{\mathbin}{AMSa}{"39}
\begin{document}


\title{Nonparametric Partial Disentanglement via Mechanism Sparsity: Sparse Actions, Interventions and Sparse Temporal Dependencies}

\author{
\name \hspace{-0.08cm}Sébastien Lachapelle \email s.lachapelle@samsung.com \\
\addr Samsung AI Lab, Montreal; Mila \& DIRO, Université de Montréal
\AND
\name Pau {Rodríguez López}\\
 \addr ServiceNow Research%
 \AND
 \name Yash Sharma \\
 \addr Tübingen AI Center, University of T\"ubingen %
 \AND
 \name Katie Everett \\
 \addr MIT CSAIL %
 \AND
 \name Rémi {Le Priol} \\
 \addr Mila \& DIRO, Université de Montréal
 \AND
 \name Alexandre Lacoste \\
 \addr ServiceNow Research%
 \AND
 \name Simon Lacoste-Julien \\
 \addr Samsung AI Lab, Montreal; Mila \& DIRO, Université de Montréal; Canada CIFAR AI Chair
}

\editor{Elias Bareinboim}

\maketitle

\begin{abstract}
This work introduces a novel principle for disentanglement we call \textit{mechanism sparsity regularization}, which applies when the latent factors of interest depend sparsely on observed auxiliary variables and/or past latent factors. We propose a representation learning method that induces disentanglement by \textit{simultaneously} learning the latent factors and the sparse causal graphical model that explains them. We develop a nonparametric identifiability theory that formalizes this principle and shows that the latent factors can be recovered by regularizing the learned causal graph to be sparse, \textcolor{black}{under some assumptions such as the absence of instantaneous causal effects between latent factors}. More precisely, we show identifiability up to a novel equivalence relation we call \textit{consistency}, which allows some latent factors to remain entangled (hence the term \textit{partial} disentanglement). To describe the structure of this entanglement, we introduce the notions of \textit{entanglement graphs} and \textit{graph preserving functions}. We further provide a graphical criterion which guarantees \textit{complete} disentanglement, that is identifiability up to permutations and element-wise transformations. We demonstrate the scope of the mechanism sparsity principle as well as the assumptions it relies on with several worked out examples. For instance, the framework shows how one can leverage multi-node interventions with unknown targets on the latent factors to disentangle them. We further draw connections between our nonparametric results and the now popular exponential family assumption. Lastly, we propose an estimation procedure based on variational autoencoders and a sparsity constraint and demonstrate it on various synthetic datasets. This work is meant to be a significantly extended version of \citet{lachapelle2022disentanglement}.
\end{abstract}

\begin{keywords}
  identifiable representation learning, causal representation learning, disentanglement, nonlinear independent component analysis, causal discovery
\end{keywords}

\newpage

\tableofcontents

\newpage

\section{Introduction}
\label{sec:introduction}
It has been proposed that causal reasoning will be central in moving modern machine learning algorithms beyond their current shortcomings, such as their lack of \textit{robustness}, \textit{transferability}, and \textit{interpretability}~\citep{Pearl2018TheSP, scholkopf2019causality,InducGoyal2021}. To achieve this, the field of \textit{causal representation learning}~(CRL)~\citep{scholkopf2021causal} 
aims to learn representations of high-dimensional observations, such as images, that are suitable to perform causal reasoning, such as predicting the effect of unseen interventions and answering counterfactual queries. 
A popular formalism to do so is to assume that the observations $\vx \in \sR^{d_z}$ are sampled from a generative model of the form $\vx = \vf(\vz)$ where $\vz \in \sR^{d_z}$ is a random vector of \textit{unobserved} and \textit{semantically meaningful} variables, also called latent factors, distributed according to an unknown \textit{causal graphical model} (CGM) \citep{pearl2009causality, peters2017elements} and transformed by a potentially highly nonlinear \textit{decoder}, or \textit{mixing function}, $\vf$~\citep{kocaoglu2018causalgan, causeOccam2021,lachapelle2022disentanglement,lippe2022icitris,brehmer2022weakly,ahuja2023interventional,buchholz2023learning,vonkugelgen2023nonparametric,zhang2023identifiability,jiang2023learning}. The goal is then to recover the latent factors $\vz_i$ up to permutation and rescaling, as well as the causal relationships that explain them. This is closely related to the problem of \textit{disentanglement}~\citep{bengio2013representation,Higgins2017betaVAELB,pmlr-v119-locatello20a}, which also aims to extract interpretable variables from high-dimensional observations, but without the emphasis on modeling their causal relations. Such problems are plagued by the difficult question of \textit{identifiability}, which is of crucial importance in the classical settings of \textit{causal discovery}~\citep{pearl2009causality,peters2017elements}, where $\vf$ is assumed to be the identity, and \textit{independent component analysis} (ICA)~\citep{ICAbook,hyvarinen2023nonlinear}, where the causal graph over latents is assumed empty. In the former, one can only identify the Markov equivalence class of the causal graph (assuming faithfulness), thus leaving some edge orientations ambiguous~\citep{pearl2009causality}, while in the latter, identifiability of the ground-truth latent factors is impossible when assuming a general nonlinear $\vf$,~\citep{HYVARINEN1999429}. The general CRL problem inherits the difficulties from both of these settings, which makes identifiability especially challenging. Various strategies to improve identifiability have been contributed to the literature, such as assuming access to \textit{interventional data} in which latent factors are targeted by interventions~\citep{lachapelle2022disentanglement,CITRIS,lippe2022icitris,ahuja2023interventional}, or access to an \textit{auxiliary variable} $\va$ that makes factors $\vz_i$ mutually independent when conditioned on~\citep{HyvarinenST19,iVAEkhemakhem20a,ice-beem20}. A valid auxiliary variable $\va$ must be observed and could correspond, for instance, to a time or environment index, an action in an interactive environment, or even a previous observation if the data have temporal structure. See Section~\ref{sec:lit_review} for a more extensive review of existing approaches to the identification of latent variables. 

The present paper introduces\footnote{\label{note1}A shorter version of this work originally appeared in \citet{lachapelle2022disentanglement}.} \textit{mechanism sparsity regularization} as a new principle for the identification of latent variables. We show that if (i) an auxiliary variable $\va$ is observed and affects the latent variables \textit{sparsely} and/or (ii) the latent variables $\vz$ present \textit{sparse} temporal dependencies, then the latent variables can be recovered by learning a graphical model for $\vz$ and $\va$ and regularizing it to be sparse (Theorems~\ref{thm:nonparam_dis_cont_a}, \ref{thm:nonparam_dis_disc_a}, \ref{thm:nonparam_dis_z} \& \ref{thm:expfam_dis_z}). 
More specifically, we consider models of the form $\vx^t = \vf(\vz^t) + \vn^t$, where $\vn^t$ is independent noise (Assumption~\ref{ass:diffeomorphism}) and the latent factors $\vz_i^t$ are mutually independent given the past factors and the auxiliary variables, i.e. ${p(\vz^t \mid \vz^{<t}, \va^{<t}) = \prod_{i=1}^{d_z} p(\vz_i^t \mid \vz^{<t}, \va^{<t})}$ (Assumption~\ref{ass:cond_indep}). 
Crucially, we leverage the assumption that these mechanisms are sparse in the sense that $p(\vz^t \mid \vz^{<t}, \va^{<t})$ factorizes according to a sparse causal graph $\mG$ (Assumption~\ref{ass:graph}).   
Interestingly, if $\va$ corresponds to an intervention index, our framework explains how interventions targeting unknown subsets of latent factors can identify them (Section~\ref{sec:unknown-target}). We emphasize that the settings where the data have no temporal dependencies or no auxiliary variable $\va$ are special cases of our framework. Our identifiability results are summarized in Table~\ref{tab:summary_results}. 

This work is meant to be an extended version of~\citet{lachapelle2022disentanglement} in which we generalize along two main axes: First, we relax the \textit{exponential family} assumption by providing a fully \textit{nonparametric} treatment. Secondly, our results drop the graphical criterion of \citet{lachapelle2022disentanglement} and, thus, allow for \textit{arbitrary} latent causal graphs. As a consequence of this relaxation, instead of guaranteeing identifiability up to permutation and element-wise transformation, we guarantee identifiability up to what we call \textit{$\va$-consistency} or \textit{$\vz$-consistency} (Definitions~\ref{def:a_consistent_models}~\&~\ref{def:z_consistent_models}), which might allow certain latent variables to remain entangled. Our results thus have the following flavor: Given a specific ground-truth causal graph $\mG$ over $\vz$ and $\va$, we describe precisely the structure of the entanglement between latent factors via what we call an \textit{entanglement graph} (Definition~\ref{def:entanglement_graphs}) and \textit{graph preserving functions} (Definition~\ref{def:g_preserving_map}). See Figure~\ref{fig:Ga} for examples. Interestingly, the stronger identifiability up to permutation and element-wise transformation arises as a simple consequence of our theory when the graphical criterion of \citet{lachapelle2022disentanglement} is assumed to hold. In addition to these two main axes of generalization, we provide extensive examples illustrating the scope of our framework, our assumptions, and the consequences of our results (see Table~\ref{tab:examples} for a list). When it comes to the learning algorithm, we replace the sparsity \textit{penalty} by a sparsity \textit{constraint}, which improves the learning dynamics and is more interpretable, which results in easier hyperparameter tuning.

\def\HorizNodeSpace{1cm}
\def\VertNodeSpace{0.3cm}
\def\GraphSpace{1.5cm}
\def\VertLabelSpace{1.2cm}
\def\Padding{2cm}
\usetikzlibrary{positioning, shapes.misc, fit,arrows.meta}
\usetikzlibrary{calc}
\usetikzlibrary{decorations.pathreplacing}
\begin{figure}[t]
    \centering
    \resizebox{\linewidth}{!}{
    \input{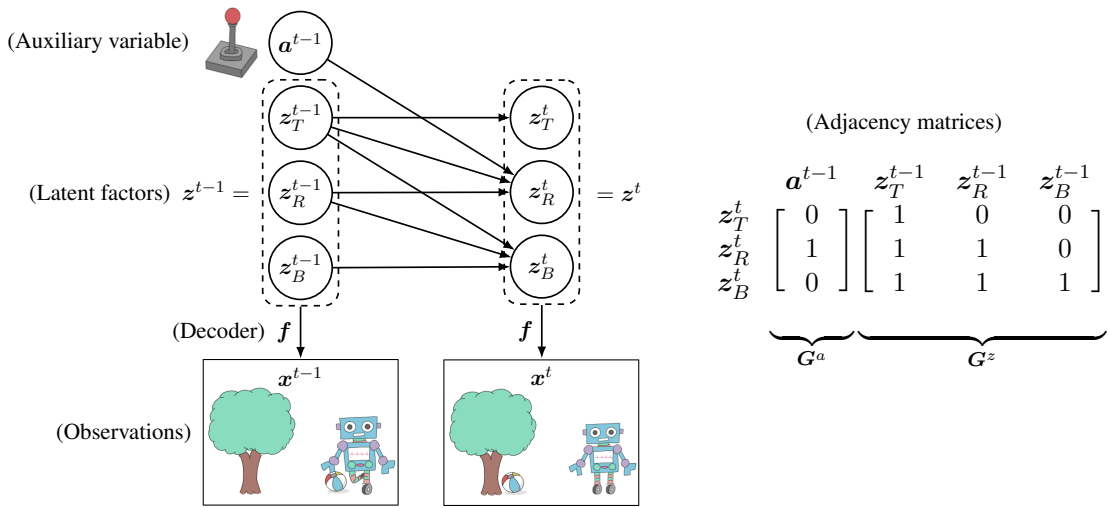}}
    \caption{A minimal motivating example. The latent factors $\vz_T^{t}$, $\vz_R^t$ and $\vz_B^t$ represent the $x$-positions of the tree, the robot and the ball at time $t$, respectively. Only the image of the scene $\vx^t$ and the action $\va^{t-1}$ are observed. See end of Section~\ref{sec:model} for details. \vspace{-2mm}
    }
    \label{fig:working_example}
\end{figure}


The hypothesis that \textit{high-level concepts are related by a sparse dependency graph} has been described and leveraged for out-of-distribution generalization by~\citet{consciousBengio} and \citet{goyal2021recurrent}, which were early sources of inspiration for this work. To the best of our knowledge, the theory developed in the present work and in its preliminary version~\citep{lachapelle2022disentanglement} is the first to show formally that this inductive bias can sometimes be enough to recover ground-truth latent factors.

Figure~\ref{fig:working_example} shows a minimal motivating example in which our approach could be used to extract the high-level variables (such as the $x$-position of the three objects) and learn their dynamics (how the objects move and affect each other) from a time series of images and agent actions, $(\vx^t, \va^t)$. Theorems~\ref{thm:nonparam_dis_cont_a}, \ref{thm:nonparam_dis_disc_a}, \ref{thm:nonparam_dis_z} \& \ref{thm:expfam_dis_z} show how the sparse dependencies between the objects and the action can be leveraged to identify the latent variables and the graph describing their dynamics. The learned CGM could be used subsequently to simulate interventions on semantic variables~\citep{pearl2009causality, peters2017elements}, such as changing the torque of the robot or the weight of the ball. 
Moreover, disentanglement could be useful to interpret what caused the actions of an agent~\citep{Pearl2018TheSP}. Following~\citet{lachapelle2022disentanglement}, empirical works demonstrated that disentangled representations with sparse mechanisms can adapt to unseen interventions faster in the context of single-cell biology~\citep{lopez2022learninglong} and synthetic video data \citep{lei2022variational}.

   \begin{table}[]
    \scriptsize
    \rowcolors{2}{white}{lightgray} 
    \begin{tabular}{cccccccc}
        \toprule
         & \begin{tabular}{@{}c@{}}\textbf{Parametric} \\ \textbf{assumption}\end{tabular} & \textbf{Continuous $\va$} &  \begin{tabular}{@{}c@{}}\textbf{Discrete $\va$} \\ \textbf{(interventions)}\end{tabular} & \begin{tabular}{@{}c@{}}\textbf{Temporal} \\ \textbf{dependencies} \end{tabular} & \begin{tabular}{@{}c@{}}\textbf{Sufficient} \\ \textbf{influence}\end{tabular} & \begin{tabular}{@{}c@{}}\textbf{Identifiable} \\ \textbf{up to}\end{tabular} & \textbf{Examples}  \\
        \midrule
        \textbf{Thm.~\ref{thm:nonparam_dis_cont_a}} & None & Required & -- & Optional & Ass.~\ref{ass:nonparam_suff_var_a_cont} & Def.~\ref{def:a_consistent_models} & \ref{ex:single_node_complete_dis}, \ref{ex:a_target_one_z}, \ref{ex:multi_target_a}, \ref{ex:a_target_one_z_cont_a}, \ref{ex:multi_target_a_cont_a}\\
        \textbf{Thm.~\ref{thm:nonparam_dis_disc_a}} & None & -- & Required & Optional & Ass.~\ref{ass:nonparam_suff_var_a} & Def.~\ref{def:a_consistent_models} & \ref{ex:single_node_complete_dis}, \ref{ex:a_target_one_z}, \ref{ex:multi_target_a}, \ref{ex:single_node_complete_dis_2}, \ref{ex:multinode_linear_gauss}, \ref{ex:group_interv}\\
        \textbf{Thm.~\ref{thm:nonparam_dis_z}} & None & Optional & Optional & Required & Ass.~\ref{ass:nonparam_suff_var_z} & Def.~\ref{def:z_consistent_models} & \ref{ex:diagonal_deps}, \ref{ex:lower_triangular_no_action}, \ref{ex:temporal_partial}, \ref{ex:lower_triangular_no_action_exp}\\
        \textbf{Thm.~\ref{thm:linear}} & Exp. fam. & Optional & Optional & Optional & Ass.~\ref{ass:suff_var_expfam} & Def.~\ref{def:linear_eq} & \ref{ex:linear}\\
        \textbf{Thm.~\ref{thm:expfam_dis_z}} & Exp. fam. & Optional & Optional & Required & Ass.~\ref{ass:temporal_suff_var_main} & Def.~\ref{def:z_consistent_models}, \ref{def:linear_eq} & \ref{ex:lower_triangular_no_action_exp_expfam}\\
        \bottomrule
    \end{tabular}
    \caption{Summary of our identifiability results.}
    \label{tab:summary_results}
\end{table}

\vspace{-0.1cm}
\paragraph{Summary of our contributions:}
\begin{enumerate}[topsep=-1ex,itemsep=-1ex,partopsep=1ex,parsep=1ex]
    \item We introduce\footnoteref{note1} a new principle for disentanglement based on \textit{mechanism sparsity regularization} motivated by rigorous and novel \textit{identifiability guarantees} (Theorems~\ref{thm:nonparam_dis_cont_a}, \ref{thm:nonparam_dis_disc_a}, \ref{thm:nonparam_dis_z} \& \ref{thm:expfam_dis_z}).
    \item We extend \citet{lachapelle2022disentanglement} by providing a \textit{nonparametric} treatment and allowing for \textit{arbitrary latent graphs}, \textcolor{black}{as long as it has no edges between latent factors from the same time step (no instantaneous causal links)}. Given a latent ground-truth graph, our theory predicts the structure of the entanglement between variables, which we formalize with \textit{entanglement graphs} (Definition~\ref{def:entanglement_graphs}), \textit{graph preserving maps} (Definition~\ref{def:g_preserving_map}) and novel \textit{equivalence relations}~(Definitions~\ref{def:a_consistent_models}~\&~\ref{def:z_consistent_models}).
    \item We provide several examples to illustrate the generality of our results and to get a better understanding of their various assumptions and consequences (summarized in Table~\ref{tab:examples}). For instance, we show how multi-node interventions with unknown-targets can yield disentanglement, both with and without temporal dependencies (Examples~\ref{ex:multinode_linear_gauss}~\&~\ref{ex:group_interv}).
    \item We introduce an evaluation metric denoted by $R_\text{con}$ that quantifies how close two representations are to being $\va$-consistent or $\vz$-consistent (Section~\ref{sec:evaluation}). 
    \item We implement a learning approach based on variational autoencoders (VAEs)~\citep{Kingma2014} that learns the mixing function $\vf$, the transition distribution $p(\vz^t \mid \vz^{<t}, \va^{<t})$ and the causal graph $\mG$. The latter is learned using binary masks and regularized for sparsity using a \textit{constraint} as opposed to a penalty as in~\citet{lachapelle2022disentanglement}.
    \item We perform experiments on synthetic datasets to validate the prediction of our theory.
\end{enumerate}
\vspace{2ex}

\paragraph{Overview.} Section~\ref{sec:main} introduces the model (Section~\ref{sec:model}), entanglement maps and graphs (Section~\ref{sec:entanglement_graphs}), the notion of identifiability (Section~\ref{sec:ident_obseq}), equivalence up to diffeomorphism (Section~\ref{sec:diffeq}) and disentanglement formally (Section~\ref{sec:disent_permeq}). Section~\ref{sec:nonparam_ident} provides mathematical intuition for why mechanism sparsity yields disentanglement (Section~\ref{sec:insight_main}); introduces the machinery of graph preserving maps (Section~\ref{sec:consistency_equ}) which are key to establish identifiability up to $\va$-consistency (Section~\ref{sec:nonparam_ident_action}) and $\vz$-consistency (Section~\ref{sec:nonparam_ident_time}), i.e. \textit{partial} disentanglement. Section~\ref{sec:nonparam_ident} also discusses the relationship with interventions (Section~\ref{sec:unknown-target}), provides a graphical criterion guaranteeing \textit{complete} disentanglement (Section~\ref{sec:graph_crit}), and introduces and discusses extensively the \textit{sufficient influence assumptions} on which these results are critically based (Sections~\ref{sec:proofs} \& \ref{sec:examples}). Section~\ref{sec:exponential_family} draws connections between our \textit{nonparametric} theory and the \textit{exponential family} assumption sometimes used in the literature. Section~\ref{sec:estimation} presents the VAE-based learning algorithm with sparsity constraint. Section~\ref{sec:evaluation} introduces our novel $R_\text{con}$ metric. Section~\ref{sec:lit_review} reviews the literature on identifiability in representation learning. Section~\ref{sec:exp} presents the empirical results.

\paragraph{Notation.} Scalars are denoted in lower-case and vectors in lower-case bold, e.g. $x \in \sR$ and $\vx \in \sR^n$. Note that these will sometimes denote random variables depending on the context. We maintain an analogous notation for scalar-valued and vector-valued functions, e.g., $f$ and $\vf$. The $i$th coordinate of the vector $\vx$ is denoted by $\vx_i$. The set containing the first $n$ integers excluding $0$ is denoted by $[n]$. Given a subset of indices $S \subset [n]$, $\vx_S$ denotes the subvector consisting of entries $\vx_i$ for $i \in S$. Given a sequence of $T$ random vectors $(\vx^1, \dots, \vx^T)$, the subsequence consisting of the first $t$ elements is denoted by $\vx^{\leq t} := (\vx^1, ..., \vx^t)$, and analogously for $\vx^{<t}$. We will sometimes combine these notation to get $\vx^{\leq t}_S := (\vx_S^1, \dots, \vx_S^t)$. Given a function $\vf: \sR^n \rightarrow \sR^m$, its Jacobian matrix evaluated at $\vx \in \sR^n$ is denoted by $D\vf(\vx) \in \sR^{m\times n}$. See Table~\ref{tab:TableOfNotation} in the Appendix for more.

\begin{table}[]
    \rowcolors{2}{white}{lightgray} 
    \scriptsize
    \begin{tabular}{c|l|l|l}
    \toprule
        \textbf{Examples} & \textbf{Type of disentanglement} & \textbf{Auxiliary variable} & \textbf{Time dependencies} \\
        \midrule
         \ref{ex:single_node_complete_dis} & Complete & Yes (single target) & Optional \\
         \ref{ex:a_target_one_z} & Partial & Yes (single target) & Optional \\
         \ref{ex:multi_target_a} & Complete & Yes (multi-target) & Optional \\
         \ref{ex:diagonal_deps} & Complete & Optional & Yes (independent factors) \\
         \ref{ex:lower_triangular_no_action} & Complete & Optional & Yes (dependent factors) \\
         \ref{ex:temporal_partial} & Partial & Optional & Yes (dependent factors) \\
         \ref{ex:a_target_one_z_cont_a} & Partial & Yes (single-target continuous ) & Yes \\
         \ref{ex:multi_target_a_cont_a} & Complete & Yes (multi-target continuous)  & No \\
         \ref{ex:single_node_complete_dis_2} & Complete & Yes (single-target interventions) & No \\
         \ref{ex:multinode_linear_gauss} & Complete & Yes (multi-target interventions) & Yes \\
         \ref{ex:group_interv} & Complete & Yes (grouped multi-target interventions) & No \\
         \ref{ex:lower_triangular_no_action_exp} & Complete & No & Yes (non-Markovian) \\
         \ref{ex:lower_triangular_no_action_exp_expfam} & Complete & No & Yes (Markovian) \\
         \bottomrule
    \end{tabular}
    \caption{List of examples illustrating the scope of our theory, its assumptions and its consequences.}
    \label{tab:examples}
\end{table}

\section{Problem Setting, Entanglement Graphs \& Disentanglement}
\label{sec:main}

In this section, we introduce the latent variable model under consideration (Section~\ref{sec:model}), entanglement graphs (Section~\ref{sec:entanglement_graphs}), identifiability and observational equivalence (Section~\ref{sec:ident_obseq}), equivalence up to diffeomorphism (Section~\ref{sec:diffeq}) as well as permutation equivalence (Section~\ref{sec:disent_permeq}).

\subsection{An Identifiable Latent Causal Model} \label{sec:model}

We now specify the setting under consideration. Assume that we observe the realization of a sequence of $d_x$-dimensional random vectors $\{\vx^t\}_{t=1}^{T}$ and a sequence of $d_a$-dimensional auxiliary vectors $\{\va^{t}\}_{t=0}^{T-1}$. The coordinates of $\va^{t}$ are discrete or continuous and can potentially represent, for example, an action taken by an agent or the index of an intervention or environment (see Section~\ref{sec:unknown-target}). Observations $\{\vx^t\}$ are assumed to be explained by a sequence of continuous random vectors $d_z$ of hidden dimensions $\{\vz^t\}_{t=1}^T$ using a ground-truth decoder function $\vf$.
\begin{assumption}[Observation model]\label{ass:diffeomorphism}
    For all $t \in [T]$, the observations $\vx^t$ are given by
    \begin{align*}
        \vx^t = \bff(\vz^t) + \vn^t \,,
    \end{align*}
    where $\vn^t \sim \mathcal{N}(0, \sigma^2 \mI)$ are mutually independent across time and independent of all $\vz^t$ and $\va^{t}$ with $\sigma^2 \geq 0$. Moreover, $d_z \leq d_x$ and $\bff: \sR^{d_z} \rightarrow \sR^{d_x}$ is a diffeomorphism onto its image\footnote{A \textit{diffeomorphism} is a $C^1$ bijection with a $C^1$ inverse. Generally, given a map $\vh: A \rightarrow \sR^m$ where $A \subseteq \sR^n$, saying $\vh$ is $C^k$ is typically only well defined if $A$ is an open set of $\sR^n$. Throughout, if $A \subseteq \sR^n$ is arbitrary (not necessarily open), we say $\vh$ is $C^k$ if there exists a $C^k$ map $\tilde \vh: U \rightarrow \sR^m$ defined on an open set $U$ of $\sR^n$ containing $A$ such that $\vh = \tilde \vh$ on $A$. Note that it is then meaningful for $\bff^{-1}: \bff(\sR^{d_z}) \rightarrow \sR^{d_z}$ to be $C^1$ even when $\bff(\sR^{d_z})$ is not open in $\sR^{d_x}$. Moreover, it can be shown that $\vf:\sR^{d_z} \rightarrow \sR^{d_x}$ is a diffeomorphism onto its image if $\vf$ is an homeomorphism onto its image, i.e. continuous in both directions, and has a full rank Jacobian everywhere on its domain \citep[Sec. 23 \& Thm. 24.1]{munkres1991analysis}.}. Lastly, assume that $\vf(\sR^{d_z})$ is closed in $\sR^{d_x}$.
\end{assumption}
   Importantly, we suppose that each factor $\vz_i^t$ contains interpretable information about the observation, e.g., for high-dimensional images, the coordinates $\vz_i^t$ might be the position of an object, its color, or its orientation in space. This idea that there exists a \textit{ground-truth decoder} $\vf$ that captures the relationship between the so-called ``natural factors of variations'' and the observations $\vx$ is of capital importance, since it is the very basis for a mathematical definition of disentanglement (Definition~\ref{def:disentanglement}). Appendix~\ref{sec:diffeo_f} discusses the implications of the diffeomorphism assumption (see also \citet{mansouri2022objectcentric}). We denote $\vz^{\leq t} := [\vz^1\ \cdots\ \vz^t] \in \sR^{d_z \times t}$ and analogously for $\vz^{< t}$ and other random vectors.

In a similar spirit to previous works on nonlinear ICA~\citep{HyvarinenST19, iVAEkhemakhem20a}, we assume that the latent factors $\vz^t_i$ are conditionally independent given the past.

\begin{assumption}[Conditionally independent latent factors]\label{ass:cond_indep}
    The latent factors $\vz_i^t$ are conditionally mutually independent given $\vz^{<t}$ and $\va^{<t}$: 
    \begin{align}\label{eq:cond_indep}
    p(\vz^t \mid \vz^{<t}, \va^{<t}) = \prod_{i = 1}^{d_z} p(\vz_i^t \mid \vz^{<t}, \va^{<t}) \, ,
    \end{align}
    where $p(\vz^t \mid \vz^{<t}, \va^{<t})$ is a density function w.r.t. the Lebesgue measure on $\sR^{d_z}$. We assume that the support of $p(\vz_i^t \mid \vz^{<t}, \va^{<t})$ is $\sR$ for all $\vz^{<t}$ and $\va^{<t}$. The support of $p(\vz^t \mid \vz^{<t}, \va^{<t})$ is thus given by $\sR^{d_z}$. \textcolor{black}{At $t=1$, we set $p(\vz^1 \mid \vz^{<1}, \va^{<1}) = p(\vz^1 \mid \va^{0}) = \prod_{i = 1}^{d_z} p(\vz_i^1 \mid \va^{0})$.}
\end{assumption}

We will refer to the l.h.s. of~\eqref{eq:cond_indep} as the \textit{transition model} and to the factors $p(\vz_i^t \mid \vz^{<t}, \va^{<t})$ as \textit{mechanisms}. \textcolor{black}{In a causal setting, this conditional independence assumption states that latent variables cannot exert influence on other latent variables instantaneously.  We believe this to be a reasonable assumption when the time steps are sampled at a sufficiently high rate since, at a fundamental level, causal effects cannot travel faster than the speed of light. Of course in many practical scenarios, a low sampling rate can entail instantaneous dependencies which, if too strong, might result in model misspecification. In Section~\ref{sec:lit_review}, we discuss and contrast with causal representation learning methods which allow for instantaneous influence.}  

Notice that we do not assume the dynamical system to be \textcolor{black}{\textit{Markovian}\footnote{\textcolor{black}{We use the term ``Markovian'' in the stochastic processes sense, as in Markov chains, where the future is independent of the past given the present. We will use this meaning throughout. This is to be contrasted with the Markovian property in structural models which states that exogenous noises are mutually independent~\citep{Bareinboim2022}.}}}, i.e. the distribution over future states can depend on the whole history of latents and auxiliary variables $(\vz^{<t}, \va^{<t})$. In addition, this model can represent \textit{non-homogeneous} processes by taking the auxiliary variable $\va$ to be a time index~\citep{HyvarinenST19}.

We are going to describe the dependency structure of the latent and auxiliary variables over time using a \textit{probabilistic directed graphical model} composed of two bipartite graphs, $\mG^z \in \{0,1\}^{d_z \times d_z}$, which relates $\vz^{<t}$ to $\vz^t$, and $\mG^a \in \{0,1\}^{d_z \times d_a}$, which relates $\va^{<t}$ to $\vz^t$. A directed edge points from $\vz_j^{<t}$ to $\vz^t_i$ if and only if $\mG^z_{i, j} = 1$. Analogously, a directed edge points from $\va^{<t}_\ell$ to $\vz^t_i$, if and only if $\mG^a_{i, \ell} = 1$. Figure~\ref{fig:working_example} shows an example of such graphs together with its adjacency matrix $\mG := [\mG^z,\mG^a]$. The following assumption specifies the relationship between these graphs and the transition model.
\begin{assumption}[Transition model $p$ follows $\mG$]\label{ass:graph}
    For every mechanism $i \in [d_z]$,
    \begin{align}\label{eq:parents}
    p(\vz_i^t \mid \vz^{<t}, \va^{<t}) = p(\vz_i^t \mid \vz^{<t}_{\Pa_i^z}, \va^{<t}_{\Pa_i^a}) \,,
\end{align}
where $\Pa^z_i \subseteq [d_z]$ and $\Pa^a_i \subseteq [d_a]$ are the sets of parents of $\vz_i^t$ in $\mG^z$ and $\mG^a$, respectively.
\end{assumption}
The graph $\mG$ thus encodes a set of conditional independence statements about the latent and auxiliary variables.\footnote{\textcolor{black}{This notion of graph is not standard since, typically, directed edges point from scalar variables to scalar variables, whereas here a directed edge points from $\vz_j^{<t}$ (a vector) to $\vz^t_i$ (a scalar). This can be understood as having all directed edges $\vz_j^{\tau} \rightarrow \vz^t_i$ for all $\tau < t$. This does not imply that all edges are necessarily ``active''. We made this choice to simplify the presentation and hypothesize that analogous results could be derived for standard graphs.}} We will say that \textit{mechanisms are sparse} when the graphs $\mG^a$ and $\mG^z$ are sparse.  

This model has three components that need to be learned: (i) the decoder function $\vf$, (ii) the transition model over latent variables $p$, and (iii) the dependency graph $\mG$. We gather all these components into $\vtheta := (\vf, p, \mG)$. Everything else in the model, i.e. $d_z$ and $\sigma^2$, is assumed to be known. We assume that $\sigma^2$ is known here mainly for simplicity, since, when it is not, it can be identified as shown by \citet[Appendix A.4.1]{lachapelle2022disentanglement}, as long as $d_x > d_z$.

Notice how we have not specified any model for the auxiliary variable $\va^t$. We do not intend to do so in this work, as we are solely interested in modeling the \textit{conditional} distribution of $\vx^{\leq T}$ and $\vz^{\leq T}$ given $\va^{< T}$. We denote by $\gA \subseteq \sR^{d_a}$ the set of possible values for the auxiliary variable $\va^t$. We thus have that, for all values of $\va^{<T} \in \gA^T$, our model induces a conditional distribution 
\begin{align*}
    p(\vx^{\leq T} \mid \va^{<T}) = \int \prod_{t=1}^{T} p(\vx^t \mid \vz^t)p(\vz^t \mid \vz^{< t}, \va^{<t}) d\vz^{\leq T} \,,
\end{align*}
where $p(\vx^t \mid \vz^{t}) = \gN(\vx^t; \vf(\vz^t), \sigma^2\mI)$. We note that if $\sigma^2 = 0$, the conditional distribution of $\vx^t$ given $\vz^t$ is a Dirac centered at $\vf(\vz^t)$ and thus has no density w.r.t. the Lebesgue measure. Even if, in that case, the above integral does not make sense, the conditional distribution of $\vx^{\leq T}$ given $\va^{<T}$ is still well-defined and all the results of this work still hold since none of the proofs requires $\sigma^2 > 0$.

\paragraph{A motivating example.} Figure~\ref{fig:working_example} represents a minimal example in which our theory applies. The environment consists of three objects: a tree, a robot and a ball with $x$-positions $\vz_T^t$, $\vz_R^t$ and $\vz_B^t$, respectively. Together, they form the vector $\vz^t$ of high-level latent variables, i.e., $\vz^t=(\vz_T^t, \vz_R^t, \vz_B^t)$. A remote controls the direction in which the wheels of the robot turn. The vector $\va^t$ records these actions, which might be taken by a human or an artificial agent trained to accomplish some goal. The only observations are the actions $\va^t$ and the images $\vx^t$ representing the scene which is given by $\vx^t = \bff(\vz^t) + \vn^t$. The dynamics of the environment is governed by the transition model~$p$, which, e.g., could be given by a Gaussian model of the form $p(\vz^t_i \mid \vz^{<t}, \va^{<t}) = \mathcal{N}(\vz_i^t; \mu_i(\vz^{t-1}, \va^{t-1}), \sigma_z^2)$. Plausible connectivity graphs $\mG^z$ and $\mG^a$ are shown in Figure~\ref{fig:working_example} showing how the latent factors are related and how the controller affects them. For every object, its position at the time step $t$ depends on its position at $t-1$. The position of the tree, $\vz_T^t$, is not affected by anything, since neither the robot nor the ball can change its position. The robot, $\vz_R^t$, changes its position based both on the action $\va^{t-1}$ and the position of the tree $\vz_T^{t-1}$ (in case of collision). The position of the ball, $\vz_B^t$, is affected by both the robot, which can move around it by running into it, and the tree, which can bounce. The key observations here are that (i) the different objects interact \textit{sparsely} with one another and (ii) the action $\va^t$ affects very few objects (in this case, only one). The theorems of Section~\ref{sec:nonparam_ident} show how one can leverage this sparsity for disentanglement.

\subsection{Entanglement Maps \& Entanglement Graphs} \label{sec:entanglement_graphs}
In this section, we define \textit{entanglement maps}, which describe the functional relationship between the learned and ground-truth representations, and \textit{entanglement graphs}, which describe their entanglement structure. 

\begin{definition}[Entanglement maps]\label{def:entanglement_maps}
Let $\vf$ and $\tilde\vf$ be two diffeomorphisms from $\sR^{d_z}$ to their images such that $\vf(\sR^{d_z}) = \tilde\vf(\sR^{d_z})$. The \textbf{entanglement map} of the pair $(\vf, \tilde\vf)$ is given by
\begin{align}\label{eq:v_map}
    \vv := \vf^{-1} \circ \tilde\vf \,.
\end{align}
\end{definition}

This map will be crucial throughout this work, especially in defining disentanglement. Intuitively, the entanglement map for a pair of decoders $(\vf, \tilde\vf)$ translates the representation of one model to that of the other. In general, the entanglement maps of $(\vf, \tilde\vf)$ and $(\tilde\vf, \vf)$ are different.

We now define the {\em dependency graph} of some function $\vh$ so that each edge indicates that some input $i$ influences some output $j$:

\begin{definition}[Functional dependency graph]\label{def:dep_graph}
    Let $\vh$ be a function from $\sR^n$ to $\sR^m$. The \textbf{dependency graph} of $\vh$ is a bipartite directed graph from $[n]$ to $[m]$ with adjacency matrix $\mH \in \{0,1\}^{m \times n}$ such that
    \begin{equation*}
        \mH_{i,j} = 0 \iff \text{There is a function $\bar\vh$ such that, for all $\va \in \sR^n$, } \vh_i(\va) = \bar\vh_i(\va_{-j})\,,
    \end{equation*}
    where $\va_{-j}$ is $\va$ with its $j$th coordinate removed.
\end{definition}

\begin{example}[Dependency graph of a linear map]\label{ex:dep_graph_lin}
Let $\vh(\vz) := \mW\vz$ where $\mW \in\sR^{m\times n}$ and let $\mH$ be the dependency graph of $\vh$. Then $\mH_{i,j} = 0 \iff \mW_{i,j} = 0$. 
\end{example}

We will be particularly interested in the dependency graph of the entanglement map $\vv := \vf^{-1} \circ \hat\vf$, denoted by $\mV$.
\begin{definition}[Entanglement graphs]\label{def:entanglement_graphs}
    Let $\vf$ and $\tilde\vf$ be two diffeomorphisms from $\sR^{d_z}$ to their images such that $\vf(\sR^{d_z}) = \tilde\vf(\sR^{d_z})$. The \textbf{entanglement graph} of the pair $(\vf, \tilde\vf)$ is the dependency graph (Definition~\ref{def:dep_graph}) of their entanglement map $\vv:=\vf^{-1} \circ \tilde\vf$, which we denote $\mV \in \{0,1\}^{d_z \times d_z}$.
\end{definition}

We now relate the dependency graph of a function to the zeros of its Jacobian matrix. A proof can be found in Appendix~\ref{sec:V_and_Dv}.

\begin{restatable}[Linking dependency graph and Jacobian]{proposition}{JacobianEntanglement}\label{prop:V_and_Dv}
Let $\vh$ be a $C^1$ function, i.e. continuously differentiable, from $\sR^n$ to $\sR^m$ and let $\mH$ be its dependency graph (Definition~\ref{def:dep_graph}). Then, 
\begin{align}
    \mH_{i,j} = 0 \iff \text{For all}\ \va \in \sR^{n}\,, D\vh(\va)_{i,j} = 0 \,. \label{eq:alternative_def_H}
\end{align}
\end{restatable}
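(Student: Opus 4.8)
The statement is a coordinate-wise equivalence, so the plan is to fix a pair $(i,j)$ and prove the two implications separately, reasoning about a single output coordinate $\vh_i$ and a single input coordinate $\va_j$ at a time.

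For the forward direction ($\Rightarrow$), suppose $\mH_{i,j} = 0$. By Definition~\ref{def:dep_graph} there is a function $\bar\vh$ with $\vh_i(\va) = \bar\vh_i(\va_{-j})$ for all $\va \in \sR^n$. Fix an arbitrary $\va$ and consider the one-variable restriction of $\vh_i$ obtained by varying only the $j$th coordinate and holding the others fixed at $\va_{-j}$: since $\bar\vh_i$ does not see the $j$th coordinate, this restriction is constant. Because $\vh$ is $C^1$, its partial derivative in the $j$th direction exists and equals the derivative of this constant restriction, which is $0$; hence $D\vh(\va)_{i,j} = 0$. As $\va$ was arbitrary, this holds everywhere. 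I would note that this direction does not require $\bar\vh_i$ to be differentiable — only that $\vh_i$ is, which we have by hypothesis.

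For the reverse direction ($\Leftarrow$), suppose $D\vh(\va)_{i,j} = 0$ for all $\va$. Fix the remaining coordinates $\va_{-j}$ and define the scalar function $g(s)$ equal to $\vh_i$ evaluated with the value $s$ placed in the $j$th slot. Since $\vh$ is $C^1$, $g$ is continuously differentiable, and $g'(s) = D\vh(\cdot)_{i,j} = 0$ for every $s \in \sR$. The mean value theorem (equivalently the fundamental theorem of calculus) then forces $g$ to be constant, so $\vh_i(\va)$ is independent of $\va_j$; defining $\bar\vh_i(\va_{-j})$ to be this common value along the slice yields $\vh_i(\va) = \bar\vh_i(\va_{-j})$ for all $\va$, i.e. $\mH_{i,j} = 0$.

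The only genuine subtlety — the step I would flag as the main obstacle in a less convenient setting — is the ``vanishing partial derivative implies constant'' argument in the reverse direction: it relies on each axis-parallel slice of the domain being an interval, hence connected, so that the one-dimensional mean value theorem applies. Here this is automatic because the domain is all of $\sR^n$, making every such slice the full line $\sR$; on a non-convex or disconnected domain the implication could fail and a path-connectedness argument would be needed. Since $\sR^n$ is convex, no such complication arises and the proof reduces to the elementary one-dimensional facts above.
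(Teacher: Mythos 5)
Your proof is correct and follows essentially the same route as the paper: the forward direction by observing that $\vh_i$ is constant along the $j$th axis so its partial derivative vanishes, and the reverse direction by a one-dimensional argument (your mean value theorem on the restriction $g$ is the same as the paper's fundamental-theorem-of-calculus integral along a segment whose endpoints differ only in the $j$th coordinate). Your added remark that the argument relies on axis-parallel slices of the domain being connected is a valid observation, though not needed here since the domain is all of $\sR^n$.
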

The equivalence \eqref{eq:alternative_def_H} can be seen as an equivalent definition of dependency graph for differentiable functions.

\subsection{Identifiability and Observational Equivalence}\label{sec:ident_obseq} 

To analyze formally whether a specific algorithm is expected to yield a disentangled representation, we will rely on the notion of \textit{identifiability}. Before we define what we mean by identifiability, we need the notion of \textit{observationally equivalent} models. Two models are observationally equivalent if both models represent the same distribution over observations. The following formalizes this definition.

\begin{definition}[Observational equivalence]\label{def:eqobs}
 We say that two models $\vtheta := (\bff, p, \mG)$ and $\tilde{\vtheta}:= (\tilde \bff,\tilde p,\tilde \mG)$ satisfying Assumption~\ref{ass:diffeomorphism} are \textbf{observationally equivalent}, denoted $\vtheta \eqobs \tilde\vtheta$, if and only if, for all $\va^{< T} \in \gA^T$ and all $\vx^{\leq T} \in \sR^{d_x \times T}$,
    \begin{align*}
        p(\vx^{\leq T} \mid \va^{<T}) = \tilde{p}(\vx^{\leq T} \mid \va^{<T}) \,.
    \end{align*}
\end{definition}

Formally, we say that a parameter $\vtheta$ is \textbf{identifiable up to some equivalence relation $\sim$}, when 
\begin{align}
    \vtheta \eqobs \tilde{\vtheta} \implies \vtheta \sim \tilde{\vtheta} \, . \label{eq:general_ident}
\end{align}
This work is mainly concerned with proving statements of the above form by making assumptions both on $\vtheta$ and $\hat\vtheta$. The stronger the assumptions on $\vtheta$ and $\hat\vtheta$ are, the stronger the equivalence relation $\sim$ will be. The following sections present two equivalence relations over models, namely $\eqdiff$ and $\eqperm$. We note that the equivalence relation $\eqperm$ will help us formalize disentanglement.

Practically speaking, observational equivalence between the learned model $\hat\vtheta$ and the ground-truth model $\vtheta$ can be achieved via maximum likelihood estimation in the infinite data regime. Thus, identifiability results of the form of \eqref{eq:general_ident} guaranty that if the learned model is perfectly fitted to the data (assumed infinite), its parameter $\hat\vtheta$ is $\sim$-equivalent to that of the ground-truth model, $\vtheta$.

\subsection{Equivalence up to Diffeomorphism} \label{sec:diffeq}

We start by defining \textit{equivalence up to diffeomorphism}. This equivalence relation is important since we will show later on that it is actually the same as observational equivalence and will thus be our first step in all our identifiability results. In what follows, we overload the notation and write $\vv(\vz^{<t}) := [\vv(\vz^{1}), \dots, \vv(\vz^{t-1})]$, and similarly for other functions.

\begin{definition}[Equivalence up to diffeomorphism] \label{def:eqdiff}
 We say that two models $\vtheta := (\bff, p, \mG)$ and $\tilde{\vtheta}:= (\tilde \bff,\tilde p,\tilde \mG)$ satisfying Assumption~\ref{ass:diffeomorphism} are \textbf{equivalent up to diffeomorphism}, denoted $\vtheta \eqdiff \tilde\vtheta$, if and only if $\vf(\sR^{d_z}) = \tilde\vf(\sR^{d_z})$ and, for all $t \in [T]$, all $\va^{<t} \in \gA^t$ and all $\vz^{\leq t} \in \sR^{d_z\times t}$,
        \begin{align}\label{eq:density_v}
        \tilde p(\vz^{t} \mid \vz^{<{t}}, \va^{<{t}}) = p(\bfv(\vz^{t}) \mid \bfv(\vz^{<{t}}), \va^{<{t}}) |\det D\bfv(\vz^{{t}}) |\, ,
        \end{align}
    where $\vv := \vf^{-1} \circ \tilde\vf$ (entanglement map) is a diffeomorphism and $D\vv$ denotes its Jacobian matrix.
\end{definition}

The fact that the relation $\eqdiff$ is indeed an \textit{equivalence} comes from the fact that the set of diffeomorphisms from a set to itself forms a group under composition.

To better understand the above definition, let $\vz^t := \vg(\vz^{<t}, \va^{<t}; \epsilon^t)$ and $\tilde\vz^t := \tilde\vg(\tilde\vz^{<t}, \va^{<t}; \tilde\epsilon^t)$ where $\epsilon^t$ and $\tilde\epsilon^t$ are noise variables and $\vg$ and $\tilde\vg$ are functions such that random variables $\vz^t$ and $\tilde\vz^t$ have conditional densities given by $p(\vz^t \mid \vz^{<t}, \va^{<t})$ and $\tilde p(\tilde\vz^t \mid \tilde\vz^{<t}, \va^{<t})$, respectively. Using the change-of-variable formula for densities, one can rewrite \eqref{eq:density_v} as
\begin{align}
    \tilde\vg(\tilde\vz^{<t}, \va^{<t}; \tilde\epsilon^t) \overset{d}{=} \vv^{-1} \circ \vg(\vv(\tilde\vz^{<t}), \va^{<t}; \epsilon^t)\, , \label{eq:imitator}
\end{align}
where ``$\overset{d}{=}$'' denotes equality in distribution. This equation has a nice interpretation: applying the latent transition model $\tilde\vtheta$ to go from $(\tilde\vz^{<t}, \va^{<t})$ to $\tilde\vz^t$ is the same as first applying $\vv$, then applying the latent transition model $\vtheta$ and finally applying $\vv^{-1}$. Equation~\eqref{eq:imitator} is reminiscent of \citet{ahuja2022properties}, in which the mechanism $\tilde\vg$ would be called an \textit{imitator} of $\vg$. \citet{ahuja2022properties} showed that $\eqobs$ and $\eqdiff$ are actually one and the same. For completeness, we present an analogous argument here. We start by showing that $\vtheta \eqdiff \tilde\vtheta$ implies $\vtheta \eqobs \tilde\vtheta$. 
\begin{align*}
    p(\vx^{\leq T} \mid \va^{<T}) &= \int \prod_{t=1}^{T} \big[p(\vx^t \mid \vz^t)p(\vz^t \mid \vz^{< t}, \va^{<t})\big] d\vz^{\leq T} \\
    &= \int \prod_{t=1}^{T} \big[p(\vx^t \mid \vv(\vz^t))p(\vv(\vz^t) \mid \vv(\vz^{< t}), \va^{<t})\big] |\det D\vv(\vz^{\leq T})| d\vz^{\leq T} \\
    &= \int \prod_{t=1}^{T} \big[p(\vx^t \mid \vv(\vz^t))p(\vv(\vz^t) \mid \vv(\vz^{< t}), \va^{<t})|\det D\vv(\vz^{t})|\big] d\vz^{\leq T} \\
    &= \int \prod_{t=1}^{T} \big[\tilde{p}(\vx^t \mid \vz^t)\tilde{p}(\vz^t \mid \vz^{< t}, \va^{<t})\big] d\vz^{\leq T} = \tilde{p}(\vx^{\leq T} \mid \va^{<T})\,,
\end{align*}
where the second equality used the change-of-variable formula, the third equality used the fact that the Jacobian of $\vv(\vz^{\leq T})$ is block-diagonal (each block corresponds to a time step $t$) and the next to last equality used the definition of $\eqdiff$ and the fact that 
$$p(\vx^t \mid \vv(\vz^t)) = \gN(\vx^t; \vf (\vf^{-1} \circ \tilde\vf (\vz^t)), \sigma^2\mI) = \gN(\vx^t; \tilde\vf (\vz^t), \sigma^2\mI) = \tilde{p}(\vx^t \mid \vz^t)\,.$$ 

The following proposition establishes the converse, i.e. that $\vtheta \eqobs \tilde\vtheta$ implies $\vtheta \eqdiff \tilde\vtheta$. Since its proof is more involved, we present it in the Appendix~\ref{app:identDiffeo}. Note that this first identifiability result is relatively weak and should be seen as a first step towards stronger guarantees. A very similar result was shown by \citet[Theorem 3.1]{ahuja2022properties} to highlight the fact that the representation $\vf$ is identifiable up to the equivariances $\vv$ of the transition model $p$.

\begin{restatable}[Identifiability up to diffeomorphism]{proposition}{identDiffeo}
\label{prop:identDiffeo}
    Let $\vtheta := (\bff, p, \mG)$ and $\hat{\vtheta}:= (\hat \bff,\hat p,\hat \mG)$ be two models satisfying Assumption~\ref{ass:diffeomorphism}. If $\vtheta \eqobs \hat\vtheta$ (Def.~\ref{def:eqobs}), then $\vtheta \eqdiff \hat\vtheta$ (Def.~\ref{def:eqdiff}).
\end{restatable}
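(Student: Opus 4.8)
The plan is to reduce observational equivalence to an equality of \emph{pushforward measures} of the latent process through the two decoders, and then to extract both the image equality $\vf(\sR^{d_z}) = \hat\vf(\sR^{d_z})$ and the density relation \eqref{eq:density_v} from that equality. Write $F := \vf^{\times T}$ for the map $(\vz^1,\dots,\vz^T)\mapsto(\vf(\vz^1),\dots,\vf(\vz^T))$ on $\sR^{d_z\times T}$, and $\hat F := \hat\vf^{\times T}$ analogously. Let $P_{\va^{<T}}$ and $\hat P_{\va^{<T}}$ be the laws of $\vz^{\leq T}$ given $\va^{<T}$ under $\vtheta$ and $\hat\vtheta$; by Assumption~\ref{ass:cond_indep} their densities factor as $\prod_t p(\vz^t\mid\vz^{<t},\va^{<t})$ and $\prod_t \hat p(\vz^t\mid\vz^{<t},\va^{<t})$. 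Since $\vx^{\leq T} = F(\vz^{\leq T}) + \vn^{\leq T}$ with $\vn^{\leq T}$ isotropic Gaussian and independent of everything, the law of $\vx^{\leq T}$ given $\va^{<T}$ is the convolution of $F_{\#}P_{\va^{<T}}$ with $\gN(0,\sigma^2\mI)$. The first step is to deconvolve: passing to characteristic functions, $\vtheta\eqobs\hat\vtheta$ gives $\widehat{F_{\#}P_{\va^{<T}}}\cdot\widehat{\gN}=\widehat{\hat F_{\#}\hat P_{\va^{<T}}}\cdot\widehat{\gN}$, and since the Gaussian characteristic function never vanishes we conclude $F_{\#}P_{\va^{<T}}=\hat F_{\#}\hat P_{\va^{<T}}$ as measures on $\sR^{d_x\times T}$ (when $\sigma^2=0$ this equality is immediate, being exactly observational equivalence).

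Second, I would read off the image equality from supports. By Assumption~\ref{ass:cond_indep} the joint density of $\vz^{\leq T}$ is strictly positive, so $P_{\va^{<T}}$ has full support $\sR^{d_z\times T}$; since $\vf$ is a homeomorphism onto its image and $\vf(\sR^{d_z})$ is closed (Assumption~\ref{ass:diffeomorphism}), $F$ is a homeomorphism onto the closed set $\vf(\sR^{d_z})^{\times T}$, whence $\mathrm{supp}(F_{\#}P_{\va^{<T}})=\vf(\sR^{d_z})^{\times T}$, and likewise for $\hat F_{\#}\hat P_{\va^{<T}}$. Equality of the two measures then forces $\vf(\sR^{d_z})^{\times T}=\hat\vf(\sR^{d_z})^{\times T}$, i.e. $\vf(\sR^{d_z})=\hat\vf(\sR^{d_z})$. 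This is the first requirement of $\eqdiff$ and it makes $\vv:=\vf^{-1}\circ\hat\vf$ a well-defined diffeomorphism of $\sR^{d_z}$ (composition of $\hat\vf$ onto the common image with the inverse diffeomorphism $\vf^{-1}$).

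Third, I would turn the pushforward equality into the density relation. Pushing $F_{\#}P=\hat F_{\#}\hat P$ back through $\hat F^{-1}$ gives $\hat P_{\va^{<T}}=\big((\vv^{-1})^{\times T}\big)_{\#}P_{\va^{<T}}$, since $\hat\vf^{-1}\circ\vf=\vv^{-1}$. Applying change of variables with the block-diagonal Jacobian $D(\vv^{\times T})=\mathrm{diag}(D\vv(\vz^1),\dots,D\vv(\vz^T))$ yields
\begin{align*}
    \prod_{t=1}^{T}\hat p(\vz^t\mid\vz^{<t},\va^{<t}) = \prod_{t=1}^{T} p(\vv(\vz^t)\mid\vv(\vz^{<t}),\va^{<t})\,|\det D\vv(\vz^t)| \,.
\end{align*}
To isolate a single factor, I would first note that marginalizing $\vx^{t+1},\dots,\vx^T$ out of the joint shows $\vtheta\eqobs\hat\vtheta$ at every shorter horizon $t\leq T$ as well: by the recursive structure of Assumptions~\ref{ass:cond_indep}--\ref{ass:graph} the marginal of $\vx^{\leq t}$ depends only on $\va^{<t}$. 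Hence the displayed product identity holds for every $t\in[T]$. Dividing the identity at horizon $t$ by the one at horizon $t-1$ (legitimate because every mechanism is strictly positive by Assumption~\ref{ass:cond_indep} and $|\det D\vv|>0$) leaves exactly \eqref{eq:density_v}.

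The step I expect to be the main obstacle is the first one: justifying the deconvolution rigorously when $\sigma^2>0$, since $F_{\#}P$ is a \emph{singular} measure carried by the $d_zT$-dimensional image inside $\sR^{d_xT}$, so the argument must be phrased at the level of characteristic functions (where the nonvanishing of $\widehat{\gN}$ does the work) rather than densities. A secondary technical point is passing from the measure-level identity to the pointwise ``for all $\vz^{\leq t}$'' statement demanded by Definition~\ref{def:eqdiff}; this is where continuity of the mechanisms (or an almost-everywhere reading of the conclusion) is needed, and I would flag it explicitly rather than silently assume the densities are continuous.
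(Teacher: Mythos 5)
Your proposal is correct and follows essentially the same route as the paper's proof: deconvolution of the Gaussian noise via characteristic functions (using that the Gaussian Fourier transform never vanishes), equality of supports of the pushforward measures to obtain $\vf(\sR^{d_z}) = \hat\vf(\sR^{d_z})$, change of variables through $\vv$ to get the product density identity, and a quotient of consecutive horizons to isolate \eqref{eq:density_v}. The only (immaterial) difference is that the paper obtains the shorter-horizon identities by integrating $\vz^{T},\dots,\vz^{t_0+1}$ out of the product identity at the latent level, whereas you re-derive them by marginalizing observations and rerunning the argument at each horizon $t$; both yield the same telescoping division step.
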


Intuitively, Proposition~\ref{prop:identDiffeo} shows that if two models agree on the distribution of the observations, then their ``data manifold'' $\vf(\sR^{d_z})$ and $\hat\vf(\sR^{d_z})$ are equal and their respective transition models are related via $\vv := \vf^{-1} \circ \hat\vf$.

\subsection{Disentanglement and Equivalence up to Permutation}\label{sec:disent_permeq}

A disentangled representation is often defined intuitively as a representation in which the coordinates are in one-to-one correspondence with \textit{natural factors of variation} in the data. We are going to assume that these natural factors are captured by an unknown ground-truth decoder $\vf$. Given a learned decoder $\hat\vf$ such that $\vf(\sR^{d_z}) = \hat \vf(\sR^{d_z})$, the entanglement map $\vv := \vf^{-1} \circ \hat\vf$ gives a correspondence between the learned representation $\hat\vf$ and the natural factors of variations of $\vf$. The following equivalence relation will help us define disentanglement.

\begin{definition}[Equivalence up to permutation]\label{def:perm_equ}
 We say that two models $\vtheta := (\bff, p, \mG)$ and $\tilde{\vtheta}:= (\tilde \bff,\tilde p,\tilde \mG)$ satisfying Assumptions~\ref{ass:diffeomorphism}, \ref{ass:cond_indep} \& \ref{ass:graph} are \textbf{equivalent up to permutation}, denoted $\vtheta \eqperm \tilde\vtheta$, if and only if there exists a permutation matrix $\mP$ such that
\begin{enumerate}
    \item $\vtheta \eqdiff \tilde\vtheta$ (Def.~\ref{def:eqdiff}) and $\tilde\mG^a = \mP \mG^a$ and $\tilde\mG^z =  \mP \mG^z \mP^\top$\, ; and
    \item The entanglement map $\vv := \vf^{-1}\circ\tilde\vf$ can be written as $\vv = \vd \circ \mP^\top$, where $\vd$ is element-wise, i.e. $\vd_i(\vz)$ depends only on $\vz_i$, for all $i$. In other words, the entanglement graph is $\mV = \mP^\top$.
\end{enumerate}
\end{definition}

The fact that the relation $\eqperm$ is an equivalence relation is actually a special case of a more general result that we present later in Section~\ref{sec:nonparam_ident_action}.

This allows us to give a formal definition of (complete) disentanglement. Note that we use the term \textit{complete} to contrast with \textit{partial} disentanglement.

\begin{definition}[Complete disentanglement]\label{def:disentanglement}
Given a ground-truth model $\vtheta$, we say that a learned model $\hat{\vtheta}$ is \textbf{completely disentangled} when $\vtheta \eqperm \hat{\vtheta}$.
\end{definition}

Intuitively, a learned representation is \textit{completely disentangled} when there is a one-to-one correspondence between its coordinates and those of the ground-truth representation (see Figure~\ref{fig:disentanglement_illustration}).

\begin{figure}
    \centering
    \includegraphics[width=0.5\linewidth]{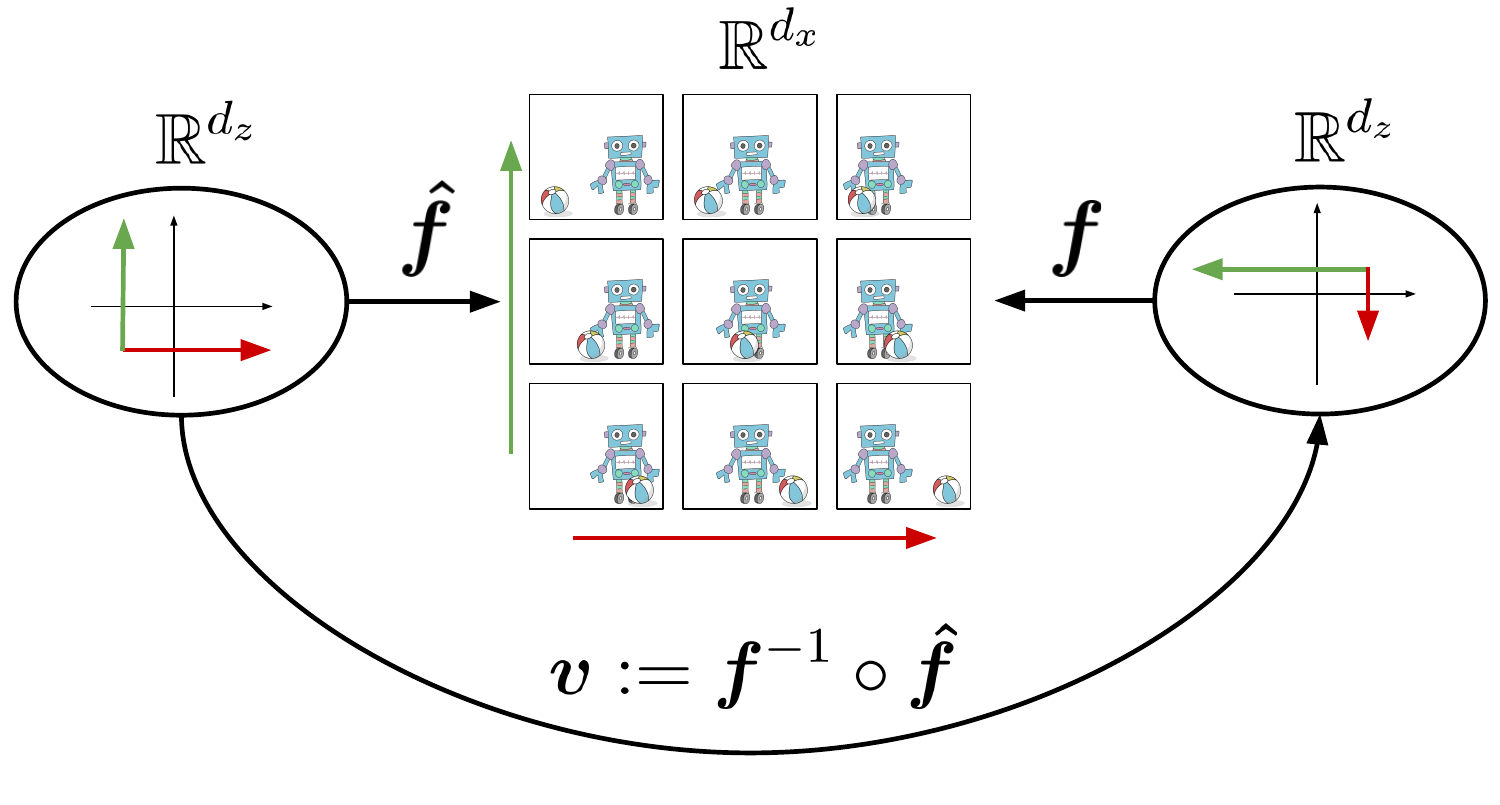}
    \caption{An illustration of disentanglement (Definition~\ref{def:disentanglement}). The ground-truth decoder $\vf$ captures the ``natural factors of variations'', which here are the $x$-positions of the robot and ball. The learned decoder $\hat\vf$ is disentangled here because each of its latent coordinates corresponds exactly to one object in the scene, in a one-to-one fashion. Mathematically, this is captured by the special structure of the entanglement map $\vv:= \vf^{-1} \circ \hat\vf$, which is a permutation composed with an element-wise invertible transformation.}
    \label{fig:disentanglement_illustration}
\end{figure}

We define \textit{partial} disentanglement as something that lives strictly between equivalence up to diffeomorphism and equivalence up to permutation:

\begin{definition}[Partial disentanglement]\label{def:partial_disentanglement}
Given a ground-truth model $\vtheta$, we say that a learned model $\hat\vtheta$ is \textbf{partially disentangled} when $\vtheta \eqdiff \hat\vtheta$ with an entanglement graph $\mV$ (Definition~\ref{def:entanglement_graphs}) that is neither a permutation nor the complete graph.
\end{definition}
This definition of partial disentanglement ranges from models that are almost completely entangled, i.e. those with very dense entanglement graphs $\mV$, to ones that are very close to being completely disentangled, i.e. those with a very sparse $\mV$. The following section will make more precise how one can learn a completely or partially disentangled representation from data and exactly what form the entanglement graph is going to take. 

\section{Nonparametric Partial Disentanglement via Mechanism Sparsity}\label{sec:nonparam_ident}
In this section, we show that without further assumptions the model introduced so far is unidentifiable (Section~\ref{sec:couter_example}), provide a first insight as to why mechanism sparsity can make the model identifiable and thus lead to disentanglement (Section~\ref{sec:insight_main}), introduce the machinery of $\mG$-preserving maps (Section~\ref{sec:consistency_equ}) which leads up to theorems showing identifiability up to $\va$-consistency (Section~\ref{sec:nonparam_ident_action}) and $\vz$-consistency (Section~\ref{sec:nonparam_ident_time}), which corresponds to partial disentanglement.  
We also relate these results to interventions (Section~\ref{sec:unknown-target}), show how to combine both regularization on $\hat\mG^a$ and $\hat\mG^z$ to obtain stronger guarantees (Section~\ref{sec:combine_Ga_Gz}) and introduce a graphical criterion guaranteeing complete disentanglement~(Section~\ref{sec:graph_crit}). Finally, we introduce the \textit{sufficient influence assumptions} and prove the identifiability results (Section~\ref{sec:proofs}), and provide multiple examples to build intuition~(Section~\ref{sec:examples}).

\subsection{Conditional Independence is Insufficient for Disentanglement and Graph Discovery}\label{sec:couter_example}

In the setting where the sequence $\{\vz^t\}_{t=1}^T$ is fully observed, the absence of instantaneous causal effects (Assumption~\ref{ass:cond_indep}) combined with faithfulness (when the only conditional independencies in the distribution are those implied by the causal graph) makes the causal graph fully identifiable \citep[Section 10.3.1]{peters2017elements}. In our setting where only the sequence $\{\vf(\vz^t) + \vn^t\}_{t=1}^T$ is observed, conditional independence alone is insufficient to identify the latent causal graph or the latent variables, as the following example demonstrates.
\begin{example}
    Suppose $p(\vz^t \mid \vz^{<t}, \va^{<t}) := \gN(\vz^t ; \mW\vz^{t-1} + \va^{t-1}\ve_1, \eta^2 I)$ where $\va^t \in \gA = \sR$, $\mW \in \sR^{d\times d}$ is symmetric and $\ve_1 \in \sR^{d}$ is the first one-hot vector. Let $\mG^z$ and $\mG^a$ be the graph entailed by this model. Note that $\mG^z$ can be very dense while $\mG^a$ contains only a single edge. Choose $\vf$ to be any diffeomorphism on its image. This defines a model $\vtheta = (\vf, p, \mG)$ that satisfies the conditional independence assumption. We now construct a second model $\tilde\vtheta$ that is observationally equivalent to $\vtheta$, but has a drastically different graph $\tilde\mG$ and different latent factors. Since $\mW$ is symmetric, it can be diagonalized by an orthogonal matrix, i.e., $\mW = \mU \mD \mU^\top$, where $\mU$ is orthogonal and $\mD$ is diagonal. Define $\tilde\vf := \vf \circ \mU$, so that $\vf(\sR^d) = \tilde\vf(\sR^d)$ and $\vv := \vf^{-1} \circ \tilde\vf = \mU$. Define
    \begin{align*}
        \tilde p(\vz^t \mid \vz^{<t}, \va^{<t}) :=  p(\mU\vz^t \mid \mU \vz^{<t}, \va^{<t}) \,,
    \end{align*}
    which is a well-defined conditional density by the change-of-variable formula (using $|\det \mU| = 1$). We can see $\tilde p$ as the conditional density resulting from first applying $\mU$ to $\vz^{t-1}$, taking a step using $p$ to get $\vz^{t}$ and finally transforming $\vz^{t}$ using $\mU^\top$, analogously to \eqref{eq:imitator}. It is clear that $\tilde p$ satisfies the conditional independence (Assumption~\ref{ass:cond_indep}) since an orthogonal transformation of an isotropic Gaussian remains an isotropic Gaussian. Furthermore, the mean of $\tilde p(\vz^t \mid \vz^{<t}, \va^{<t})$ is given by $\mU^\top \mW \mU \vz^{t-1} + \va^{t-1}\mU^\top\ve_1 = \mD \vz^{t-1} + \va^{t-1}(\mU^\top)_{:, 1}$. We find that $\tilde p$ is Markov w.r.t. $\tilde\mG^z = \mI$, since $\mD$ is diagonal, and $\tilde\mG^a := \mathbbm{1}$, assuming $\mU$ is dense. Note that $\tilde\mG^z$ will be in general much sparser than $\mG^z$ and $\tilde\mG^a$ is potentially much denser than $\mG^a$ (how extreme these differences are depends on the exact choice of $\mW$). Moreover, in general, the representation is not disentangled, since $\vv = \mU$ can be chosen to be dense. Nevertheless, these two models are observationally equivalent since they are equivalent up to diffeomorphism by construction~(Section~\ref{sec:diffeq}). 
\end{example}

\textcolor{black}{The preceding example illustrates some difficulties one might face when it comes to the identifiability of the model introduced so far. This serves as a motivation to explore additional inductive biases such as mechanism sparsity, which we do next.}

\subsection{A First Mathematical Insight for Disentanglement via Mechanism Sparsity}\label{sec:insight_main}

In this section, we derive a first insight pointing towards how mechanism sparsity regularization, i.e., regularizing $\hat\mG$ to be sparse, can promote disentanglement.

Before going further, we briefly introduce an abuse of notation that will be handy throughout: we will sometimes use vectors and matrices as sets of indices corresponding to their supports.
\begin{definition}[Vectors \& matrices as index sets]
    Let $\va \in \sR^{n}$ and $\mA \in \sR^{m \times n}$. We will sometimes use $\va$ to denote the set of indices corresponding to the support of the vector $\va$, i.e.
    \begin{align*}
        \va \sim \{i \in [n] \mid \va_i \not= 0\}\,.
    \end{align*}
    This will allow us to write things like $i \in \va$ or $\va \subseteq \vb$, where $\vb \in \sR^n$. We will use an analogous convention for matrices, i.e.,
    \begin{align*}
        \mA \sim \{(i,j) \in [m]\times[n] \mid \mA_{i,j} \not= 0\}\,,
    \end{align*}
    This will allow us to write things like $(i,j) \in \mA$ and $\mA \subseteq \mB$, where $\mB \in \sR^{m \times n}$.
\end{definition}

Recall that we would like to show that $\vtheta \eqobs \hat\vtheta$ implies $\vtheta \eqperm \hat\vtheta$, i.e., disentanglement (or partial disentanglement). Our approach will be to start from~\eqref{eq:density_v}, which is guaranteed by Proposition~\ref{prop:identDiffeo}, and perform a series of algebraic manipulations to gain mathematical insight into how regularizing $\hat\mG$ to be sparse (mechanism sparsity) can induce disentanglement. A key manipulation will be taking first and second order derivatives. For this to be possible, we require a certain level of smoothness for the transition models:
\begin{assumption}[Smoothness of transition model]\label{ass:smooth_trans}
    When $\va$ is continuous, the transition densities $p(\vz_i^t \mid \vz^{<t}, \va^{<t})$ are $C^2$ functions from $\sR \times \sR^{d_z \times (t-1)}\times \gA^t$ to $\sR$ and $\gA \subseteq \sR^\ell$ is regular closed\footnote{A set $\gA \subseteq \sR^\ell$ is regular closed when it is equal to the closure of its interior, i.e. $\overline{\gA^\circ} = \gA$. 
    }. When $\va$ is discrete (e.g. Section~\ref{sec:unknown-target}), for all $\va^{<t}$, $p(\vz_i^t \mid \vz^{<t}, \va^{<t})$ are $C^2$ functions from $\sR \times \sR^{d_z \times (t-1)}$ to $\sR$.
\end{assumption}

We start by taking the $\log$ on both sides of \eqref{eq:density_v} and let $q := \log p$ and $\hat q := \log \hat p$:
\begin{align*}
    \hat q(\vz^{t} \mid \vz^{<{t}}, \va^{<{t}}) = {q}(\bfv(\vz^{t}) \mid \bfv(\vz^{<{t}}), \va^{<{t}}) + \log|\det D\bfv(\vz^{{t}}) |\, .
\end{align*}
We then take the derivative w.r.t. $\vz^t$ on both sides:
\begin{align}
    D^t_z \hat q(\vz^{t} \mid \vz^{<{t}}, \va^{<{t}}) = D^t_z {q}(\bfv(\vz^{t}) \mid \bfv(\vz^{<{t}}), \va^{<{t}})D\vv(\vz^t) + \eta(\vz^t) \in \sR^{1 \times d_z}\, , \label{eq:first_deriv}
\end{align}
where $D^t_z q$ denotes the Jacobian of $q(\vz^{t} \mid \vz^{<{t}}, \va^{<{t}})$ w.r.t. $\vz^t$ and analogously for $D^t_z \hat q$. The term $\eta(\vz^t)$ is the derivative of $\log|\det D\bfv(\vz^{{t}}) |$ w.r.t. $\vz^t$. 

We differentiate\footnote{This derivative is well defined on $\gA$ (in the sense that it does not depend on its $C^k$ extension) since $\gA$ is regular close. We prove this general fact in Lemma~\ref{lem:equal_derivatives} in the appendix.} yet once more w.r.t. $\va^\tau$ for some $\tau < t$ (assuming $\va^t$ is continuous for now) and obtain
\begin{align}
    H^{t, \tau}_{z, a} \hat q(\vz^{t} \mid \vz^{<{t}}, \va^{<{t}}) = D\vv(\vz^t)^\top H^{t,\tau}_{z, a} {q}(\bfv(\vz^{t}) \mid \bfv(\vz^{<{t}}), \va^{<{t}}) \in \sR^{d_z \times d_a}\, , \label{eq:hessians_intuitive}
\end{align}
where $H^{t, \tau}_{z, a} q \in \sR^{d_z \times d_a}$ is the Hessian matrix of second derivatives w.r.t. $\vz^t$ and $\va^\tau$ and similarly for $H^{t, \tau}_{z, a} \hat q$.

We now look more closely at some specific entry $(i,\ell)$ of the Hessian $H^{t, \tau}_{z, a} q$. We first see that
\begin{align}
    \frac{\partial^2 }{\partial \va_\ell^\tau\partial \vz_i^t}q(\vz^t \mid \vz^{<t}, \va^{<t}) &= \frac{\partial^2 }{\partial \va_\ell^\tau\partial \vz_i^t}\sum_{j=1}^{d_z} q(\vz_j^t \mid \vz_{\Pa^z_j}^{<t}, \va_{\Pa^a_j}^{<t}) \label{eq:crucial_argument} \\
    &= \frac{\partial }{\partial \va_\ell^\tau} \sum_{j=1}^{d_z} \frac{\partial }{\partial \vz_i^t}q(\vz_j^t \mid \vz_{\Pa^z_j}^{<t}, \va_{\Pa^a_j}^{<t}) \\
    &= \frac{\partial }{\partial \va_\ell^\tau} \frac{\partial }{\partial \vz_i^t}q(\vz_i^t \mid \vz_{\Pa^z_i}^{<t}, \va_{\Pa^a_i}^{<t}) \,, \label{eq:zero_derivative}
\end{align}
where the first equality holds by~\eqref{eq:cond_indep}~\&~\eqref{eq:parents} and a basic property of logarithms. It is clear that \eqref{eq:zero_derivative} equals zero when $\ell \not\in \Pa_i^a$. This is a crucial observation, since it implies that whenever $\mG^a_{i, \ell} = 0$, we also have $(H^{t,\tau}_{z, a} {q})_{i, \ell} = 0$. In other words, $H^{t,\tau}_{z, a} {q} \subseteq \mG^a$. The same argument can also be applied to get $H^{t,\tau}_{z, a} \hat{q} \subseteq {\hat\mG^a}$. Notice how this argument would fail without the conditional independence assumption, since $\vz^t_i$ could appear in terms other than $q(\vz_i^t \mid \vz_{\Pa^z_i}^{<t}, \va_{\Pa^a_i}^{<t})$, as it could be the parent of some other $\vz_j^t$, for $j \not= i$.

\textbf{Intuitive argument.} We can start to see why regularizing $\hat\mG$ to be sparse might induce disentanglement. Intuitively, a sparse $\hat\mG^a$ forces $D\vv(\vz^t)$ to be sparse, since otherwise the l.h.s. of ~\eqref{eq:hessians_intuitive_2} will not be sparse:
\begin{align}
    \underbrace{H^{t, \tau}_{z, a} \hat q(\vz^{t} \mid \vz^{<{t}}, \va^{<{t}})}_{\subseteq \hat\mG^a} = \underbrace{D\vv(\vz^t)^\top}_\text{forced to be sparse} \underbrace{H^{t,\tau}_{z, a} {q}(\bfv(\vz^{t}) \mid \bfv(\vz^{<{t}}), \va^{<{t}})}_{\subseteq \mG^a} \, , \label{eq:hessians_intuitive_2}
\end{align}

And, of course, the sparser $D\vv(\vz^t)$ is, the more disentangled $\hat\vf$ is, since $D\vv_{i,j} = 0$ everywhere implies $\mV_{i,j}=0$ under weak assumptions (Proposition~\ref{prop:V_and_Dv}). The above argument is not rigorous and is provided only to build intuition. It will be formalized later. 

\textbf{Sparse temporal dependencies.} In what precedes, we made use of the sparsity of the graph $\hat\mG^a$ to argue that $D\vv$ must also be sparse. We now show a similar intuition based on the sparsity of $\hat\mG^z$. Starting from \eqref{eq:first_deriv}, instead of differentiating w.r.t. $\va^\tau$, we will differentiate w.r.t. $\vz^\tau$, for some $\tau < t$, which yields:
\begin{align}
    H^{t, \tau}_{z, z} \hat q(\vz^{t} \mid \vz^{<{t}}, \va^{<{t}}) = D\vv(\vz^t)^\top H^{t,\tau}_{z, z} {q}(\bfv(\vz^{t}) \mid \bfv(\vz^{<{t}}), \va^{<{t}})D\vv(\vz^\tau) \in \sR^{d_z \times d_z}\, , \label{eq:hessians_intuitive_time}
\end{align}
where $H^{t,\tau}_{z, z} {q}$ is the Hessian matrix of second derivatives of $q$ w.r.t. $\vz^t$ and $\vz^\tau$, and analogously for $H^{t,\tau}_{z, z} \hat{q}$. Using an argument perfectly analogous to Equations~\eqref{eq:crucial_argument}~to~\eqref{eq:zero_derivative}, we can show that whenever $\mG^z_{i,j} = 0$, we also have $(H^{t,\tau}_{z,z}q)_{i,j} = 0$, and similarly for $\hat\mG^z$ and $H^{t,\tau}_{z,z}\hat q$. In other words, $H^{t,\tau}_{z,z}q \subseteq \mG^z$ and $H^{t,\tau}_{z,z}\hat q \subseteq \hat\mG^z$. Therefore, analogously to~\eqref{eq:hessians_intuitive_2}, regularizing $\hat\mG^z$ to be sparse intuitively should force $D\vv$ to be sparse as well, i.e., bringing us closer to disentanglement:
\begin{align}
    \underbrace{H^{t, \tau}_{z, z} \hat q(\vz^{t} \mid \vz^{<{t}}, \va^{<{t}})}_{\subseteq \hat\mG^z} = \underbrace{D\vv(\vz^t)^\top}_\text{forced to be sparse} \underbrace{H^{t,\tau}_{z, z} {q}(\bfv(\vz^{t}) \mid \bfv(\vz^{<{t}}), \va^{<{t}})}_{\subseteq \mG^z}\underbrace{D\vv(\vz^\tau)}_\text{forced to be sparse} \, .\label{eq:hessians_intuitive_time_2}
\end{align}

The crux of our technical contribution in this work is to make the above arguments formal and precisely characterize what the sparsity structure of $D\vv(\vz)$ (hence, $\mV$) will be based on the ground-truth graph $\mG$ (Theorems~\ref{thm:nonparam_dis_cont_a}, \ref{thm:nonparam_dis_disc_a} \& \ref{thm:nonparam_dis_z}). We also provide conditions on $\mG$ to guarantee complete disentanglement (Proposition~\ref{def:disentanglement}). 

\subsection{Graph Preserving Maps}\label{sec:consistency_equ}
Theorems~\ref{thm:nonparam_dis_cont_a}, \ref{thm:nonparam_dis_disc_a}, \ref{thm:nonparam_dis_z} \& \ref{thm:expfam_dis_z} will show how regularizing $\hat\mG$ to be sparse can force the dependency graph of the entanglement map $\vv$ to be sparse as well. These results characterize the functional dependency structure of the entanglement map $\vv$ as a function of the ground-truth graph $\mG$. This link will be made precise thanks to the notion of graph preserving maps, which we define next. Before going further, we need to set up the following notation.

\begin{restatable}[Aligned subspaces of $\sR^m$ and $\sR^{m\times n}$]{definition}{AlignSubspaces} Given a binary vector $\vb \in \{0, 1\}^{m}$, let
\begin{align*}
    \sR^m_\vb := \{\vx \in \sR^m \mid \vb_i = 0 \implies \vx_i = 0\} \, 
\end{align*}
Given a binary matrix $\mB \in \{0,1\}^{m\times n}$, let
\begin{align*}
    \sR^{m\times n}_\mB := \{\mM \in \sR^{m \times n} \mid \mB_{i,j}=0 \implies \mM_{i,j} = 0\} \, .
\end{align*}
\end{restatable}
Note that $\sR^m_\vb$ and $\sR^{m\times n}_\mB$ are vector spaces under addition. This means that given $\va^{(1)}, \dots, \va^{(k)} \in \sR^m_\vb$, we have that $\textnormal{span} \{\va^{(1)}, \dots, \va^{(k)}\} \subset \sR^m_\vb$, where $\vecspan$ denotes the subspace of all linear combinations. Similarly, given $\mA^{(1)}, \dots, \mA^{(k)} \in \sR^{m\times n}_\mB$, we have that $\textnormal{span} \{\mA^{(1)}, \dots, \mA^{(k)}\} \subseteq \sR^{m\times n}_\mB$. 

To start formally reasoning about what will happen if $\hat\mG^a$ is regularized sparse, we temporarily assume that $\hat\mG^a = \mG^a$. With this assumption, we can interpret \eqref{eq:hessians_intuitive_2} as implying that $D\vv(\vz^t)^\top$ must \textit{preserve} the ``sparsity structure'' of the matrix $H^{t,\tau}_{z,a}q$. This observation motivates the following definitions, which will be central to our contribution.  

\begin{definition}[$\mG$-preserving matrix]\label{def:g_preserving_mat}
    Given $\mG \in \{0,1\}^{m \times n}$, a matrix $\mC \in \sR^{m\times m}$ is \textbf{$\mG$-preserving} when
    $$\mC^\top\sR^{m \times n}_\mG \subseteq \sR^{m \times n}_\mG\,.$$
\end{definition}

\begin{definition}[$\mG$-preserving functions]\label{def:g_preserving_map}
    Given $\mG \in \{0,1\}^{m \times n}$, a function $\vc:\sR^m \rightarrow \sR^m$ is \textbf{$\mG$-preserving} when its dependency graph $\mC$ (Definition~\ref{def:dep_graph}) is $\mG$-preserving.
\end{definition}

Without surprise, a linear map $\vc(\vz) := \mC\vz$ where $\mC \in \sR^{m\times m}$ is $\mG$-preserving (Definition~\ref{def:g_preserving_map}) if and only if the matrix $\mC$ is $\mG$-preserving (Definition~\ref{def:g_preserving_mat}).

We now show that $\mG$-preserving functions can be defined alternatively in terms of a simple condition on their dependency graph. This characterization of $\mG$-preserving functions is key to understanding how (partial) disentanglement results from sparsity regularization. 
\begin{proposition}\label{prop:charac_G_preserving}
    A function $\vc$ with dependency graph $\mC$ (Definition~\ref{def:dep_graph}) is $\mG$-preserving if and only
    $$\mG_{i,\cdot} \not\subseteq \mG_{j,\cdot} \implies \mC_{i,j} = 0,\ \text{for all}\ i,j\,.$$
\end{proposition}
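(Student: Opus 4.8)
The plan is to first unfold the definitions so that the claim becomes a statement purely about the binary dependency matrix $\mC$, and then to exploit the fact that $\sR^{m\times n}_\mG$ is a linear subspace (as already noted after the definition of aligned subspaces) to reduce the containment $\mC^\top \sR^{m\times n}_\mG \subseteq \sR^{m\times n}_\mG$ to a check on a convenient spanning set. By Definition~\ref{def:g_preserving_map}, $\vc$ is $\mG$-preserving exactly when its dependency matrix $\mC$ is $\mG$-preserving in the sense of Definition~\ref{def:g_preserving_mat}, so the statement is entirely about $\mC$. A matrix $\mM$ lies in $\sR^{m\times n}_\mG$ iff its support is contained in that of $\mG$, so this subspace is spanned by the elementary matrices $\mathbf{E}^{(k,\ell)}$ (having a single nonzero entry, equal to $1$, at position $(k,\ell)$) as $(k,\ell)$ ranges over the support of $\mG$. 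Since $\mM \mapsto \mC^\top\mM$ is linear and $\sR^{m\times n}_\mG$ is closed under linear combinations, $\mC$ is $\mG$-preserving if and only if $\mC^\top \mathbf{E}^{(k,\ell)} \in \sR^{m\times n}_\mG$ for every such $(k,\ell)$.

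The second step is a one-line computation of this product. I would show that $(\mC^\top \mathbf{E}^{(k,\ell)})_{i,\ell'} = \mC_{k,i}\,\delta_{\ell,\ell'}$, i.e. $\mC^\top\mathbf{E}^{(k,\ell)}$ places the $k$th row of $\mC$ into its $\ell$th column and vanishes elsewhere. Plugging into the membership condition, $\mC^\top\mathbf{E}^{(k,\ell)} \in \sR^{m\times n}_\mG$ says precisely that for every $i$, $\mG_{i,\ell}=0 \implies \mC_{k,i}=0$, equivalently $\mC_{k,i}\neq 0 \implies \ell \in \mG_{i,\cdot}$. Quantifying over all $(k,\ell)$ with $\ell\in\mG_{k,\cdot}$ and rearranging the quantifiers, $\mC$ is $\mG$-preserving iff for all $k,i$ with $\mC_{k,i}\neq 0$ we have $\mG_{k,\cdot}\subseteq\mG_{i,\cdot}$. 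Relabelling $(k,i)$ as $(i,j)$ and taking the contrapositive gives exactly $\mG_{i,\cdot}\not\subseteq\mG_{j,\cdot} \implies \mC_{i,j}=0$, which is the asserted criterion; this single computation establishes both directions at once.

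If one prefers the two implications separately, the argument specializes cleanly. For $(\Leftarrow)$, assume the criterion and take $\mM\in\sR^{m\times n}_\mG$ and an entry $(i,\ell)$ with $\mG_{i,\ell}=0$; in $(\mC^\top\mM)_{i,\ell}=\sum_k \mC_{k,i}\mM_{k,\ell}$ a summand is nonzero only if $\mC_{k,i}\neq 0$ (hence $\mG_{k,\cdot}\subseteq\mG_{i,\cdot}$) and $\mM_{k,\ell}\neq 0$ (hence $\ell\in\mG_{k,\cdot}$), forcing $\ell\in\mG_{i,\cdot}$ and contradicting $\mG_{i,\ell}=0$, so the sum vanishes. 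For $(\Rightarrow)$, if $\mG_{i,\cdot}\not\subseteq\mG_{j,\cdot}$ choose $\ell$ with $\mG_{i,\ell}=1$ but $\mG_{j,\ell}=0$, feed in $\mM=\mathbf{E}^{(i,\ell)}$, and read off $(\mC^\top\mM)_{j,\ell}=\mC_{i,j}$, which must vanish by $\mG$-preservation, giving $\mC_{i,j}=0$. The one place where care is genuinely needed — and the main (minor) obstacle — is the index bookkeeping around the transpose: keeping straight that $\mC^\top$ moves the $k$th \emph{row} of $\mC$ into a \emph{column}, and translating the resulting entrywise support condition into the row-inclusion $\mG_{k,\cdot}\subseteq\mG_{i,\cdot}$. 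Everything else is routine linear algebra.
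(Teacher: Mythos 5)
Your proof is correct and follows essentially the same route as the paper's: the paper's ``only if'' direction feeds the elementary matrices $\ve_i\ve_k^\top$ into the containment exactly as you do, and its ``if'' direction is the same entrywise computation $(\mC^\top\mA)_{i,j}=\sum_k \mC_{k,i}\mA_{k,j}$ with each summand shown to vanish. Your only addition is the spanning-set/linearity packaging that lets one computation handle both directions at once — a mild streamlining of the same argument, not a different method.
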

\begin{proof}
    We start by showing the ``only if'' statement. We suppose $\mG_{i,\cdot} \not\subseteq \mG_{j, \cdot}$ and must now show that $\mC_{i,j} =0$. We know that there exists $k$ such that $\mG_{i,k} = 1$ but $\mG_{j,k} = 0$. Since $\mC^\top\sR^{m\times n}_\mG \subseteq \sR^{m\times n}_\mG$ and $\ve_i\ve_k^\top \in \sR^{m\times n}_\mG$, we must have $\mC^\top\ve_i\ve_k^\top \in \sR^{m\times n}_\mG$. Since $\mG_{j,k} = 0$, we must have $0 = (\mC^\top\ve_i\ve_k^\top)_{j,k} = \mC_{i,j}$.
    
    We now show the ``if'' statement. Let $\mA \in \sR^{m\times n}_\mG$. Take some $(i,j)$ such that $\mG_{i,j} = 0$. We must now show that $(\mC^\top\mA)_{i,j} = 0$. We have $(\mC^\top\mA)_{i,j} = \sum_{k}\mC_{k,i}\mA_{k,j}$. We now check that each term in this sum must be zero. If $\mA_{k,j} = 0$, of course the corresponding term is zero. If $\mA_{k,j} \not= 0$, it implies that $\mG_{k,j} = 1$ and thus $\mG_{k,\cdot} \not\subseteq \mG_{i,\cdot}$. By assumption, this implies that $\mC_{k, i} = 0$ and thus $\mC_{k,i}\mA_{k,j} = 0$. Hence $(\mC^\top\mA)_{i,j} = 0$ as desired. 
\end{proof}

We now characterize differentiable $\mG$-preserving functions in terms of their Jacobian matrices.

\begin{lemma}\label{lem:s_consistent_jac}
    A differentiable function $\vc:\sR^m \rightarrow \sR^m$ is $\mG$-preserving if and only if, for all $\vz \in \sR^m$, $D\vc(\vz)$ is $\mG$-preserving. 
\end{lemma}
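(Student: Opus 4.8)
The plan is to reduce both sides of the equivalence to the purely combinatorial support condition of Proposition~\ref{prop:charac_G_preserving}, and then to bridge the two using Proposition~\ref{prop:V_and_Dv}, which links the zero pattern of the dependency graph $\mC$ to the common zero pattern of the Jacobians $D\vc(\vz)$. The key observation I would isolate at the outset is that ``being $\mG$-preserving'' is a condition on the \emph{support} of a matrix only: the statement and proof of Proposition~\ref{prop:charac_G_preserving} only ever test whether individual entries vanish, so they apply verbatim to any real matrix $\mM$, giving that $\mM$ is $\mG$-preserving if and only if $\mG_{i,\cdot} \not\subseteq \mG_{j,\cdot} \implies \mM_{i,j} = 0$ for all $i,j$. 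I would invoke this characterization both for the binary dependency graph $\mC$ of $\vc$ and for each real-valued Jacobian $D\vc(\vz)$.

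For the forward direction, I would assume $\vc$ is $\mG$-preserving, i.e. its dependency graph $\mC$ is $\mG$-preserving. By Proposition~\ref{prop:V_and_Dv}, $\mC_{i,j} = 0$ forces $D\vc(\vz)_{i,j} = 0$ for every $\vz$, so the support of each $D\vc(\vz)$ is contained in that of $\mC$. Now fixing $\vz$ and any $(i,j)$ with $\mG_{i,\cdot} \not\subseteq \mG_{j,\cdot}$, the support characterization applied to $\mC$ yields $\mC_{i,j} = 0$, hence $D\vc(\vz)_{i,j} = 0$. Thus $D\vc(\vz)$ satisfies the support condition and is $\mG$-preserving.

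For the converse, I would assume $D\vc(\vz)$ is $\mG$-preserving for every $\vz$, and fix any $(i,j)$ with $\mG_{i,\cdot} \not\subseteq \mG_{j,\cdot}$. The support characterization applied to each Jacobian gives $D\vc(\vz)_{i,j} = 0$ for all $\vz$, whereupon Proposition~\ref{prop:V_and_Dv} forces $\mC_{i,j} = 0$. Hence $\mC$ satisfies the support condition, so $\mC$—and therefore $\vc$—is $\mG$-preserving.

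I do not expect a genuine obstacle: once the support-only nature of $\mG$-preservation is isolated, the argument is pure bookkeeping and the two directions dovetail symmetrically. The only points requiring a word of care are (i) confirming that Proposition~\ref{prop:charac_G_preserving} may legitimately be invoked for the real-valued Jacobians and not merely for binary dependency graphs, and (ii) noting that $\mC$ records precisely the union over $\vz$ of the Jacobian supports—both of which are immediate from, respectively, the form of the support test and Proposition~\ref{prop:V_and_Dv}.
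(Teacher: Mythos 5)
Your proof is correct and follows essentially the same route as the paper's: both reduce $\mG$-preservation to the support characterization of Proposition~\ref{prop:charac_G_preserving} (noting it applies to arbitrary real matrices, hence to the Jacobians) and bridge the dependency graph $\mC$ with the common zero pattern of $D\vc(\vz)$ via Proposition~\ref{prop:V_and_Dv}. The only difference is presentational—you spell out the two implications separately where the paper chains the three equivalences—and your explicit remark that the characterization is support-only is a point the paper uses implicitly.
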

\begin{proof}
    Assume $\vc$ is $\mG$-preserving with dependency graph $\mC$. By Proposition~\ref{prop:charac_G_preserving}, this is equivalent to having that, for all $i,j \in [n]$, 
    \begin{align*}
        \mG_{i, \cdot} \not\subseteq \mG_{j,\cdot} \implies \mC_{i,j} = 0 \,.
    \end{align*}
    But by Proposition~\ref{prop:V_and_Dv}, this statement is equivalent to
    \begin{align*}
        \mG_{i, \cdot} \not\subseteq \mG_{j,\cdot} \implies \forall \vz \in \sR^m,\ D\vc(\vz)_{i,j} = 0 \,,
    \end{align*}
    which is equivalent to saying that $D\vc(\vz)$ is $\mG$-preserving for all $\vz \in \sR^m$ (again by Proposition~\ref{prop:charac_G_preserving}).
\end{proof}

We will now show that $\mG$-preserving diffeomorphisms form a group under composition. To do so, we start by showing that invertible $\mG$-preserving matrices form a group under matrix multiplication (Proposition~\ref{prop:group_of_S_consistent}) and extend the result to diffeomorphisms in Proposition~\ref{prop:group_of_S_consistent_diffeo}.

\begin{proposition}\label{prop:group_of_S_consistent}
    Invertible $\mG$-preserving matrices form a group under matrix multiplication.
\end{proposition}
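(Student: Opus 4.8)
The plan is to verify the four group axioms for the set $\mathcal{S} := \{\mC \in \sR^{m\times m} \mid \mC \text{ is } \mG\text{-preserving}\}$, where the only nontrivial point will be closure under inversion. Associativity is inherited from ordinary matrix multiplication, so nothing needs to be done there. The identity is immediate: $\mI^\top \sR^{m\times n}_\mG = \sR^{m\times n}_\mG \subseteq \sR^{m\times n}_\mG$, and $\mI$ is invertible. For closure under products I would argue directly from Definition~\ref{def:g_preserving_mat}: if $\mC, \mA \in \mathcal{S}$, then $(\mC\mA)^\top \sR^{m\times n}_\mG = \mA^\top \mC^\top \sR^{m\times n}_\mG \subseteq \mA^\top \sR^{m\times n}_\mG \subseteq \sR^{m\times n}_\mG$, so $\mC\mA$ is $\mG$-preserving, and a product of invertible matrices is invertible.

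The crux is showing that $\mC^{-1}$ is $\mG$-preserving whenever $\mC$ is invertible and $\mG$-preserving. First I would observe that $\mathcal{S}$ is in fact a linear subspace of $\sR^{m\times m}$: since $\sR^{m\times n}_\mG$ is itself a vector space (as already noted after the definition of aligned subspaces), if $\mC^\top \mM, \mA^\top \mM \in \sR^{m\times n}_\mG$ for every $\mM \in \sR^{m\times n}_\mG$, then so is $(\alpha\mC + \beta\mA)^\top \mM$. Together with product-closure and $\mI \in \mathcal{S}$, this makes $\mathcal{S}$ a finite-dimensional unital associative subalgebra of the full matrix algebra. Now consider the linear map $L_\mC : \mathcal{S} \to \mathcal{S}$ defined by $L_\mC(\mM) := \mC\mM$, which is well defined precisely because $\mathcal{S}$ is closed under products. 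It is injective, since $\mC\mM = 0$ forces $\mM = \mC^{-1}\mC\mM = 0$ using the invertibility of $\mC$ in the ambient algebra. As $\mathcal{S}$ is finite-dimensional, injectivity implies surjectivity, so there exists $\mB \in \mathcal{S}$ with $L_\mC(\mB) = \mC\mB = \mI$. By uniqueness of the inverse in $\sR^{m\times m}$ this forces $\mB = \mC^{-1}$, hence $\mC^{-1} \in \mathcal{S}$, as required.

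The inverse step is the only delicate part, and I expect the dimension-counting argument (``injective $\Rightarrow$ surjective'' on the finite-dimensional subalgebra $\mathcal{S}$) to be the real engine of the proof. An alternative, more combinatorial route would use the characterization of Proposition~\ref{prop:charac_G_preserving}: endow the rows with the preorder $i \preceq j \iff \mG_{i,\cdot} \subseteq \mG_{j,\cdot}$ and show by induction along this preorder that the nonzero pattern of $\mC^{-1}$ again respects $\preceq$. However, this requires carefully handling ties in the preorder (rows with equal supports) and is messier than the subalgebra argument, so I would favor the latter. It is worth flagging that finite-dimensionality of $\mathcal{S}$ is essential here and is exactly what closes the gap; everything else is a routine check.
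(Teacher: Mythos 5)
Your proof is correct, and your identity and product-closure steps coincide with the paper's. The difference is in the inversion step, which is where both proofs do the real work. The paper applies dimension counting to the aligned subspace itself: since $\mM \mapsto \mC^\top\mM$ is an invertible linear map of $\sR^{m\times n}$, the image $\mC^\top\sR^{m\times n}_\mG$ has the same dimension as $\sR^{m\times n}_\mG$, so the inclusion $\mC^\top\sR^{m\times n}_\mG \subseteq \sR^{m\times n}_\mG$ is forced to be an equality, and applying $(\mC^\top)^{-1}$ gives $(\mC^{-1})^\top\sR^{m\times n}_\mG = \sR^{m\times n}_\mG$. You instead apply dimension counting one level up, to the set of $\mG$-preserving matrices viewed as a finite-dimensional unital subalgebra of $\sR^{m\times m}$, and invoke the classical fact that an injective linear endomorphism (left multiplication by $\mC$) of a finite-dimensional space is surjective, producing a right inverse inside the subalgebra which, by uniqueness of inverses, must equal $\mC^{-1}$. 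Both arguments hinge on the same principle---finite dimensionality upgrades an inclusion or injection to an equality or bijection---but the paper's version is leaner, a two-line argument requiring no subalgebra structure, while yours establishes a reusable general statement: any element of a finite-dimensional matrix subalgebra that is invertible in the ambient algebra has its inverse in the subalgebra. A side benefit of the paper's route is that it directly exhibits the equality $\mC^\top\sR^{m\times n}_\mG = \sR^{m\times n}_\mG$ for invertible $\mG$-preserving $\mC$, a characterization that is implicit but less immediate in your version.
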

\begin{proof}
    We must show that the set of invertible $\mG$-preserving matrices contains the identity, is closed under matrix multiplication, and is closed under inversion.

    Clearly, $\mI$ is $\mG$-preserving since $\mI^\top \sR^{m\times n}_\mG = \sR^{m\times n}_\mG$.

    Let $\mC_1$ and $\mC_2$ be $\mG$-preserving. Then, $\mC_1\mC_2$ is $\mG$-preserving because 
    $$(\mC_1\mC_2)^\top\sR^{m\times n}_\mG = \mC_2^\top\mC_1^\top\sR^{m\times n}_\mG \subset \mC_2^\top\sR^{m\times n}_\mG \subset \sR^{m\times n}_\mG\,.$$

    Let $\mC$ be $\mG$-preserving and invertible. Since $\mC^\top$ is invertible as a map from $\sR^{m\times n}$ to $\sR^{m\times n}$, the dimensionality of the subspace $\sR^{m\times n}_\mG$ must be equal to the dimensionality of $\mC^\top\sR^{m\times n}_\mG$. This fact combined with $\mC^\top\sR^{m\times n}_\mG \subseteq \sR^{m\times n}_\mG$ implies that $\mC^\top\sR^{m\times n}_\mG = \sR^{m\times n}_\mG$. Hence $\sR^{m\times n}_\mG = (\mC^{-1})^\top\sR^{m\times n}_\mG$, i.e. $\mC^{-1}$ is $\mG$-preserving. 
\end{proof}

We now extend the above results to diffeomorphisms using Proposition~\ref{prop:V_and_Dv}.

\begin{restatable}{proposition}{GroupSConsistentDiffeo}\label{prop:group_of_S_consistent_diffeo}
    The set of $\mG$-preserving diffeomorphims forms a group under composition.
\end{restatable}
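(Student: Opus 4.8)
The plan is to verify the three group axioms—containment of the identity, closure under composition, and closure under inversion—for the set of $\mG$-preserving diffeomorphisms, leveraging the fact that we have already established the analogous result for invertible $\mG$-preserving matrices (Proposition~\ref{prop:group_of_S_consistent}) together with the characterization of $\mG$-preserving functions via their Jacobians (Lemma~\ref{lem:s_consistent_jac}). The key idea throughout is to transport statements about diffeomorphisms to pointwise statements about their Jacobian matrices, apply the matrix-level result, and transport back.

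First I would handle the identity: the identity map $\vz \mapsto \vz$ has Jacobian $\mI$ at every point, and $\mI$ is $\mG$-preserving as a matrix (shown in Proposition~\ref{prop:group_of_S_consistent}), so by Lemma~\ref{lem:s_consistent_jac} the identity diffeomorphism is $\mG$-preserving. For closure under composition, let $\vc_1$ and $\vc_2$ be $\mG$-preserving diffeomorphisms. By the chain rule, $D(\vc_1 \circ \vc_2)(\vz) = D\vc_1(\vc_2(\vz)) \, D\vc_2(\vz)$ for all $\vz$. By Lemma~\ref{lem:s_consistent_jac}, both factors on the right are $\mG$-preserving matrices, and since $\vc_1, \vc_2$ are diffeomorphisms these Jacobians are invertible; hence by Proposition~\ref{prop:group_of_S_consistent} their product is an invertible $\mG$-preserving matrix. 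As this holds at every $\vz$, Lemma~\ref{lem:s_consistent_jac} gives that $\vc_1 \circ \vc_2$ is $\mG$-preserving (and it is a diffeomorphism, being a composition of diffeomorphisms).

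For closure under inversion, let $\vc$ be a $\mG$-preserving diffeomorphism with inverse $\vc^{-1}$. Differentiating the identity $\vc \circ \vc^{-1} = \mathrm{id}$ yields $D\vc(\vc^{-1}(\vz)) \, D(\vc^{-1})(\vz) = \mI$, so $D(\vc^{-1})(\vz) = \big(D\vc(\vc^{-1}(\vz))\big)^{-1}$. Since $D\vc$ is $\mG$-preserving and invertible everywhere (by Lemma~\ref{lem:s_consistent_jac} and the diffeomorphism assumption), Proposition~\ref{prop:group_of_S_consistent} shows its inverse is again an invertible $\mG$-preserving matrix. Thus $D(\vc^{-1})(\vz)$ is $\mG$-preserving for every $\vz$, and one more application of Lemma~\ref{lem:s_consistent_jac} establishes that $\vc^{-1}$ is $\mG$-preserving.

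The argument is essentially routine bookkeeping once the Jacobian characterization is in place; the only point requiring care is ensuring that the Jacobians are genuinely invertible at every point—which follows from the definition of diffeomorphism in Assumption~\ref{ass:diffeomorphism}—so that Proposition~\ref{prop:group_of_S_consistent}, which concerns \emph{invertible} $\mG$-preserving matrices, actually applies. I do not anticipate a serious obstacle here; the main subtlety is simply remembering to invoke invertibility at each step rather than treating the matrix-level group result as applying to arbitrary $\mG$-preserving matrices.
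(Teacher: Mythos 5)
Your proof is correct and follows essentially the same route as the paper's: it establishes the identity, closure under composition via the chain rule, and closure under inversion via the inverse function theorem, in each case using Lemma~\ref{lem:s_consistent_jac} to pass between the function and its Jacobians and Proposition~\ref{prop:group_of_S_consistent} for the matrix-level group structure. The only (harmless) difference is that you invoke invertibility of the Jacobians even for the composition step, where Proposition~\ref{prop:group_of_S_consistent}'s closure under multiplication holds for arbitrary $\mG$-preserving matrices; invertibility is genuinely needed only for the inversion step.
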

\begin{proof}
    We must show that the set of $\mG$-preserving diffeomorphisms contains the identity, is closed under matrix multiplication, and is closed under inversion.
    
    The first statement is trivial since the entanglement graph of the identity diffeomorphism is the identity graph $\mC := \mI$, and of course it is $\mG$-preserving.
    
   We now prove the second statement. Let $\vc$ and $\vc'$ be two diffeomorphisms with dependency graph $\mC$ and $\mC'$ respectively. By the chain rule, we have that 
    \begin{align*}
        D(\vc \circ \vc')(\vz) = D\vc(\vc'(\vz))D\vc'(\vz) \, .
    \end{align*}
    By Lemma~\ref{lem:s_consistent_jac}, we have that $D\vc(\vc'(\vz))$ and $D\vc'(\vz)$ and $\mG$-preserving matrices and, by Proposition~\ref{prop:group_of_S_consistent} their product must also be $\mG$-preserving. Hence $D(\vc \circ \vc')(\vz)$ is $\mG$-preserving for all $\vz$ and thus, by Lemma~\ref{lem:s_consistent_jac}, $\vc \circ \vc'$ is $\mG$-preserving. 

    The proof of the third statement has a similar flavor. By the inverse function theorem, we have
    \begin{align*}
        D\vc^{-1}(\vz) = D\vc(\vc^{-1}(\vz))^{-1} \, .
    \end{align*}
    Moreover, by Lemma~\ref{lem:s_consistent_jac}, $D\vc(\vc^{-1}(\vz))$ is $\mG$ preserve. Furthermore, its inverse is also $\mG$-preserving by Proposition~\ref{prop:group_of_S_consistent}. Similarly to the previous step, because $\vc^{-1}$ is $C^1$, we can use Lemma~\ref{lem:s_consistent_jac} to conclude that $\vc^{-1}$ is also $\mG$-preserving.
\end{proof}

\subsection{Nonparametric Identifiability via Auxiliary Variables with Sparse Influence} \label{sec:nonparam_ident_action}

In this section, we introduce our first identifiability results based on the sparsity of the graph $\mG^a$ which describes the structure of the dependencies between $\va^{<t}$ and $\vz^t$. We will see that, under some assumptions, regularizing the learned graph $\hat\mG^a$ to be sparse will allow identifiability up to the following equivalence class:



\begin{definition}[$\va$-consistency equivalence]\label{def:a_consistent_models}
We say two models $\vtheta := (\bff, p, \mG)$ and $\tilde{\vtheta}:= (\tilde \bff,\tilde p, \tilde \mG)$ satisfying Assumptions~\ref{ass:diffeomorphism}, \ref{ass:cond_indep} \& \ref{ass:graph} are \textbf{$\va$-consistent}, denoted $\vtheta \eqcon^\va \tilde\vtheta$, if and only if there exists a permutation matrix $\mP$ such that
\begin{enumerate}
    \item $\vtheta \eqdiff \tilde{\vtheta}$ (Def.~\ref{def:eqdiff}), and $\tilde\mG^a = \mP\mG^a$\, ; and
    \item the entanglement map $\vv := \vf^{-1} \circ \tilde\vf$ can be written as $\vv = \vc \circ \mP^\top$ where $\vc$ is a $\mG^a$-preserving diffeomorphism (Def.~\ref{def:g_preserving_map}). 
\end{enumerate}
\end{definition}

The main difference between $\va$-consistency (above definition) and permutation equivalence (Definition~\ref{def:perm_equ}), is that instead of having $\vv = \vd \circ \mP^\top$ where $\vd$ is element-wise, we have $\vv = \vc \circ \mP^\top$ where $\vc$ is $\mG^a$-preserving, which allows some mixing between the latent factors. Importantly, a $\mG^a$-preserving map typically has missing edges in its dependency graph, as Proposition~\ref{prop:charac_G_preserving} shows. This means that this equivalence relation imposes structure on the entanglement map $\vv$. 
Depending on the structure of $\mG^a$, this can mean either complete, partial, or no disentanglement whatsoever. Note that the equivalence $\eqperm$ is stronger than $\eqcon^\va$, in the sense that $\vtheta \eqperm \hat\vtheta \implies \vtheta \eqcon^\va \hat\vtheta$. This is because element-wise transformations $\vd$ are always $\mG$-preserving, for any $\mG$. 

We demonstrate in Appendix~\ref{sec:proof_consistence_equivalence} that the $\va$-consistency relation is indeed an equivalence relation, as claimed in the above definition. This follows from the fact that the set of $\mG^a$-preserving diffeomorphisms forms a \textit{group} under composition (Proposition~\ref{prop:group_of_S_consistent_diffeo}).

The first result provides conditions under which regularizing the learned graph $\hat\mG^a$ to be as sparse as the ground-truth graph $\mG^a$ will induce the learned model to be $\va$-consistent with the ground-truth one. 

\begin{restatable}[Nonparametric disentanglement from continuous $\va$ with sparse influence]{theorem}{NonparamDisContA}\label{thm:nonparam_dis_cont_a}
    Let the parameters $\vtheta := (\bff, p, \mG)$ and $\hat{\vtheta}:= (\hat \bff,\hat p, \hat \mG)$ correspond to two models satisfying Assumptions~\ref{ass:diffeomorphism}, \ref{ass:cond_indep}, \ref{ass:graph}, \& \ref{ass:smooth_trans}. Further assume that 
    \begin{enumerate}
        \item \textbf{[Observational equivalence]} $\vtheta \eqobs \hat\vtheta$ (Def.~\ref{def:eqobs}); 
        \item \textbf{[Sufficient influence of $\va$]} The Hessian matrix $H^{t,\tau}_{z, a} \log p (\vz^{t} \mid \vz^{<{t}}, \va^{<{t}})$ varies ``sufficiently'', as formalized in Assumption~\ref{ass:nonparam_suff_var_a_cont};
    \end{enumerate}
    Then, there exists a permutation matrix $\mP$ such that $\mP\mG^a \subseteq \hat\mG^a$. Further assume that
    \begin{enumerate}[resume]
        \item \textbf{[Sparsity regularization]} $||\hat\mG^a||_0 \leq ||\mG^a||_0$;
    \end{enumerate}
    Then, $\vtheta \eqcon^\va \hat\vtheta$ (Def.~\ref{def:a_consistent_models}).
\end{restatable}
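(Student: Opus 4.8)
The plan is to make the heuristic argument of Section~\ref{sec:insight_main} rigorous, in roughly seven steps. First I would invoke Proposition~\ref{prop:identDiffeo} to upgrade the hypothesis $\vtheta \eqobs \hat\vtheta$ into $\vtheta \eqdiff \hat\vtheta$; this already yields $\vf(\sR^{d_z}) = \hat\vf(\sR^{d_z})$, makes the entanglement map $\vv := \vf^{-1}\circ\hat\vf$ a diffeomorphism, and gives the density identity~\eqref{eq:density_v}. Taking logarithms, differentiating once in $\vz^t$ and once in $\va^\tau$ (legitimate under Assumption~\ref{ass:smooth_trans}, with the boundary subtlety handled as in Lemma~\ref{lem:equal_derivatives}) reproduces the exact identity~\eqref{eq:hessians_intuitive},
\begin{equation}
H^{t,\tau}_{z,a}\hat q(\vz^t \mid \vz^{<t}, \va^{<t}) = D\vv(\vz^t)^\top H^{t,\tau}_{z,a} q(\vv(\vz^t) \mid \vv(\vz^{<t}), \va^{<t})\,,
\end{equation}
together with the support containments $H^{t,\tau}_{z,a}q \subseteq \mG^a$ and $H^{t,\tau}_{z,a}\hat q \subseteq \hat\mG^a$ established in~\eqref{eq:crucial_argument}--\eqref{eq:zero_derivative}.

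The core of the argument is to convert this pointwise matrix identity into the subspace inclusion $D\vv(\vz)^\top \sR^{d_z\times d_a}_{\mG^a} \subseteq \sR^{d_z\times d_a}_{\hat\mG^a}$ for every $\vz$. I would fix $\vz^t$ (so that $D\vv(\vz^t)$ is frozen) and let only the conditioning arguments $\vz^{<t}, \va^{<t}$ and the index $\tau$ vary; the sufficient-influence hypothesis (Assumption~\ref{ass:nonparam_suff_var_a_cont}) is precisely what guarantees that the resulting family of Hessians $H^{t,\tau}_{z,a}q$ spans all of $\sR^{d_z\times d_a}_{\mG^a}$. Since every such Hessian satisfies $D\vv(\vz^t)^\top H^{t,\tau}_{z,a}q \in \sR^{d_z\times d_a}_{\hat\mG^a}$, linearity of the map $\mM\mapsto D\vv(\vz^t)^\top\mM$ and of the target subspace yields the inclusion. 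I expect this to be the main obstacle: the intuitive display~\eqref{eq:hessians_intuitive_2} silently conflates the point at which $D\vv$ is evaluated with the point feeding the Hessian, and making the spanning argument precise while keeping $\vz^t$ fixed is the one genuinely delicate bookkeeping step.

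From the inclusion I would read off the zero pattern of $D\vv$. Feeding the rank-one matrices $\ve_i\ve_\ell^\top$ with $\mG^a_{i,\ell}=1$ (which lie in $\sR^{d_z\times d_a}_{\mG^a}$) shows that $D\vv(\vz)_{i,k}\neq 0$ forces $\hat\mG^a_{k,\ell}=1$ for every $\ell$ with $\mG^a_{i,\ell}=1$; invoking Proposition~\ref{prop:V_and_Dv}, this reads, in terms of the entanglement graph $\mV$,
\begin{equation}
\mV_{i,k}=1 \implies \mG^a_{i,\cdot} \subseteq \hat\mG^a_{k,\cdot}\,.
\end{equation}
Because $\vv$ is a diffeomorphism, $\det D\vv(\vz_0)\neq 0$ at any fixed $\vz_0$, so the Leibniz expansion of the determinant forces some permutation $\sigma$ to lie in the support of $D\vv(\vz_0)$, hence in $\mV$; thus $\mV_{i,\sigma(i)}=1$ for all $i$, which gives $\mG^a_{i,\cdot}\subseteq\hat\mG^a_{\sigma(i),\cdot}$. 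Letting $\mP$ be the permutation matrix of $\sigma$ yields $\mP\mG^a\subseteq\hat\mG^a$, the first conclusion.

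Finally I would close the two remaining gaps. The sparsity hypothesis $\|\hat\mG^a\|_0\le\|\mG^a\|_0$ combined with $\mP\mG^a\subseteq\hat\mG^a$ and $\|\mP\mG^a\|_0=\|\mG^a\|_0$ forces $\hat\mG^a=\mP\mG^a$, which is condition~1 of $\va$-consistency (Definition~\ref{def:a_consistent_models}). For condition~2, set $\vc := \vv\circ\mP$, so that $\vv = \vc\circ\mP^\top$ and, by the chain rule, $D\vc(\vz)^\top = \mP^\top D\vv(\mP\vz)^\top$. Then
\begin{equation}
D\vc(\vz)^\top \sR^{d_z\times d_a}_{\mG^a} \subseteq \mP^\top \sR^{d_z\times d_a}_{\hat\mG^a} = \mP^\top \sR^{d_z\times d_a}_{\mP\mG^a} = \sR^{d_z\times d_a}_{\mG^a}\,,
\end{equation}
so $D\vc(\vz)$ is $\mG^a$-preserving (Definition~\ref{def:g_preserving_mat}) for every $\vz$, and Lemma~\ref{lem:s_consistent_jac} promotes this to $\vc$ being a $\mG^a$-preserving diffeomorphism. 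Together with $\vtheta\eqdiff\hat\vtheta$, this establishes $\vtheta\eqcon^\va\hat\vtheta$.
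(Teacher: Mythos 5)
Your overall strategy matches the paper's: derive the Hessian identity, use sufficient influence to transfer the sparsity of $\hat\mG^a$ onto $D\vv$, extract a permutation from invertibility via the Leibniz expansion of the determinant, use the sparsity count to force $\hat\mG^a = \mP\mG^a$, and conclude $\mG^a$-preservation of $\vc := \vv\circ\mP$ via Lemma~\ref{lem:s_consistent_jac}. Where you genuinely depart from the paper is in how a \emph{single} permutation valid for all $\vz$ is obtained: the paper applies Lemma~\ref{lemma:L_perm} pointwise, getting a $\vz$-dependent permutation $\mP(\vz)$, and then needs a topological argument (Lemma~\ref{lem:same_permutation_almost}, resting on connectedness of $\sR^{d_z}$ and continuity of $D\vv$) to replace it by a uniform $\mP$. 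You instead lift the sparsity statement to the global dependency graph $\mV$ via Proposition~\ref{prop:V_and_Dv}, so that the implication $\mV_{i,k}=1 \implies \mG^a_{i,\cdot}\subseteq\hat\mG^a_{k,\cdot}$ is already $\vz$-free, and one application of the Leibniz formula at a single point $\vz_0$ produces the permutation. This is a real simplification, and it would even let you close condition~2 of Definition~\ref{def:a_consistent_models} purely combinatorially: the dependency graph of $\vv\circ\mP$ is $\mV\mP$, and your implication together with $\hat\mG^a=\mP\mG^a$ gives exactly the characterization in Proposition~\ref{prop:charac_G_preserving}, with no pointwise Jacobian statement needed.

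There is, however, a gap as written: Assumption~\ref{ass:nonparam_suff_var_a_cont} holds only for \emph{almost all} $\vz$ (outside a Lebesgue-null set $E_0$), whereas you assert the inclusion $D\vv(\vz)^\top\sR^{d_z\times d_a}_{\mG^a}\subseteq\sR^{d_z\times d_a}_{\hat\mG^a}$ \emph{for every} $\vz$. This matters in two places. First, to get $\mV_{i,k}=1\implies\mG^a_{i,\cdot}\subseteq\hat\mG^a_{k,\cdot}$ you need a witness $\vz$ at which \emph{both} $D\vv(\vz)_{i,k}\neq 0$ and the inclusion hold; Proposition~\ref{prop:V_and_Dv} only supplies some witness, which could lie in $E_0$. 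Second, your final display invokes the inclusion at $\mP\vz$ for arbitrary $\vz$, and Lemma~\ref{lem:s_consistent_jac} genuinely requires $D\vc(\vz)$ to be $\mG^a$-preserving at \emph{all} $\vz$, not almost all. Both holes are patched by the same routine argument the paper packages into Lemmas~\ref{lemma:extend_continuity_to_closure} and \ref{lemma:closure_almost}: the inclusion amounts to finitely many entries of $D\vv(\vz)$ vanishing (namely $D\vv(\vz)_{i,k}=0$ whenever $(i,\ell)\in\mG^a$ and $\hat\mG^a_{k,\ell}=0$), a closed condition; it holds on $\sR^{d_z}\setminus E_0$, which is dense; and $D\vv$ is continuous, so it holds everywhere (for the first hole one can equivalently note that $D\vv(\cdot)_{i,k}\neq 0$ on an open set, which must meet the dense set $\sR^{d_z}\setminus E_0$). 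A second, smaller inaccuracy: the sufficient influence assumption does not say the Hessians span all of $\sR^{d_z\times d_a}_{\mG^a}$ as matrices; it is a column-wise condition, $\vecspan \{\Lambda(\vz,\gamma_{(r)})_{\cdot,\ell}\} = \sR^{d_z}_{\Ch^a_\ell}$ for each $\ell$ separately. Your inclusion still follows, but the linearity argument must be run one column at a time, exactly as in the paper's display~\eqref{eq:5645342u}.
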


The second assumption as well as a proof of this result is delayed to Section~\ref{sec:proofs} for pedagogical reasons. We now describe and provide intuition about each assumption one by one.

\paragraph{Observational equivalence.} The first assumption simply requires that both models agree on the observational model. In practice, this is achieved by fitting the model to data. 

\paragraph{Sufficient influence.} The second assumption requires that the ``effect'' of $\va^{<t}$ on $\vz^t$ is ``sufficiently strong''. The assumption will be formalized and discussed in more detail later in Sections~\ref{sec:proofs} \& \ref{sec:examples}, but we can already see that it concerns the Hessian matrix $H^{t,\tau}_{z, a} \log p$ that we saw earlier in Eq. \eqref{eq:hessians_intuitive_2} of Sect.~\ref{sec:insight_main}. 

\paragraph{Sparsity regularization.} The first two assumptions imply that the learned graph $\hat{\mG}^a$ is a supergraph of some permutation of the ground-truth graph $\mG^a$. By adding the \textit{sparsity regularization} assumption, we find that the learned graph $\hat{\mG}^a$ \textit{is exactly} a permutation of the ground-truth graph $\mG^a$ and that, more precisely, the learned model is $\eqcon^\va$-equivalent to the ground-truth. This assumption is satisfied if $\hat{\mG}^a$ is a minimal graph among all graphs that allow the model to exactly match the ground-truth generative distribution. In Section~\ref{sec:estimation}, we suggest achieving this in practice by adding a sparsity penalty in the training objective or by constraining the optimization problem. 

\paragraph{$\va$-consistency.} The final conclusion of the result states that the learned model is $\eqcon^\va$-equivalent to the ground-truth, which means the entanglement map $\vv := \vf^{-1} \circ \hat\vf$ can be written as $\vv = \vc \circ \mP^\top$ where $\vc$ is $\mG^a$-preserving. This is important since the $\mG^a$-preserving condition imposes structure on the entanglement graph $\mV$ (Definition~\ref{def:entanglement_graphs}), as implied by Proposition~\ref{prop:charac_G_preserving}. In other words, the result predicts precisely which latent factors are expected to remain entangled. 

\begin{remark}[Inverse of $\vv$]\label{rem:v_inverse}
    We defined $\vv$ to be the mapping from the learned to the ground-truth representation, but in some context, it might be more telling to look at $\vv^{-1}$, which maps from the ground-truth to the learned representation. If $\vv = \vc \circ \mP^\top$ where $\vc$ is $\mG^a$-preserving (as predicted by Theorem~\ref{thm:nonparam_dis_cont_a}), we know that its inverse is given by $\vv^{-1} = \mP \circ \vc^{-1}$ where $\vc^{-1}$ is $\mG^a$-preserving, by closure under inversion (Proposition~\ref{prop:group_of_S_consistent_diffeo}).
\end{remark}

The following result is the same as above, but for \textit{discrete} auxiliary variables $\va$. This case is very important to cover the case where $\va$ indexes sparse interventions targeting latent factors, which we discuss in more detail in Section~\ref{sec:unknown-target}. Note that the only difference from the above theorem is the ``sufficient influence'' assumption, which we will present formally in Section~\ref{sec:proofs} together with a proof of the result.

\begin{restatable}[Nonparametric disentanglement via discrete $\va$ with sparse influence]{theorem}{NonparamDisDiscA}\label{thm:nonparam_dis_disc_a}
    Let the parameters $\vtheta := (\bff, p, \mG)$ and $\hat{\vtheta}:= (\hat \bff,\hat p, \hat \mG)$ correspond to two models satisfying Assumptions~\ref{ass:diffeomorphism}, \ref{ass:cond_indep}, \ref{ass:graph} \& \ref{ass:smooth_trans}. Further assume that
    \begin{enumerate}
        \item \textbf{[Observational equivalence]} $\vtheta \eqobs \hat\vtheta$ (Def.~\ref{def:eqobs});  
        \item \textbf{[Sufficient influence of $\va$]} The vector of derivatives $D^{t}_{z} \log p (\vz^{t} \mid \vz^{<{t}}, \va^{<{t}})$ depends ``sufficiently strongly'' on each component $\va_\ell$, as formalized in Assumption~\ref{ass:nonparam_suff_var_a};
    \end{enumerate}
    Then, there exists a permutation matrix $\mP$ such that $\mP\mG^a \subseteq \hat\mG^a$. Further assume that
    \begin{enumerate}[resume]
        \item \textbf{[Sparsity regularization]} $||\hat\mG^a||_0 \leq ||\mG^a||_0$;
    \end{enumerate}
    Then, $\vtheta \eqcon^\va \hat\vtheta$ (Def.~\ref{def:a_consistent_models}).
\end{restatable}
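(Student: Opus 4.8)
The plan is to follow the same reduction as the continuous case (Theorem~\ref{thm:nonparam_dis_cont_a}), replacing the second-order derivative in $\va$ by a finite difference over two values of a single coordinate of $\va$, which is the only place discreteness enters. First I would apply Proposition~\ref{prop:identDiffeo} to turn $\vtheta \eqobs \hat\vtheta$ into $\vtheta \eqdiff \hat\vtheta$, so that $\vv := \vf^{-1}\circ\hat\vf$ is a diffeomorphism and the first-derivative identity~\eqref{eq:first_deriv} holds with $q := \log p$ and $\hat q := \log \hat p$. Fixing a time $\tau < t$ and a coordinate $\ell \in [d_a]$, I would evaluate~\eqref{eq:first_deriv} at two auxiliary histories that agree everywhere except in the entry $\va^\tau_\ell$ (holding $\vz^{<t}$ fixed) and subtract. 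The Jacobian term $\eta(\vz^t)$ depends only on $\vz^t$ and hence cancels, leaving
\[
    \Delta_\ell D^t_z \hat q(\vz^t) = \big(\Delta_\ell D^t_z q(\vv(\vz^t))\big)\, D\vv(\vz^t)\,,
\]
where $\Delta_\ell$ denotes the finite difference across the two values of $\va^\tau_\ell$. Re-running the factorization computation~\eqref{eq:crucial_argument}--\eqref{eq:zero_derivative} shows that the $i$th entry of $\Delta_\ell D^t_z q$ vanishes unless $\ell \in \Pa^a_i$, so that $\Delta_\ell D^t_z q \in \sR^{d_z}_{\mG^a_{\cdot,\ell}}$, and likewise $\Delta_\ell D^t_z \hat q \in \sR^{d_z}_{\hat\mG^a_{\cdot,\ell}}$.

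Next I would bring in the sufficient influence Assumption~\ref{ass:nonparam_suff_var_a}: for each fixed $\vz^t$ and each $\ell$, varying the past and the two values of $\va^\tau_\ell$ produces finite differences $\Delta_\ell D^t_z q$ spanning all of $\sR^{d_z}_{\mG^a_{\cdot,\ell}}$. Since $D\vv(\vz^t)$ is one and the same matrix across these choices (it depends only on $\vz^t$), linearity together with the membership of the left-hand side in $\sR^{d_z}_{\hat\mG^a_{\cdot,\ell}}$ yields the pointwise inclusion $D\vv(\vz^t)^\top \sR^{d_z}_{\mG^a_{\cdot,\ell}} \subseteq \sR^{d_z}_{\hat\mG^a_{\cdot,\ell}}$ for every $\ell$ and every $\vz^t$. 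Reading this on the basis vector $\ve_i$, which lies in $\sR^{d_z}_{\mG^a_{\cdot,\ell}}$ exactly when $\ell \in \Pa^a_i$, and intersecting over such $\ell$ gives the key implication $D\vv(\vz)_{i,j}\neq 0 \implies \Pa^a_i \subseteq \hat\Pa^a_j$ for all $i,j$ and all $\vz$.

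To produce the permutation I would use invertibility of $D\vv$. Fixing any $\vz$, $\det D\vv(\vz)\neq 0$, so by the Leibniz expansion some permutation $\sigma$ has $D\vv(\vz)_{i,\sigma(i)}\neq 0$ for every $i$; the implication above then forces $\Pa^a_i \subseteq \hat\Pa^a_{\sigma(i)}$ for all $i$, which is exactly $\mP\mG^a \subseteq \hat\mG^a$ for the permutation matrix $\mP$ realizing $\sigma$, and this uses only assumptions~1 and~2. The sparsity constraint $\|\hat\mG^a\|_0 \leq \|\mG^a\|_0 = \|\mP\mG^a\|_0$ then upgrades this containment to the equality $\hat\mG^a = \mP\mG^a$. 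This equality gives $\sR^{d_z}_{\hat\mG^a_{\cdot,\ell}} = \mP\,\sR^{d_z}_{\mG^a_{\cdot,\ell}}$, so the inclusions rewrite as $(D\vv(\vz)\mP)^\top \sR^{d_z}_{\mG^a_{\cdot,\ell}} \subseteq \sR^{d_z}_{\mG^a_{\cdot,\ell}}$ for all $\ell$ and all $\vz$; since $\sR^{d_z\times d_a}_{\mG^a}$ decomposes column-wise into the $\sR^{d_z}_{\mG^a_{\cdot,\ell}}$, this is precisely the statement that $D\vv(\vz)\mP$ is $\mG^a$-preserving (Definition~\ref{def:g_preserving_mat}) for every $\vz$. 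Hence $D(\vv\circ\mP)$ is $\mG^a$-preserving everywhere and Lemma~\ref{lem:s_consistent_jac} makes $\vc := \vv\circ\mP$ a $\mG^a$-preserving diffeomorphism with $\vv = \vc\circ\mP^\top$; combined with $\vtheta\eqdiff\hat\vtheta$ and $\hat\mG^a = \mP\mG^a$, this is exactly $\vtheta \eqcon^\va \hat\vtheta$ (Definition~\ref{def:a_consistent_models}).

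The step I expect to be most delicate is converting the finite-difference identity into the pointwise subspace inclusion valid for every $\vz^t$: the argument factors out $D\vv(\vz^t)$ only because that matrix is constant while the past varies, so one must check that Assumption~\ref{ass:nonparam_suff_var_a} genuinely delivers spanning of $\sR^{d_z}_{\mG^a_{\cdot,\ell}}$ at each fixed $\vz^t$ rather than merely after mixing different values of $\vz^t$. Once that inclusion is secured the remainder is clean: the determinant/transversal argument needs the implication at a single point to extract $\mP$, whereas the $\mG^a$-preserving conclusion uses it at all $\vz$, and both are immediate given the pointwise inclusion.
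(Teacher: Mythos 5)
Your reduction is essentially the paper's: apply Proposition~\ref{prop:identDiffeo}, take a partial difference of the first-derivative identity~\eqref{eq:first_deriv} across two values of $\va^\tau_\ell$ (the term $\eta(\vz^t)$ indeed cancels since it does not depend on $\va^{<t}$), observe that the two difference vectors lie in $\sR^{d_z}_{\mG^a_{\cdot,\ell}}$ and $\sR^{d_z}_{\hat\mG^a_{\cdot,\ell}}$ respectively, and then run the span / invertibility / sparsity chain. One place where you genuinely streamline the argument: after extracting the permutation at a single point and upgrading $\mP\mG^a \subseteq \hat\mG^a$ to equality via the sparsity constraint, you substitute the \emph{global} graph identity $\hat\mG^a = \mP\mG^a$ back into the pointwise inclusions, so that the same fixed $\mP$ makes $D\vv(\vz)\mP$ a $\mG^a$-preserving matrix at every point where the inclusion holds. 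The paper instead works with a point-dependent permutation $\mP(\vz)$ and must invoke Lemma~\ref{lem:same_permutation_almost} (a connectedness argument partitioning permutations into equivalence classes) to replace it by a single one; your ordering of the steps renders that machinery unnecessary.

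There is, however, a gap you need to close. Assumption~\ref{ass:nonparam_suff_var_a} holds only for \emph{almost all} $\vz$, not for every $\vz$ as you assert (``for every $\ell$ and every $\vz^t$''), so your argument only establishes that $D\vv(\vz)\mP$ is $\mG^a$-preserving on $\sR^{d_z}\setminus E_0$ for some Lebesgue-null set $E_0$. But the conclusion $\vtheta \eqcon^\va \hat\vtheta$ requires the \emph{function} $\vc := \vv\circ\mP$ to be $\mG^a$-preserving, and by Lemma~\ref{lem:s_consistent_jac} together with Proposition~\ref{prop:V_and_Dv} this demands that the relevant entries of $D\vv(\vz)\mP$ vanish at every $\vz$, not almost every $\vz$. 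The repair is short but must be stated: being $\mG^a$-preserving is a closed condition on matrices (vanishing of a fixed set of entries), $D\vv$ is continuous, and the complement of a null set is dense in $\sR^{d_z}$ (Lemma~\ref{lemma:closure_almost}), so by Lemma~\ref{lemma:extend_continuity_to_closure} the vanishing extends from $\sR^{d_z}\setminus E_0$ to all of $\sR^{d_z}$. With this addition, and replacing your ``every $\vz$'' claims by ``almost every $\vz$'' up to that final step, your proof is complete.
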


We now provide a few examples to illustrate how Theorems~\ref{thm:nonparam_dis_cont_a}~\&~\ref{thm:nonparam_dis_disc_a} can be applied. Here, we concentrate on the relationship between the graph $\mG^a$ and the entanglement graph $\mV$ (Definition~\ref{def:entanglement_graphs}). The question of whether or not the sufficient influence assumption is satisfied will be delayed to Section~\ref{sec:examples}, where the examples will be made more concrete by specifying latent models more explicitly.

\begin{example}[$\mG^a = \mI$ implies complete disentanglement]\label{ex:single_node_complete_dis}
Assume that $d_a = d_z$ and $\mG^a = \mI$, i.e., each latent variable is affected by only one auxiliary variable, and each auxiliary variable affects only one latent variable. The graph $\mG^a$ is depicted in Figure~\ref{fig:Ga_a} and $\mG^z$ could be anything (see the remark below). Assuming that the ground-truth transition model satisfies the sufficient influence assumption of Theorem~\ref{thm:nonparam_dis_cont_a} or \ref{thm:nonparam_dis_disc_a}, we have $\vtheta \eqobs \hat\vtheta\ \&\ ||\hat\mG^a||_0 \leq ||\mG^a||_0 \implies \vtheta \eqcon^\va \hat\vtheta$. This means that there exists a permutation matrix $\mP$ such that $\hat\mG^a = \mP\mG^a$ and such that the entanglement map is given by $\vv = \vc\circ\mP^\top$ where $\vc$ is a $\mG^a$-preserving diffeomorphism~(Definition~\ref{def:g_preserving_mat}). But since $\mG^a = \mI$, Proposition~\ref{prop:charac_G_preserving} tells us that the dependency graph of $\vc$ is simply $\mC := \mI$ and thus the entanglement graph is $\mV = \mP^\top$, i.e., complete disentanglement holds. In fact, one could add more columns to $\mG^a$ (i.e. adding auxiliary variables) without changing the conclusion. Example~\ref{ex:single_node_complete_dis_2} will provide a concrete example satisfying the sufficient influence assumption of Theorem~\ref{thm:nonparam_dis_disc_a}.
\end{example}

\begin{remark}[Temporal dependencies are not necessary]\label{rem:no_temp}
    The above example did not mention anything about the temporal graph $\mG^z$. That is because this graph could be anything, in fact, we could be in the special case where there are no temporal dependencies whatsoever, i.e., $T=1$ and the latent model is simply $p(\vz \mid \va) = \prod_{i=1}^{d_z}p(\vz_i \mid \va)$. In that case, Theorems~\ref{thm:nonparam_dis_cont_a}~\&~\ref{thm:nonparam_dis_disc_a} could still be applied to prove the identifiability of the representation, as long as their assumptions hold. This remark also applies to the next two examples.
\end{remark}

\begin{example}[Action targeting a single latent variable identifies it]\label{ex:a_target_one_z}
    Consider the situation depicted in Figure~\ref{fig:working_example} where $\vz_1$ is the tree position, $\vz_2$ is the robot position and $\vz_3$ is the ball position ($d_z = 3$). Assume $\va \in \sR$ corresponds to the torque applied to the robot wheels ($d_a = 1$). We therefore have $\mG^a = [0, 1, 0]^\top$, i.e., $\va$ affects only $\vz_2$. For the sake of this example, $\mG^z$ can be anything, i.e., it does not have to be lower triangular like in Figure~\ref{fig:working_example} (see remark above).
    
    If the sufficient influence assumption of Theorem~\ref{thm:nonparam_dis_cont_a} or \ref{thm:nonparam_dis_disc_a} is satisfied, we see that $\vtheta \eqobs \hat\vtheta\ \&\ ||\hat\mG^a||_0 \leq ||\mG^a||_0$ implies $\vv = \vc\circ\mP^\top$ where $\mP$ is a permutation and $\vc$ is a $\mG^a$-preserving diffeomorphism. Using Proposition~\ref{prop:charac_G_preserving}, this means that the dependency graph of $\vc$ is given by
    \begin{align*}
        \mC = \begin{bmatrix}
            * & * & * \\
            0 & * & 0 \\
            * & * & * 
        \end{bmatrix} \text{, since $\mG^a_{2, \cdot} \not\subseteq \mG^a_{1, \cdot}$ and $\mG^a_{2, \cdot} \not\subseteq \mG^a_{3, \cdot}$\, ,}
    \end{align*}
    where ``$*$'' indicates a potentially nonzero value. This means that one of the components of the learned representation will be an invertible transformation of the ground-truth variable $\vz_2$ (robot position), while the other components could be a mixture of $\vz_1$, $\vz_2$ and $\vz_3$. Figure~\ref{fig:Ga_b} shows both the graph $\mG^a$ and the corresponding entanglement graph $\mV$ assuming $\mP = \mI$. Example~\ref{ex:a_target_one_z_cont_a} will make this example more concrete by explicitly specifying a latent model that satisfies the sufficient influence assumption of Theorem~\ref{thm:nonparam_dis_cont_a}. 
\end{example}

\begin{example}[Complete disentanglement from multi-target actions] \label{ex:multi_target_a}
    Assume $d_z = 3$ and $d_a = 3$ where $\mG^a \in \sR^{d_z \times d_a}$ is given by Figure~\ref{fig:Ga_c} and the temporal graph $\mG^z$ could be anything (see Remark~\ref{rem:no_temp} above). If the sufficient influence assumption of Theorem~\ref{thm:nonparam_dis_cont_a} or \ref{thm:nonparam_dis_disc_a} is satisfied, then we have that $\vtheta \eqobs \hat\vtheta\ \&\ ||\hat\mG^a||_0 \leq ||\mG^a||_0$ implies $\vv = \vc\circ\mP^\top$ where $\mP$ is a permutation and $\vc$ is a $\mG^a$-preserving diffeomorphism. Proposition~\ref{prop:charac_G_preserving} implies that the dependency graph of $\vc$ is simply $\mC := \mI$ because $\mG^a_{i, \cdot} \not\subseteq \mG^a_{j, \cdot}$ for all distinct $i, j$. This means that we have \textit{complete} disentanglement (Definition~\ref{def:disentanglement}). Examples~\ref{ex:multi_target_a_cont_a}, \ref{ex:multinode_linear_gauss} and \ref{ex:group_interv} will explore more concrete instantiations of this example by specifying concrete latent models satisfying the sufficient influence assumptions of Theorems~\ref{thm:nonparam_dis_cont_a} and \ref{thm:nonparam_dis_disc_a}.
\end{example}

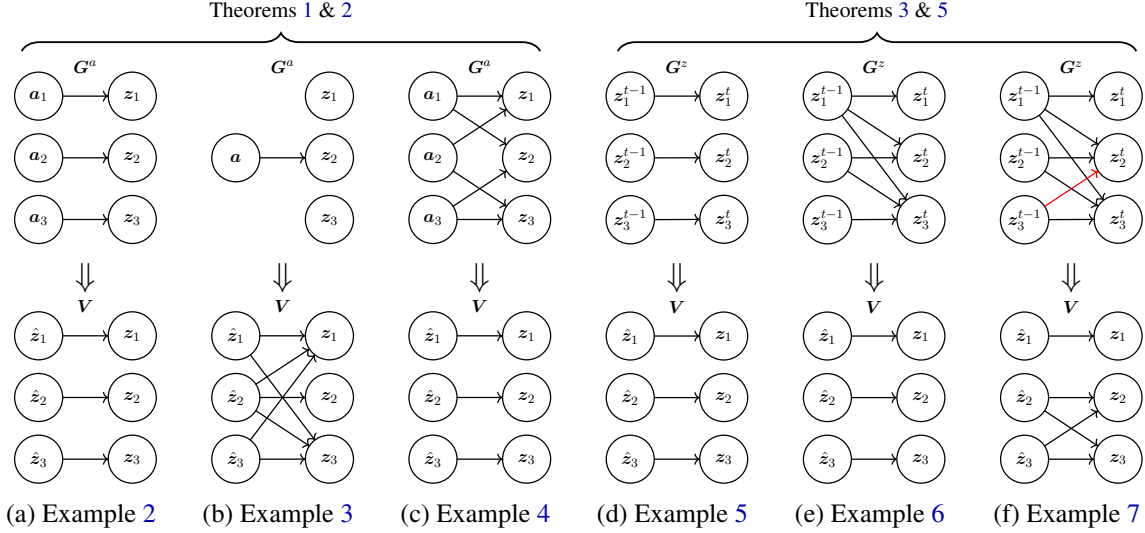
\begin{figure}
    \centering
    \resizebox{0.48\linewidth}{!}{
\usetikzlibrary{decorations.pathreplacing}
\begin{tikzpicture}
    \draw [decorate, decoration={brace,amplitude=5pt},line width=0.5pt] (0,0) -- (5,0);
    
    \node at (2.5,0.4) {\tiny Theorems~\ref{thm:nonparam_dis_cont_a}~\&~\ref{thm:nonparam_dis_disc_a}};
\end{tikzpicture}}
    \hfill
    \resizebox{0.48\linewidth}{!}{
    \usetikzlibrary{decorations.pathreplacing}
\begin{tikzpicture}
    \draw [decorate, decoration={brace,amplitude=5pt},line width=0.5pt] (0,0) -- (5,0);
    
    \node at (2.5,0.4) {\tiny Theorems~\ref{thm:nonparam_dis_z}~\&~\ref{thm:expfam_dis_z}};
\end{tikzpicture}}
     \begin{subfigure}[b]{0.14\textwidth}
         \centering
         \resizebox{1.0\linewidth}{!}{
    \usetikzlibrary{positioning}
\usetikzlibrary{calc}
\begin{tikzpicture}[node distance=1cm, thick, x=\Padding, y=\Padding]
  \tikzset{main/.style = {
    shape          = circle,
    draw,
    text           = black,
    inner sep      = 2pt,
    outer sep      = 0pt,
    minimum size   = 30 pt
}}
  
  \node[main] (a1) {$\va_1$};
  \node[main] (a2) [below=\VertNodeSpace of a1]{$\va_2$};
  \node[main] (a3) [below=\VertNodeSpace of a2]{$\va_3$};
  \node[main] (z1) [right=\HorizNodeSpace of a1]{$\vz_1$};
  \node[main] (z2) [below=\VertNodeSpace of z1]{$\vz_2$};
  \node[main] (z3) [below=\VertNodeSpace of z2]{$\vz_3$};
  \node[above=\VertLabelSpace] at ($(a3.north)!0.5!(z1.north)$) {$\mG^a$};
  
  \draw[->] (a1) -- (z1);
  \draw[->] (a2) -- (z2);
  \draw[->] (a3) -- (z3);


  \node[main] (z1_hat) [below=\GraphSpace of a3]{$\hat\vz_{B_1}$};
  \node[main] (z2_hat) [below=\VertNodeSpace of z1_hat]{$\hat\vz_{B_2}$};
  \node[main] (z3_hat) [below=\VertNodeSpace of z2_hat]{$\hat\vz_{B_3}$};
  \node[main] (z1_) [right=\HorizNodeSpace of z1_hat]{$\vz_{B_1}$};
  \node[main] (z2_) [below=\VertNodeSpace of z1_]{$\vz_{B_2}$};
  \node[main] (z3_) [below=\VertNodeSpace of z2_]{$\vz_{B_3}$};
  \node[above=\VertLabelSpace] at ($(z1_hat.north)!0.5!(z3_.north)$) {$\mV$};
  
  \draw[->] (z1_hat) -- (z2_);
  \draw[->] (z2_hat) -- (z3_);
  \draw[->] (z3_hat) -- (z1_);

  \node (arrow) at ($(a3)!0.5!(z1_)$) {{\huge$\Downarrow$}};
\end{tikzpicture}}
    \caption{Example~\ref{ex:single_node_complete_dis}}
         \label{fig:Ga_a}
     \end{subfigure}
     \hfill
     \begin{subfigure}[b]{0.14\textwidth}
         \centering
         \resizebox{1.0\linewidth}{!}{
    \usetikzlibrary{positioning}
\usetikzlibrary{calc}
\begin{tikzpicture}[node distance=1cm, thick, x=\Padding, y=\Padding]
  \tikzset{main/.style = {
    shape          = circle,
    draw,
    text           = black,
    inner sep      = 2pt,
    outer sep      = 0pt,
    minimum size   = 30 pt
}}
\tikzset{invisible/.style = {
    shape          = circle,
    draw,
    text           = black,
    inner sep      = 2pt,
    outer sep      = 0pt,
    minimum size   = 30 pt,
    opacity        = 0 pt
}}
  
  \node[main] (a) {$\va$};
  \node[invisible] (a_mock) [below=\VertNodeSpace of a]{};
  \node[main] (z2) [right=\HorizNodeSpace of a]{$\vz_2$};
  \node[main] (z1) [above=\VertNodeSpace of z2]{$\vz_1$};
  \node[main] (z3) [below=\VertNodeSpace of z2]{$\vz_3$};
  \node[above=\VertLabelSpace] at ($(a_mock.north)!0.5!(z1.north)$) {$\mG^a$};
  
  \draw[->] (a) -- (z2);

  \node[main] (z1_) [below=\GraphSpace of z3]{$\vz_1$};
  \node[main] (z2_) [below=\VertNodeSpace of z1_]{$\vz_2$};
  \node[main] (z3_) [below=\VertNodeSpace of z2_]{$\vz_3$};
  \node[main] (z1_hat) [left=\HorizNodeSpace of z1_]{$\hat\vz_1$};
  \node[main] (z2_hat) [left=\HorizNodeSpace of z2_]{$\hat\vz_2$};
  \node[main] (z3_hat) [left=\HorizNodeSpace of z3_]{$\hat\vz_3$};
  \node[above=\VertLabelSpace] at ($(z1_hat.north)!0.5!(z3_.north)$) {$\mV$};
  
  \draw[->] (z1_hat) -- (z1_);
  \draw[->] (z1_hat) -- (z3_);
  \draw[->] (z2_hat) -- (z1_);
  \draw[->] (z2_hat) -- (z2_);
  \draw[->] (z2_hat) -- (z3_);
  \draw[->] (z3_hat) -- (z3_);
  \draw[->] (z3_hat) -- (z1_);

  \node (arrow) at ($(z3)!0.5!(z1_hat)$) {\huge$\Downarrow$};
\end{tikzpicture}}
    \caption{Example~\ref{ex:a_target_one_z}}
         \label{fig:Ga_b}
     \end{subfigure}
     \hfill
     \begin{subfigure}[b]{0.14\textwidth}
         \centering
         \resizebox{1.0\linewidth}{!}{
    \usetikzlibrary{positioning}
\usetikzlibrary{calc}
\begin{tikzpicture}[node distance=1cm, thick, x=\Padding, y=\Padding]
  \tikzset{main/.style = {
    shape          = circle,
    draw,
    text           = black,
    inner sep      = 2pt,
    outer sep      = 0pt,
    minimum size   = 30 pt
}}
  
  \node[main] (a1) {$\va_1$};
  \node[main] (a2) [below=\VertNodeSpace of a1]{$\va_2$};
  \node[main] (a3) [below=\VertNodeSpace of a2]{$\va_3$};
  \node[main] (z1) [right=\HorizNodeSpace of a1]{$\vz_1$};
  \node[main] (z2) [below=\VertNodeSpace of z1]{$\vz_2$};
  \node[main] (z3) [below=\VertNodeSpace of z2]{$\vz_3$};
  \node[above=\VertLabelSpace] at ($(a1.north)!0.5!(z3.north)$) {$\mG^a$};
  
  \draw[->] (a1) -- (z1);
  \draw[->] (a1) -- (z2);
  \draw[->] (a2) -- (z1);
  \draw[->] (a2) -- (z3);
  \draw[->] (a3) -- (z2);
  \draw[->] (a3) -- (z3);

  \node[main] (z1_hat) [below=\GraphSpace of a3]{$\hat\vz_1$};
  \node[main] (z2_hat) [below=\VertNodeSpace of z1_hat]{$\hat\vz_2$};
  \node[main] (z3_hat) [below=\VertNodeSpace of z2_hat]{$\hat\vz_3$};
  \node[main] (z1_) [right=\HorizNodeSpace of z1_hat]{$\vz_1$};
  \node[main] (z2_) [below=\VertNodeSpace of z1_]{$\vz_2$};
  \node[main] (z3_) [below=\VertNodeSpace of z2_]{$\vz_3$};
  \node[above=\VertLabelSpace] at ($(z1_hat.north)!0.5!(z3_.north)$) {$\mV$};
  
  \draw[->] (z1_hat) -- (z1_);
  \draw[->] (z2_hat) -- (z2_);
  \draw[->] (z3_hat) -- (z3_);

  \node (arrow) at ($(a3)!0.5!(z1_)$) {\huge$\Downarrow$};
\end{tikzpicture}}
        \caption{Example~\ref{ex:multi_target_a}}
         \label{fig:Ga_c}
     \end{subfigure}
     \hfill
     \begin{subfigure}[b]{0.14\textwidth}
         \centering
         \resizebox{1.0\linewidth}{!}{
    \usetikzlibrary{positioning}
\usetikzlibrary{calc}
\begin{tikzpicture}[node distance=1cm, thick, x=\Padding, y=\Padding]
  \tikzset{main/.style = {
    shape          = circle,
    draw,
    text           = black,
    inner sep      = 2pt,
    outer sep      = 0pt,
    minimum size   = 30 pt
}}
  
  \node[main] (a1) {$\vz^{t-1}_1$};
  \node[main] (a2) [below=\VertNodeSpace of a1]{$\vz^{t-1}_2$};
  \node[main] (a3) [below=\VertNodeSpace of a2]{$\vz^{t-1}_3$};
  \node[main] (z1) [right=\HorizNodeSpace of a1]{$\vz^t_1$};
  \node[main] (z2) [below=\VertNodeSpace of z1]{$\vz^t_2$};
  \node[main] (z3) [below=\VertNodeSpace of z2]{$\vz^t_3$};
  \node[above=\VertLabelSpace] at ($(a3.north)!0.5!(z1.north)$) {$\mG^z$};
  
  \draw[->] (a1) -- (z1);
  \draw[->] (a2) -- (z2);
  \draw[->] (a3) -- (z3);

  \node[main] (z1_hat) [below=\GraphSpace of a3]{$\hat\vz_1$};
  \node[main] (z2_hat) [below=\VertNodeSpace of z1_hat]{$\hat\vz_2$};
  \node[main] (z3_hat) [below=\VertNodeSpace of z2_hat]{$\hat\vz_3$};
  \node[main] (z1_) [right=\HorizNodeSpace of z1_hat]{$\vz_1$};
  \node[main] (z2_) [below=\VertNodeSpace of z1_]{$\vz_2$};
  \node[main] (z3_) [below=\VertNodeSpace of z2_]{$\vz_3$};
  \node[above=\VertLabelSpace] at ($(z1_hat.north)!0.5!(z3_.north)$) {$\mV$};
  
  \draw[->] (z1_hat) -- (z1_);
  \draw[->] (z2_hat) -- (z2_);
  \draw[->] (z3_hat) -- (z3_);

  \node (arrow) at ($(a3)!0.5!(z1_)$) {\huge$\Downarrow$};
\end{tikzpicture}}
        \caption{Example~\ref{ex:diagonal_deps}}
         \label{fig:Gz_cd}
     \end{subfigure}
     \hfill
     \begin{subfigure}[b]{0.14\textwidth}
         \centering
         \resizebox{1.0\linewidth}{!}{
    \usetikzlibrary{positioning}
\usetikzlibrary{calc}
\begin{tikzpicture}[node distance=1cm, thick, x=\Padding, y=\Padding]
  \tikzset{main/.style = {
    shape          = circle,
    draw,
    text           = black,
    inner sep      = 2pt,
    outer sep      = 0pt,
    minimum size   = 30 pt
}}
  
  \node[main] (a1) {$\vz^{t-1}_1$};
  \node[main] (a2) [below=\VertNodeSpace of a1]{$\vz^{t-1}_2$};
  \node[main] (a3) [below=\VertNodeSpace of a2]{$\vz^{t-1}_3$};
  \node[main] (z1) [right=\HorizNodeSpace of a1]{$\vz^t_1$};
  \node[main] (z2) [below=\VertNodeSpace of z1]{$\vz^t_2$};
  \node[main] (z3) [below=\VertNodeSpace of z2]{$\vz^t_3$};
  \node[above=\VertLabelSpace] at ($(a3.north)!0.5!(z1.north)$) {$\mG^z$};
  
  \draw[->] (a1) -- (z1);
  \draw[->] (a1) -- (z2);
  \draw[->] (a1) -- (z3);
  \draw[->] (a2) -- (z2);
  \draw[->] (a2) -- (z3);
  \draw[->] (a3) -- (z3);

  \node[main] (z1_hat) [below=\GraphSpace of a3]{$\hat\vz_1$};
  \node[main] (z2_hat) [below=\VertNodeSpace of z1_hat]{$\hat\vz_2$};
  \node[main] (z3_hat) [below=\VertNodeSpace of z2_hat]{$\hat\vz_3$};
  \node[main] (z1_) [right=\HorizNodeSpace of z1_hat]{$\vz_1$};
  \node[main] (z2_) [below=\VertNodeSpace of z1_]{$\vz_2$};
  \node[main] (z3_) [below=\VertNodeSpace of z2_]{$\vz_3$};
  \node[above=\VertLabelSpace] at ($(z1_hat.north)!0.5!(z3_.north)$) {$\mV$};
  
  \draw[->] (z1_hat) -- (z1_);
  \draw[->] (z2_hat) -- (z2_);
  \draw[->] (z3_hat) -- (z3_);

  \node (arrow) at ($(a3)!0.5!(z1_)$) {\huge$\Downarrow$};
\end{tikzpicture}}
        \caption{Example~\ref{ex:lower_triangular_no_action}}
         \label{fig:Gz_d}
     \end{subfigure}
     \hfill
     \begin{subfigure}[b]{0.14\textwidth}
         \centering
         \resizebox{1.0\linewidth}{!}{
    \usetikzlibrary{positioning}
\usetikzlibrary{calc}
\begin{tikzpicture}[node distance=1cm, thick, x=\Padding, y=\Padding]
  \tikzset{main/.style = {
    shape          = circle,
    draw,
    text           = black,
    inner sep      = 2pt,
    outer sep      = 0pt,
    minimum size   = 30 pt
}}
  
  \node[main] (a1) {$\vz^{t-1}_1$};
  \node[main] (a2) [below=\VertNodeSpace of a1]{$\vz^{t-1}_2$};
  \node[main] (a3) [below=\VertNodeSpace of a2]{$\vz^{t-1}_3$};
  \node[main] (z1) [right=\HorizNodeSpace of a1]{$\vz^t_1$};
  \node[main] (z2) [below=\VertNodeSpace of z1]{$\vz^t_2$};
  \node[main] (z3) [below=\VertNodeSpace of z2]{$\vz^t_3$};
  \node[above=\VertLabelSpace] at ($(a3.north)!0.5!(z1.north)$) {$\mG^z$};
  
  \draw[->] (a1) -- (z1);
  \draw[->] (a1) -- (z2);
  \draw[->] (a1) -- (z3);
  \draw[->] (a2) -- (z2);
  \draw[->] (a2) -- (z3);
  \draw[->] (a3) -- (z3);
  \draw[->, red] (a3) -- (z2);

  \node[main] (z1_hat) [below=\GraphSpace of a3]{$\hat\vz_1$};
  \node[main] (z2_hat) [below=\VertNodeSpace of z1_hat]{$\hat\vz_2$};
  \node[main] (z3_hat) [below=\VertNodeSpace of z2_hat]{$\hat\vz_3$};
  \node[main] (z1_) [right=\HorizNodeSpace of z1_hat]{$\vz_1$};
  \node[main] (z2_) [below=\VertNodeSpace of z1_]{$\vz_2$};
  \node[main] (z3_) [below=\VertNodeSpace of z2_]{$\vz_3$};
  \node[above=\VertLabelSpace] at ($(z1_hat.north)!0.5!(z3_.north)$) {$\mV$};
  
  \draw[->] (z1_hat) -- (z1_);
  \draw[->] (z2_hat) -- (z2_);
  \draw[->] (z2_hat) -- (z3_);
  \draw[->] (z3_hat) -- (z3_);
  \draw[->] (z3_hat) -- (z2_);

  \node (arrow) at ($(a3)!0.5!(z1_)$) {\huge$\Downarrow$};
\end{tikzpicture}}
        \caption{Example~\ref{ex:temporal_partial}}
         \label{fig:Gz_e}
     \end{subfigure}
        \caption{Graphs $\mG^a$ and $\mG^z$ from Examples~\ref{ex:single_node_complete_dis}, \ref{ex:a_target_one_z}, \ref{ex:multi_target_a}, \ref{ex:diagonal_deps}, \ref{ex:lower_triangular_no_action} \& \ref{ex:temporal_partial} with their respective entanglement graphs $\mV$ (Definition~\ref{def:entanglement_graphs}) guaranteed by Theorems~\ref{thm:nonparam_dis_cont_a}, \ref{thm:nonparam_dis_disc_a}, \ref{thm:nonparam_dis_z} \& \ref{thm:expfam_dis_z} (assuming $\mP = \mI$ for simplicity). Recall, that $\mV$ describes the dependency structure of $\vv = \vf^{-1} \circ \hat\vf$, which maps $\hat\vz$ to $\vz$. By Remark~\ref{rem:v_inverse}, the functional dependency graph of $\vv^{-1}$ is exactly the same except for $\vz$ and $\hat\vz$ being interchanged.}
        \label{fig:Ga}
\end{figure}

\subsubsection{Unknown-Target Interventions on the Latent Factors}\label{sec:unknown-target}
An important special case of Theorem~\ref{thm:nonparam_dis_disc_a} is when $\va^{t-1}$ corresponds to a one-hot vector indexing an \textit{intervention with unknown targets} on the latent variables $\vz^t$. This specific kind of intervention has been explored previously in the context of causal discovery where the intervention occurs on \textit{observed} variables instead of \textit{latent} variables~\citep{eaton2007exact, JCI_jmlr, ut_igsp, NEURIPS2020_6cd9313e, dcdi, ke2019learning}. Recently, multiple works in causal representation learning have considered interventions on latent variables~\citep{lachapelle2022disentanglement,lippe2022icitris,ahuja2023interventional,squires2023linear,buchholz2023learning,vonkugelgen2023nonparametric,zhang2023identifiability,jiang2023learning} (see Section~\ref{sec:lit_review} for more). Here is how our framework can accommodate such interventions: Assume $\va^{t-1} \in \{\vec{0}, \ve_1, ..., \ve_{d_a}\}$, where each $\ve_\ell$ is a one-hot vector. The action $\va^{t-1} = \vec{0}$ corresponds to the \textit{observational setting}, i.e., when no intervention occurred, while $\va^{t-1} = \ve_\ell$ corresponds to the $\ell$th intervention. In that context, the unknown graph $\mG^a$ describes which latents are targeted by the intervention, i.e. $\ell \in \Pa^a_{i}$ if and only if $\vz_i$ is targeted by the $\ell$th intervention. To see this, recall that, under Assumption~\ref{ass:graph}, we have 
\begin{align*}
    p(\vz_i^t \mid \vz^{<t}, \va^{<t}) = p(\vz_i^t \mid \vz^{<t}_{\Pa_i^z}, \va^{t-1}_{\Pa_i^a}) \,,
\end{align*}
where we implicitly assumed that $p(\vz_i^t \mid \vz^{<t}, \va^{<t})$ does not depend on $\va^{<t-1}$. 
In the observational setting, i.e., when $\va^{t-1} = \vec{0}$, the conditional becomes $p(\vz_i^t \mid \vz^{<t}, \vec{0})$. Now suppose that we are in the $\ell$th intervention, i.e., $\va^{t-1} = \ve_\ell$. Then, if $\ell \not\in \Pa^a_i$, we have that $\va^{t-1}_{\Pa^a_i} = \vec{0}$, which means that the conditional is also $p(\vz_i^t \mid \vz^{<t}, \vec{0})$, meaning that the variable $\vz_i^t$ \textit{is not} targeted by the $\ell$th intervention. When $\ell \in \Pa^a_i$, we have $\va^{t-1}_{\Pa^a_i} \not= \vec{0}$ and thus the conditional is allowed to change freely, i.e., $\vz^t_i$ \textit{is} targeted by the $\ell$th intervention.

Importantly, the assumption that $\mG^a$ is sparse corresponds precisely to the \textit{sparse mechanism shift} hypothesis from~\cite{scholkopf2021causal}, i.e. that \textit{only a few mechanisms change at a time.} Thm.~\ref{thm:nonparam_dis_disc_a} thus provides precise conditions for when sparse mechanism shifts induce disentanglement. Interestingly, our theory covers both hard and soft interventions, as long as the sufficient influence assumption is satisfied.

\begin{remark}[Examples revisited]
    Examples~\ref{ex:single_node_complete_dis}, \ref{ex:a_target_one_z} and \ref{ex:multi_target_a} can be revisited while keeping in mind the ``unknown-target intervention interpretation'' in which $\mG^a$ describes which latent variable is targeted by each intervention. For example, Example~\ref{ex:single_node_complete_dis} tells us that if each latent variable is targeted by a single-node intervention, then complete disentanglement is guaranteed. Examples~\ref{ex:single_node_complete_dis_2}, \ref{ex:multinode_linear_gauss} and \ref{ex:group_interv} provide mathematically concrete latent models where $\va$ is interpreted to be an intervention.
\end{remark}

\begin{remark}[Causal representation learning without temporal dependencies (static)]\label{rem:static_CRL_and_us}
    The special case where $T=1$, i.e., no temporal dependencies, is of special interest. In that case, the latent variable model is simply $p(\vz \mid \va) = \prod_{i=1}^{d_z} p(\vz_i \mid \va)$. In other words, the causal graph that relates the latent factors $\vz_i$ is empty. In contrast, recent work on learning causal representation showed how to obtain disentanglement in general latent causal graphical models without temporal dependencies, but are limited to single-node interventions~\citep{ahuja2023interventional,squires2023linear,buchholz2023learning,vonkugelgen2023nonparametric,zhang2023identifiability,jiang2023learning}. \textcolor{black}{These works can be roughly thought of as fitting our interventional setting described in this section where (i) $T=1$; (ii) conditional independence (Assumption~\ref{eq:cond_indep}) is dropped (allowing instantaneous causal effects); and (iii) only single-node interventions are allowed, corresponding to a very sparse $\mG^a$ with columns containing at most one non-zero element, like in Example~\ref{ex:single_node_complete_dis}.} Although our framework with $T=1$ assumes that the causal graph between latent variables is empty, it allows for multi-node interventions which are sometimes sufficient to disentangle (Example~\ref{ex:group_interv}). See Section~\ref{sec:suff_inf_disc_a} for more on this.
\end{remark}

\subsection{Nonparametric Identifiability via Sparse Temporal Dependencies} \label{sec:nonparam_ident_time}
This section is analogous to the previous one, but instead of leveraging the sparsity of $\mG^a$ to show identifiability, it leverages the sparsity of $\mG^z$, which describes the structure of the dependencies between the latents from one time step to another. We will see that, under some assumptions, regularizing the learned graph $\hat\mG^z$ to be sparse will allow identifiability up to the following equivalence class:

\begin{definition}[$\vz$-consistency equivalence]\label{def:z_consistent_models}
We say two models $\vtheta := (\bff, p, \mG)$ and $\tilde{\vtheta}:= (\tilde \bff,\tilde p, \tilde \mG)$ satisfying Assumptions~\ref{ass:diffeomorphism}, \ref{ass:cond_indep} \& \ref{ass:graph} are \textbf{$\vz$-consistent}, denoted $\vtheta \eqcon^\vz \tilde\vtheta$, if and only if there exists a permutation matrix $\mP$ such that
\begin{enumerate}
    \item $\vtheta \eqdiff \tilde{\vtheta}$ (Def.~\ref{def:eqdiff}) and $\tilde\mG^z =  \mP \mG^z \mP^\top$; and
    \item the entanglement map $\vv := \vf^{-1} \circ \tilde\vf$ can be written as $\vv = \vc \circ \mP^\top$ where $\vc$ is a $\mG^z$-preserving and $(\mG^z)^\top$-preserving diffeomorphism (Definition~\ref{def:g_preserving_map}).
\end{enumerate}
\end{definition}

This relation can be shown to be an \textit{equivalence} relation, as was the case for $\eqcon^{\va}$. This is shown in Appendix~\ref{sec:proof_consistence_equivalence}. Analogously to $\eqcon^\va$, the equivalence relation $\eqcon^\vz$ relates the structure of the entanglement map $\vv$ to the graph $\mG^z$ via the notion of $\mG$-preserving maps. It is also true that $\vtheta \eqperm \hat\vtheta \implies \vtheta \eqcon^\vz \hat\vtheta$.

The following result is analogous to Theorems~\ref{thm:nonparam_dis_cont_a} and \ref{thm:nonparam_dis_disc_a} where, instead of regularizing $\hat\mG^a$ to be sparse, we regularize $\hat\mG^z$. The next theorem shows how this type of sparsity regularization can induce the learned model to be $\vz$-consistent with the ground-truth one.

\begin{restatable}[Nonparametric disentanglement via sparse temporal dependencies]{theorem}{NonparamDisZ}\label{thm:nonparam_dis_z}
    Let the parameters $\vtheta := (\bff, p, \mG)$ and $\hat{\vtheta}:= (\hat \bff,\hat p, \hat \mG)$ correspond to two models satisfying Assumptions~\ref{ass:diffeomorphism}, \ref{ass:cond_indep}, \ref{ass:graph} \& \ref{ass:smooth_trans}. Further assume that
    \begin{enumerate}
        \item \textbf{[Observational equivalence]} $\vtheta \eqobs \hat\vtheta$ (Def.~\ref{def:eqobs});  
        \item \textbf{[Sufficient influence of $\vz$]} The Hessian matrix $H^{t,\tau}_{z, z} \log p (\vz^{t} \mid \vz^{<{t}}, \va^{<{t}})$ varies ``sufficiently'', as formalized in Assumption~\ref{ass:nonparam_suff_var_z};
    \end{enumerate}
    Then, there exists a permutation matrix $\mP$ such that $\mP\mG^z\mP^\top \subseteq \hat\mG^z$. Further assume that
    \begin{enumerate}[resume]
        \item \textbf{[Sparsity regularization]} $||\hat\mG^z||_0 \leq ||\mG^z||_0$;
    \end{enumerate}
    Then, $\vtheta \eqcon^\vz \hat\vtheta$ (Def.~\ref{def:z_consistent_models}).
\end{restatable}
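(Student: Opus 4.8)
The plan is to make rigorous the heuristic of \eqref{eq:hessians_intuitive_time_2} from Section~\ref{sec:insight_main}. First I would apply Proposition~\ref{prop:identDiffeo} to upgrade the hypothesis $\vtheta \eqobs \hat\vtheta$ into $\vtheta \eqdiff \hat\vtheta$, producing the change-of-variable identity \eqref{eq:density_v} for the entanglement map $\vv := \vf^{-1}\circ\hat\vf$, which is a diffeomorphism. Taking logarithms and differentiating once in $\vz^t$ (as in \eqref{eq:first_deriv}) and then once in $\vz^\tau$ for a fixed $\tau<t$ annihilates the log-det term $\eta$, since it depends on $\vz^t$ only, and recovers the pointwise Hessian identity \eqref{eq:hessians_intuitive_time}. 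Repeating the computation \eqref{eq:crucial_argument}--\eqref{eq:zero_derivative} with $\vz^\tau$ in place of $\va^\tau$ and invoking Assumptions~\ref{ass:cond_indep}~\&~\ref{ass:graph} then yields the support containments $H^{t,\tau}_{z,z}q \in \sR^{d_z\times d_z}_{\mG^z}$ and $H^{t,\tau}_{z,z}\hat q \in \sR^{d_z\times d_z}_{\hat\mG^z}$.

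Second, and this is the crux, I would deploy the sufficient-influence Assumption~\ref{ass:nonparam_suff_var_z} to obtain the edge inclusion $\mP\mG^z\mP^\top \subseteq \hat\mG^z$. Because $\vv$ is a diffeomorphism, each $D\vv(\vz)$ is invertible, so by the Leibniz expansion of its determinant its support must contain a full permutation pattern; this supplies the candidate permutation matrix $\mP$. The assumption is precisely what guarantees that, as the evaluation point ranges over a prescribed family, the ground-truth Hessians $H^{t,\tau}_{z,z}q$ collectively span the entire aligned subspace $\sR^{d_z\times d_z}_{\mG^z}$. Substituting these spanning matrices into \eqref{eq:hessians_intuitive_time} and forcing the right-hand side to stay inside $\sR^{d_z\times d_z}_{\hat\mG^z}$ pins down, edge by edge, which entries of $\hat\mG^z$ must be present, giving $\mP\mG^z\mP^\top \subseteq \hat\mG^z$. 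I expect the main obstacle to be exactly here: the two Jacobian factors in \eqref{eq:hessians_intuitive_time} are evaluated at the \emph{distinct} points $\vz^t$ and $\vz^\tau$, so one cannot treat $\mA \mapsto D\vv(\vz^t)^\top \mA\, D\vv(\vz^\tau)$ as a single fixed linear map acting on the whole span. The assumption must therefore be arranged so that $H^{t,\tau}_{z,z}q$ can be varied enough to span $\sR^{d_z\times d_z}_{\mG^z}$ while the relevant Jacobian factors are kept under control (e.g.\ by holding $\vz^t$ fixed and perturbing only the conditioning $\vz^{<t},\va^{<t}$), and this coupled bookkeeping is the delicate technical step.

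Finally, I would add the sparsity hypothesis and conclude $\vz$-consistency. From $\mP\mG^z\mP^\top \subseteq \hat\mG^z$ we get $||\mG^z||_0 = ||\mP\mG^z\mP^\top||_0 \leq ||\hat\mG^z||_0$, and together with the assumed $||\hat\mG^z||_0 \leq ||\mG^z||_0$ this forces $\hat\mG^z = \mP\mG^z\mP^\top$, the first requirement of Definition~\ref{def:z_consistent_models}. For the second requirement, set $\vc := \vv\circ\mP$. Feeding the graph equality back into \eqref{eq:hessians_intuitive_time} and again letting $H^{t,\tau}_{z,z}q$ sweep out a spanning set of $\sR^{d_z\times d_z}_{\mG^z}$, the invertibility of the Jacobian factors forces each $D\vc(\vz)$ to be \emph{simultaneously} $\mG^z$-preserving and $(\mG^z)^\top$-preserving (Definition~\ref{def:g_preserving_mat}): the $\mG^z$-preserving property is read off from the left factor $D\vv(\vz^t)^\top$ of the two-sided product and the $(\mG^z)^\top$-preserving property from the right factor $D\vv(\vz^\tau)$ (taking transposes converts right multiplication into the transposed condition). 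Because this holds at every point, Lemma~\ref{lem:s_consistent_jac} upgrades these pointwise Jacobian facts to the statement that $\vc$ itself is both $\mG^z$-preserving and $(\mG^z)^\top$-preserving, which is exactly the second requirement of $\vz$-consistency. With Proposition~\ref{prop:group_of_S_consistent_diffeo} guaranteeing the relation is well posed, we conclude $\vtheta \eqcon^\vz \hat\vtheta$.
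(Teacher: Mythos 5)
Your overall route is the paper's own: upgrade $\eqobs$ to $\eqdiff$ via Proposition~\ref{prop:identDiffeo}, differentiate twice to get \eqref{eq:hessians_intuitive_time}, extract a permutation from the invertibility of $D\vv$, use the spanning assumption plus the sparsity count to force $\hat\mG^z = \mP\mG^z\mP^\top$, and finish with the rank-one argument and Lemma~\ref{lem:s_consistent_jac}. However, the two steps you leave open are exactly where the work is, and your proposed handling of the first one would fail.

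First, the two-point Jacobian problem. You correctly observe that $\mA \mapsto D\vv(\vz^t)^\top \mA\, D\vv(\vz^\tau)$ is not a fixed linear map, but your suggested remedy --- hold $\vz^t$ fixed and perturb ``the conditioning $\vz^{<t}, \va^{<t}$'' --- does not solve it, because $\vz^\tau$ is a column of $\vz^{<t}$: perturbing the conditioning moves the right factor $D\vv(\vz^{\tau})$ along with the Hessian, so the matrices $D\vv(\vz^t)^\top \Lambda(\gamma_{(r)})\, D\vv(\vz^{\tau_{(r)}})$ are not the image of a spanning set under a single linear map, and the argument pinning down the entries of $\hat\mG^z$ collapses. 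The paper's resolution is built into Assumption~\ref{ass:nonparam_suff_var_z} itself: it demands $\vz = \vz^{\tau_{(r)}}_{(r)}$ for every $r$, i.e.\ one evaluates \eqref{eq:hessians_intuitive_time} only at configurations where $\vz^\tau = \vz^t$, so both Jacobian factors collapse to the \emph{same} matrix $D\vv(\vz^t)$ and the conjugation $\mA \mapsto D\vv(\vz^t)^\top \mA\, D\vv(\vz^t)$ becomes one fixed map; the variation needed for spanning then comes only from $t$, $\tau$, the remaining columns of $\vz^{<t}$, and $\va^{<t}$. (This pinning is also why, absent auxiliary variables, the assumption forces non-Markovianity; cf.\ Remark~\ref{rem:non_markovian}.)

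Second, you fix a single permutation $\mP$ at the outset and later assert ``because this holds at every point''; neither is available for free. Lemma~\ref{lemma:L_perm} gives, for each $\vz$, some permutation $\mP(\vz)$ with $\mP(\vz)^\top \subseteq D\vv(\vz)$, and a priori this permutation varies with $\vz$; moreover Assumption~\ref{ass:nonparam_suff_var_z} holds only for \emph{almost all} $\vz$, so every pointwise conclusion lives on $\sR^{d_z}\setminus E_0$ for a null set $E_0$. The paper closes both issues with Lemma~\ref{lem:same_permutation_almost}, a genuinely nontrivial topological argument (continuity of $D\vv$, connectedness of $\sR^{d_z}$, finiteness of the set of permutations, plus closure lemmas) producing one $\mP$ such that $D\vv(\vz)\mP$ is $[\mG^z\ (\mG^z)^\top]$-preserving for \emph{all} $\vz$, including on $E_0$. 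Without this globalization you cannot invoke Lemma~\ref{lem:s_consistent_jac} to conclude that $\vc := \vv\circ\mP$ is $\mG^z$- and $(\mG^z)^\top$-preserving, nor that the graph equality $\hat\mG^z = \mP\mG^z\mP^\top$ holds with the same $\mP$ that appears in the entanglement map.
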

The structure of the above theorem is very similar to Theorem~\ref{thm:nonparam_dis_cont_a}~\&~\ref{thm:nonparam_dis_disc_a}. For example, we still have a ``sufficient influence" condition, but this time it concerns the Hessian matrix $H^{t,\tau}_{z, z} \log p$ which we saw in Section~\ref{sec:insight_main}, Equation~\eqref{eq:hessians_intuitive_time_2}. The conclusion is that both models will be $\vz$-consistent, which means that we recover the graph $\mG^z$ up to permutation and have that the entanglement map $\vv$ has a dependency graph given by $\mV = \mC\mP^\top$ where $\mC$ is $\mG^z$- and $(\mG^z)^\top$-preserving. Section~\ref{sec:proofs} introduces the sufficient influence assumption formally as well as a proof of Theorem~\ref{thm:nonparam_dis_z}. 

We now build intuition through some minimal examples that show how one can apply the above theorem to draw links between the graph $\mG^z$ and the resulting entanglement graph $\mV$ (Definition~\ref{def:entanglement_graphs}). For now we simply assume that the assumption of sufficient influence (Assumption~\ref{ass:nonparam_suff_var_z}) is satisfied and wait until Section~\ref{sec:suff_inf_z} to present more concrete transition models that satisfy it.

\begin{example}[Disentanglement via independent factors with temporal dependencies]\label{ex:diagonal_deps}
    Consider the situation depicted in Figure \ref{fig:Gz_cd} where the graph $\mG^z = \mI$, i.e., the latents $\vz^t_i$ are dependent in time but independent across dimensions. For this example, actions are unnecessary. Assuming the sufficient influence assumption of Theorem~\ref{thm:nonparam_dis_z} is satisfied, we have $\vtheta \eqobs \hat\vtheta\ \&\ ||\hat\mG^z||_0 \leq ||\mG^z||_0 \implies \vtheta \eqcon^\vz \hat\vtheta$, which means there exists a permutation $\mP$ such that $\hat\mG^z = \mP\mG^z\mP^\top$ and such that the entanglement map is given by $\vv = \vc \circ \mP^\top$ where $\vc$ is $\mG^z$- and $(\mG^z)^\top$-preserving. Using Proposition~\ref{prop:charac_G_preserving}, one can verify  that the dependency graph of $\vc$ is $\mC = \mI$ and thus $\mV = \mP^\top$, i.e. the learned representation is completely disentangled. Example~\ref{ex:lower_triangular_no_action_exp} will provide a concrete transition model where the sufficient influence assumption of Theorem~\ref{thm:nonparam_dis_z} holds for this simple graph $\mG^z$.
\end{example}

\begin{example}[Disentanglement via sparsely dependent factors with temporal dependencies]\label{ex:lower_triangular_no_action}
    The previous examples assumed independent latents, i.e., $\mG^z = \mI$. Instead, we now consider a more interesting ``lower triangular'' graph $\mG^z$, as depicted in Figures~\ref{fig:Gz_d} (This is the same graph as in the tree-robot-ball example in Figure~\ref{fig:working_example}). Again using Proposition~\ref{prop:charac_G_preserving}, one can verify that $\mC = \mI$ and thus $\mV = \mP^\top$, i.e. the learned representation is completely disentangled. Example~\ref{ex:lower_triangular_no_action_exp} will provide a concrete transition model where the sufficient influence assumption of Theorem~\ref{thm:nonparam_dis_z} holds.
\end{example}

\begin{example}[Partial disentanglement via temporal sparsity]\label{ex:temporal_partial}
    Assume the same situation as previously, but add an additional edge from $\vz_B^{t-1}$ to $\vz_R^{t}$ (see Figure~\ref{fig:Gz_e}). This could occur, for example, if the robot tries to follow the ball and is thus influenced by it. Using Proposition~\ref{prop:charac_G_preserving}, one can show that $\vc$ being $\mG^z$- and $(\mG^z)^\top$-preserving means that its dependency graph is given by
    \begin{align*}
        \mC = \begin{bmatrix}
            * & 0 & 0 \\
            0 & * & * \\
            0 & * & * 
            \end{bmatrix} \, .
    \end{align*}
    This means the robot and the ball remain \textit{entangled} in the learned representation. 
\end{example}

\subsection{Combining Sparsity Regularization on $\hat\mG^a$ \& $\hat\mG^z$}
\label{sec:combine_Ga_Gz}
A natural question at this point is whether Theorem \ref{thm:nonparam_dis_cont_a} (or Theorem~\ref{thm:nonparam_dis_disc_a}) can be combined with Theorem~\ref{thm:nonparam_dis_z} to obtain stronger guarantees. The answer is yes. In this section, we explain how this can be done. We would like to show how the combination of the assumptions of Theorem~\ref{thm:nonparam_dis_cont_a} and Theorem~\ref{thm:nonparam_dis_z} can yield identifiability up to the following stronger equivalence relation.

\begin{definition}[$(\va,\vz)$-consistency equivalence]\label{def:az_consistent_models}
We say two models $\vtheta := (\bff, p, \mG)$ and $\tilde{\vtheta}:= (\tilde \bff,\tilde p, \tilde \mG)$ satisfying Assumptions~\ref{ass:diffeomorphism}, \ref{ass:cond_indep} \& \ref{ass:graph} are \textbf{$(\va, \vz)$-consistent}, denoted $\vtheta \eqcon^{\vz,\va} \tilde\vtheta$, if and only if there exists a permutation matrix $\mP$ such that
\begin{enumerate}
    \item $\vtheta \eqdiff \tilde{\vtheta}$ (Def.~\ref{def:eqdiff}) and $\tilde\mG^a = \mP^\top \mG^a$ and $\tilde\mG^z =  \mP^\top \mG^z \mP$; and
    \item the entanglement map $\vv := \vf^{-1} \circ \tilde\vf$ can be written as $\vv = \vc \circ \mP^\top$ where $\vc$ is a $\mG^a$-, $\mG^z$- and $(\mG^z)^\top$-preserving diffeomorphism (Def.~\ref{def:g_preserving_map}).
\end{enumerate}
\end{definition}

Of course, if the assumptions of both theorems hold, we must have $\vtheta \eqcon^\va \hat\vtheta$ and $\vtheta \eqcon^\vz \hat\vtheta$. As one might guess, this implies $\vtheta \eqcon^{\va,\vz} \hat\vtheta$, as the following proposition shows. The reason why this result is not completely trivial is that the permutations $\mP$ given by $\eqcon^\va$ and $\eqcon^\vz$ might not be the same. Its proof can be found in Appendix~\ref{app:combining_eq_rel}.

\begin{restatable}{proposition}{CombineEquivalences}\label{prop:combine_equivalences} Let $\vtheta := (\bff, p, \mG)$ and $\tilde{\vtheta}:= (\tilde \bff,\tilde p, \tilde \mG)$ be two models satisfying Assumptions~\ref{ass:diffeomorphism}, \ref{ass:cond_indep} \& \ref{ass:graph}. We have $\vtheta \eqcon^{\vz,\va} \tilde\vtheta$ if and only if $\vtheta \eqcon^{\va} \tilde\vtheta$ and $\vtheta \eqcon^{\vz} \tilde\vtheta$.
\end{restatable}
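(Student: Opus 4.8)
The plan is to prove both implications, the forward one being a matter of forgetting structure and the backward one requiring us to reconcile the two permutations produced by $\eqcon^\va$ and $\eqcon^\vz$ into a single one. Throughout I work with the entanglement map $\vv := \vf^{-1}\circ\tilde\vf$, which is common to both relations because both include $\vtheta \eqdiff \tilde\vtheta$, and I use freely that the $\mG^a$-, $\mG^z$- and $(\mG^z)^\top$-preserving diffeomorphisms each form a group under composition (Proposition~\ref{prop:group_of_S_consistent_diffeo}) and that a diffeomorphism is $\mG$-preserving iff its dependency graph is (Lemma~\ref{lem:s_consistent_jac}). (I adopt the natural convention in which all three definitions are written with $\tilde\mG^a = \mP\mG^a$, $\tilde\mG^z = \mP\mG^z\mP^\top$ and $\vv = \vc\circ\mP^\top$; reconciling the transposed permutation appearing in Definition~\ref{def:az_consistent_models} is a cosmetic relabeling $\mP\mapsto\mP^\top$.)

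For the forward direction, suppose $\vtheta \eqcon^{\vz,\va}\tilde\vtheta$ is witnessed by $(\mP,\vc)$ with $\vv = \vc\circ\mP^\top$ and $\vc$ simultaneously $\mG^a$-, $\mG^z$- and $(\mG^z)^\top$-preserving. Then the same $(\mP,\vc)$ witnesses $\vtheta\eqcon^\va\tilde\vtheta$ (keeping only the $\mG^a$-preserving property and $\tilde\mG^a = \mP\mG^a$) and $\vtheta\eqcon^\vz\tilde\vtheta$ (keeping the $\mG^z$- and $(\mG^z)^\top$-preserving properties and $\tilde\mG^z = \mP\mG^z\mP^\top$). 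Nothing more is needed.

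For the backward direction, let $\vtheta\eqcon^\va\tilde\vtheta$ be witnessed by $(\mP_a,\vc_a)$ and $\vtheta\eqcon^\vz\tilde\vtheta$ by $(\mP_z,\vc_z)$, so that $\vv = \vc_a\circ\mP_a^\top = \vc_z\circ\mP_z^\top$. The difficulty, as noted in the text, is that $\mP_a$ and $\mP_z$ need not coincide, so the naive choice $\mP = \mP_a$ fails because $\vc_a$ carries no $\mG^z$-structure. My plan is to build a better permutation using the invertibility of $\vc_a$. Since $\vc_a$ is a diffeomorphism its Jacobian is invertible at some point, and expanding the determinant there yields a permutation $\mP_0$ lying in the support of the dependency graph $\mC_a$ of $\vc_a$, i.e. $(\mC_a)_{i,\mP_0(i)}\neq 0$ for all $i$. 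As $\mC_a$ is $\mG^a$-preserving, Proposition~\ref{prop:charac_G_preserving} gives $\mG^a_{i,\cdot}\subseteq\mG^a_{\mP_0(i),\cdot}$ for every $i$; summing row cardinalities over $i$ and using that $\mP_0$ is a bijection forces every inclusion to be an equality, hence $\mP_0\mG^a = \mG^a$, i.e. $\mP_0$ is an $\mG^a$-preserving permutation.

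I then set $\mP := \mP_a\mP_0$ and $\vc := \vv\circ\mP = \vc_a\circ\mP_0$, and claim this witnesses $\vtheta\eqcon^{\vz,\va}\tilde\vtheta$. First, $\vc = \vc_a\circ\mP_0$ is $\mG^a$-preserving as a composition of the $\mG^a$-preserving $\vc_a$ and the $\mG^a$-preserving permutation $\mP_0$ (group property). Second, and this is the crux, I must show $\vc$ is $\mG^z$- and $(\mG^z)^\top$-preserving. Writing $\vc = \vc_z\circ\mP_1$ with $\mP_1 := \mP_z^\top\mP_a\mP_0$, it suffices to show the permutation $\mP_1$ is $\mG^z$- and $(\mG^z)^\top$-preserving, for then $\vc$ is a composition of two such maps. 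From $\vc_a = \vc_z\circ(\mP_z^\top\mP_a)$ the dependency graphs satisfy $\mC_a = \mC_z\,(\mP_z^\top\mP_a)$ (composition with a permutation merely permutes columns), so the diagonal $\mP_0$ of $\mC_a$ transports to a diagonal of $\mC_z$: concretely $(\mC_z)_{i,\mP_1(i)}\neq 0$ for all $i$. Applying Proposition~\ref{prop:charac_G_preserving} to $\mC_z$ gives $\mG^z_{i,\cdot}\subseteq\mG^z_{\mP_1(i),\cdot}$ and $\mG^z_{\cdot,i}\subseteq\mG^z_{\cdot,\mP_1(i)}$ for all $i$, and the same cardinality-counting argument upgrades both to equalities; this yields $\mP_1\mG^z\mP_1^\top = \mG^z$, so $\mP_1$ is indeed an $\mG^z$- and $(\mG^z)^\top$-preserving permutation.

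It remains to verify the graph relations for $\mP = \mP_a\mP_0$. Since $\mP_0\mG^a = \mG^a$ we get $\mP\mG^a = \mP_a\mG^a = \tilde\mG^a$; and since $\mP = \mP_z\mP_1$ with $\mP_1\mG^z\mP_1^\top = \mG^z$, we get $\mP\mG^z\mP^\top = \mP_z\mP_1\mG^z\mP_1^\top\mP_z^\top = \mP_z\mG^z\mP_z^\top = \tilde\mG^z$. Together with $\vc$ being $\mG^a$-, $\mG^z$- and $(\mG^z)^\top$-preserving and $\vv = \vc\circ\mP^\top$, this is exactly $\vtheta\eqcon^{\vz,\va}\tilde\vtheta$ (Definition~\ref{def:az_consistent_models}). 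I expect the main obstacle to be precisely this reconciliation step: the resolution is the generalized-diagonal-plus-counting argument, which shows that $\mP_z^\top\mP_a$ factors through $\mathrm{Sym}(\mG^a)$ and $\mathrm{Sym}(\mG^z)$ and hence that a single common witness $(\mP,\vc)$ exists even though $\mP_a\neq\mP_z$ in general.
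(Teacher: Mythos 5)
Your proof is correct and follows essentially the same route as the paper's: extract a permutation lying in the support of one witness's (invertible) Jacobian via the determinant expansion, show that permutation is itself graph-preserving, and compose it into the other witness to produce a single common pair $(\mP,\vc)$. The only differences are cosmetic — you anchor on the $\va$-witness $\vc_a$ where the paper anchors on the $\vz$-witness $\bar\vc$, and you prove inline (via the row-cardinality counting argument) that graph-preserving permutations are symmetries of $\mG$, a fact the paper invokes implicitly.
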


We can thus combine both Theorems \ref{thm:nonparam_dis_cont_a} (or Theorem~\ref{thm:nonparam_dis_disc_a}) with Theorem~\ref{thm:nonparam_dis_z} to obtain stronger guarantees. Practically, this means that regularizing both $\hat\mG^a$ and $\hat\mG^z$ to be sparse will lead to a more disentangled representation, i.e. a sparser entanglement graph $\mV$, than if regularization was applied only on $\hat\mG^a$ or only on $\hat\mG^z$. 

\subsection{Graphical Criterion for Complete Disentanglement}\label{sec:graph_crit}
The previous sections introduced results guaranteeing identifiability up to $\eqcon^\va$, $\eqcon^\vz$ and $\eqcon^{\vz,\va}$, which all correspond to potentially \textit{partial} disentanglement. This section provides an additional assumption to guarantee identifiability up to $\eqperm$, i.e., \textit{complete} disentanglement. 

One can easily see from the definitions that $\vtheta \eqperm \hat\vtheta$ holds precisely when $\vtheta \eqcon^{\va,\vz} \hat\vtheta$ with $\mC = \mI$. This condition can be achieved by making an additional assumption on $\mG$. This assumption is taken directly from \cite{lachapelle2022disentanglement}.
\begin{restatable}[Graphical criterion,  \citet{lachapelle2022disentanglement}]{assumption}{GraphCrit}\label{def:graph_crit} Let $\mG = [\mG^z\ \mG^a]$ be a  graph. For all $i \in \{1, ..., d_z\}$,
\begin{align}
    \left( \bigcap_{j \in {\bf Ch}_i^z} {\bf Pa}^z_j \right) \cap \left(\bigcap_{j \in {\bf Pa}_i^z} {\bf Ch}^z_j \right) \cap \left(\bigcap_{\ell \in {\bf Pa}^a_i} {\bf Ch}^a_\ell \right)  = \{i\} \, , \nonumber
\end{align}
where ${\bf Pa}^z_i$ and ${\bf Ch}^z_i$ are the sets of parents and children of node $\vz_i$ in $\mG^{z}$, respectively, while ${\bf Ch}^a_\ell$ is the set of children of $\va_\ell$ in $\mG^a$.
\end{restatable}

The following proposition shows that when $\mG$ satisfies the above criterion, the set of models that are $\eqcon^{\va,\vz}$-equivalent to $\vtheta$ is equal to the set of models that are $\eqperm$-equivalent to $\vtheta$, thus allowing complete disentanglement. See Appendix~\ref{sec:connect_lachapelle2022} for a proof.

\begin{restatable}[Complete disentanglement as a special case]{proposition}{CompleteDisentanglement} \label{prop:complete_disentanglement}
Let $\vtheta := (\bff, p, \mG)$ and $\hat{\vtheta}:= (\hat \bff,\hat p, \hat \mG)$ be two models satisfying Assumptions~\ref{ass:diffeomorphism}, \ref{ass:cond_indep} \& \ref{ass:graph}. If $\vtheta \eqcon^{\vz,\va} \hat\vtheta$ and $\mG$ satisfies Assumption~\ref{def:graph_crit}, then $\vtheta \eqperm \hat\vtheta$. 
\end{restatable}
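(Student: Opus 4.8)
The plan is to reduce everything to a single combinatorial statement: that the diffeomorphism $\vc$ appearing in the $(\va,\vz)$-consistency relation is in fact \emph{element-wise}. Indeed, as noted just before the proposition, $\vtheta \eqperm \hat\vtheta$ holds precisely when $\vtheta \eqcon^{\vz,\va}\hat\vtheta$ \emph{and} the dependency graph $\mC$ of $\vc$ equals $\mI$. By hypothesis $\vtheta \eqcon^{\vz,\va}\hat\vtheta$, so Definition~\ref{def:az_consistent_models} furnishes a permutation $\mP$ and a diffeomorphism $\vc$, with dependency graph $\mC$, such that $\vv = \vc\circ\mP^\top$ and $\vc$ is simultaneously $\mG^a$-, $\mG^z$- and $(\mG^z)^\top$-preserving. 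It therefore suffices to prove that $\mC = \mI$.

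To this end I would argue by contraposition on each preserving property. Fix indices $i \neq j$ and suppose, toward a contradiction, that $\mC_{i,j}\neq 0$. Applying the characterization of $\mG$-preserving functions (Proposition~\ref{prop:charac_G_preserving}) to each of the three graphs yields three subset inclusions. For $\mG^a$ we get $\mG^a_{i,\cdot}\subseteq \mG^a_{j,\cdot}$, i.e. ${\bf Pa}^a_i \subseteq {\bf Pa}^a_j$; for $\mG^z$ we get $\mG^z_{i,\cdot}\subseteq \mG^z_{j,\cdot}$, i.e. ${\bf Pa}^z_i\subseteq {\bf Pa}^z_j$; and for $(\mG^z)^\top$, whose $i$-th row is the $i$-th column of $\mG^z$, we get $\mG^z_{\cdot,i}\subseteq \mG^z_{\cdot,j}$, i.e. ${\bf Ch}^z_i\subseteq {\bf Ch}^z_j$. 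Here I use the index-set dictionary of Assumption~\ref{ass:graph}: the support of row $\mG^a_{i,\cdot}$ is exactly ${\bf Pa}^a_i$, that of $\mG^z_{i,\cdot}$ is ${\bf Pa}^z_i$, and that of column $\mG^z_{\cdot,i}$ is ${\bf Ch}^z_i$.

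The crux is then to recognize that these three inclusions say precisely that $j$ lies in the intersection on the left-hand side of the graphical criterion at index $i$. Unfolding memberships termwise: $j\in\bigcap_{k\in{\bf Ch}^z_i}{\bf Pa}^z_k \iff {\bf Ch}^z_i\subseteq{\bf Ch}^z_j$ (since $j\in{\bf Pa}^z_k \iff k\in{\bf Ch}^z_j$); $j\in\bigcap_{k\in{\bf Pa}^z_i}{\bf Ch}^z_k \iff {\bf Pa}^z_i\subseteq{\bf Pa}^z_j$ (since $j\in{\bf Ch}^z_k \iff k\in{\bf Pa}^z_j$); and $j\in\bigcap_{\ell\in{\bf Pa}^a_i}{\bf Ch}^a_\ell \iff {\bf Pa}^a_i\subseteq{\bf Pa}^a_j$ (since $j\in{\bf Ch}^a_\ell \iff \ell\in{\bf Pa}^a_j$). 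Hence the three inclusions derived above are equivalent to $j$ belonging to that triple intersection, which by Assumption~\ref{def:graph_crit} equals $\{i\}$; therefore $j = i$, contradicting $i\neq j$. This shows $\mC_{i,j}=0$ for all $i\neq j$, so $\mC\subseteq\mI$. Finally, since $\vc$ is a diffeomorphism its Jacobian is everywhere invertible, and by Proposition~\ref{prop:V_and_Dv} a dependency graph contained in $\mI$ means $D\vc(\vz)$ is diagonal for all $\vz$; invertibility then forces each diagonal entry to be nonzero, giving $\mC = \mI$ exactly. Thus $\vc$ is element-wise and $\vtheta \eqperm \hat\vtheta$.

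I expect the only real obstacle to be the bookkeeping in the middle two paragraphs — in particular getting the transpose right for the $(\mG^z)^\top$-preserving condition (reading the rows of $(\mG^z)^\top$ as columns of $\mG^z$, hence as \emph{children} rather than parents) and matching each inclusion to the correct factor of the criterion's triple intersection. There is no analytic difficulty: once the parent/children dictionary is in place, the equivalence ``three inclusions $\iff$ $j$ in the intersection'' is immediate, and the graphical criterion then collapses the intersection to $\{i\}$, which closes the argument.
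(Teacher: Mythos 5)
Your proof is correct and follows essentially the same route as the paper's: the paper invokes its appendix characterization (row $i$ of $\mC$ contained in $\bigcap_{k\in\mG_{i,\cdot}}\mG_{\cdot,k}$) to show each row of $\mC$ is contained in the graphical criterion's triple intersection $=\{i\}$, while you obtain the identical conclusion by contraposition from Proposition~\ref{prop:charac_G_preserving}, unfolding the parent/child dictionary inline — a logically equivalent presentation. Your closing remark that invertibility of $D\vc$ forces the diagonal of $\mC$ to be nonzero is a slightly more careful handling of the final step, which the paper leaves implicit.
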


The above result shows that our general theory can guarantee complete disentanglement as a special case. This is one way in which our work generalizes the work of \citet{lachapelle2022disentanglement}, in addition to relaxing the exponential family assumption. The following section explores how the exponential family assumption fits into our nonparametric theory and how it allows one to simplify the ``sufficient influence assumptions''. But before, we provide some example to illustrate when Assumption~\ref{def:graph_crit} holds.

\begin{wrapfigure}[14]{r}{.31\textwidth}
\centering
\vspace{-0.8cm}
\includegraphics[width=.70\linewidth]{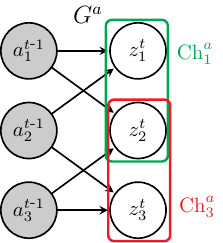}
\captionsetup{margin={2mm, 0mm}, oneside}
\caption{
An example satisfying Assumption~\ref{def:graph_crit}. Indeed, $\{\vz_1\} = {\bf Ch}_1^a \cap {\bf Ch}_2^a$, 
$\{\vz_2\} = \textcolor{Green}{{\bf Ch}_1^a} \cap \textcolor{Red}{{\bf Ch}_3^a}$ 
and $\{\vz_3\} = {\bf Ch}_2^a \cap {\bf Ch}_3^a$.
}
\label{fig:sparsity}
\end{wrapfigure}

For example, the graphical criterion of Assumption~\ref{def:graph_crit} is trivially satisfied when $\mG^z$ is diagonal, since $\{i\} = {\bf Pa}^\vz_i$ for all $i$ (actions are not necessary here). This simple case amounts to having mutual independence between the sequences $\vz^{\leq T}_i$, which is a standard assumption in the ICA literature~\citep{AmuseICA90, PCL17, slowVAE}. 
The illustrative example we introduced in Fig.~\ref{fig:working_example} has a more interesting ``non-diagonal'' graph satisfying our criterion. 
Indeed, we have $\{T\} = {\bf Pa}^\vz_T$, $\{R\} = {\bf Ch}^\vz_R \cap {\bf Pa}^\vz_R$, and $\{B\} = {\bf Ch}^\vz_B$. This example is actually part of an interesting family of graphs that satisfy our criterion: 

\begin{proposition}[Sufficient condition for the graphical criterion] \label{prop:2_cycles}

\noindent If $\mG^\vz_{i,i} = 1$ for all $i$ (all nodes have a self-loop) and $\mG^z$ has no 2-cycles, then $\mG$ satisfies Assumption~\ref{def:graph_crit}.
\end{proposition}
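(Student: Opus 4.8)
The plan is to prove the two inclusions $\{i\} \subseteq S_i$ and $S_i \subseteq \{i\}$, where I write $S_i$ for the triple intersection appearing in Assumption~\ref{def:graph_crit}. First I would pin down the adjacency conventions: since an edge points from $\vz_j^{<t}$ to $\vz_i^t$ iff $\mG^z_{i,j}=1$, we have ${\bf Pa}_i^z = \{j : \mG^z_{i,j}=1\}$ and ${\bf Ch}_i^z = \{j : \mG^z_{j,i}=1\}$, and analogously $\ell \in {\bf Pa}_i^a \iff \mG^a_{i,\ell}=1 \iff i \in {\bf Ch}_\ell^a$. The self-loop hypothesis $\mG^z_{i,i}=1$ translates into $i \in {\bf Pa}_i^z$ and $i \in {\bf Ch}_i^z$, which is the fact I will exploit repeatedly.

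For the inclusion $\{i\} \subseteq S_i$: for any $j \in {\bf Ch}_i^z$ we have $\mG^z_{j,i}=1$, hence $i \in {\bf Pa}_j^z$, so $i \in \bigcap_{j\in {\bf Ch}_i^z}{\bf Pa}_j^z$. Symmetrically, for any $j \in {\bf Pa}_i^z$ we get $i \in {\bf Ch}_j^z$, so $i \in \bigcap_{j\in {\bf Pa}_i^z}{\bf Ch}_j^z$. Finally, for each $\ell \in {\bf Pa}_i^a$ we have $i \in {\bf Ch}_\ell^a$, so $i \in \bigcap_{\ell\in {\bf Pa}_i^a}{\bf Ch}_\ell^a$; when ${\bf Pa}_i^a = \emptyset$ this last factor is $[d_z]$ by the empty-intersection convention and the membership is trivial. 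Thus $i \in S_i$.

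For the reverse inclusion, I would note it suffices to show that the first two factors alone satisfy $\left(\bigcap_{j\in {\bf Ch}_i^z}{\bf Pa}_j^z\right)\cap\left(\bigcap_{j\in {\bf Pa}_i^z}{\bf Ch}_j^z\right) \subseteq \{i\}$, since $S_i$ is contained in this intersection and $i \in S_i$ is already established; the action graph is then irrelevant to this direction. Suppose $k$ lies in this two-factor intersection with $k \neq i$. Because $i \in {\bf Ch}_i^z$, instantiating the first intersection at $j=i$ gives $k \in {\bf Pa}_i^z$, i.e. $\mG^z_{i,k}=1$ (an edge $k \to i$). Because $i \in {\bf Pa}_i^z$, instantiating the second intersection at $j=i$ gives $k \in {\bf Ch}_i^z$, i.e. $\mG^z_{k,i}=1$ (an edge $i \to k$). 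These two edges form a $2$-cycle between the distinct nodes $i$ and $k$, contradicting the no-$2$-cycle hypothesis. Hence the intersection reduces to $\{i\}$, and combining both inclusions yields $S_i = \{i\}$.

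The crux of the argument is the observation that the self-loops let us set $j = i$ inside each of the first two intersections, which is precisely what converts the membership of a stray $k$ into the two opposing edges $k\to i$ and $i\to k$. I expect no analytic difficulty here, as the claim is purely combinatorial; the only thing demanding care—rather than a genuine obstacle—is the bookkeeping of the row/column conventions of $\mG^z$ so that parents and children are read off correctly.
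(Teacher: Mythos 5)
Your proof is correct and follows essentially the same route as the paper's: the paper likewise uses the self-loops to get $i \in {\bf Pa}^z_i \cap {\bf Ch}^z_i$ and the no-2-cycle hypothesis to conclude ${\bf Pa}^z_i \cap {\bf Ch}^z_i = \{i\}$, which forces the triple intersection down to $\{i\}$. You simply spell out more explicitly the steps the paper leaves implicit (instantiating the intersections at $j=i$, the trivial inclusion $\{i\} \subseteq S_i$, and the empty-intersection convention for the action factor), which is fine.
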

\begin{proof}
Self-loops guarantee $i \in {\bf Pa}^\vz_i \cap {\bf Ch}^\vz_i$ for all $i$. Suppose $j \in {\bf Pa}^\vz_i \cap {\bf Ch}^\vz_i$ for some $i \not= j$. This implies that $i$ and $j$ form a 2-cycle, which is a contradiction. Thus, $\{i\} = {\bf Pa}^\vz_i \cap {\bf Ch}^\vz_i$ for all $i$. 
\end{proof}

\subsection{Proofs of Theorems~\ref{thm:nonparam_dis_cont_a}, \ref{thm:nonparam_dis_disc_a} \& \ref{thm:nonparam_dis_z} and their Sufficient Influence Assumptions}\label{sec:proofs}
In this section, we introduce the sufficient influence assumptions and use them to prove Theorems~\ref{thm:nonparam_dis_cont_a}, \ref{thm:nonparam_dis_disc_a} \& \ref{thm:nonparam_dis_z}. In the next section (Section~\ref{sec:examples}), we provide multiple examples to gain intuition about the sufficient influence assumptions. Throughout, the following lemma will come in handy.

\begin{lemma}[Invertible matrix contains a permutation] \label{lemma:L_perm}
Let $\mL \in \sR^{m\times m}$ be an invertible matrix. Then, there exists a permutation $\sigma$ such that $\mL_{i, \sigma(i)} \not=0$ for all $i$, or, in other words, $\mP^\top \subseteq \mL$ where $\mP$ is the permutation matrix associated with $\sigma$, i.e., $\mP\ve_i = \ve_{\sigma(i)}$. Note that this implies that $\mP\mL$ and $\mL\mP$ have no zero on their diagonals.
\end{lemma}
\begin{proof}
Since the matrix $\mL$ is invertible, its determinant is non-zero, i.e.
\begin{align*}
    \det(\mL) := \sum_{\sigma\in \mathfrak{S}_m} \text{sign}(\sigma) \prod_{i=1}^m \mL_{i, \sigma(i)} \neq 0 \, , 
\end{align*}
where $\mathfrak{S}_m$ is the set of $m$-permutations. This equation implies that at least one term of the sum is non-zero, meaning there exists a permutation $\sigma$ such that, for all $i$, $\mL_{i, \sigma(i)} \neq 0$. 
\end{proof}

\subsubsection{Sufficient influence assumption of Theorem~\ref{thm:nonparam_dis_cont_a} and its proof}
We start by introducing the sufficient influence assumption of Theorem~\ref{thm:nonparam_dis_cont_a}. Although it may seem intimidating at first, the reason why it is necessary will become clear when we prove the theorem.

\begin{assumption}[Sufficient influence of $\va$ (nonparametric/continuous)]\label{ass:nonparam_suff_var_a_cont}
    For almost all $\vz \in \sR^{d_z}$ (i.e. except on a set with zero Lebesgue measure) 
    and all $\ell \in [d_a]$, there exists 
    $$\{(t_{(r)}, \tau_{(r)}, \vz_{(r)}, \va_{(r)})\}_{r=1}^{|{\bf Ch}^\va_\ell|} \, ,$$
    such that $t_{(r)} \in [T]$, $\tau_{(r)} < t_{(r)}$, $\vz_{(r)} \in \sR^{d_z\times(t_{(r)} - 1)}$, $\va_{(r)} \in \gA^{t_{(r)}}$ and
    \begin{align}
        \vecspan \left\{H^{t_{(r)},\tau_{(r)}}_{z,a} \log p (\vz \mid \vz_{(r)}, \va_{(r)})_{\cdot, \ell}\right\}_{r=1}^{|{\bf Ch}^\va_\ell|} 
        =\sR^{d_z}_{{\bf Ch}^\va_\ell} \, . \nonumber
    \end{align}
\end{assumption}

\begin{proof}[of Theorem~\ref{thm:nonparam_dis_cont_a}]
Recall equation~\eqref{eq:hessians_intuitive_2}, which we derived in Section~\ref{sec:insight_main}:
\begin{align}
    \underbrace{H^{t, \tau}_{z, a} \hat q(\vz^{t} \mid \vz^{<{t}}, \va^{<{t}})}_{\subseteq \hat \mG^a} = D\vv(\vz^t)^\top \underbrace{H^{t,\tau}_{z, a} {q}(\bfv(\vz^{t}) \mid \bfv(\vz^{<{t}}), \va^{<{t}})}_{\subseteq \mG^a} \label{eq:23453656} \,.
\end{align}

Notice that Assumption~\ref{ass:nonparam_suff_var_a_cont} holds only ``almost everywhere'', i.e. on a set $\sR^{d_z} \setminus E_0$ where $E_0$ has zero Lebesgue measure. Fix an arbitrary $\vz \in \sR^{d_z} \setminus E_0$. For notational convenience, define 
\begin{align}
    \Lambda(\vz, \gamma) := H^{t,\tau}_{z, a} {q}(\bfv(\vz) \mid \bfv(\vz^{<{t}}), \va^{<{t}})\quad\quad \hat\Lambda(\vz, \gamma) := H^{t, \tau}_{z, a} \hat q(\vz \mid \vz^{<{t}}, \va^{<{t}}) \,,\nonumber
\end{align}
where $\gamma := (t, \tau, \vz^{<t}, \va^{<t})$. This allows us to rewrite \eqref{eq:23453656} with a much lighter notation:
\begin{align}
    \hat\Lambda(\vz, \gamma) = D\vv(\vz)^\top\Lambda(\vz, \gamma) \,. \label{eq:9rew89}
\end{align}
Now, notice that the sufficient influence assumption (Assumption~\ref{ass:nonparam_suff_var_a_cont}) requires that, for all $\ell \in [d_a]$ there exists $\{\gamma_{(r)}\}_{r=1}^{|\Ch^a_\ell|}$ such that $\vecspan \{\Lambda(\vz, \gamma_{(r)})_{\cdot, \ell}\}_{r=1}^{|\Ch^a_\ell|} = \sR^{d_z}_{\Ch^a_\ell}$. We can thus write
\begin{align}
    &D\vv(\vz)^\top\sR^{d_z}_{\mG^a_{\cdot, \ell}} = D\vv(\vz)^\top\vecspan \{\Lambda(\vz, \gamma_{(r)})_{\cdot, \ell}\}_{r=1}^{|\Ch^a_\ell|}=  \vecspan \{\hat\Lambda(\vz, \gamma_{(r)})_{\cdot, \ell}\}_{r=1}^{|\Ch^a_\ell|} \subseteq \sR^{d_z}_{\hat\mG^a_{\cdot, \ell}} \label{eq:5645342u}
\end{align}
Since $D\vv(\vz)$ is invertible, there exists a permutation $\mP(\vz)$ such that $D\vv(\vz)\mP(\vz)$ has no zero on its diagonal (Lemma~\ref{lemma:L_perm}). Let $\mC(\vz) := D\vv(\vz)\mP(\vz)$. By left-multiplying \eqref{eq:5645342u} by $\mP(\vz)^\top$, we get
\begin{align}
    \mC(\vz)^\top\sR^{d_z}_{\mG^a_{\cdot,\ell}} \subseteq \sR^{d_z}_{\mP(\vz)^\top\hat\mG^a_{\cdot,\ell}} \,. \label{eq:6543245}
\end{align}
We would like to show that $\mC(\vz)$ is $\mG^a$-preserving. Notice how the above equation is almost exactly the definition of $\mG^a$-preserving. All that is left to prove is that $\mP(\vz)^\top\hat\mG^a = \mG^a$.

We start by showing $\mP(\vz)^\top\hat\mG^a \supseteq \mG^a$. Take $(i,\ell) \in \mG^a$. Since $\ve_i \in \sR^{d_z}_{\mG^a_{\cdot, \ell}}$, equation \eqref{eq:6543245} implies 
$$\mC(\vz)^\top\ve_i = \mC(\vz)_{i,\cdot} \in \sR^{d_z}_{\mP(\vz)^\top\hat\mG^a_{\cdot,\ell}}\,.$$
Since $\mC(\vz)_{i,i} \not=0$ (all elements on its diagonal are nonzero), we must have that $(i,\ell) \in \mP(\vz)^\top\hat\mG^a$.

Now, since $||\mP(\vz)^\top\hat\mG^a||_0 = ||\hat\mG^a||_0 \leq ||\mG^a||_0$, we have ${\mP(\vz)^\top\hat\mG^a = \mG^a}$. This implies
\begin{align*}
    \mC(\vz)^\top\sR^{d_z\times d_a}_{\mG^a} \subseteq \sR^{d_z\times d_a}_{\mG^a}\,,
\end{align*}
i.e. $\mC(\vz)$ is a $\mG^a$-preserving matrix, as desired.

To recap, we now have that, for all $\vz \in \sR^{d_z} \setminus E_0$, there exists a permutation $\mP(\vz)$ s.t. $D\vv(\vz)\mP(\vz)$ is $\mG^a$-preserving. We are not done yet, since, a priori, the permutation $\mP(\vz)$ can be different for different values of $\vz$, and we do not know what happens on the measure-zero set $E_0$. What we need to show is that there exists a permutation $\mP$ such that, for all $\vz$, $D\vv(\vz)\mP$ is $\mG^a$-preserving. Lemma~\ref{lem:same_permutation_almost} in Appendix~\ref{app:technical_lemmas_ident} shows precisely this, by leveraging the continuity of $D\vv(\vz)$ ($\vv$ is a diffeomorphism and thus $C^1$).

Notice that $D(\vv\circ\mP)(\vz) = D\vv(\mP\vz)\mP$, which is $\mG^a$-preserving everywhere. Using Lemma~\ref{lem:s_consistent_jac}, we conclude that the function $\vc:=\vv\circ\mP$ is $\mG^a$-preserving. This concludes the proof.
\end{proof}

\begin{remark}[Alternative view on sufficient influence assumptions]\label{rem:suff_lin_indep_func}
    Assumption~\ref{ass:nonparam_suff_var_a_cont}, and all sufficient influence assumptions we present later on, can be thought of in terms of linear independence of functions. By definition, a family of functions $(f^{(i)}: X \rightarrow \sR)_{i=1}^n$ is \textit{linearly independent} when $\sum_{i} \alpha_i f^{(i)}(x) = 0$ for all $x \in X$ implies $\alpha_i = 0$ for all $i$. It turns out that Assumption~\ref{ass:nonparam_suff_var_a_cont} is equivalent to requiring that, for all $\vz \in \sR^{d_z}$ and $\ell \in [d_a]$, the family of functions $(H^{t,\tau}_{z,a} \log p (\vz \mid \vz^{<t}, \va^{<t})_{i,\ell})_{i \in \Ch^a_\ell}$ (seen as functions of $t, \tau, \vz^{<t}$ and $\va^{<t}$) is linearly independent. To see this, note that, in general, $(f^{(i)}: X \rightarrow \sR)_{i=1}^n$ is linearly independent iff there exist $x_1, ..., x_n \in X$ s.t. the vectors $((f^{(1)}(x_i), ..., f^{(n)}(x_i)))_{i=1}^n$ are linearly independent (see Appendix~\ref{app:lin_indep_func} for a proof). 
    
\end{remark}

\subsubsection{Sufficient Influence Assumption of Theorem~\ref{thm:nonparam_dis_disc_a} and its Proof}
One can see that, if $\va$ is discrete, Theorem~\ref{thm:nonparam_dis_cont_a} cannot be applied because its sufficient influence assumption (Assumption~\ref{ass:nonparam_suff_var_a_cont}) refers to the cross derivative of $\log p$ w.r.t. $\vz^t$ and $\va^\tau$, which, of course, is not well defined when $\va$ is discrete. The discrete case is important to discuss interventions with unknown-targets as we did in Section~\ref{sec:unknown-target}, which is why we have a specialized result (Theorem~\ref{thm:nonparam_dis_disc_a}) which has an analogous sufficient influence assumption based on \textit{partial differences}.
\begin{definition}[Partial difference]\label{def:partial_difference} Let us define
\begin{align}
\Delta_{a,\ell}^{\tau, \epsilon} D^t_z \log p(\vz^t \mid \vz^{<t}, \va^{<t}) := D^{t}_{z} \log p (\vz^t \mid \vz^{<t}, \va^{<t} + \epsilon\mE^{(\ell, \tau)}) - D^{t}_{z} \log p (\vz^t \mid \vz^{<t}, \va^{<t}) \,, \nonumber
\end{align}
where $\epsilon \in \sR$ and $\mE^{(\ell, \tau)}$ is a matrix with a one at entry $(\ell,\tau)$ and zeros everywhere else. 
\end{definition}

One can see that $\Delta_{a,\ell}^{\tau, \epsilon} D^t_z \log p$ is essentially the discrete analog of $(H^{t,\tau}_{z,a} \log p)_{\cdot, \ell}$. Apart from this difference, the sufficient influence assumption for discrete $\va$ is the same as for continuous $\va$.

\begin{assumption}[Sufficient influence of $\va$ (nonparametric/discrete)]\label{ass:nonparam_suff_var_a}
    For almost all $\vz \in \sR^{d_z}$ (i.e. except on a set with zero Lebesgue measure) 
    and all $\ell \in [d_a]$, there exists 
    $$\{(t_{(r)}, \tau_{(r)}, \vz_{(r)}, \va^{< t}_{(r)}, \epsilon_{(r)})\}_{r=1}^{|{\bf Ch}^\va_\ell|}\, ,$$
    such that $t_{(r)} \in [T]$, $\tau_{(r)} < t_{(r)}$,  $\vz_{(r)} \in \sR^{d_z\times (t_{(r)} - 1)}$, $\va_{(r)}  \in \gA^{t_{(r)}}$, $\epsilon_{(r)} \in \sR$, $(\va_{(r)})_{\cdot, \tau_{(r)}} + \epsilon_{(r)}\ve_\ell \in \gA$ and
    \begin{align}
        \vecspan \left\{\Delta^{\tau_{(r)}, \epsilon_{(r)}}_{a, \ell}D^{t_{(r)}}_{z} \log p (\vz \mid \vz_{(r)}, \va_{(r)})\right\}_{r=1}^{|{\bf Ch}^\va_\ell|} 
        =\sR^{d_z}_{{\bf Ch}^\va_\ell} \, . \nonumber
    \end{align}
\end{assumption}

We can now provide a proof of Theorem~\ref{thm:nonparam_dis_disc_a}. Note that it is almost identical to the proof of Theorem~\ref{thm:nonparam_dis_cont_a} except for the very first steps where we take a partial difference instead of a partial derivative.
\begin{proof}[of Theorem \ref{thm:nonparam_dis_disc_a}]
    We recall equation \eqref{eq:first_deriv} derived in Section~\ref{sec:insight_main}:
    \begin{align}
    D^t_z \hat q(\vz^{t} \mid \vz^{<{t}}, \va^{<{t}}) = D^t_z {q}(\bfv(\vz^{t}) \mid \bfv(\vz^{<{t}}), \va^{<{t}})D\vv(\vz^t) + \eta(\vz^t) \in \sR^{1 \times d_z}\, . \label{eq:first_deriv_app}
\end{align}
Now, instead of differentiating w.r.t. $\va^\tau_\ell$ for some $\tau < t$ and $\ell \in [d_a]$, we are going to take a partial difference. That is, we evaluate the above equation on at $\va^{<t}$ and $\va^{<t} + \epsilon\mE^{(\ell, \tau)}$ and $\epsilon \in \sR$, where $\mE^{(\ell, \tau)}$ is a ``one-hot matrix'', while keeping everything else constant, and take the difference. This yields:
\begin{align*}
    [D^t_{z} \hat q (\vz^{t} \mid \vz^{<{t}}, \va^{<{t}} + \epsilon\mE^{(\ell, \tau)}) - D^{t}_{z} \hat q (\vz^{t} \mid \vz^{<{t}}, \va^{<{t}})]^\top \quad\quad\quad\quad\quad\quad\quad\quad\quad\quad\quad\quad\quad\quad\\
    = D\vv(\vz^t)^\top [D^t_{z} q (\vv(\vz^{t}) \mid \vv(\vz^{<{t}}), \va^{<{t}} + \epsilon\mE^{(\ell, \tau)}) - D^{t}_{z} q (\vv(\vz^{t}) \mid \vv(\vz^{<{t}}), \va^{<{t}})]^\top  \nonumber 
\end{align*}
\begin{align}
    \Delta^{\tau, \epsilon}_{a,\ell}D^t_{z} \hat q (\vz^{t} \mid \vz^{<{t}}, \va^{<{t}})^\top = D\vv(\vz^t)^\top \Delta^{\tau, \epsilon}_{a,\ell}D^t_{z} q (\vv(\vz^{t}) \mid \vv(\vz^{<{t}}), \va^{<{t}})^\top \, , \label{eq:48858573}
\end{align}
where we used the notation for partial differences introduced in Definition~\ref{def:partial_difference}. Notice that the difference on the left is $\subseteq \hat\mG^a_{\cdot, \ell}$ and the difference on the right is $\subseteq \mG^a_{\cdot, \ell}$. This equation is thus analogous to \eqref{eq:23453656} from the continuous case. For that reason, we can employ a completely analogous strategy. Hence, we define
\begin{align}
        \hat\Lambda(\vz^t, \gamma)_{\cdot, \ell} :=  \Delta^{\tau, \epsilon}_{a,\ell}D^t_{z} \hat q (\vz^{t} \mid \vz^{<{t}}, \va^{<{t}})^\top  \nonumber \quad\quad \Lambda(\vz^t, \gamma)_{\cdot, \ell} :=\Delta^{\tau, \epsilon}_{a,\ell}D^t_{z} q (\vv(\vz^{t}) \mid \vv(\vz^{<{t}}), \va^{<{t}})^\top \nonumber\,,
\end{align}
where $\gamma = (t, \tau, \vz^{<t}, \va^{<t}, \vec{\epsilon})$. This notation allows us to rewrite \eqref{eq:48858573} more compactly as
    \begin{align*}
        \underbrace{\hat\Lambda(\vz^t, \gamma)}_{\subseteq \hat\mG^a} = \mL(\vz^t)^\top\underbrace{\Lambda(\vz^t, \gamma)}_{\subseteq \mG^a} \, .
\end{align*} 
From here, the rest of the argument is exactly analogous to the proof of Theorem~\ref{thm:nonparam_dis_cont_a}.
\end{proof}


\subsubsection{Sufficient Influence Assumption of Theorem~\ref{thm:nonparam_dis_z} and its Proof}

We now introduce the sufficient influence assumption of Theorem~\ref{thm:nonparam_dis_z}, which showed how regularizing the temporal dependency graph $\hat\mG^z$ to be sparse can result in disentanglement. Again, it is very similar to other sufficient influence assumptions we have seen so far.

\begin{assumption}[Sufficient influence of $\vz$ (nonparametric)]\label{ass:nonparam_suff_var_z}
    For almost all $\vz \in \sR^{d_z}$ (i.e. except on a set with zero Lebesgue measure), there exists 
    $$\{ (t_{(r)}, \tau_{(r)}, \vz_{(r)}, \va_{(r)})\}_{r=1}^{||\mG^z||_0}\,,$$
    such that $t_{(r)} \in [T]$, $\tau_{(r)} < t_{(r)}$,  $\vz_{(r)} \in \sR^{d_z \times (t_{(r)} - 1)}$, $\va_{(r)} \in \gA^{t_{(r)}}$, $\vz = \vz^{\tau_{(r)}}_{(r)}$ and 
    \begin{align}
        \vecspan \left\{ H^{t_{(r)}, \tau_{(r)}}_{z, z} q(\vz \mid \vz_{(r)}, \va_{(r)}) \right\}_{r=1}^{||\mG^z||_0} 
        =\sR^{d_z}_{\mG^z} \, . \nonumber
    \end{align}
\end{assumption}
\begin{proof}[of Theorem~\ref{thm:nonparam_dis_z}]
    We recall equation~\eqref{eq:hessians_intuitive_time_2} derived in Section~\ref{sec:insight_main}:
    \begin{align*}
    \underbrace{H^{t, \tau}_{z, z} \hat q(\vz^{t} \mid \vz^{<{t}}, \va^{<{t}})}_{\subseteq \hat\mG^z} = D\vv(\vz^t)^\top \underbrace{H^{t,\tau}_{z, z} {q}(\bfv(\vz^{t}) \mid \bfv(\vz^{<{t}}), \va^{<{t}})}_{\subseteq \mG^z}D\vv(\vz^\tau) \, .
\end{align*}
This equation holds for all pairs of $\vz^t$ and $\vz^\tau$ in $\sR^{d_z}$. We can thus evaluate it at a point such that $\vz^t = \vz^\tau$, which yields
\begin{align}
    H^{t, \tau}_{z, z} \hat q(\vz^{t} \mid \vz^{<{t}}, \va^{<{t}}) = D\vv(\vz^t)^\top {H^{t,\tau}_{z, z} {q}(\bfv(\vz^{t}) \mid \bfv(\vz^{<{t}}), \va^{<{t}})}D\vv(\textcolor{blue}{\vz^t}) \, . \label{eq:hessians_intuitive_time_2_app}
\end{align}
Recall that Assumption~\ref{ass:nonparam_suff_var_z} holds for all $\vz^t \in \sR^{d_z} \setminus E_0$ where $E_0$ has Lebesgue measure zero. Fix an arbitrary $\vz^t \in \sR^{d_z} \setminus E_0$ and set $\vz^\tau = \vz^t$. Let us define
\begin{align*}
    \Lambda(\vz^t, \gamma) := H^{t,\tau}_{z, z} {q}(\bfv(\vz^t) \mid \bfv(\vz^{<{t}}), \va^{<{t}})\quad\quad \hat\Lambda(\vz^t, \gamma) := H^{t, \tau}_{z, z} \hat q(\vz^t \mid \vz^{<{t}}, \va^{<{t}}) \,, \nonumber
    \end{align*}
    where $\gamma := (t, \tau, \vz^{<t}_{-\tau}, \va^{<t})$ and $\vz^{<t}_{-\tau}$ is $\vz^{<t}$ but without $\vz^\tau$. We can now rewrite \eqref{eq:hessians_intuitive_time_2_app} compactly as
\begin{align*}
    {\hat\Lambda(\vz^t, \gamma)}= D\vv(\vz^t)^\top {\Lambda(\vz^t, \gamma)} D\vv(\vz^t) \,.
\end{align*}

Now, notice that the sufficient influence assumption (Assumption~\ref{ass:nonparam_suff_var_z}) requires that, there exists $\{\gamma_{(r)}\}_{r=1}^{||\mG^z||_0}$ such that $\vecspan \{\Lambda(\vz^t, \gamma_{(r)})\}_{r=1}^{||\mG^z||_0} = \sR^{d_z \times d_z}_{\mG^z}$. We can thus write
\begin{align}
    &D\vv(\vz^t)^\top\vecspan \{\Lambda(\vz^t, \gamma_{(r)})\}_{r=1}^{||\mG^z||_0}D\vv(\vz^t) =  \vecspan \{\hat\Lambda(\vz^t, \gamma_{(r)})\}_{r=1}^{||\mG^z||_0} \subseteq \sR^{d_z \times d_z}_{\hat\mG^z} \nonumber\\
    \implies &D\vv(\vz^t)^\top\sR^{d_z\times d_z}_{\mG^z}D\vv(\vz^t) \subseteq \sR^{d_z \times d_z}_{\hat\mG^z} \label{eq:589g3r}
\end{align}
Since $D\vv(\vz)$ is invertible, there exists a permutation $\mP(\vz)$ such that $D\vv(\vz)\mP(\vz)$ has no zero on its diagonal (Lemma~\ref{lemma:L_perm}). Let $\mC(\vz) := D\vv(\vz)\mP(\vz)$. If we left and right-multiply \eqref{eq:589g3r} by $\mP(\vz)^\top$ and $\mP(\vz)$, respectively, we obtain
\begin{align}
    \mC(\vz^t)^\top\sR^{d_z\times d_z}_{\mG^z}\mC(\vz^t) \subseteq \sR^{d_z \times d_z}_{\mP(\vz)^\top\hat\mG^z\mP(\vz)}\, . \label{eq:dfhgfe}
\end{align}

We now show that $\mG^z \subseteq \mP(\vz)^\top\hat\mG^z\mP(\vz)$. Take $(i,j) \in \mG^z$. Since $\ve_i\ve_j^\top \in \sR^{d_z\times d_z}_{\mG^z}$, equation \eqref{eq:dfhgfe} implies 
\begin{align}
    \mC(\vz^t)^\top\ve_i\ve_j^\top\mC(\vz^t) = (\mC(\vz^t)_{i,\cdot})^\top\mC(\vz^t)_{j,\cdot} \subseteq \sR^{d_z \times d_z}_{\mP(\vz)^\top\hat\mG^z\mP(\vz)} \label{eq:4ujedi3}
\end{align}
Since $\mC(\vz^t)_{i,i}\mC(\vz^t)_{j,j} \not= 0$ (recall the diagonal of $\mC(\vz^t)$ has no zero), we must have $(i,j) \in \mP(\vz)^\top\hat\mG^z\mP(\vz)$. This shows that $\mG^z \subseteq \mP(\vz)^\top\hat\mG^z\mP(\vz)$.

Since $||\mP(\vz)^\top\hat\mG^z\mP(\vz)||_0 = ||\hat\mG^z||_0 \leq ||\mG^z||_0$, we must have $\mG^z = \mP(\vz)^\top\hat\mG^z\mP(\vz)$, which yields
\begin{align}
    \mC(\vz^t)^\top\sR^{d_z\times d_z}_{\mG^z}\mC(\vz^t) \subseteq \sR^{d_z \times d_z}_{\mG^z}\, . \label{eq:cij394ji29}
\end{align}
We are now going to show that the above implies that $\mC(\vz^t)$ is both $\mG^z$-preserving and $(\mG^z)^\top$-preserving. Start by rewriting \eqref{eq:4ujedi3} as follows:
\begin{align}
    \text{for all}\ (i,j) \in \mG^z,\ (\mC(\vz^t)_{i,\cdot})^\top\mC(\vz^t)_{j,\cdot} \subseteq \sR^{d_z \times d_z}_{\mG^z} \, . \label{eq:4989875839id}
\end{align}
We start by showing $\mG^z$-preservation. To do so, we leverage the characterization of Proposition~\ref{prop:charac_G_preserving}. We must show that $\mG^z_{i,\cdot} \not\subset \mG^z_{j,\cdot}$ implies $\mC(\vz^t)_{i,j} = 0$. Because $\mG^z_{i,\cdot} \not\subset \mG^z_{j,\cdot}$, there must exists $k$ s.t. $\mG^z_{i,k} = 1$ and $\mG^z_{j,k}=0$. We thus have, by \eqref{eq:4989875839id}, that $(\mC(\vz^t)_{i,\cdot})^\top\mC(\vz^t)_{k,\cdot} \subseteq \sR^{d_z \times d_z}_{\mG^z}$. Because $\mG^z_{j,k}=0$, we have $\mC(\vz^t)_{i,j}\mC(\vz^t)_{k,k} = 0$. But since $\mC(\vz^t)_{k,k} \not= 0$, we must have that $\mC(\vz^t)_{i,j} = 0$, as desired. To show $(\mG^z)^\top$-preservation, one can use a completely analogous argument.

We showed that $\mC(\vz^t)$ is  $\mG^z$-preserving and $(\mG^z)^\top$-preserving. It is easy to verify that this is equivalent to being $[\mG^z\ (\mG^z)^\top]$-preserving (where $[ \cdot \ \cdot]$ stands for column concatenation). This remark will be useful below.

Similarly to the proof of Theorem~\ref{thm:nonparam_dis_cont_a}, we must now show that there exists a single permutation that works for all $\vz^t \in \sR^{d_z}$. To achieve this, we use Lemma~\ref{lem:same_permutation_almost} with $\mG := [\mG^z\ (\mG^z)^\top]$ and $\mL(\vz) := D\vv(\vz)$. This allows us to say that there exists a permutation $\mP$ such that $D\vv(\vz)\mP$ is $[\mG^z\ (\mG^z)^\top]$-preserving for all $\vz$ (not ``almost all'').

Notice that $D(\vv\circ\mP)(\vz) = D\vv(\mP\vz)\mP$, which is $[\mG^z\ (\mG^z)^\top]$-preserving everywhere. Using Lemma~\ref{lem:s_consistent_jac}, we conclude that the function $\vc:=\vv\circ\mP$ is $[\mG^z\ (\mG^z)^\top]$-preserving. 
\end{proof}

\subsection{Examples to Illustrate the Scope of the Theory} \label{sec:examples}
In this section, we provide several examples to gain better intuition as to when our results apply. Specifically, we will provide mathematically concrete examples of latent models $p(\vz^t \mid \vz^{<t}, \va^{<t})$ illustrating the various sufficient influence assumptions we introduced. All these examples are summarized in Table~\ref{tab:examples}.

Even though our results are nonparametric, we will concentrate on the special case of Gaussian models which are useful to get a good intuition of what the sufficient influence assumptions mean. The following simple lemma will be useful in the following examples. We present it without proof, as it can be derived from simple computations.
\begin{lemma}\label{lem:gauss_deriv}
    Let $p(\vz) = \gN(\vz ; \vmu, \bm\Sigma)$ where $\vmu \in \sR^{d_z}$ and $\bm\Sigma := \textnormal{diag}(\sigma^2_1, ..., \sigma_{d_z}^2)$. Then
    \begin{align*}
        D_z\log p(\vz) = - \left[(\vz_1 -\vmu_1) / {\sigma_1^2}, \dots, (\vz_{d_z} -\vmu_{d_z})/{\sigma_{d_z}^2}\right] \in \sR^{1 \times d_z}\,.
    \end{align*}
\end{lemma}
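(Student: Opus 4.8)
The plan is to exploit the diagonal structure of $\bm\Sigma$, which makes the Gaussian density factorize across coordinates; the gradient computation then reduces to $d_z$ independent one-dimensional derivatives, so no genuine obstacle arises.

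First I would write out the density explicitly. Since $\bm\Sigma = \textnormal{diag}(\sigma_1^2, \dots, \sigma_{d_z}^2)$ is diagonal, its determinant is $\prod_{i} \sigma_i^2$ and its inverse is $\textnormal{diag}(1/\sigma_1^2, \dots, 1/\sigma_{d_z}^2)$, so the quadratic form $(\vz - \vmu)^\top \bm\Sigma^{-1} (\vz - \vmu)$ separates into a sum of per-coordinate terms and the density factorizes:
\begin{align}
    p(\vz) = \prod_{i=1}^{d_z} \frac{1}{\sqrt{2\pi\sigma_i^2}} \exp\left(-\frac{(\vz_i - \vmu_i)^2}{2\sigma_i^2}\right) \,. \nonumber
\end{align}

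Next I would take the logarithm, turning the product into a sum:
\begin{align}
    \log p(\vz) = \sum_{i=1}^{d_z} \left[ -\tfrac{1}{2}\log(2\pi\sigma_i^2) - \frac{(\vz_i - \vmu_i)^2}{2\sigma_i^2} \right] \,. \nonumber
\end{align}
Then I would compute the $j$th entry of the Jacobian $D_z \log p(\vz)$ by differentiating with respect to $\vz_j$. The key observation is that only the $i = j$ summand depends on $\vz_j$, so every cross-term vanishes and the constant log-normalizer $-\tfrac{1}{2}\log(2\pi\sigma_j^2)$ contributes nothing, leaving
\begin{align}
    \frac{\partial}{\partial \vz_j} \log p(\vz) = -\frac{\vz_j - \vmu_j}{\sigma_j^2} \,. \nonumber
\end{align}

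Finally, assembling these entries into a $1 \times d_z$ row vector yields the claimed expression for $D_z \log p(\vz)$. The argument is entirely routine; the single point worth emphasizing is that the diagonal covariance is precisely what licenses the factorization above and hence the vanishing of all off-diagonal contributions, which is what makes each partial derivative depend only on its own coordinate.
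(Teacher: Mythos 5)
Your proof is correct and is precisely the routine computation the paper itself omits (the paper states Lemma~\ref{lem:gauss_deriv} without proof, remarking only that it "can be derived from simple computations"). Exploiting the diagonal covariance to factorize the density and differentiating coordinate-wise is exactly the intended argument, so there is nothing to add.
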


\subsubsection{Continuous Auxiliary Variables (Theorem~\ref{thm:nonparam_dis_cont_a})}

We start by illustrating Assumption~\ref{ass:nonparam_suff_var_a_cont} from Theorem~\ref{thm:nonparam_dis_cont_a}. Example~\ref{ex:a_target_one_z_cont_a} assumes that we observe continuous actions that target each latent factor individually, while Example~\ref{ex:multi_target_a_cont_a} gives a multi-target example.

\begin{example}[Sufficient influence for continuous single-target actions]\label{ex:a_target_one_z_cont_a}
We make Example~\ref{ex:a_target_one_z} more concrete by explicitly specifying a latent transition model. Recall the situation depicted in Figure~\ref{fig:working_example} where $\vz_1$ is the tree position, $\vz_2$ is the robot position, and $\vz_3$ is the ball position ($d_z = 3$). Assume $\va \in [-1, 1]$ corresponds to the amount of torque applied to the wheels of the robot. We thus have $\mG^a = [0, 1, 0]^\top$, i.e., $\va$ affects only the robot position $\vz_2$. For this example, $\mG^z$ can be anything. Let $p(\vz^t \mid \vz^{t-1}, \va) = \gN(\vz^t ; \vmu(\vz^{t-1}, \va), \sigma^2\mI)$ where
    \begin{align}
        \vmu(\vz^{t-1}, \va) := \vz^{t-1} + \vg(\vz^{t-1}) + \va \cdot \mG^a \,. \nonumber
    \end{align}
    where $\vg: \sR^{d_z} \rightarrow \sR^{d_z}$ is some function that satisfies the dependency graph $\mG^z$ (e.g. $\vg(\vz) := \mW\vz$ where $\mW \in \sR^{d_z \times d_z}_{\mG^z}$). If no torque is applied ($\va = 0$), then the position of the robots is determined by the dynamics of the system. However, adding a positive or negative torque ($\va \not= 0$) nudges the robot to the right or to the left. Using Lemma~\ref{lem:gauss_deriv}, we can compute
    \begin{align*}
        H^t_{z, a}\log p(\vz^t \mid \vz^{t-1}, \va) = [0, 1/\sigma^2, 0]^\top \,,
    \end{align*}
 which spans $\sR^3_{\{2\}}$ and thus Assumption~\ref{ass:nonparam_suff_var_a_cont} holds.
\end{example}

\begin{example}[Sufficient influence for continuous multi-target actions]\label{ex:multi_target_a_cont_a}
    We make Example~\ref{ex:multi_target_a} more concrete by specifying an explicit latent model. Recall that $\mG^a$ is given by Figure~\ref{fig:Ga_c} with $d_z = d_a = 3$. Assume there are no temporal dependencies ($T=1$), that $\va \in \sR^3$ and that the latent model is given by $p(\vz \mid \va) = \gN(\vz ; \vmu(\va), \sigma^2\mI)$ where
    \begin{align}
        \vmu(\va) := \begin{bmatrix}
            \va_1 \\
            \va_1^2 \\
            0
        \end{bmatrix} + 
        \begin{bmatrix}
            \va_2 \\
            0 \\
            \va_2^2
        \end{bmatrix} + 
        \begin{bmatrix}
            0 \\
            \va_3 \\
            \va_3^2
        \end{bmatrix}\, .\label{eq:9949392928482}
    \end{align}
    Using Lemma~\ref{lem:gauss_deriv}, we can compute
    \begin{align*}
        H_{z,a}q(\vz \mid a) = \frac{1}{\sigma^2}\begin{bmatrix}
            1 & 1 & 0 \\
            2\va_1 & 0 & 1 \\
            0 & 2\va_2 & 2\va_3
        \end{bmatrix} \, .
    \end{align*}
    Consider $\ell = 1$ so that $\Ch^a_1 = \{1,2\}$. We can see that $H_{z,a}q(\vz \mid a = 0)_{\cdot, 1} = [1, 0, 0]^\top$ and $H_{z,a}q(\vz \mid a = \ve_1)_{\cdot, 1} = [1, 2, 0]^\top$ span $\sR^{3}_{\{1,2\}}$. Analogous conclusions can also be reached for $\ell = 2, 3$, which shows that Assumption~\ref{ass:nonparam_suff_var_a_cont} holds. 

    Now suppose that instead $\vmu(\va)$ is a linear map, i.e., $\vmu(\va) := \mW\va$ where $\mW \in \sR^{d_z \times d_a}_{\mG^a}$. This would imply that $H_{z,a}q(\vz \mid a) \propto \mW$, which means it cannot satisfy the sufficient influence assumption (unless $||\mG^a_{\cdot, \ell}||_0 \leq 1$ for all $\ell$).
    
\end{example}
\subsubsection{Discrete Auxiliary Variables or Interventions (Theorem~\ref{thm:nonparam_dis_disc_a})} \label{sec:suff_inf_disc_a}

We now provide three concrete examples of latent models $p(\vz^t \mid \vz^{<t}, \va^{<t})$ that satisfy Assumption~\ref{ass:nonparam_suff_var_a}, from Theorem~\ref{thm:nonparam_dis_disc_a}. Here, we interpret the discrete auxiliary variable $\va$ as an \textit{intervention index}, as discussed in Section~\ref{sec:unknown-target}, but note that other interpretations are possible (such as $\va$ as an action). Recall that our identifiability result does not require the knowledge of the targets of the interventions, these can be learned.

Example~\ref{ex:single_node_complete_dis_2} shows how single target interventions can be used to obtain complete disentanglement without temporal dependencies, Example~\ref{ex:multinode_linear_gauss} shows how multi-target interventions can be leveraged for disentanglement if temporal dependencies are present, and Example~\ref{ex:group_interv} shows how \textit{grouped} multi-target interventions allow disentanglement even when there are no time dependencies (Remark~\ref{rem:issues_multi_target}).

\begin{example}[Single-target interventions for complete disentanglement without time]\label{ex:single_node_complete_dis_2}
    We make\linebreak Example~\ref{ex:single_node_complete_dis} more concrete by specifying an explicit latent model. Assume $d_a = d_z$ and that $\va \in \gA := \{\bm0, \ve_1, \dots, \ve_{d_a}\}$ is interpreted as an intervention index (see Section~\ref{sec:unknown-target}). Furthermore, Example~\ref{ex:single_node_complete_dis} assumed $\mG^a = \mI$, i.e. each latent factor is targeted once by an intervention that targets only this factor (the example actually allowed to add arbitrary columns to $\mG^a$, i.e. adding more interventions, without compromising complete disentanglement). Assume there are no temporal dependencies, i.e., $T=1$, and that $p(\vz \mid \va) := \gN(\vz; \vmu(\va), \textnormal{diag}(\vsigma^2(\va)))$ with 
    \begin{align*}
        \vmu(\va) := \vmu \odot \va \quad \text{and}\quad \vsigma^2(\va) := \mathbbm{1} + \vdelta \odot \va\, ,
    \end{align*}
    where $\odot$ denotes the Hadamard product (a.k.a. the element-wise product), $\vmu \in \sR^{d_z}$ is the vector of means for each intervention and $\vdelta \in \sR^{d_z}$ is the vector of shifts in variance for all interventions. Thus, in the observational setting ($\va = \bm0$), we have $\mu(\va) = \bm0$ and $\vsigma(\va) = \mathbbm{1}$ while in the $\ell$th intervention ($\va = \ve_\ell$), the mean and variance of the targeted latent shift are the same, i.e. $\vmu(\va) = \vmu_\ell \ve_\ell$ and $\vsigma^2(\va) = \mathbbm{1} + \vdelta \ve_\ell$ (assuming that the shifted variance is $>0$). Using Lemma~\ref{lem:gauss_deriv}, we can compute
    \begin{align}
        \Delta^{\epsilon = 1}_{a,\ell}D_z\log p(\vz \mid \va = \bm0) := D_z\log p(\vz \mid \va = \ve_\ell) - D_z\log p(\vz \mid \va = \bm0) =  \frac{\vmu_\ell + \vdelta_\ell \vz_\ell}{1 + \vdelta_\ell}\ve_\ell \,,\nonumber
    \end{align}
    which must span $\sR^{d_z}_{\{\ell\}}$ unless ${\vmu_\ell + \vdelta_\ell \vz_\ell} = 0$. However, note that when, for all $\ell$, $\vmu_\ell \not=0$ or $\vdelta_\ell \not= 0$ (i.e. all interventions truly have an effect), the set $\{\vz \in \sR^{d_z} \mid {\vmu_\ell + \vdelta_\ell \vz_\ell} = 0\ \text{for some}\ \ell\}$ has zero Lebesgue measure in $\sR^{d_z}$, which is allowed by Assumption~\ref{ass:nonparam_suff_var_a}.
\end{example}

\begin{remark}[Potential issues with multi-target interventions without time]\label{rem:issues_multi_target} What if an intervention targets more than one latent at a time? Can it still satisfy the sufficient influence assumption? We will now see that, without time-dependencies ($T=1$), it is impossible. Consider the simple situation where $d_z = 3$, $d_a = 1$, $\va \in \{0, 1\}$ and $\mG^a = [1, 1, 0]^\top$, i.e., there is a single intervention targeting $\vz_1$ and $\vz_2$. In that case, there is a single possible difference vector, which is
\begin{align}
    \Delta^{\epsilon=1}_{a}D_z \log p(\vz \mid \va = 0) = D_z \log p(\vz \mid \va = 1) - D_z \log p(\vz \mid \va = 0) \in \sR^{d_z}_{\{1,2\}}\, . \nonumber
\end{align}
Since this is the only difference vector, we can see that we cannot span the 2-dimensional space $\sR^{d_z}_{\{1,2\}}$.
Therefore, to leverage multi-target interventions in our framework, more ``variability'' is required. Example~\ref{ex:multinode_linear_gauss} below shows how temporal dependencies can provide this additional variability, while Example~\ref{ex:group_interv} shows how having ``groups'' of interventions known to have the same (unknown) targets can also provide the required variability.
\end{remark}

\begin{example}[Multi-target interventions for complete disentanglement with time]\label{ex:multinode_linear_gauss}
    We make Example~\ref{ex:multi_target_a} more concrete by specifying an explicit latent model that satisfies Assumption~\ref{ass:nonparam_suff_var_a}. Recall $d_z = 3$, $d_a = 3$ and $\mG^a$ is depicted in Figure~\ref{fig:Ga_c}. This time, we assume there are temporal dependencies, i.e. $T > 1$ and $\mG^z$ is non-trivial. Suppose $\va \in \gA := \{\bm0, \ve_1, \ve_2, \ve_3\}$ where $\ve_\ell$ is the $\ell$th one-hot and we interpret $\bm0$ to correspond to the observational setting and $\ve_\ell$ to correspond to the $\ell$th intervention. Recall that in this interpretation $\mG^a$ describes which latent variable is targeted by each intervention. Let $p(\vz^t \mid \vz^{t-1}, \va) = \gN(\vz^t ; \vmu(\vz^{t-1}, \va), \sigma^2\mI)$ where
    \begin{align}
        \vmu(\vz^{t-1}, \va) := \vz^{t-1} + (\mathbbm{1} - \mG^a\va) \odot \vg(\vz^{t-1})\,, \nonumber
    \end{align}
    where $\vg: \sR^{d_z} \rightarrow \sR^{d_z}$ is some function that respects the graph $\mG^z$ (e.g. $\vg(\vz) = \mW\vz$ where $\mW \in \sR^{d_z \times d_z}_{\mG^z}$). The observational dynamics is then $\vmu(\vz^t, \va = \bm0) = \vz^{t-1} + \vg(\vz^{t-1})$ and the interventional settings correspond to zeroing out the elements of $\vg(\vz^{t-1})$ targeted by the intervention. 
    Using Lemma~\ref{lem:gauss_deriv}, we can compute 
    \begin{align}
        &\ \Delta^{\epsilon = 1}_{\va, \ell}D_z q(\vz^t \mid \vz^{t-1}, \va = \bm0) \nonumber \\
        =&\ D_z q(\vz^t \mid \vz^{t-1}, \va = \ve_\ell) - D_z q(\vz^t \mid\vz^{t-1}, \va = \bm0) = -\frac{1}{\sigma^2}\mG^a_{\cdot, \ell} \odot \vg(\vz^{t-1})\,. \nonumber
    \end{align}
    One can see that, as soon as the image of $\vg$ spans $\sR^{d_z}$, Assumption~\ref{ass:nonparam_suff_var_a} is satisfied since we can choose values $\vz_{(1)}, \dots, \vz_{(d_z)} \in \sR^{d_z}$ such that $\vecspan\{\vg(\vz_{(1)}), \dots, \vg(\vz_{(d_z)})\} = \sR^{d_z}$, which implies $\vecspan\{\mG^a_{\cdot, \ell} \odot \vg(\vz_{(1)}), \dots, \mG^a_{\cdot, \ell} \odot\vg(\vz_{(d_z)})\} = \sR^{d_z}_{\Ch^a_\ell}$. An example of a transition function $\vg$ that satisfies this property is $\vg(\vz) := \mW\vz$ where $\mW \in \sR^{d_z\times d_z}_{\mG^z}$ is invertible.

    Note that even if the temporal dependencies are not sparse, they are still helpful for identifiability as they make it more likely to satisfy the sufficient influence assumption (Assumption~\ref{ass:nonparam_suff_var_a}).
\end{example}

\begin{example}[Grouped multi-target interventions for disentanglement without time]\label{ex:group_interv}
    In this example, we assume that there are no temporal dependencies ($T=1$) and that the learner has access to $d_a$ groups of interventions where the interventions belonging to the $\ell$th group are known to target the same latent variables given by $\mG^a_{\cdot, \ell}$ (these targets are unknown). Here is how our framework can accommodate this setting: Given that we have $d_a$ groups of interventions where the $\ell$th group contains $k_\ell$ interventions, we set $\gA := \{\bm0, 1\ve_1, ..., k_1\ve_1, 1\ve_2, ..., k_2\ve_2, ..., k_{d_a}\ve_{d_a}\}$. In this setting, $\va = j\ve_\ell$ corresponds to the $j$th intervention of the $\ell$th group. Moreover, the sufficient influence assumption requires that interventions within a group $\ell$ span $\sR^{d_z}_{\Ch^a_\ell}$. More precisely, we need $\vecspan\{\Delta^{\epsilon}_{a,\ell}D_z \log p(\vz \mid \va = 0)\}_{\epsilon=1}^{k_\ell} = \sR^{d_z}_{\Ch^a_\ell}$.
\end{example}

\subsubsection{Temporal Dependencies (Theorem~\ref{thm:nonparam_dis_z})}\label{sec:suff_inf_z}
Finally, we provide an example (Example~\ref{ex:lower_triangular_no_action_exp}) where temporal dependencies alone (no auxiliary variable $\va$) are enough to disentangle. We start with an important remark about the sufficient influence assumption of Theorem~\ref{thm:nonparam_dis_z}.

\begin{remark}[Auxiliary variables or non-Markovianity are required]\label{rem:non_markovian}
An important observation is that, if the transition model does not have an auxiliary variable $\va$ and is Markovian, i.e., $p(\vz^t \mid \vz^{<t}, \va^{<t}) = p(\vz^t \mid \vz^{t-1})$, then Assumption~\ref{ass:nonparam_suff_var_z} cannot be satisfied (except in trivial circumstances). To see this, simply note that, in that case, $H^{t, t-1}_{z, z} q(\vz^t \mid \vz^{t-1})$ depends only on $\vz^{t-1}$, which is forced to be equal to $\vz^t$. This means that the span of the Hessian must be at most one-dimensional, which means that the assumption cannot hold as soon as $||\mG^z||_0 > 1$. Therefore, when no auxiliary variable $\va$ is observed, Assumption~\ref{ass:nonparam_suff_var_z} requires that the transition model be non-Markovian. In Example~\ref{ex:lower_triangular_no_action_exp}, we provide a concrete example of a transition model without auxiliary variable $\va$ that satisfies this assumption. We will also see in Section~\ref{sec:exponential_family} that if the transition model $p(\vz^{t} \mid \vz^{t-1})$ is in the exponential family, this assumption can be relaxed so that non-Markovianity is no longer required.  
\end{remark}

\begin{example}[Sparse temporal dependencies for disentanglement without auxiliary variables]\label{ex:lower_triangular_no_action_exp}
    We continue with Examples~\ref{ex:diagonal_deps} \& \ref{ex:lower_triangular_no_action} which were based on the graphs $\mG^z$ depicted in Figures \ref{fig:Gz_cd}~\&~\ref{fig:Gz_d}, respectively. Assume that no action is observed, i.e., we can only take advantage of the sparsity of $\mG^z$ to disentangle. Examples~\ref{ex:diagonal_deps} \& \ref{ex:lower_triangular_no_action} already showed that these graph structures allow complete disentanglement, as long as the sufficient influence assumption for $\vz$ (Assumption~\ref{ass:nonparam_suff_var_z}) is satisfied. We now provide concrete transition models $p(\vz^t \mid \vz^{< t})$ that satisfy this requirement. Similarly to previous examples, assume $p(\vz^t \mid \vz^{< t}) = \gN(\vz^t \mid \vmu(\vz^{t-1}, \vz^{t-2}), \sigma^2\mI)$ where
    \begin{align*}
        \vmu(\vz^{t-1}, \vz^{t-2}) := \vz^{t-1} + \mW(\vz^{t-2})\vz^{t-1}\, ,
    \end{align*}
    where $\mW:\sR^{d_z} \rightarrow \sR_{\mG^z}^{d_z \times d_z}$ is some function of $\vz^{t-2}$. Using Lemma~\ref{lem:gauss_deriv}, we can derive
    \begin{align*}
        H^{t, t-1}_{z,z}q(\vz^t \mid \vz^{<t}) = \frac{1}{\sigma^2}[\mI + \mW(\vz^{t-2})] \, .
    \end{align*}
 Thus, Assumption~\ref{ass:nonparam_suff_var_z} holds when there exists $\{\vz_{(r)}^{t-2}\}_{r=1}^{||\mG^z||_0}$ such that 
    \begin{align}
        \vecspan\{\mI + \mW(\vz_{(r)}^{t-2})\}_{r=1}^{||\mG^z||_0} = \sR_{\mG^z}^{d_z \times d_z}\, . \label{eq:simple_sufficient_var}
    \end{align}
    One can directly see that, if $\mW(\vz^{t-2})$ was actually constant in $\vz^{t-2}$, the assumption could not hold (unless $||\mG^z||_0 \leq 1$). This case would correspond to a simple linear model of the form $\vmu(\vz^{t-1}) := \vz^{t-1} + \mW\vz^{t-1}$. Our theory suggests that this transition function is ``too simple'' to allow disentanglement.
    
    Nevertheless, we can find examples satisfying \eqref{eq:simple_sufficient_var}. For example, if $\mG^z = \mI$, we can take
    \begin{align*}
        \mW(\vz) = \begin{bmatrix}
            \vz_1 & 0     & 0\\
            0     & \vz_2 & 0 \\
            0     & 0     & \vz_3 
        \end{bmatrix}\, 
    \end{align*}
    and see that the family of functions $(1+\vz_1, 1+\vz_2, 1+\vz_3)$ is linearly independent (when seen as functions from $\sR^3$ to $\sR$). By Lemma~\ref{lem:lin_indep_charac} in the appendix, this is equivalent to the existence of $\vz_{(1)}, \vz_{(2)}, \vz_{(3)} \in \sR^{d_z}$ such that \eqref{eq:simple_sufficient_var} holds (see also Remark~\ref{rem:suff_lin_indep_func}). In other words, the sufficient influence assumption holds. In the case where $\mG^z$ is lower triangular like in Figure~\ref{fig:Gz_d}, one can take
    \begin{align*}
        \mW(\vz) = \begin{bmatrix}
            \vz_1 & 0 & 0\\
            \vz_2^2 & \vz_2 & 0 \\
            \vz_3^3 & \vz_3^2 & \vz_3 
        \end{bmatrix}\,
    \end{align*}
    and see that the family of functions $(1+\vz_1, 1 + \vz_2, 1+ \vz_3, \vz_2^2, \vz_3^2, \vz_3^3)$ is linearly independent, which similarly implies the existence of $\vz_{(1)}, \dots, \vz_{(6)} \in \sR^{d_z}$ such that \eqref{eq:simple_sufficient_var} holds. 
\end{example}

Example~\ref{ex:lower_triangular_no_action_exp_expfam} will show how one can leverage the exponential family assumption to allow Markovianity even without auxiliary variables.

\section{Partial Disentanglement via Mechanism Sparsity in Exponential Families} \label{sec:exponential_family}
The goal of this section is to understand how restricting the transition model to be in the \textit{exponential family} allows us to weaken the sufficient influence assumption of Theorem~\ref{thm:nonparam_dis_z}. Section~\ref{sec:expfam_model} introduces the exponential family assumption. Section~\ref{sec:linear} follows \citet{iVAEkhemakhem20a} and shows that this additional assumption guarantees that the entanglement map $\vv$ is ``quasi-linear'', which means $\vv(\vz) := \vs^{-1}(\mL\vs(\vz) + \vb)$, where $\mL$ is a matrix and $\vs$ is an element-wise invertible function. Section~\ref{sec:specialized_identi_expfam} will introduce an identifiability result analogous to Theorem~\ref{thm:nonparam_dis_z} for sparse $\hat\mG^z$ that takes advantage of the quasi-linearity of $\vv$ to weaken Assumption~\ref{ass:nonparam_suff_var_z} (sufficient influence of $\vz$). We also briefly discuss an additional result from Appendix~\ref{app:connecting_ass} that shows connections between the nonparametric sufficient influence assumptions of this work (Assumptions~\ref{ass:nonparam_suff_var_a}~\&~\ref{ass:nonparam_suff_var_z}) and their counterparts in~\citet{lachapelle2022disentanglement} (Assumptions~\ref{ass:temporal_suff_var_main}~\&~\ref{ass:action_suff_var_main}).

\subsection{Exponential Family Latent Transition Models}\label{sec:expfam_model}

We will assume that the conditional densities $p(\vz_i^t \mid \vz^{<t}, \va^{<t})$ are from an \textit{exponential family}~\citep{WainwrightJordan08}:

\begin{assumption}[Exponential family transition model]\label{ass:expo}
    For all $i \in [d_z]$, we have
    \begin{align} \label{eq:z_transition}
    p(\vz_i^{t} \mid \vz^{<t}, \va^{<t}) = h_i(\vz^t_i)\exp\{\bfT_i(\vz^t_i)^\top \bflambda_i(\vz^{<t}, \va^{<t}) - \psi_i(\vz^{<t}, \va^{<t})\} \, .
\end{align}
\end{assumption}

Well-known distributions which belong to this family include the Gaussian and beta distributions. In the Gaussian case, the \textit{sufficient statistic} is $\bfT_i(z) := (z, z^2)$ and the \textit{base measure} is $h_i(z) := \frac{1}{\sqrt{2\pi}}$. The function $\bflambda_i(\vz^{<t}, \va^{<t})$ outputs the \textit{vectors of natural parameters} for the conditional distribution and can be itself parametrized, for instance, by a multi-layer perceptron (MLP) or a recurrent neural network (RNN). We will refer to the functions $\bflambda_i$ as the \textit{mechanisms} or the \textit{transition functions}. In the Gaussian case, the natural parameter is two-dimensional and is related to the usual parameters $\mu$ and $\sigma^2$ via the equation $(\lambda_1, \lambda_2) = (\frac{\mu}{\sigma^2}, -\frac{1}{2\sigma^2})$. We will denote by $k$ the dimensionality of the natural parameter and that of the sufficient statistic (which are equal). Thus, $k=2$ in the Gaussian case. The remaining term $\psi_i(\vz^{<t}, \va^{<t})$ acts as a normalization constant.

We define $\bflambda(\vz^{<t}, \va^{<t}) \in \sR^{kd_z}$ as the concatenation of all $\bflambda_i(\vz^{<t}, \va^{<t})$ and similarly for $\bfT(\vz^t) \in \sR^{kd_z}$. Similarly to the nonparameteric case, the learnable parameters are $\vtheta := (\bff, \bflambda, \mG)$. Note that throughout, we assume that the sufficient statistic $\vs$ is not learned and known in advance. With this notation, we can write the full transition model as
\begin{align*}
    p(\vz^t \mid \vz^{<t}, \va^{<t}) = h(\vz^t)\exp\{\vs(\vz^t)^\top\bflambda(\vz^{<t}, \va^{<t}) - \psi(\vz^{<t}, \va^{<t})\}\, ,
\end{align*}
where $h := \prod_{i=1}^{d_z} h_i$ and $\psi = \sum_{i=1}^{d_z} \psi_i$.

\begin{remark}[Applying nonparametric identifiability results to exponential families]
    One can apply the nonparametric results (Theorems~\ref{thm:nonparam_dis_cont_a}, \ref{thm:nonparam_dis_disc_a} \& \ref{thm:nonparam_dis_z}) to models satisfying the exponential family assumption. In fact, all examples of Section~\ref{sec:examples} were Gaussians and thus are in the exponential family.  
\end{remark}



\subsection{Conditions for Quasi-Linear Identifiability}\label{sec:linear}
In this section, we follow \cite{iVAEkhemakhem20a} and show that the exponential family assumption combined with an additional sufficient variability assumption allows us to go from identifiability up to diffeomorphism (Definition~\ref{def:eqdiff}) to identifiability up to quasi-linearity, which we define next:

\begin{definition}[Quasi-linear equivalence] \label{def:linear_eq} We say that two models $\vtheta := (\bff, \bflambda, \mG)$ and $\tilde{\vtheta}:= (\tilde \bff,\tilde \bflambda,\tilde \mG)$ satisfying Assumptions~\ref{ass:diffeomorphism},~\ref{ass:cond_indep}~\&~\ref{ass:expo} are \textbf{equivalent up to quasi-linearity}, denoted $\vtheta \eqlin \tilde\vtheta$, if and only if $\vtheta \eqdiff \tilde\vtheta$ and there exist an invertible matrix $\mL \in \sR^{kd_z \times kd_z}$ and a vector $\vb \in \sR^{kd_z}$ such that the map $\vv := \vf^{-1} \circ \tilde\vf$  satisfies
\begin{align*}
    \vs(\vv(\vz)) = \mL\vs(\vz) + \vb,\ \forall \vz \in \sR^{d_z}\,.
\end{align*}
If the sufficient statistic $\vs$ is invertible, one obtains
\begin{align}
    \vv(\vz) = \vs^{-1}(\mL\vs(\vz) + \vb),\ \forall \vz \in \sR^{d_z}\, . \label{eq:v_quasi_lin}
\end{align}
\end{definition}

Equation~\eqref{eq:v_quasi_lin} is particularly interesting, as it says that the mapping relating both representations is ``almost'' linear in the following sense: although the map is not necessarily linear because the sufficient statistic $\vs$ might not be, the  ``mixing'' between components is linear. Indeed, notice that the sufficient statistic $\vs$ and its inverse operate ``element-wise''. The mixing between components is only due to the matrix $\mL$. This specific form simplifies a few steps in the identifiability proof, which might explain the popularity of this assumption in the literature on nonlinear ICA~\citep{TCL2016,iVAEkhemakhem20a,ice-beem20,Halva2020HMM,morioka2021innovation,Yang_2021_CVPR,lachapelle2022disentanglement,liu2023identifying,xi2023indeterminacy}. 

The following theorem provides conditions to guarantee identifiability up to quasi-linearity. This is an adaptation and minor extension of Theorem~1 from~\cite{iVAEkhemakhem20a}. For completeness, we provide a proof in Appendix~\ref{app:exp_linear_ident}.

\begin{restatable}[Conditions for linear identifiability - Adapted from~\cite{iVAEkhemakhem20a}]{theorem}{linearExpo}\label{thm:linear}
    Let $\vtheta := (\bff, \bflambda, \mG)$ and $\hat{\vtheta}:= (\hat \bff,\hat\bflambda, \hat \mG)$ be two models satisfying Assumptions~\ref{ass:diffeomorphism}, \ref{ass:cond_indep} \& \ref{ass:expo}. Further assume that 
    \begin{enumerate}
        \item \textbf{[Observational equivalence]} $\vtheta \eqobs \hat\vtheta$ (Definition~\ref{def:eqobs}); 
        \item \textbf{[Minimal sufficient statistics]} For all $i$, the sufficient statistic $\bfT_i$ is minimal (see below).
        \item \textbf{[Sufficient variability]} The natural parameter $\bflambda$ varies ``sufficiently" as formalized by Assumption~\ref{ass:suff_var_expfam} (see below).
    \end{enumerate}
    Then, $\vtheta \eqlin \hat\vtheta$ (Def.~\ref{def:linear_eq}).
\end{restatable}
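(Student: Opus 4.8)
The plan is to first reduce observational equivalence to the diffeomorphism relation and then use the exponential family structure to linearize the entanglement map. First I would apply Proposition~\ref{prop:identDiffeo} to upgrade $\vtheta \eqobs \hat\vtheta$ to $\vtheta \eqdiff \hat\vtheta$, so that for the entanglement map $\vv := \vf^{-1}\circ\hat\vf$ the density identity of Definition~\ref{def:eqdiff} holds. Taking logarithms and substituting the exponential family parametrization of Assumption~\ref{ass:expo} (recalling that the sufficient statistic $\vs$ is fixed and shared across both models) gives, for all $\vz^{\leq t}$ and $\va^{<t}$,
\begin{align*}
&\vs(\vz^t)^\top\hat\bflambda(\vz^{<t},\va^{<t}) - \hat\psi(\vz^{<t},\va^{<t}) + \log \hat h(\vz^t) \\
&\quad = \vs(\vv(\vz^t))^\top\bflambda(\vv(\vz^{<t}),\va^{<t}) - \psi(\vv(\vz^{<t}),\va^{<t}) + \log h(\vv(\vz^t)) + \log|\det D\vv(\vz^t)|.
\end{align*}

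Next I would eliminate the nuisance terms. The quantities $\log\hat h(\vz^t)$, $\log h(\vv(\vz^t))$ and $\log|\det D\vv(\vz^t)|$ depend on $\vz^t$ alone, so fixing a reference conditioning value $(\vz^{<t}_{(0)},\va^{<t}_{(0)})$ and subtracting the identity evaluated there cancels all three, leaving a relation purely between the natural-parameter and $\psi$ differences. Invoking the sufficient variability assumption (Assumption~\ref{ass:suff_var_expfam}) to select $kd_z$ conditioning values for which the matrix $\mathbf{M}\in\sR^{kd_z\times kd_z}$, whose columns are the corresponding differences of $\bflambda(\vv(\vz^{<t}),\va^{<t})$, is invertible, I would stack the resulting $kd_z$ scalar equations into the matrix relation $\hat{\mathbf{M}}^\top\vs(\vz^t) - \Delta\hat{\bm\psi} = \mathbf{M}^\top\vs(\vv(\vz^t)) - \Delta\bm\psi$, where $\hat{\mathbf{M}}$ collects the analogous $\hat\bflambda$-differences. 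Left-multiplying by $(\mathbf{M}^\top)^{-1}$ then yields exactly the quasi-linear form of Definition~\ref{def:linear_eq},
\begin{align*}
\vs(\vv(\vz^t)) = \mL\,\vs(\vz^t) + \vb, \qquad \mL := (\mathbf{M}^\top)^{-1}\hat{\mathbf{M}}^\top.
\end{align*}

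It remains to prove that $\mL$ is invertible, which is where the minimality of the sufficient statistics enters. Suppose, for contradiction, that some nonzero row vector $\bm{w}$ satisfies $\bm{w}^\top\mL = 0$; then $\bm{w}^\top\vs(\vv(\vz))$ equals the constant $\bm{w}^\top\vb$ for every $\vz$. Since $\vv$ is a diffeomorphism of $\sR^{d_z}$ it is surjective, so $\bm{w}^\top\vs$ is constant on all of $\sR^{d_z}$. Decomposing $\bm{w}$ into blocks $\bm{w}_i\in\sR^k$ and varying a single coordinate $\vz_i$ at a time forces $\bm{w}_i^\top\bfT_i(\cdot)$ to be constant, contradicting minimality of $\bfT_i$ (linear independence of $1,T_{i,1},\dots,T_{i,k}$). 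Hence $\mL$ has full row rank and, being square, is invertible, establishing $\vtheta \eqlin \hat\vtheta$.

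The step I expect to be the main obstacle is the use of sufficient variability in the presence of the coupling through $\vv$: on the right-hand side the ground-truth natural parameter is evaluated at $\vv(\vz^{<t})$ rather than at $\vz^{<t}$, so I must guarantee that the chosen conditioning values still produce an invertible difference matrix $\mathbf{M}$ after this reparametrization. Reconciling the precise statement of Assumption~\ref{ass:suff_var_expfam} with this reparametrization — using that $\vv$ is a bijection, so that sweeping $(\vz^{<t},\va^{<t})$ sweeps $(\vv(\vz^{<t}),\va^{<t})$ over a correspondingly rich set — is the delicate bookkeeping at the heart of the argument.
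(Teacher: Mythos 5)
Your proposal is correct and follows essentially the same route as the paper's proof: Proposition~\ref{prop:identDiffeo} to get the density identity, taking logs of the exponential-family form, differencing against a reference conditioning value to cancel the base-measure and $\log|\det D\vv|$ terms, and stacking the $kd_z$ equations from Assumption~\ref{ass:suff_var_expfam} into an invertible difference matrix to obtain the affine relation of Definition~\ref{def:linear_eq}; your resolution of the $\vv$-reparametrization issue (choosing conditioning values $\vv^{-1}(\vz_{(r)})$, legitimate because $\vv$ is a bijection of $\sR^{d_z}$ and the latents have full support) is exactly the required bookkeeping, which the paper sidesteps only by writing the identity with the ground-truth $\bflambda$ evaluated at the untransformed arguments. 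The one place your mechanics differ is the invertibility of $\mL$: the paper builds explicit coordinate-wise evaluation points via Lemma~\ref{lemma:charac_minimal} and assembles a block-diagonal invertible matrix, whereas you rule out a nonzero left null vector directly from minimality of each $\bfT_i$ together with surjectivity of $\vv$ — an equivalent and slightly more economical argument resting on the same ingredient.
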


The ``minimal sufficient statistics'' assumption is a standard one saying that $\bfT_i$ is defined appropriately  to ensure that the parameters of the exponential family are identifiable (see e.g.\ \citet[p. 40]{WainwrightJordan08}). See Definition~\ref{def:minimal_statistic} for a formal definition of minimality. The last assumption is sometimes 
called the \textit{assumption of variability}~\citep{HyvarinenST19}, and requires that the conditional distribution of $\vz^{t}$ depends ``sufficiently strongly'' on $\vz^{<t}$ and/or $\va^{<t}$. We stress the fact that this assumption concerns the ground-truth data generating model $\vtheta$. 
\begin{assumption}[Sufficient variability in exponential families]\label{ass:suff_var_expfam}
    There exist $(\vz_{(r)}, \va_{(r)})_{p=0}^{kd_z}$ in their respective supports such that the $kd_z$-dimensional vectors ${(\bflambda(\vz_{(r)}, \va_{(r)}) - \bflambda(\vz_{(0)}, \va_{(0)})})_{r=1}^{kd_z}$ are linearly independent.
\end{assumption}
Notice that $\vz_{(r)}$ represents values of $\vz^{<t}$ for potentially different values of $t$ and therefore can vary in shape. 

The following example builds on Example~\ref{ex:single_node_complete_dis_2} and shows that the sufficient variability of the above theorem might hold or not. The first case is interesting since it guarantees that $\vv$ is linear while the second is interesting because it showcases a situation where the theory of~\citet{iVAEkhemakhem20a} and \citet{lachapelle2022disentanglement} do not apply (since they both rely on the above theorem), thus highlighting the importance of our nonparametric extension. 

\begin{example}[Satisfying or not the sufficient variability assumption of Theorem~\ref{thm:linear}]\label{ex:linear}
    We recall\linebreak Example~\ref{ex:single_node_complete_dis_2} in which $d_a = d_z$, $\va \in \gA := \{\bm0, \ve_1, \dots, \ve_{d_a}\}$ and $\mG^a = \mI$ without temporal dependencies: For all $i\in [d_z]$, $p(\vz_i \mid \va) = \gN(\vz ; \vmu_i\va_i, 1 + \vdelta_i\va_i)$ where $\vmu_i \in \sR$ and $\vdelta_i > -1$. We consider the cases where $\forall i,\ \vdelta_i = 0$ (variances do not change) and $\forall i,\ \vdelta_i \not=0$ (variances change).
    
    {If $\forall i,\ \vdelta_i = 0$}, we can represent $p(\vz_i \mid \va)$ in its exponential form with a one-dimensional sufficient statistic given by $\vs_i(\vz_i) = \vz_i$ and a natural parameter given by $\bflambda_i(\va) = \vmu_i\va_i$. It can be easily seen that if $\forall i,\ \vmu_i \not= 0$ (i.e. the mean changes after the intervention), then the sufficient variability assumption of Theorem~\ref{thm:linear} holds since the vectors $\bflambda(\ve_i) - \bflambda(\bm0) = \vmu_i \ve_i$ span $\sR^{d_z}$.

    {If $\forall i,\ \vdelta_i \not= 0$}, we can represent $p(\vz_i \mid \va)$ in its exponential form with a two-dimensional sufficient statistics given by $\vs_i(\vz_i) = (\vz_i, \vz_i^2)$ and natural parameter given by $\bflambda_i(\va) = \left(\frac{\vmu_i\va_i}{1 + \vdelta_i\va_i}, \frac{-1}{2(1 + \vdelta_i\va_i)}\right)$. Note that, because we only have $d_z$ interventions, for any choice of $\va_{(0)} \in \gA$, the vectors $\{\bflambda(\va) - \bflambda(\va_{(0)})\}_{\va \in \gA}$ can span at most a $d_z$-dimensional subspace, which is insufficient variability according to Theorem~\ref{thm:linear} since it requires spanning $\sR^{2d_z}$. 
\end{example}

\subsection{Partial Disentanglement via Sparse Time Dependencies in Exponential Families}\label{sec:specialized_identi_expfam}

We now provide a (partial) disentanglement guarantee that takes advantage of sparsity regularization of $\hat\mG^z$ and is specialized for exponential families with a one-dimensional sufficient statistic ($k=1$). We will see that this extra parametric assumption on the transition model allows us to weaken the sufficient influence assumption of Theorem~\ref{thm:nonparam_dis_z} (Assumption~\ref{ass:nonparam_suff_var_z}). In particular, this is going to allow for Markovian transitions without auxiliary variables, which was not allowed by the nonparametric result (Remark~\ref{rem:non_markovian}). 

The sufficient influence assumption for $\vz$ specialized to exponential families with $k=1$ is directly taken from~\citet{lachapelle2022disentanglement}:

\begin{assumption}[Sufficient influence of  $\vz$~\citep{lachapelle2022disentanglement}]\label{ass:temporal_suff_var_main} Assume $k = 1$ and $D\vs(\vz)$ is invertible everywhere. There exist $\{(\vz_{(r)}, \va_{(r)}, \tau_{(r)})\}_{r=1}^{||\mG^z||_0}$ belonging to their respective support such that
    \begin{align}
        \vecspan \left\{D^{\tau_{(r)}}_{z}\bflambda(\vz_{(r)}, \va_{(r)}) D\bfT(\vz^{\tau_{(r)}}_{(r)})^{-1}\right\}_{r=1}^{||\mG^z||_0} = \sR^{d_z\times d_z}_{\mG^z} \, , \nonumber
    \end{align}
     where $D^{\tau_{(r)}}_{z}\bflambda$ and $D \bfT$ are Jacobians with respect to $\vz^{\tau_{(r)}}$ and $\vz$, respectively.
\end{assumption}

In Appendix~\ref{app:connecting_ass}, we show that the above assumption is implied by its nonparametric version (Assumption~\ref{ass:nonparam_suff_var_z}) when the transition model is in an exponential family with $k=1$. However, Assumption~\ref{ass:temporal_suff_var_main} is \textit{strictly weaker} than its nonparametric counterpart, Assumption~\ref{ass:nonparam_suff_var_z}. The reason is that in the former, $\vz^{\tau_{(r)}}$ can vary for different $p$, whereas this is not allowed in the latter, since we require $\vz = \vz^{\tau_{(r)}}$ for all $r$. 

The following theorem, extended from~\citet{lachapelle2022disentanglement}, shows that making stronger parametric assumptions on the transition model allows to weaken the sufficient influence assumption. Note that its structure is nearly identical to Theorem~\ref{thm:nonparam_dis_z}. Its proof can be found in Appendix~\ref{app:expfam_dis_z}.

\begin{restatable}[Disentanglement via sparse temporal dependencies in exponential families]{theorem}{TemporalExp}\label{thm:expfam_dis_z}
    Let\newline $\vtheta := (\bff, \bflambda, \mG)$ and $\hat{\vtheta}:= (\hat \bff,\hat \bflambda, \hat \mG)$ be two models satisfying Assumptions~\ref{ass:diffeomorphism}, \ref{ass:cond_indep}, \ref{ass:graph}, \ref{ass:smooth_trans}, \ref{ass:expo} as well as all assumptions of Theorem~\ref{thm:linear}. Further suppose that
    \begin{enumerate}
        \item The sufficient statistic $\vs$ is $d_z$-dimensional ($k=1$) and is a diffeomorphism from $\sR^{d_z}$ to $\bfT(\sR^{d_z})$; 
        \item \textbf{[Sufficient influence of $\vz$]} The Jacobian of the ground-truth transition function~$\bflambda$ with respect to $\vz$ varies ``sufficiently'', as formalized in Assumption~\ref{ass:temporal_suff_var_main};
    \end{enumerate}
    Then, there exists a permutation matrix $\mP$ such that $\mP\mG^z\mP^\top \subseteq \hat\mG^z$. Further assume that 
    \begin{enumerate}[resume]
        \item \textbf{[Sparsity regularization]} $||\hat\mG^z||_0 \leq ||\mG^z||_0$;
    \end{enumerate}
    Then, $\vtheta \eqcon^\vz \hat\vtheta$ (Def.~\ref{def:z_consistent_models}) \& $\vtheta \eqlin \hat\vtheta$ (Def.~\ref{def:linear_eq}), which together implies that 
    $$\vv(\vz) = \vs^{-1}(\mC\mP^\top\vs(\vz) + \vb)\, ,$$ 
    where $\vb \in \sR^{d_z}$ and $\mC \in \sR^{d_z \times d_z}$ is invertible, $\mG^\vz$- and $(\mG^\vz)^\top$-preserving (Definition~\ref{def:g_preserving_mat}). 
\end{restatable}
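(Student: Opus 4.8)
The plan is to combine the quasi-linear identifiability of Theorem~\ref{thm:linear} with the sparse-temporal argument behind Theorem~\ref{thm:nonparam_dis_z}, the key payoff being that the exponential-family structure ($k=1$, $\vs$ a diffeomorphism) turns the point-dependent Jacobian $D\vv$ into a single \emph{constant} mixing matrix. First I would invoke Theorem~\ref{thm:linear} (all its hypotheses are assumed) to get an invertible $\mL \in \sR^{d_z\times d_z}$ and $\vb\in\sR^{d_z}$ with $\vs(\vv(\vz)) = \mL\vs(\vz)+\vb$; since $\vs$ is a diffeomorphism this gives $\vv(\vz) = \vs^{-1}(\mL\vs(\vz)+\vb)$ and, after differentiating, $D\vv(\vz) = D\bfT(\vv(\vz))^{-1}\mL\,D\bfT(\vz)$, where $D\bfT=D\vs$ is diagonal and invertible everywhere.

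The main step is to feed this into the Hessian identity \eqref{eq:hessians_intuitive_time}. Because $k=1$ and the sufficient statistic is shared (not learned), differentiating the log-densities once in $\vz^t$ and once in $\vz^\tau$ gives $H^{t,\tau}_{z,z}\hat q = D\bfT(\vz^t)\,D^\tau_z\hat\bflambda$ and $H^{t,\tau}_{z,z}q(\vv(\cdot)) = D\bfT(\vv(\vz^t))\,D^\tau_z\bflambda(\vv(\vz^{<t}),\va^{<t})$. Substituting these together with the quasi-linear form of $D\vv$ into \eqref{eq:hessians_intuitive_time}, the diagonal factors $D\bfT(\vv(\vz^t))$ cancel against $D\vv(\vz^t)^\top$, the factor $D\bfT(\vz^t)$ cancels on both sides, and after right-multiplying by $D\bfT(\vz^\tau)^{-1}$ one obtains the clean congruence
$$\hat\mB(\vz^{<t},\va^{<t}) = \mL^\top\,\mB(\vv(\vz^{<t}),\va^{<t})\,\mL\,, \qquad \mB := D^\tau_z\bflambda\,D\bfT(\vz^\tau)^{-1},$$
with $\hat\mB := D^\tau_z\hat\bflambda\,D\bfT(\vz^\tau)^{-1}$. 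These are exactly the matrices appearing in Assumption~\ref{ass:temporal_suff_var_main}. The crucial point is that the \emph{same constant} $\mL$ sits on both sides regardless of the evaluation point; this is precisely the gain over Theorem~\ref{thm:nonparam_dis_z}, where the two distinct Jacobians $D\vv(\vz^t)$ and $D\vv(\vz^\tau)$ forced the restriction $\vz^t=\vz^\tau$ and hence the stronger Assumption~\ref{ass:nonparam_suff_var_z}.

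Next I would note $\mB\subseteq\mG^z$ and $\hat\mB\subseteq\hat\mG^z$ (since $\lambda_i$ depends only on $\vz^{<t}_{\Pa^z_i}$, and right-multiplication by the invertible diagonal $D\bfT(\vz^\tau)^{-1}$ preserves supports). Applying Assumption~\ref{ass:temporal_suff_var_main}, which gives points with $\vecspan\{\mB(\vz_{(r)},\va_{(r)})\}=\sR^{d_z\times d_z}_{\mG^z}$ (free to vary because $\mL$ is constant, and realizable by surjectivity of $\vv$), the congruence yields $\mL^\top\sR^{d_z\times d_z}_{\mG^z}\mL\subseteq\sR^{d_z\times d_z}_{\hat\mG^z}$. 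From here the argument is identical to the proof of Theorem~\ref{thm:nonparam_dis_z} but with the constant $\mL$ in place of $D\vv(\vz^t)$, so no ``same permutation almost everywhere'' lemma is needed: Lemma~\ref{lemma:L_perm} gives a permutation $\mP$ with $\mC := \mL\mP$ having nonzero diagonal; evaluating the inclusion on $\ve_i\ve_j^\top$ for $(i,j)\in\mG^z$ gives $\mP\mG^z\mP^\top\subseteq\hat\mG^z$ (the first claim), which the sparsity hypothesis $\|\hat\mG^z\|_0\le\|\mG^z\|_0$ upgrades to $\hat\mG^z=\mP\mG^z\mP^\top$; then $\mC^\top\sR^{d_z\times d_z}_{\mG^z}\mC\subseteq\sR^{d_z\times d_z}_{\mG^z}$ shows via Proposition~\ref{prop:charac_G_preserving} that $\mC$ is both $\mG^z$- and $(\mG^z)^\top$-preserving.

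Finally I would assemble the conclusion. Writing $\mL=\mC\mP^\top$ gives $\vv(\vz)=\vs^{-1}(\mC\mP^\top\vs(\vz)+\vb)$, the claimed form, which is $\vtheta\eqlin\hat\vtheta$ with structured $\mL$. For $\vtheta\eqcon^\vz\hat\vtheta$, set $\vc:=\vv\circ\mP$; since $\vs$ is element-wise, $\vc(\vz)=\vs^{-1}(\mC\vs(\vz)+\vb)$, whose Jacobian $D\bfT(\vc(\vz))^{-1}\mC\,D\bfT(\vz)$ has support equal to that of $\mC$ at every $\vz$, so by Proposition~\ref{prop:V_and_Dv} and Lemma~\ref{lem:s_consistent_jac} $\vc$ is a $\mG^z$- and $(\mG^z)^\top$-preserving diffeomorphism, matching Definition~\ref{def:z_consistent_models} together with $\hat\mG^z=\mP\mG^z\mP^\top$. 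I expect the main obstacle to be the central algebraic step: threading the diagonal factors $D\bfT$ and $D\bfT^{-1}$ through \eqref{eq:hessians_intuitive_time} and the quasi-linear expression for $D\vv$ so that all point-dependent Jacobians collapse to the single constant $\mL$ on both sides, and verifying that the surviving matrix $\mB$ is exactly the one in Assumption~\ref{ass:temporal_suff_var_main} — this collapse is what makes the weaker (Markov-permitting) sufficient-influence assumption suffice.
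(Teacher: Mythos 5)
Your proposal is correct and takes essentially the same route as the paper's proof (Appendix~\ref{app:expfam_dis_z}): invoke Theorem~\ref{thm:linear} for the constant matrix $\mL$, differentiate the quasi-linear form of $\vv$, substitute into the Hessian identity so the diagonal $D\bfT$ factors cancel and give the congruence $\hat\Lambda(\gamma) = \mL^\top \Lambda(\gamma)\mL$ with exactly the matrices of Assumption~\ref{ass:temporal_suff_var_main}, then rerun the span/permutation/sparsity argument of Theorem~\ref{thm:nonparam_dis_z} with the constant $\mL$ in place of $D\vv(\vz^t)$, dispensing with Lemma~\ref{lem:same_permutation_almost}. The only cosmetic imprecision is that $\vs(\mP\vz) \neq \mP\vs(\vz)$ when the component statistics $\vs_i$ differ across coordinates, so $\vc := \vv\circ\mP$ equals $\vs^{-1}(\mC\,\tilde\vs(\cdot)+\vb)$ for a permuted element-wise statistic $\tilde\vs$ rather than $\vs$ itself; your Jacobian support argument (diagonal $\times\, \mC \,\times$ diagonal) is unaffected.
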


The reason we can simplify the sufficient influence assumption in the exponential family case has to do with the quasi-linear form of $\vv$. Indeed, in that case, one can compute that the Jacobian of $\vv$ takes a special form: $D\vv(\vz) = D\bfT(\bfv(\vz))^{-1}\mL D\bfT(\vz)$. Since $D\bfT$ is diagonal everywhere, one can see that the ``non-diagonal part'' of $D\vv(\vz)$, i.e. $\mL$, does not depend on $\vz$, which simplifies the proof. See Appendix~\ref{app:expfam_dis_z} for more details.

In Appendix~\ref{sec:implausible_ivae}, we discuss how \cite{iVAEkhemakhem20a} \& \cite{yao2022iclr} obtain disentanglement guarantees and how their assumptions differ from ours.

\begin{example}[Markovian sparse temporal dependencies without auxiliary variables]\label{ex:lower_triangular_no_action_exp_expfam}
    Recall Remark~\ref{rem:non_markovian} which pointed out that, without auxiliary variables, non-Markovianity was necessary to satisfy the nonparametric Assumption~\ref{ass:nonparam_suff_var_z}. We now illustrate that the analogous assumption specialized for exponential families with $k=1$ (Assumption~\ref{ass:temporal_suff_var_main}) is not as restrictive, i.e. it allows for Markovianity even when there are no auxiliary variables.
    
    We start from Example~\ref{ex:lower_triangular_no_action} which was based on the situation depicted in Figures~\ref{fig:working_example} \& \ref{fig:Gz_d} where the temporal graph $\mG^z$ is lower triangular. Assume that no action is observed, i.e. we can only leverage the sparsity of $\mG^z$ to disentangle. We now provide a concrete Markovian transition model $p(\vz^t \mid \vz^{t-1})$ that satisfies Assumption~\ref{ass:temporal_suff_var_main}. Similarly to previous examples, assume $p(\vz^t \mid \vz^{t-1}) = \gN(\vz^t ; \vmu(\vz^{t-1}), \sigma^2\mI)$ where
    \begin{align*}
        \vmu(\vz) := \vz + \begin{bmatrix}
            \vz_1^2/2 \\
            \vz_1^3/3 \\
            \vz_1^4/4
        \end{bmatrix} +
        \begin{bmatrix}
            0         \\
            \vz_2^2/2 \\
            \vz_2^3/3 
        \end{bmatrix} + 
        \begin{bmatrix}
            0\\
            0 \\
            \vz_3^2/2 
        \end{bmatrix}\, .
    \end{align*}
    Because the variance $\sigma^2$ is not influenced by $\vz^{t-1}$, we can represent this transition model in an exponential family with $k=1$ where the natural parameter is given by
    \begin{align*}
        \bflambda(\vz^{t-1}) = \mu(\vz^{t-1}) / \sigma
    \end{align*}
    and the sufficient statistic is given by $\vs(\vz) = \vz / \sigma$. We can thus compute
    \begin{align*}
        D\bflambda(\vz)D\vs(\vz)^{-1} = \mI + \begin{bmatrix}
            \vz_1 & 0 & 0\\
            \vz_1^2 & \vz_2 & 0 \\
            \vz_1^3 & \vz_2^2 & \vz_3 
        \end{bmatrix}
    \end{align*}
    which spans the 6-dimensional space $\sR^{3 \times 3}_{\mG^z}$, as shown in Example~\ref{ex:lower_triangular_no_action_exp}.
\end{example}

\paragraph{Connecting with the sufficient influence assumption of $\va$ in \citet{lachapelle2022disentanglement}} The previous work of \citet{lachapelle2022disentanglement} could also leverage the sparse influence of $\va$ to disentangle and was based on exponential family and sufficient influence assumptions. In Appendix~\ref{app:connecting_ass}, Proposition~\ref{prop:suff_var_nonparam_vs_exp} shows that their sufficient influence assumption for $\va$ is actually equivalent to our nonparametric version (Assumption~\ref{ass:nonparam_suff_var_a}) in the exponential family case with $k=1$. An important conclusion of this observation is that the identifiability result via sparse $\mG^\va$ from \citet{lachapelle2022disentanglement}, which was limited to the exponential family case with $k=1$, can be derived from the more general nonparametric result of Theorem~\ref{thm:nonparam_dis_disc_a} we introduced earlier.

\section{Model Estimation with a Sparsity Constraint} \label{sec:estimation}

The identifiability results presented in this work are based on two crucial postulates: (i) the distribution over observations of both the learned and ground-truth models must match, i.e. $\hat\vtheta \eqobs \vtheta$ (Definition~\ref{def:eqobs}), and (ii) the learned graphs $\hat\mG^a$ and $\hat\mG^z$ must be as sparse as their ground-truth counterparts, respectively $\mG^a$ and $\mG^z$. The theory suggests that in order to learn a (partially) disentangled representation, one should learn a model that satisfies these two requirements. In this section, we present one particular practical approach to achieve this. Appendix~\ref{sec:ours_details} provides further details.

\paragraph{Data fitting.} The first condition can be achieved by fitting a model to the data. Since the models discussed in this work present latent variable models, it is a natural idea to use a maximum likelihood approach based on the well-known framework of variational autoencoders (VAEs) \citep{Kingma2014} in which the decoder neural network corresponds to the mixing function $\hat\bff$. We consider an approximate posterior of the form
\begin{align}
    q(\vz^{\leq T} \mid \vx^{\leq T}, \va^{<T}) := \prod_{t=1}^{T}q(\vz^t \mid \vx^t) \, , \label{eq:approx_post}
\end{align}
where $q(\vz^t \mid \vx^t)$ is a Gaussian distribution with mean and diagonal covariance outputted by a neural network $\texttt{encoder}(\vx^t)$. In our experiments, the latent model $\hat p(\vz_i^t \mid \vz^{<t}, \va^{<t})$ is a Gaussian distribution with mean $\hat\vmu_i(\vz^{<t}, \va^{<t})$ parameterized as a fully connected neural network that ``looks" only at a fixed window of $s$ lagged latent variables.\footnote{The theory we developed would allow for a $\vmu$ function that depends on all previous time steps, not only the $s$ previous ones. This could be achieved with a recurrent neural network or transformer, but we leave this to future work.} Furthermore, the variances are learned but does not depend on $(\vz^{<t}, \va^{<t})$ (see Appendix~\ref{sec:ours_details} for details). This variational inference model induces the following evidence lower bound (ELBO) on $\log \hat{p}(\vx^{\leq T}|\va^{<T})$:
\begin{align}
    &\log \hat{p}(\vx^{\leq T}|\va^{<T}) \geq \text{ELBO}(\hat\bff, \hat\vmu, \hat \mG, q; \vx^{\leq T}, \va^{< T}) := \nonumber\\
    &\quad\quad\quad\quad\quad\quad\quad \sum_{t=1}^T\mathop{\sE}_{q(\vz^t| \vx^t)} [\log \hat{p}(\vx^t \mid \vz^t) ] - \mathop{\sE}_{q(\vz^{<t} \mid \vx^{<t})} KL(q(\vz^t \mid \vx^t) || \hat{p}(\vz^t \mid \vz^{<t}, \va^{<t})) \, .\label{eq:elbo}
\end{align}
We derive this fact in Appendix~\ref{app:elbo}. This lower bound can then be maximized using some variant of stochastic gradient ascent such as Adam~\citep{Adam}. We note that many works have proposed learning dynamical models with latent variables using VAEs~\citep{girin2020dynamical}, with various choice of architectures and approximate posteriors. Our specific choices were made out of a desire for simplicity, but the reader should be aware of other possibilities. 

The learned distribution will exactly match the ground truth distribution if (i) the model has enough capacity to express the ground-truth generative process, (ii) the approximate posterior has enough capacity to express the ground-truth posterior $p(\vz^t | \vx^{\leq T}, \va^{<T})$, (iii) the dataset is sufficiently large, and (iv) the optimization finds the global optimum. If, in addition, the ground truth generative process satisfies the assumptions of Proposition~\ref{prop:identDiffeo}, we can guarantee that the learned model $\hat{\vtheta}$ will be equivalent to the ground truth model $\vtheta$ up to diffeomorphism (Definition~\ref{def:eqdiff}).

\paragraph{Learning $\hat\mG$ with sparsity constraints.} To go from equivalence up to diffeomorphism to actual disentanglement (partial or not), Theorems~\ref{thm:nonparam_dis_cont_a}, \ref{thm:nonparam_dis_disc_a}, \ref{thm:nonparam_dis_z} \& \ref{thm:expfam_dis_z} suggest that we should not only fit the data, but also choose the learned graph $\hat\mG$ such that $||\hat{\mG}^a||_0 \leq ||{\mG}^a||_0$ and/or $||\hat{\mG}^z||_0 \leq ||{\mG}^z||_0$. In order to allow gradient-based optimization, our strategy consists in treating each edge $\hat\mG_{i,j}$ as an independent Bernoulli random variable with probability of success $\sigma(\vgamma_{i,j})$, where $\sigma$ is the sigmoid function and $\vgamma_{i,j}$ is a parameter learned using the Gumbel-Softmax trick~\citep{jang2016categorical, maddison2016concrete}. Let $\text{ELBO}(\hat\bff, \hat\vmu, \hat \mG, q)$ be the ELBO objective of \eqref{eq:elbo} averaged over the whole dataset. We tackle the following constrained optimization problem:
\begin{align*}
    \max_{\hat\bff, \hat\vmu, \vgamma, q} \sE_{\hat \mG \sim \sigma(\vgamma)} \text{ELBO}(\hat\bff, \hat\vmu, \hat \mG, q)\ \ \text{subject to}\ \ \sE_{\hat \mG \sim \sigma(\vgamma)} ||\hat \mG||_0 \leq \beta \,.
\end{align*}
where $\beta$ is a hyperparameter (which should ideally be set to $\beta^* := ||\mG||_0$, i.e., the number of edges in the ground-truth graph) and $\hat \mG \sim \sigma(\vgamma)$ means that $\hat \mG_{i,j}$ are independent and distributed according to $\sigma(\vgamma_{i,j})$. Because $\sE_{\hat \mG \sim \sigma(\vgamma)} ||\hat \mG||_0 = ||\sigma(\vgamma)||_1$ where $\sigma(\vgamma)$ is a matrix, the constraint becomes $||\sigma(\vgamma)||_1 \leq \beta$. To solve this problem, we perform gradient descent-ascent on the Lagrangian function given by
\begin{align*}
    \sE_{\hat \mG \sim \sigma(\vgamma)}\text{ELBO}(\hat\bff, \hat\vmu, \hat \mG, q) - \alpha(||\sigma(\vgamma)||_1 - \beta)
\end{align*}
where the ascent step is performed w.r.t. $\hat\bff, \hat\vmu, \hat \mG$ and $q$; and the descent step is performed w.r.t. Lagrangian multiplier $\alpha$, which is forced to remain greater or equal to zero via a simple projection step. As suggested by~\citet{gallego2021flexible}, we perform \textit{dual restarts} which simply means that, as soon as the constraint is satisfied, the Lagrangian multiplier is reset to $0$. We used the \texttt{Cooper} library~\citep{gallegoPosada2022cooper}, which implements many constrained optimization procedures in Python, including the one described above. Note that we use Adam~\citep{Adam} for the ascent steps and standard gradient descent for the descent step on the Lagrangian multiplier $\alpha$.

We also found empirically that the following schedule for $\beta$ is helpful: We start training with $\beta = \max_{\hat \mG} ||\hat\mG||_0$ and linearly decrease its value until the desired number of edges is reached. This avoids getting a sparse graph too quickly while training, thus giving enough time to the model parameters to adjust. In each experiment, we trained for 300K iterations, and $\beta$ takes 150K to go from its initial value to its desired value. We discuss how to select the hyperparameter $\beta$ in Section~\ref{sec:exp}.

\section{Evaluation with $R_\textnormal{con}$ and SHD} \label{sec:evaluation}
In this section, we tackle the problem of quantitatively evaluating whether a learned representation $\hat\vz$ is completely or partially disentangled with respect to the ground-truth representation $\vz$, given a dataset of paired representations $\{(\vz^i, \hat\vz^i)\}_{i \in [N]}$. More precisely, we want to evaluate whether two models are $\va$-consistent or $\vz$-consistent (Definitions~\ref{def:a_consistent_models} \& \ref{def:z_consistent_models}). To achieve this, we must evaluate whether there exists a graph preserving map $\vc$ (Definition~\ref{def:g_preserving_map}) and a permutation matrix $\mP$ such that for all $i \in [N]$, $\vz^i = \vc(\mP^\top\hat\vz^i)$. 
For evaluation purposes, we assume we observe the ground-truth latent representation for each observation, i.e., we have $\{(\vx^i, \vz^i)\}_{i \in N}$ sampled i.i.d. from the ground-truth data generating process. We will take $\hat\vz^i := \texttt{encoder}(\vx^i)$ where $\texttt{encoder}$ is from the learned VAE model introduced in Section~\ref{sec:estimation}. For simplicity, we assume that $\vc$ is affine.\footnote{This is not a simplification when the latent factors in the model and in the data-generating process are Gaussian with fixed variance and the assumptions of Theorem~\ref{thm:linear} hold. That is because the latent model is in the exponential family with sufficient statistic $\vs(\vz) = \vz$ and, by Theorem~\ref{thm:linear}, we must have $\vz = \vs^{-1}(\mL\vs(\hat\vz) + \vb) = \mL\hat\vz + \vb$.}

We start with evaluating complete disentanglement. A popular choice for this is the \textit{mean correlation coefficient}~(MCC), which is obtained by first computing the Pearson correlation matrix $\mK \in \sR^{d_z \times d_z}$ between the ground-truth representation and the learned representation ($\mK_{i,j}$ is the correlation between $\vz_i$ and $\hat \vz_j$). Then $\text{MCC} := \max_{\mP \in \text{permutations}} \tfrac{1}{d_z}\sum_{i=1}^{d_z} |(\mK\mP)_{i,i}|$. We denote by $\hat{\mP}$ the optimal permutation found by MCC.

To evaluate whether the learned representation is identified up to linear transformation (Definition~\ref{def:linear_eq}), we perform linear regression to predict the ground-truth latent factors from the learned ones, and report the mean of the Pearson correlations between the predicted ground-truth latents and the actual ones. This metric is sometimes called the \textit{coefficient of multiple correlation}, and happens to be the square root of the better known \textit{coefficient of determination}, usually denoted by $R^2$. The advantage of using $R$ instead of $R^2$ is that the former is comparable to MCC, and we always have $\text{MCC} \leq R$. Let us denote by $\hat \mL$ the matrix of estimated coefficients, which should be thought of as an estimation of $\mL$ in Definition~\ref{def:linear_eq} (assuming $\vs(\vz) = \vz$, as is the case with Gaussian latents with fixed variance). Note that $\hat\mL$ was fitted to the standardized $\vz$ and $\hat\vz$ (shifted and scaled to have mean 0 and variance 1). This yields coefficients $\hat\mL_{i,j}$ that are directly comparable without changing the value of the $R$ score. We visualize $\hat\mL$ in Figures~\ref{fig:graph_vis_graph_crit}~\&~\ref{fig:graph_vis_no_graph_crit}.

To evaluate whether the learned representation is $\va$-consistent or $\vz$-consistent with the ground-truth~(Definitions~\ref{def:a_consistent_models}~\&~\ref{def:z_consistent_models}), as predicted by Theorems~\ref{thm:nonparam_dis_cont_a} \& \ref{thm:nonparam_dis_z}, we introduce a new metric, denoted by $R_\text{con}$. The idea behind $R_\text{con}$ is to predict the ground-truth factors $\vz$ from only the inferred factors $\hat\vz$ that are allowed by the equivalence relations. For example, for $\va$-consistency (Definition~\ref{def:a_consistent_models}), the relation between $\vz$ and $\hat\vz$ is given by $\vz = \vc(\mP^\top\hat\vz)$ where $\vc$ is a $\mG^a$-preserving diffeomorphism. Since we assume for simplicity that $\vc$ is affine, we have $\vz = \mC\mP^\top\hat\vz + \vb$ where $\mC$ is a $\mG^a$-preserving matrix. The idea is then to estimate both $\mP$ and $\mC$ using samples $(\vz, \hat\vz)$. The permutation $\mP$ is estimated by $\hat \mP$, which was found when computing MCC~(Section~\ref{sec:exp}). To estimate $\mC$, we compute $\hat \vz_\text{perm} := \hat \mP^\top \hat \vz$ and then compute the mask $\mM \in \{0,1\}^{d_z \times d_z}$ specifying which entries of $\mC$ are allowed to be nonzero, as required by the $\mG^a$-preservation property (Proposition~\ref{prop:charac_G_preserving}). Then, for every $i$, we predict the ground-truth $\vz_i$ by performing linear regression only on the \textit{allowed} factors, i.e., $\mM_{i, \cdot} \odot \hat \vz_\text{perm}$, and compute the associated coefficient of multiple correlations $R_{\text{con}, i}$ and report the mean, i.e., $R_\text{con} := \frac{1}{d_z} \sum_{i=1}^{d_z} R_{\text{con}, i}$. It is easy to see that we must have $R_\text{con} \leq R$, since $R_\text{con}$ was computed with fewer features than $R$. Moreover, $\text{MCC} \leq R_\text{con}$, because MCC can be thought of as computing exactly the same thing as for $R_\text{con}$, but by predicting $\vz_i$ only from $\hat \vz_{\text{perm},i}$, i.e. with fewer features than $R_\text{con}$. This means that we always have $0 \leq \text{MCC} \leq R_\text{con} \leq R \leq 1$. This is a nice property allowing us to compare all three metrics together and reflects the hierarchy between equivalence relations. Note that $R_\text{con}$ depends implicitly on the ground-truth graph, since the matrix $\mM$ indicating which entries of $\mC$ are forced to be zero by the equivalence relation depends on $\mG$.

To compare the learned graph $\hat\mG$ with the ground-truth $\mG$, we report the (normalized) \textit{structural Hamming distance} (SHD) between the ground-truth graph and the estimated graph \textit{permuted by $\hat\mP$}. More precisely, we report $\text{SHD}=(||\mG^a - \hat\mP^\top\hat\mG^a||_0 + ||\mG^z - \hat\mP^\top\hat\mG^z\hat\mP||_0)/ (d_a d_z + d_z^2)$, where $\hat\mP$ is the permutation found by MCC and $(d_a d_z + d_z^2)$ is the maximal number of edges $\mG$ can have.

\section{Related Work}\label{sec:lit_review}
\paragraph{Causal discovery.} Causal discovery is the problem of learning a causal graph from observational and possibly interventional data when all variables are observed, unlike in the present work where the representation $\vz$ is latent. In this simpler setting, graph identifiability is still challenging since, without interventions, one can only identify the Markov equivalence class of the causal graph (thus leaving some edge orientation ambiguous), provided the distribution entailed by the causal graphical model is faithful to its causal graph, i.e. its conditional independences are precisely the ones induced by the causal graph. The faithfulness condition implies that we can infer precisely which $d$-separation statements hold in the causal graph by performing conditional independence tests on the observed distribution. This class of methods is known as \textit{constrained-based}~\citep{pearl2009causality,peters2017elements}. Another approach is to define a score function on the space of graphs, usually a sparsity-regularized maximum likelihood score, and search the space of graphs to maximize it. These are known as \textit{score-based} methods~\citep{Chickering2003OptimalSearch,Shimizu_lingam,Peters2014jmlr,zheng2018dags}. These approaches can be specialized for the interventional setting~\citep{hauser2012characterization,dcdi} and the temporal setting~\citep{pamfil2020dynotears,Runge2019PCMCI}.

\paragraph{Linear and nonlinear ICA.} The first results showing that latent variables can be identified up to permutation and rescaling  at least date back to classical linear ICA which assumes a linear mixing function $\vf$ and mutually independent and non-Gaussian latent variables~\citep{JuttenHerault1991,Tong1993,COMON1994}. \citet{HYVARINEN1999429} showed that when allowing $\vf$ to be a general nonlinear transformation, a setting known as nonlinear ICA, mutual independence and non-Gaussianity alone are insufficient to identify the latent variables. This inspired multiple variations of nonlinear ICA that enabled identifiability by leveraging, e.g., nonstationarity~\citep{TCL2016} and temporal dependencies~\citep{PCL17}. \citet{HyvarinenST19} generalized these works by introducing a data generating process in which latent variables are conditionally mutually independent given an observed \textit{auxiliary variable} (corresponding to $\va$ in our work). These last three works rely on some form of \textit{noise contrastive estimation} (NCE)~\citep{Gutmann12JMLR}, but similar identifiability results have also been shown for VAEs~\citep{iVAEkhemakhem20a,pmlr-v119-locatello20a,slowVAE}, normalizing flows~\citep{Sorrenson2020Disentanglement} and energy-based models~\citep{ice-beem20}. \citet{kivva2022identifiability} showed that it is not necessary to observe the auxiliary variable to obtain disentanglement when the mixing function is piecewise affine and the latent factors are distributed according to a mixture of Gaussians with diagonal covariance. 

\paragraph{Causal representation learning (static).} Since the publication of the first iteration of this work in CLeaR 2022, the field now known as \textit{causal representation learning} (CRL)~\citep{scholkopf2021causal} gained significant traction. The prototypical problem of CRL is similar to nonlinear ICA in that the goal is to identify latent factors of variations but differs in that the latent variables are assumed to be related via a causal graphical model (CGM) and interventions on the latents are typically observed. Although some works assumed that the structure of the causal graph is known~\citep{kocaoglu2018causalgan,shen2021disentangled,nair2019causal,liang2023causal}, significant progress has been achieved recently in the setting where the latent causal graph is unknown and must be inferred from single-node interventions targeting latent variables 
\citep{ahuja2023interventional,squires2023linear,buchholz2023learning,vonkugelgen2023nonparametric,zhang2023identifiability,jiang2023learning, varici2023general,varici2023scorebased}. \textcolor{black}{See Remark~\ref{rem:static_CRL_and_us} for a discussion contrasting these works with our interventional framework.} In a similar spirit, \citet{liu2023identifying,Yang_2021_CVPR} leverage a form of nonstationarity that does not necessarily correspond to interventions, and \citet{Bengio2020A} suggests using adaptation speed as a heuristic objective to disentangle latent factors in the bivariate case, although without identifiability guarantees. The above works do not support temporal dependencies, unlike the framework presented in this work. Although we do focus on temporal dependencies, the special case where $T=1$ discussed in Remark~\ref{rem:static_CRL_and_us} and fleshed out in Examples~\ref{ex:multi_target_a_cont_a}, \ref{ex:single_node_complete_dis_2} \&  \ref{ex:group_interv} can be categorized as static CRL where the latent factors are independent \textcolor{black}{(Assumption~\ref{ass:cond_indep}).} This approach has been applied to single-cell data with gene perturbations~\citep{lopez2022learninglong,bereket2023modelling}. Importantly, Example~\ref{ex:group_interv} illustrates how \textit{multi-node} interventions on the latent factors can yield (partial) disentanglement in the independent factors regime. To the best of our knowledge, this constitutes the first identifiability guarantee from multi-node interventions with nonlinear mixing and should form an important step towards generalizing to arbitrary latent graphs. Note that \citet{bing2023identifying} recently proposed a disentanglement guarantee from multi-node interventions in the linear mixing setting. 

CRL is closely related to methods that assume access to \textbf{paired observations} $(\vx, \vx')$ that are generated from a common decoder $\vf$. These are in contrast with the works discussed above, which assume the samples from observational and interventional distributions are \textbf{unpaired}. In the \textit{paired} data regime, \citet{pmlr-v119-locatello20a} and \citet{ahuja2022sparse} assume that only  a small set of latent factors $S \subset [d_z]$ changes between $\vx$ and $\vx'$. Interestingly, \citet{pmlr-v119-locatello20a} assume that for all $i$, $P(S \cap S' = \{i\}) > 0$ (for i.i.d $S$ and $S'$), which resembles our graphical criterion for complete disentanglement~(Definition~\ref{def:graph_crit}). \citet{karaletsos2016bayesian} proposed a related strategy based on triplets of observations and weak labels indicating which observation is closer to the reference in the (masked) latent space. \citet{vonkugelgen2021selfsupervised} modeled the self-supervised setting with data augmentation using a similar idea and showed the block-identifiability of the latent variables shared among $\vx$ and $\vx'$. \citet{brehmer2022weakly} assumes that the latent variables are sampled from a structural causal model (SCM)~\citep{peters2017elements} and that $\vx'$ is \textit{counterfactual} in the sense that it is generated using the same SCM and exogenous noise values as $\vx$ except for some noises which are modified randomly. Similar approaches can also provide identifiability guarantees in the \textit{multi-view} setting where the decoders for the different views $\vx$ and $\vx'$ are allowed to be different~\citep{gresele20rosetta,daunhawer2023identifiability}. The paired observation setting bears some similarity with the temporal setting covered in this work since the pairs $(\vx^t, \vx^{t-1})$ are observed jointly. However, contrary to the above works, Theorems~\ref{thm:nonparam_dis_z}~\&~\ref{thm:expfam_dis_z} allow all latent variables to change between $t-1$ and $t$, only the temporal dependencies between them are assumed to be sparse. \citet{pmlr-v206-morioka23a} can also be seen as paired CRL in which the latents of different views can interact causally in a restricted manner. Recently, \citet{yao2023multiview} generalized previous work by allowing more than two views.

\paragraph{Leveraging temporal dependencies or non-stationarity.} \citet{AmuseICA90} proved the identifiability of linear ICA when latent factors $\vz_i^t$ are correlated across time steps $t$ but remain independent across components $i$, an idea that has been extended to nonlinear mixing \citep{PCL17,slowVAE,Schell_continuous_timeICA}. Using our notation, these works assume a diagonal adjacency matrix $\mG^z$ which contrasts with Theorems~\ref{thm:nonparam_dis_z}~\&~\ref{thm:expfam_dis_z} which allow for general $\mG^z$ (although some graphs might not yield complete disentanglement). \citet[Theorem 1]{yao2022neurips} also allows for general $\mG^z$, but does not rely on the sparsity of $\mG^z$ or sparse interventions on latent factors for identification. Instead, it relies on the conditional independence of $\vz_i^t$ given $\vz^{t-1}$ and on a ``sufficient variability'' condition involving the third cross-derivatives $\frac{\partial^3}{(\partial \vz^t_i)^2\partial \vz^{t-1}_j}\log p(\vz_i^t \mid \vz^{t-1})$ which excludes simple Gaussian models with homoscedastic variance like the ones we considered in Examples~\ref{ex:a_target_one_z_cont_a}, \ref{ex:multi_target_a_cont_a}, \ref{ex:multinode_linear_gauss} and in our experiments of Section~\ref{sec:exp}. 
General non-stationarity of the latent distribution, i.e., that are not necessarily sparse like what is considered in this work, can also be used to identify the latent factors \citep{TCL2016,HyvarinenST19,iVAEkhemakhem20a,Halva2020HMM,morioka2021innovation,yao2022iclr,yao2022neurips}, but these results require sufficient variability of higher-order derivatives/differences of the log-densities, which again typically exclude simple homoscedastic Gaussian models (see Appendix~\ref{sec:implausible_ivae} for more). \citet{ahuja2022properties} characterized the indeterminacies of the representation in latent dynamical models as the set of equivariances of the transition mechanism. In addition to temporal dependencies, one can also consider latent factors structured according to a spatial topology~\citep{halva2021disentangling}.

\paragraph{Dynamical causal representation learning:} The previous iteration of this work~\citep{lachapelle2022disentanglement} concurrently with \citet{CITRIS} introduced latent variables identifiability guarantees for dynamical latent models based on sparse interventions. \citet{lippe2022icitris} later proposed a generalization in which instantaneous causal connections are allowed. The key differences from the present work are (i) \citet{lippe2022icitris} considers interventions with \textit{known targets}, while the present work (as well as \citet{lachapelle2022disentanglement}) considers interventions with \textit{unknown targets}; (ii) \citet[Theorem~5]{lachapelle2022disentanglement} and Theorems~\ref{thm:nonparam_dis_z} \& \ref{thm:expfam_dis_z} do not necessarily need interventions to disentangle since they leverage sparsity of the temporal dependencies, contrary to \citet{lippe2022icitris}; (iii) \citet{lippe2022icitris} allows instantaneous causal connections, unlike the present work; and (iv) \citet{lippe2022icitris} demonstrates their approach on image data. \textcolor{black}{We hypothesize that the \textit{partially-perfect interventions} from \citet{lippe2022icitris} in which instantaneous causal effects are disabled might be enough to get identifiability in our framework. We leave this for future work.} The concurrent work of \citet{causeOccam2021} independently proposed an approach very similar to ours, which also learns a sparse latent causal graph over latent variables and actions using binary masks, but focuses on testing various algorithmic variants and empirically verifies that the approach works on interactive environments rather than formal identifiability guarantees. \citet{lopez2022learninglong,lei2022variational} found that such models adapt to sparse interventions more quickly than their entangled counterparts. \citet{keurti2023homomorphism} discusses disentanglement in temporal regimes through the lens of group theory but does not provide identifiability guarantees. Recently, \citet{BISCUIT} proposed a model similar to ours with disentanglement guarantees based on the constraint that the effect of the variable $\va^{t-1}$ (analogous to $R$ in their work) on each $\vz^t_i$ is mediated by a deterministic binary variable.

\paragraph{Constraining the decoder function $\vf$.} It is worth noting that one can also obtain disentanglement guarantees by constraining the decoder function $\vf$ in some way~\citep{TalebJutten1999,gresele2021independent,buchholz2022function,leemann2023posthoc,lachapelle2023additive,horan2021when}. In particular, this can be achieved by imposing some form of sparsity on $\vf$~\citep{moran2022identifiable,zheng2022SparsityAndBeyond,brady2023provably,xi2023indeterminacy}. In contrast, the present work assumes only that $\vf$ is a general diffeomorphism onto its image. Note that \citet{zheng2022SparsityAndBeyond} reused many proof strategies from the shorter version of this work \citep{lachapelle2022disentanglement}.

\paragraph{Disentanglement with explicit supervision.} Some works leverage more explicit supervision to disentangle. For example, \citet{ahuja2022towards} assumes that the labels are given by a linear transformation of mutually independent and non-Gaussian latent factors. Instead of relying on independence, \citet{lachapelle2023synergies, fumero2023leveraging} leverage the sparsity of the linear map to disentangle.

\paragraph{Other relevant works on sparsity.} The assumption that high-level variables are sparsely related to each other and/or to actions was discussed by \citet{consciousBengio,InducGoyal2021,ke2021systematic}. These ideas have also been leveraged by~\citet{goyal2021recurrent, goyal2021neural,madan2021fast} via attention mechanisms. Although these works are, in part, motivated by the same core assumption as ours, their focus is more on empirically verifying out-of-distribution generalization than it is on disentanglement~(Definition~\ref{def:disentanglement}) and formal identifiability results. The assumption that individual actions often affect only one factor of variation has been leveraged to disentangle~\citet{IndependentlyThomas}. Loosely speaking, the theory we developed in the present work can be seen as a formal justification for such approaches.

\section{Experiments} \label{sec:exp}

To illustrate our identifiability results and the benefit of mechanism sparsity regularization for disentanglement, we apply the sparsity regularized VAE method of Section~\ref{sec:estimation} on various synthetic datasets. Section~\ref{sec:exp_graph_crit} focuses on graphs satisfying the criterion of Assumption~\ref{def:graph_crit} which, as we saw, guarantees complete disentanglement. We also verify experimentally that the sufficient influence assumptions are indeed important for disentanglement and explore latent model with both homoscedastic and heteroscedastic variance. Section~\ref{sec:exp_no_graph_crit} explores graphs that do not satisfy the criterion. Details about our implementation are provided in Appendix~\ref{sec:ours_details} and the code used to run these experiments can be found here: 
\url{https://github.com/slachapelle/disentanglement_via_mechanism_sparsity}. 


\paragraph{Synthetic datasets.} The datasets we considered are separated into two groups: \textit{Action} \& \textit{Time} datasets. The former group has only auxiliary variables, which we interpret as actions, without temporal dependence, so we fix $\hat{\mG}^z = \bm{0}$. The latter group has only temporal dependence without actions, so we fix $\hat{\mG}^a = \bm{0}$. In each dataset, the ground-truth mixing function $\bff$ is a randomly initialized neural network. The dimensionalities of $\vz$ and $\vx$ are $d_z = 10$ and $d_x = 20$, respectively. In the action datasets, the dimensionality of $\va$ is $d_a = 10$, unless otherwise specified. The ground-truth transition model $p(\vz^t \mid \vz^{<t}, \va^{<t})$ is always a Gaussian with a mean outputted by some function $\vmu_\mG(\vz^{t-1}, a^{t-1})$ (the data is \textit{Markovian}). For all datasets considered, the covariance matrix is given by $\sigma_z^2 \mI$, i.e., the variance if \textit{homoscedastic}, except for the datasets $\textnormal{ActionNonDiag}_{k=2}$ and $\textnormal{TimeNonDiag}_{k=2}$, which have \textit{heteroscedastic} variance.
Appendix~\ref{sec:syn_data} provides more detailed descriptions of the datasets including the explicit form of $\vmu$ and $\mG$ in each case. Note that the learned transition model $\hat{p}(\vz^t \mid \vz^{t-1}, \va^{t-1})$ is also a homoscedastic Gaussian where the mean function $\hat\vmu$ is an MLP.

\paragraph{Baselines.} On the action datasets, we compare with TCVAE~\citep{tcvae}, iVAE~\citep{iVAEkhemakhem20a}. Only iVAE leverages the action. On the temporal datasets, we compare our approach with TCVAE, PCL~\citep{PCL17} and SlowVAE~\citep{slowVAE}. Only PCL and SlowVAE leverage the temporal dependencies. We also report the performance of a randomly initialized encoder (Random) and one trained via least-square regression directly on the ground-truth latent factors (Supervised). See Appendix~\ref{sec:baselines} for details on the baselines.

\paragraph{Unsupervised hyperparameter selection.} In practice, the hyperparameters cannot be selected so as to optimize MCC, since this metric requires access to the ground-truth latent factors. \citet{Duan2020UDR} introduced \textit{unsupervised disentanglement ranking}~(UDR) as a solution to unsupervised hyperparameter selection for disentanglement.  Figures~\ref{fig:clear_exp}~\&~\ref{fig:jmlr_exp} show the performance of all approaches using UDR to select the hyperparameter (when it has one). For our approach, we show a range of sparsity bounds $\beta$ and indicate the hyperparameter selected by UDR with a black star. Note that for our approach, we excluded from the UDR selection hyperparameters that yielded graphs with fewer edges than latent factors as a heuristic to prevent UDR from selecting overly sparse graphs. Figures~\ref{fig:clear_exp}~\&~\ref{fig:jmlr_exp} show that this \textit{unsupervised} procedure selects a reasonable regularization level (as indicated by the black star), although not always the optimal one. See Appendix~\ref{sec:udr} for more details.

\subsection{Graphs Allowing Complete Disentanglement (Satisfying Assumption~\ref{def:graph_crit})}
\label{sec:exp_graph_crit}

\begin{figure}[]
    \centering
    \includegraphics[width=\linewidth]{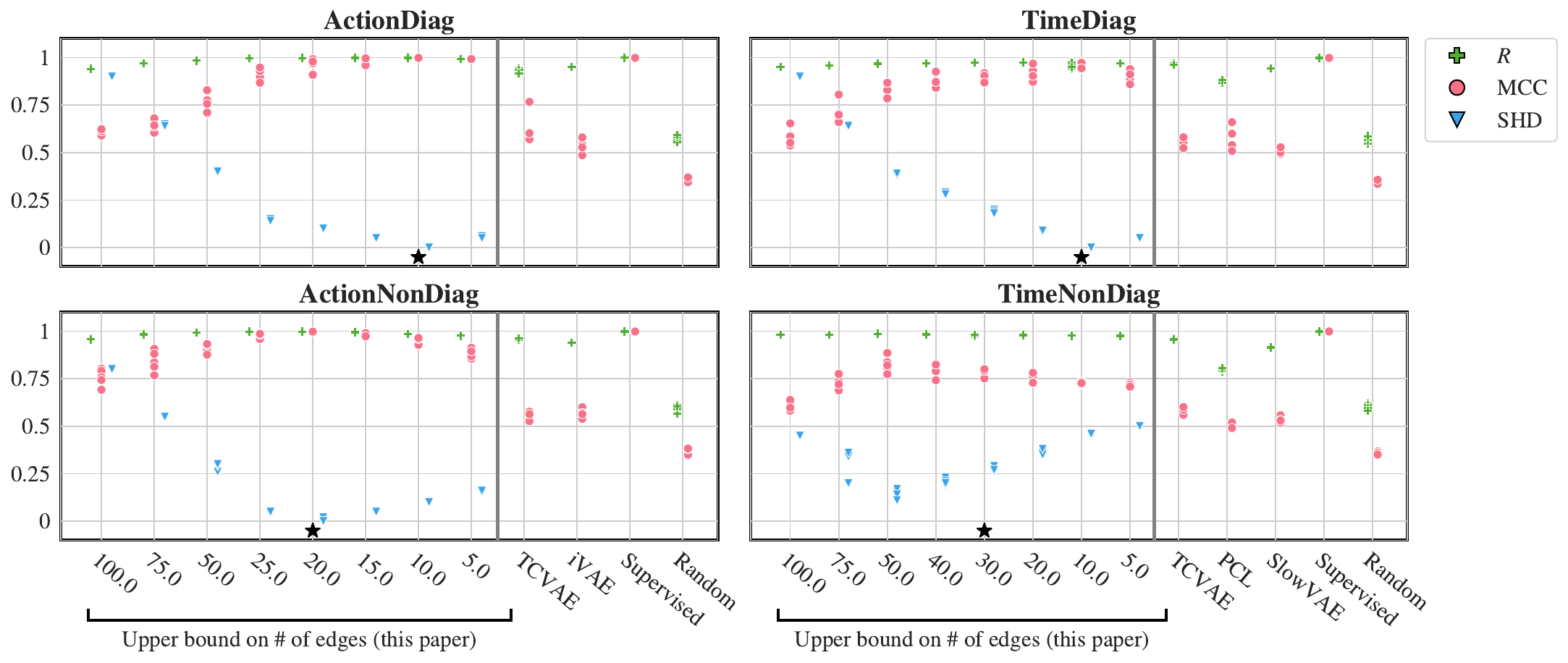}
    \caption{\textbf{Graphical criterion holds:} Datasets ActionDiag and TimeDiag have diagonal graphs while ActionNonDiag and TimeNonDiag have non-diagonal graphs. Sufficient influence is always satisfied. For our regularized VAE approach, we report performance for multiple sparsity levels $\beta$. In the left column, only $\hat{\mG}^a$ is learned while in the right column, only $\hat{\mG}^z$ is learned. For more details on the synthetic datasets, see Appendix~\ref{sec:syn_data}. The black star indicates which regularization parameter is selected by the filtered UDR procedure (see Appendix~\ref{sec:udr}). For $R$ and MCC, higher is better. For SHD, lower is better. Performance is reported on 5 random seeds.}
    \label{fig:clear_exp}
\end{figure}

\begin{figure}[h]
     \begin{subfigure}[b]{0.49\textwidth}
         \centering
         \includegraphics[width=\linewidth]{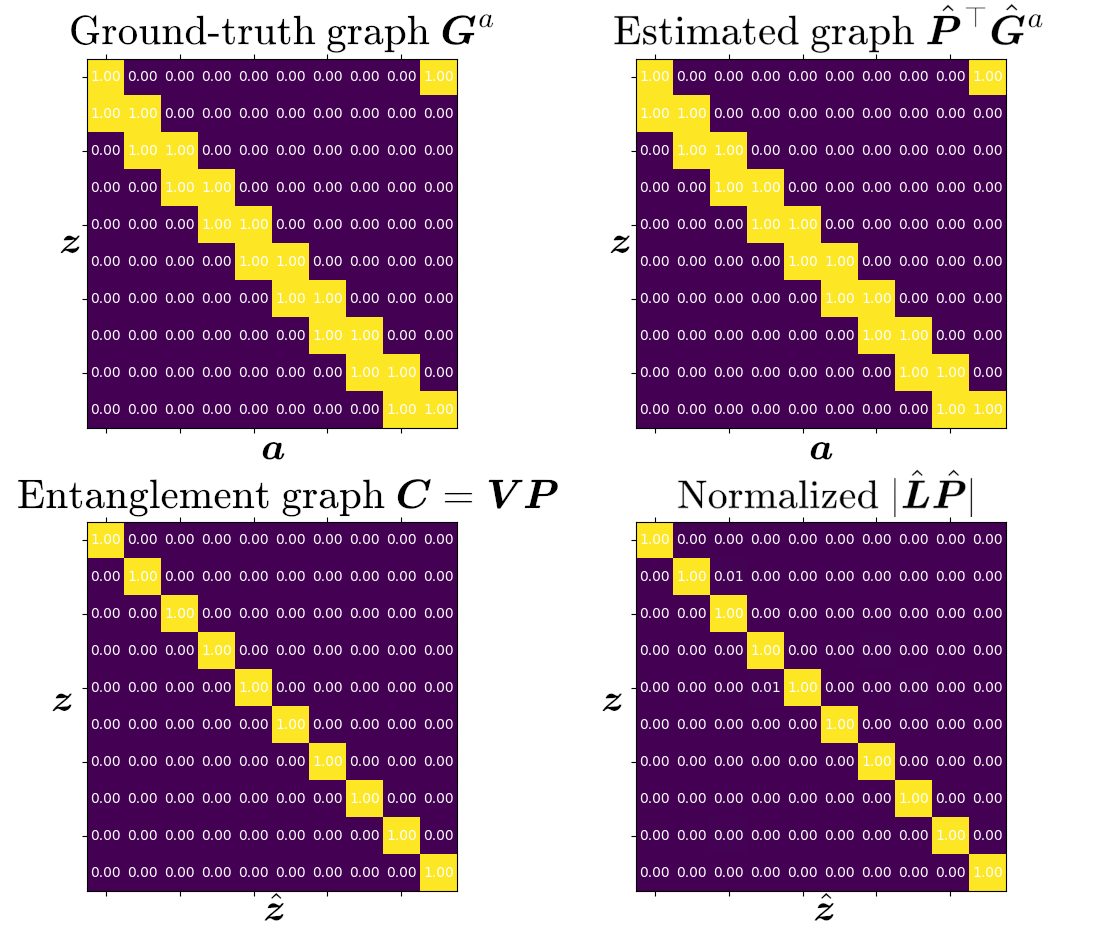}
    \caption{ActionNonDiag dataset, $\beta = 20$}
         \label{fig:graph_vis_ActionNonDiag}
     \end{subfigure}
     \hfill
     \begin{subfigure}[b]{0.49\textwidth}
         \centering
         \includegraphics[width=\linewidth]{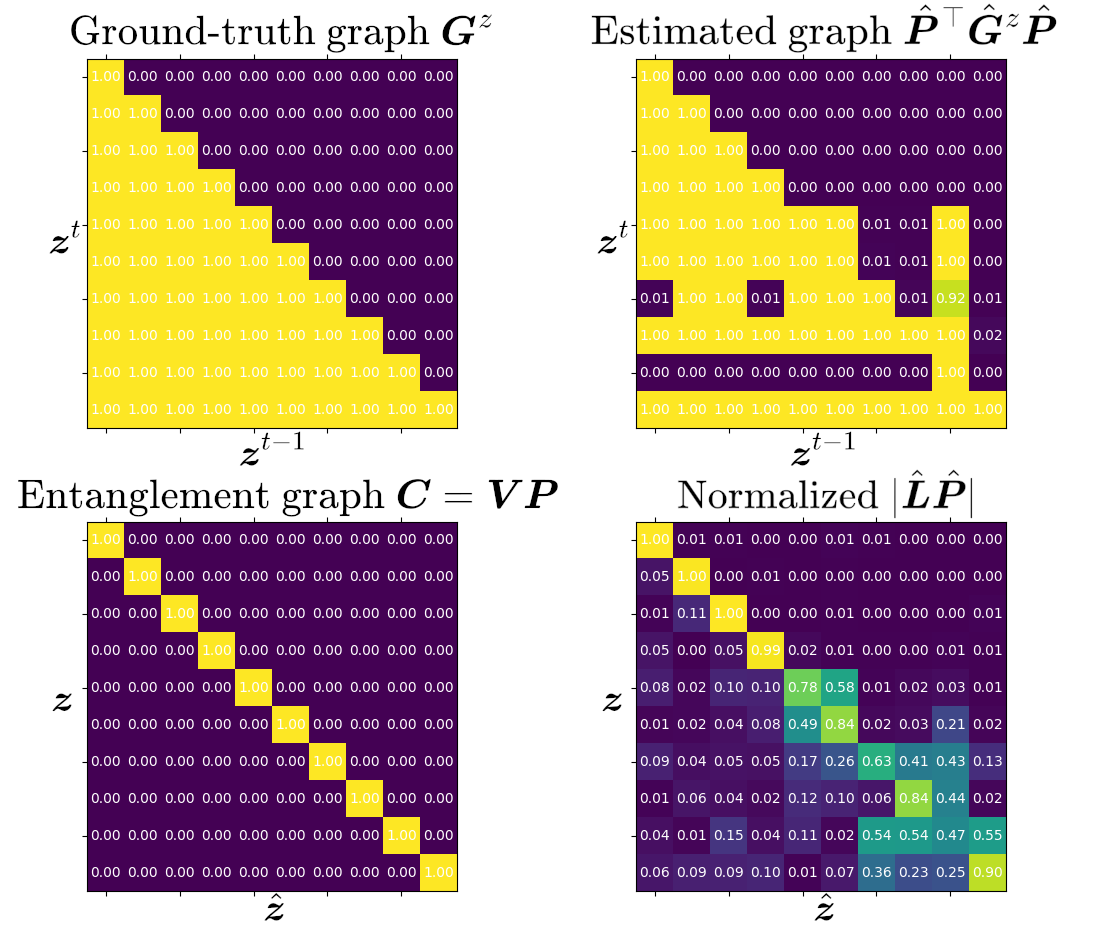}
    \caption{TimeNonDiag dataset, $\beta = 50$}
         \label{fig:graph_vis_TimeNonDiag}
     \end{subfigure}
     \hfill
     
        \caption{For each dataset, we visualize the median SHD run among the five randomly initialized runs of Figure~\ref{fig:clear_exp} with the sparsity level $\beta$ that is the closest to the ground-truth sparsity level $||\mG||_0$. For each dataset, we visualize (i) the ground-truth graph, (ii) the permuted estimated graph, (iii) the entanglement graph predicted by our theory, and (iv) the permuted matrix of regression coefficients in absolute value normalized by the maximum coefficient i.e. ${|\hat\mL\hat\mP|}/{\max_{i,j}|\hat\mL_{i,j}|}$. In Figure~\ref{fig:graph_vis_ActionNonDiag}, the estimated graph is exactly the ground-truth and $|\hat\mL\hat\mP|$ is perfectly diagonal, indicating complete disentanglement. In Figure~\ref{fig:graph_vis_TimeNonDiag}, the learned graph is close but not equal to the ground-truth. We can see that the off-diagonal nonzero values in $|\hat\mL\hat\mP|$ align with the poorly estimated parts of the graph.}
        \label{fig:graph_vis_graph_crit}
\end{figure}

\paragraph{Satisfying sufficient influence assumptions.} Figure~\ref{fig:clear_exp} reports the MCC and $R$ scores of all methods on four datasets that satisfy both the graphical criterion and the sufficient influence assumption: the datasets ActionDiag and TimeDiag have ``diagonal'' graphs, i.e., $\mG^a = \mI$ and $\mG^a = \mI$, while ActionNonDiag and TimeNonDiag present more involved graphs (depicted in Figure~\ref{fig:graph_vis_graph_crit}). ${\textbf{Observations:}}$ We see that the sparsity constraint improves MCC and SHD on all datasets. Although most baselines obtain good $R$ scores, which indicates that their representation encodes all the information about the factors of variations, they obtain poor MCC in comparison to our approach with a properly selected sparsity level, which indicates that they fail to disentangle. Moreover, the sparsity level selected by UDR (indicated by a black star) corresponds to the lowest SHD value for three out of four datasets and when it does not, it is still better than no sparsity at all. Figure~\ref{fig:graph_vis_graph_crit} shows examples of estimated graphs. More details can be found in the caption.  

\paragraph{Violating sufficient influence assumptions.} The left column of Table~\ref{tab:violation} reports the performance of all methods on the $\text{ActionNonDiag}_\text{NoSuffInf}$ and $\text{TimeNonDiag}_\text{NoSuffInf}$ datasets, which are essentially the same as ActionNonDiag and TimeNonDiag but do not satisfy the sufficient influence assumptions (see Appendix~\ref{sec:syn_data} for details). \noindent\textbf{Observations:} For the $\text{ActionNonDiag}_\text{NoSuffInf}$ dataset, we still see an improvement in MCC and SHD when regularizing for sparsity, but not as important as for ActionNonDiag, which got $\text{MCC}\approx 1$ and $\text{SHD}\approx 0$. Nevertheless, our approach outperforms the baselines. For the $\text{TimeNonDiag}_\text{NoSuffInf}$ dataset, there is simply no improvement in MCC from sparsity regularization. In that case, SlowVAE (with the hyperparameter selected to maximize MCC) and PCL have a higher MCC. These observations confirm the importance of the sufficient influence assumptions.

\paragraph{Heteroscedastic variance ($k=2$).} The right column of Table~\ref{tab:violation} reports the performance of all methods on the $\textnormal{ActionNonDiag}_{k=2}$ and $\textnormal{TimeNonDiag}_{k=2}$ datasets, which are essentially the same as ActionNonDiag and TimeNonDiag but presents heteroscedastic variance, i.e., $\text{var}(\vz^{t} \mid \vz^{t-1}, \va^{t-1})$ is not a constant function of $(\vz^{t-1}, \va^{t-1})$. This setting is interesting since it is not covered by the exponential family theory of \citet{lachapelle2022disentanglement} which assumed a one-dimensional sufficient statistic $\vs$ ($k=1$) whereas here we have $k=2$. Both datasets fall under the umbrella of our nonparametric theory. However, $\textnormal{TimeNonDiag}_{k=2}$ cannot satisfy the sufficient influence assumption because the data is Markovian and do not present an auxiliary variable (see Remark~\ref{rem:non_markovian}). ${\textbf{Observations:}}$ Both datasets benefit from sparsity and outperform baselines. On $\textnormal{ActionNonDiag}_{k=2}$ we obtain near perfect MCC and SHD while on $\textnormal{TimeNonDiag}_{k=2}$ we obtain a performance similar to TimeNonDiag. We hypothesize that the performance bottleneck in both TimeNonDiag and $\textnormal{TimeNonDiag}_{k=2}$ is the estimation of the graph, as in both cases SHD is always greater than $\approx 20\%$.

\begin{table}[]
    \centering
\scalebox{0.9}{
\begin{tabular}{llll|llll}
\toprule
Datasets & \multicolumn{3}{c}{$\text{ActionNonDiag}_\text{NoSuffInf}$} & \multicolumn{3}{c}{$\textnormal{ActionNonDiag}_{k=2}$} \\
\toprule
{Metrics} & SHD & MCC & $R$ & SHD & MCC & $R$ \\
\midrule
iVAE               &                                            -- &                 .61\scriptsize$\pm$.02 &    .97\scriptsize$\pm$.00&                     -- &                 .59\scriptsize$\pm$.03 &    .94\scriptsize$\pm$.00  \\
TCVAE  (UDR)       &                                            -- &                 .58\scriptsize$\pm$.03 &    .88\scriptsize$\pm$.01&                     -- &                 .55\scriptsize$\pm$.02 &    .96\scriptsize$\pm$.00  \\
TCVAE  (MCC)       &                                            -- &                 .61\scriptsize$\pm$.02 &    .96\scriptsize$\pm$.00&                     -- &                 .55\scriptsize$\pm$.02 &    .96\scriptsize$\pm$.00  \\
Ours (no sparsity) &                        .80\scriptsize$\pm$.00 &                 .62\scriptsize$\pm$.02 &    .93\scriptsize$\pm$.00& .80\scriptsize$\pm$.00 &                 .70\scriptsize$\pm$.03 &    .97\scriptsize$\pm$.00  \\
Ours (sparsity)    &                        .13\scriptsize$\pm$.03 &                 .86\scriptsize$\pm$.04 &    1.0\scriptsize$\pm$.00& .03\scriptsize$\pm$.01 &                 .98\scriptsize$\pm$.02 &    1.0\scriptsize$\pm$.00  \\
\midrule
Random             &                                            -- &                 .37\scriptsize$\pm$.02 &    .63\scriptsize$\pm$.02&                    --  &                 .37\scriptsize$\pm$.02 &    .60\scriptsize$\pm$.02  \\
Supervised         &                                            -- &                 1.0\scriptsize$\pm$.00 &    1.0\scriptsize$\pm$.00&                     -- &                 1.0\scriptsize$\pm$.00 &    1.0\scriptsize$\pm$.00  \\
\bottomrule
\toprule
Datasets & \multicolumn{3}{c}{$\text{TimeNonDiag}_\text{NoSuffInf}$} & \multicolumn{3}{c}{$\textnormal{TimeNonDiag}_{k=2}$} \\
\toprule
{Metrics} & SHD & MCC & $R$ & SHD & MCC & $R$ \\
\midrule
PCL                 &                                              -- &                 .66\scriptsize$\pm$.04 &    .96\scriptsize$\pm$.00 &                      -- &                 .58\scriptsize$\pm$.04 &    .83\scriptsize$\pm$.01\\
SlowVAE (UDR)       &                                              -- &                 .59\scriptsize$\pm$.02 &    .98\scriptsize$\pm$.00 &                      -- &                 .57\scriptsize$\pm$.01 &    .93\scriptsize$\pm$.00\\
SlowVAE (MCC)       &                                              -- &                 .71\scriptsize$\pm$.02 &    .98\scriptsize$\pm$.00 &                      -- &                 .58\scriptsize$\pm$.02 &    .95\scriptsize$\pm$.00\\
TCVAE (UDR)        &                                              -- &                 .58\scriptsize$\pm$.03 &    .98\scriptsize$\pm$.00 &                      -- &                 .57\scriptsize$\pm$.01 &    .96\scriptsize$\pm$.00\\
TCVAE (MCC)        &                                              -- &                 .58\scriptsize$\pm$.03 &    .98\scriptsize$\pm$.00 &                      -- &                 .57\scriptsize$\pm$.01 &    .96\scriptsize$\pm$.00\\
Ours (no sparsity)  &                          .45\scriptsize$\pm$.00 &                 .62\scriptsize$\pm$.04 &    .98\scriptsize$\pm$.00 &  .45\scriptsize$\pm$.00 &                 .62\scriptsize$\pm$.01 &    .98\scriptsize$\pm$.00\\
Ours (sparsity)     &                          .32\scriptsize$\pm$.05 &                 .63\scriptsize$\pm$.03 &    .99\scriptsize$\pm$.00 &  .20\scriptsize$\pm$.07 &                 .74\scriptsize$\pm$.04 &    .98\scriptsize$\pm$.00\\
\midrule
Random              &                                              -- &                 .40\scriptsize$\pm$.04 &    .67\scriptsize$\pm$.02 &                      -- &                 .36\scriptsize$\pm$.01 &    .59\scriptsize$\pm$.02\\
Supervised          &                                             --  &                 1.0\scriptsize$\pm$.00 &    1.0\scriptsize$\pm$.00 &                     --  &                 1.0\scriptsize$\pm$.00 &    1.0\scriptsize$\pm$.00\\
\bottomrule
\end{tabular}
}
    \caption{Datasets $\text{ActionNonDiag}_\text{NoSuffInf}$ and $\text{TimeNonDiag}_\text{NoSuffInf}$ do not satisfy their respective sufficient influence assumptions (Assumptions~\ref{ass:nonparam_suff_var_a_cont}~\&~\ref{ass:temporal_suff_var_main}). Datasets $\textnormal{ActionNonDiag}_{k=2}$ and $\textnormal{TimeNonDiag}_{k=2}$ are such that $\text{var}(\vz^t \mid \vz^{t-1}, \va^{t-1})$ depends on $\vz^{t-1}$ or $\va^{t-1}$ (which means the sufficient statistic has dimension $k=2$, contrarily to all other datasets). For our method, we show performance both with and without the sparsity constraint. In the former case, the constraint is set to the number of edges in the ground-truth graph. For baselines that have hyperparameters, we report their performance with the hyperparameter configurations that maximize UDR and MCC.}
    \label{tab:violation}
\end{table}

\subsection{Graphs Allowing only Partial Disentanglement (Violating Assumption~\ref{def:graph_crit})}
\label{sec:exp_no_graph_crit}

In this section, we explore datasets with graphs that do not satisfy the criterion of Assumption~\ref{def:graph_crit}. This means that our theory can only guarantee a form of partial disentanglement. For this reason, we will report the $R_\text{con}$ metric introduced in Section~\ref{sec:evaluation} which measures whether two representations are $\va$-consistent (Definition~\ref{def:a_consistent_models}) or $\vz$-consistent (Definition~\ref{def:z_consistent_models}).

\paragraph{Satisfying sufficient influence assumptions.} Figure~\ref{fig:jmlr_exp} reports the MCC, $R_\text{con}$ and $R$ scores of all methods on four datasets that satisfy the sufficient influence assumption but not the graphical criterion: the datasets ActionBlockDiag and TimeBlockDiag have ``block diagonal'' graphs, while ActionBlockNonDiag and TimeBlockNonDiag have more intricate graphs (depicted in Figure~\ref{fig:graph_vis_no_graph_crit}). See Appendix~\ref{sec:syn_data} for details on the datasets. \textbf{Observations:} In all four datasets, some sparsity level yields near perfect $R_\text{con}$, indicating the learned models are approximately $\va$-consistent or $\vz$-consistent to the ground-truth. Moreover, SHD is correlated with $R_\text{con}$.  Without surprise, MCC never comes close to one, since complete disentanglement is not guaranteed by our theory. Similarly to Figure~\ref{fig:clear_exp}, the baselines have decent $R$ values but very low MCC and $R_\text{con}$, indicating that they cannot achieve partial disentanglement. Figure~\ref{fig:graph_vis_no_graph_crit} shows examples of estimated graphs. When it comes to hyperparameter selection, UDR selects the hyperparameter with the lowest SHD on three out of four datasets, which indicates that UDR does reasonably well.

\begin{figure}
    \centering
    \includegraphics[width=\linewidth]{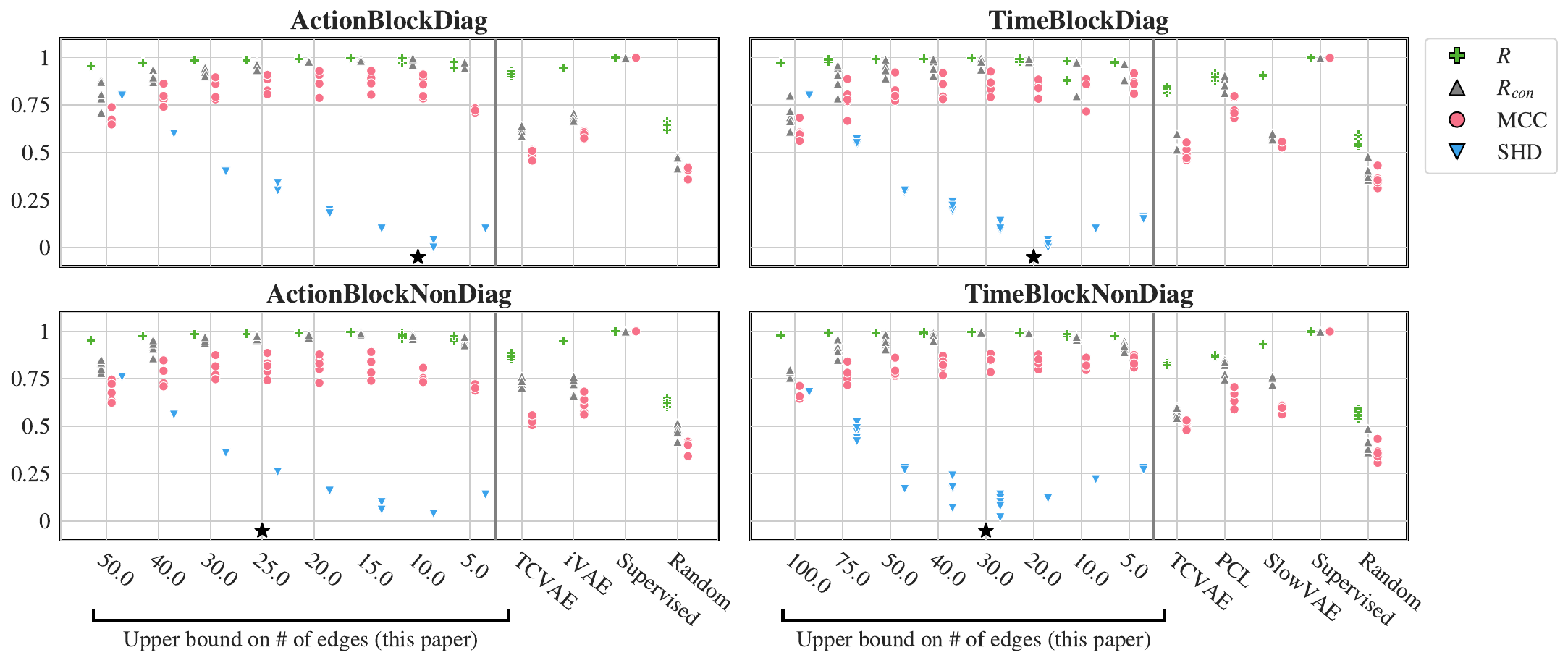}
    \caption{\textbf{Graphical criterion does not hold:} Datasets ActionBlockDiag and TimeBlockDiag have block-diagonal graphs while ActionBlockNonDiag and TimeBlockNonDiag have non-diagonal graphs. Sufficient influence is always satisfied. In the left column, only $\hat{\mG}^a$ is learned and we vary $\beta_a$, and in the right column, only $\hat{\mG}^z$ is learned and we vary $\beta_z$. For more details on the synthetic datasets, see Appendix~\ref{sec:syn_data}. The black star indicates which regularization parameter is selected by the filtered UDR procedure (see Appendix~\ref{sec:udr}). For $R$ and MCC, higher is better. For SHD, lower is better. Performance is reported on 5 random seeds.}
    \label{fig:jmlr_exp}
\end{figure}

\begin{figure}
     \begin{subfigure}[b]{0.49\textwidth}
         \centering
         \includegraphics[width=\linewidth]{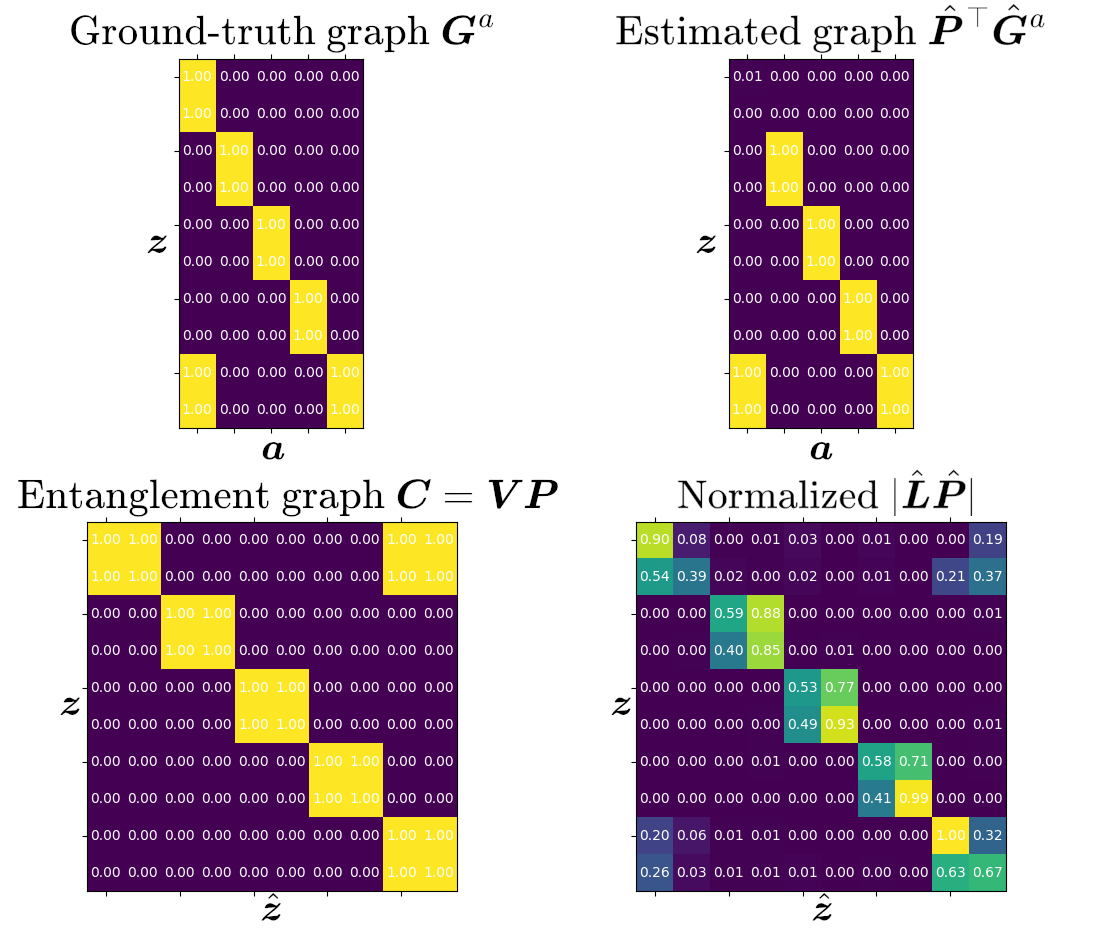}
    \caption{ActionBlockNonDiag dataset, $\beta = 10$}
         \label{fig:graph_vis_ActionBlockNonDiag}
     \end{subfigure}
     \hfill
     \begin{subfigure}[b]{0.49\textwidth}
         \centering
         \includegraphics[width=\linewidth]{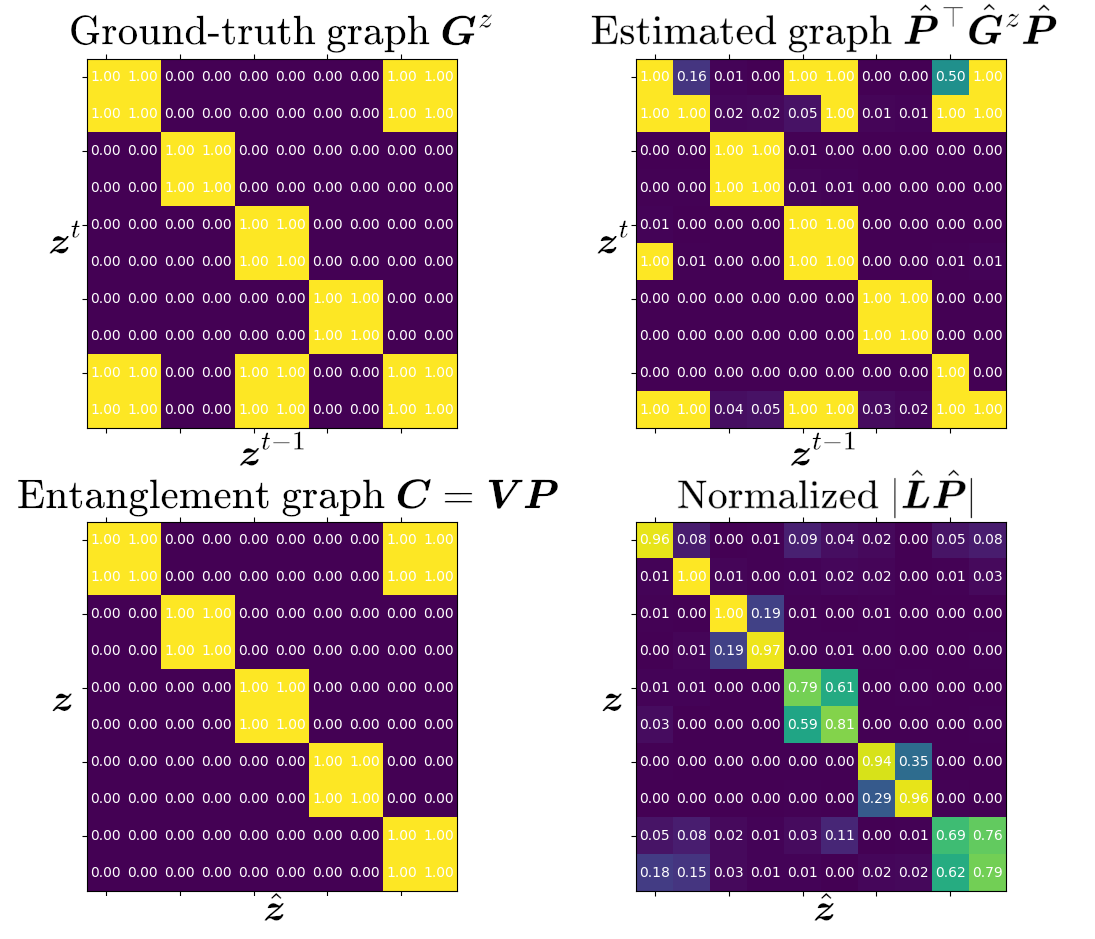}
    \caption{TimeBlockNonDiag dataset, $\beta = 30$}
         \label{fig:graph_vis_TimeBlockNonDiag}
     \end{subfigure}
     \hfill
     
        \caption{For each dataset, we visualize the median SHD run among the five randomly initialized runs of Figure~\ref{fig:jmlr_exp} with the sparsity level $\beta$ that is the closest to the ground-truth sparsity level $||\mG||_0$. For each dataset, we visualize (i) the ground-truth graph, (ii) the permuted estimated graph, (iii) the entanglement graph predicted by our theory, and (iv) the permuted matrix of regression coefficients in absolute value normalized by the maximum coefficient i.e. ${|\hat\mL\hat\mP|}/{\max_{i,j}|\hat\mL_{i,j}|}$. For both datasets, the learned graph is very close to the ground-truth. Furthermore, the match between the zero entries of $\hat\mL\hat\mP$ and those of the theoretical entanglement graph $\mC$ is very good, although not perfect. Notice how certain blocks of latent factors remain entangled, as predicted by the theory.}
        \label{fig:graph_vis_no_graph_crit}
\end{figure}

\begin{table}[]
    \centering
\scalebox{0.9}{
\begin{tabular}{c|lll|llll|c}
\toprule
\multicolumn{9}{c}{ActionRandomGraphs} \\
\toprule
 & \multicolumn{3}{c}{\textbf{Without sparsity}} & \multicolumn{4}{|c|}{\textbf{With sparsity}} & \\
 $p(\text{edge})$ & MCC & $R_\text{con}$ & $R$ & MCC & $R_\text{con}$ & $R$ & SHD & $\sE||\mV||_0$\\
\midrule
$10\%$&                 .58\scriptsize$\pm$.13 &               .61\scriptsize$\pm$.11 &    .68\scriptsize$\pm$.13  &                 .69\scriptsize$\pm$.14 &               .70\scriptsize$\pm$.12  &   .70\scriptsize$\pm$.12 & .00\scriptsize$\pm$.00 & 45.0 \\
$20\%$&                 .67\scriptsize$\pm$.06 &               .69\scriptsize$\pm$.05 &    .83\scriptsize$\pm$.08  &                 .85\scriptsize$\pm$.08 &               .86\scriptsize$\pm$.08  &   .86\scriptsize$\pm$.09 & .01\scriptsize$\pm$.01 & 25.8\\
$40\%$&                 .67\scriptsize$\pm$.03 &               .70\scriptsize$\pm$.03 &    .93\scriptsize$\pm$.04  &                 .94\scriptsize$\pm$.05 &               .95\scriptsize$\pm$.05  &   .98\scriptsize$\pm$.04 & .06\scriptsize$\pm$.05 & 15.8\\
$60\%$&                 .69\scriptsize$\pm$.06 &               .73\scriptsize$\pm$.05 &    .96\scriptsize$\pm$.00  &                 .88\scriptsize$\pm$.07 &               .91\scriptsize$\pm$.05  &   .99\scriptsize$\pm$.01 & .14\scriptsize$\pm$.08 & 15.8\\
$90\%$&                 .63\scriptsize$\pm$.04 &               .81\scriptsize$\pm$.08 &    .97\scriptsize$\pm$.00  &                 .60\scriptsize$\pm$.01 &               .78\scriptsize$\pm$.07  &   .97\scriptsize$\pm$.00 & .21\scriptsize$\pm$.07 & 45.0\\
\bottomrule
\toprule
\multicolumn{9}{c}{TimeRandomGraphs} \\
\toprule
 & \multicolumn{3}{c}{\textbf{Without sparsity}} & \multicolumn{4}{|c|}{\textbf{With sparsity}} \\
 $p(\text{edge})$ &  MCC & $R_\text{con}$ & $R$ & MCC & $R_\text{con}$ & $R$ & SHD & $\sE||\mV||_0$ \\
\midrule
$10\%$&                 .66\scriptsize$\pm$.03 &               .66\scriptsize$\pm$.03 &    .98\scriptsize$\pm$.00 &                  1.0\scriptsize$\pm$.00 &               1.0\scriptsize$\pm$.00 & 1.0\scriptsize$\pm$.00 &  .01\scriptsize$\pm$.02 & 10.2 \\
$20\%$&                 .63\scriptsize$\pm$.05 &               .63\scriptsize$\pm$.05 &    .98\scriptsize$\pm$.00 &                  .99\scriptsize$\pm$.01 &               .99\scriptsize$\pm$.01 & .99\scriptsize$\pm$.00 &  .08\scriptsize$\pm$.09 & 10.2\\
$40\%$&                 .61\scriptsize$\pm$.02 &               .61\scriptsize$\pm$.02 &    .98\scriptsize$\pm$.00 &                  .82\scriptsize$\pm$.16 &               .82\scriptsize$\pm$.16 & .98\scriptsize$\pm$.01 &  .27\scriptsize$\pm$.13 & 10.2 \\
$60\%$&                 .58\scriptsize$\pm$.02 &               .58\scriptsize$\pm$.02 &    .98\scriptsize$\pm$.00 &                  .71\scriptsize$\pm$.12 &               .71\scriptsize$\pm$.12 & .98\scriptsize$\pm$.00 &  .33\scriptsize$\pm$.06 & 10.4 \\
$90\%$&                 .58\scriptsize$\pm$.03 &               .63\scriptsize$\pm$.08 &    .98\scriptsize$\pm$.00 &                  .58\scriptsize$\pm$.02 &               .63\scriptsize$\pm$.08 & .98\scriptsize$\pm$.00 &  .20\scriptsize$\pm$.07 & 26.1\\
\bottomrule
\end{tabular}
}
    \caption{Experiments with randomly generated graphs. The probability of sampling an edge is $p(\text{edge})$. We report an estimation of $\sE||\mV||_0$ which is the average number of edges in the entanglement graph $\mV$ entailed by the random ground-truth graph $\mG$.}
    \label{tab:random_graphs}
\end{table}

\paragraph{Random graphs of varying sparsity levels.} In Table~\ref{tab:random_graphs}, we consider the same $\vmu$ functions as in datasets ActionNonDiag and TimeNonDiag, but explore more diverse randomly generated ground-truth graphs with various degrees of sparsity. Edges are sampled i.i.d. with some probability $p(\text{edge})$. However, note that for the TimeRandomGraphs dataset, the self-loops are present with probability one. We report the performance of our approach both with and without sparsity regularization. When using sparsity, we set $\beta$ equal to the ground-truth number of edges, $||\mG||_0$. ${\textbf{Observations:}}$ First, all datasets obtain an improvement in MCC and $R_\text{con}$ from sparsity regularization, except for very dense graphs  with $p(\text{edge}) = 90\%$, in which case regularization does nothing or slightly degrades performance. Secondly, we can see that the SHD tends to be higher for larger graphs, suggesting that these are harder to learn. Third, in the ActionRandomGraphs datasets, we can see a negative correlation between MCC and $\sE||\mV||_0$, which is expected since $||\mV||_0$ close to 10 means complete disentanglement is possible (assuming the graph is learned properly). This pattern also appears to some extent in the TimeRandomGraphs datasets. Notice how, among the TimeRandomGraphs datasets, all datasets sparser than $p(\text{edge}) = 90\%$ always have $\sE||\mV||_0 \approx 10$, indicating complete disentanglement should be possible.\footnote{We suspect this occurs because the self-loops which are present with probability one, unlike the Action dataset.} This is confirmed by very high MCC, at least for sparser graphs which are learned properly. Finally, we note that the $R$ score is low for the very sparse action datasets. We suspect this is because very sparse graphs are less likely to satisfy the assumption of sufficient variability (Theorem~\ref{thm:linear}), which guarantees quasi-linear equivalence (here it is actually \textit{linear} equivalence, because of Gaussianity). Indeed, for very sparse graphs, some latent factors might end up without parents. This is not the case in the time datasets because of the self-loops which are always present.

\section{Conclusion}
This work proposed a novel principle for disentanglement based on \textit{mechanism sparsity regularization}. The idea is based on the assumption that the mechanisms that govern the dynamics of high-level concepts are often sparse: actions usually affect only a few entities, and objects usually interact sparsely with each other. We provided novel nonparametric identifiability guarantees for this setting which give sufficient conditions for disentanglement, whether complete or partial. Given the dependency structure between latent factors and auxiliary variables, our theory predicts the entanglement graph that describes which of the estimated latent factors are expected to remain entangled. This constitutes a significant extension of the shorter version of this work \citep{lachapelle2022disentanglement}. We further provide various examples to illustrate the consequences of our guarantees as well as the assumptions they rely on. For instance, we show that multi-node interventions with unknown targets fall under the umbrella of our framework. Finally, we demonstrate the theory experimentally by training a sparsity-constrained variational autoencoder on synthetic data, which allows us to explore various settings. Our work establishes a solid theoretical foundation for further empirical investigations in more realistic scenarios, such as single-cell data with gene perturbations~\citep{lopez2022learninglong} and video~\citep{lei2022variational}. Future works include relaxing assumptions such as conditional independence or considering more permissive settings such as ``contextual sparsity'', i.e., the assumption that objects only interact with each other in particular situations. We believe that the latter could be formalized and leveraged for disentanglement using the tools developed in this work.


\clearpage 
\acks{The authors would like to thank Aristide Baratin for important feedback on the manuscript. This research was partially supported by the Canada CIFAR AI Chair Program, by an IVADO excellence PhD scholarship, by a Google Focused Research award, the German Federal Ministry of Education and Research (BMBF): Tübingen AI Center, FKZ: 01IS18039A, and by Mitacs through the Mitacs Accelerate program. The experiments were in part enabled by computational resources provided by Calcul Quebec and the Digital Research Alliance of Canada. The authors would like to thank Yoshua Bengio for inspiring mechanism sparsity regularization through various talks and discussions. The authors would also like to thank the International Max Planck Research School for Intelligent Systems (IMPRS-IS) for supporting Yash Sharma. Simon Lacoste-Julien is a CIFAR Associate Fellow in the Learning in Machines \& Brains program.\newpage}

\appendix

\begin{table}[H]\caption{{Table of Notation.}} 
    {
    \begin{center}
            \begin{tabular}{r c p{10cm} }
                \toprule
                \multicolumn{3}{c}{}\\
                \multicolumn{3}{c}{\underline{Calligraphic \& indexing conventions}}\\
                $[n]$ & $:=$ & $\{1, 2, \dots, n\}$ \\
                $x$ & & Scalar (random or not, depending on context)\\
                $\vx$& & Vector (random or not, depending on context) \\
                $\mX$ & & Matrix \\
                $\gX$ & & Set/Support \\
                $f$   & & Scalar-valued function \\
                $\vf$   & & Vector-valued function \\
                $Df$, $D\vf$ & & Jacobian of $f$ and $\vf$ \\
                $D^2f$ & & Hessian of $f$ \\
                $B \subseteq [n]$ & & Subset of indices \\
                $\vx_B$ & & Vector formed with the $i$th coordinates of $\vx$, for all $i \in B$\\
                $\mX_{B, B'}$ & & Matrix formed with the entries $(i, j) \in B \times B'$ of $\mX$. \\
                \multicolumn{3}{c}{}\\
                \multicolumn{3}{c}{\underline{Recurrent notation}}\\
                $\vx^t \in \sR^{d_x}$ &  & Observation at time $t$\\
                $\vx^{\leq t} \in \sR^{d_x \times t}$ &  & Matrix of observations at times $1, \dots, t$\\
                $\vz^t \in \sR^{d_z}$ &  & Vector of latent factors of variations at time $t$ \\
                $\vz^{\leq t} \in \sR^{d_z \times t}$ &  & Matrix of latent vectors at times $1, \dots, t$\\
                $\va^t \in \sR^{d_a}$ &  & Vector of auxiliary variables at time $t$ \\
                $\va^{< t} \in \sR^{d_a \times t}$ &  & Matrix of auxiliary vectors at times $0, 1 \dots, t - 1$\\
                $\gA \subseteq \sR^{d_a}$ &  & Support of $\va^t$\\
                $\vf: \sR^{d_z} \rightarrow \sR^{d_x}$ &  & Ground-truth decoder function \\
                $\hat\vf: \sR^{d_z} \rightarrow \sR^{d_x}$ &  & Learned decoder function \\
                $p(\vz^t \mid \vz^{<t}, \va^{<t})$ &  & Ground-truth latent transition model \\
                $\hat p(\vz^t \mid \vz^{<t}, \va^{<t})$ &  & Learned latent transition model \\
                $\mG^a \in \{0,1\}^{d_z \times d_a}$ & & Ground-truth adjacency matrix of graph connecting $\va^{<t}$ to $\vz^t$\\
                $\mG^z \in \{0,1\}^{d_z \times d_z}$ & & Ground-truth adjacency matrix of graph connecting $\vz^{<t}$ to $\vz^t$\\
                $\hat\mG^a, \hat\mG^z$ & & Learned adjacency matrices \\
                ${\bf Pa}_i^a \subseteq [d_a]$ & & Parents of $\vz^{t}_i$ in $\mG^a$ \\
                ${\bf Ch}_\ell^a \subseteq [d_z]$ & & Children of $\va^t_\ell$ in $\mG^z$ \\
                ${\bf Pa}_i^z \subseteq [d_z]$ & & Parents of $\vz^t_i$ in $\mG^z$ \\
                ${\bf Ch}_i^z \subseteq [d_z]$ & & Children of $\vz^{t-1}_i$ in $\mG^z$ \\
                $D^t_z\log p \in \sR^{1 \times d_z}$ & & Jacobian vector of $\log p(\vz^t \mid \vz^{<t}, \va^{<t})$ w.r.t. $\vz^t$\\
                $H^{t, \tau}_{z,a}\log p \in \sR^{d_z \times d_a}$ & & Hessian matrix of $\log p(\vz^t \mid \vz^{<t}, \va^{<t})$ w.r.t. $\vz^t$ and $\va^\tau$\\
                $H^{t, \tau}_{z,z}\log p \in \sR^{d_z \times d_z}$ & & Hessian matrix of $\log p(\vz^t \mid \vz^{<t}, \va^{<t})$ w.r.t. $\vz^t$ and $\vz^\tau$\\
                $\sigma: [d_z] \rightarrow [d_z]$ & & A permutation \\
                \multicolumn{3}{c}{}\\
                \multicolumn{3}{c}{\underline{Topology}}\\
                $\overline{\gX}$ & & Closure of the set $\gX \subset \sR^n$ \\
                $\gX^\circ$ & & Interior of the set $\gX \subset \sR^n$ \\
                \bottomrule
            \end{tabular}
        \end{center}
        \label{tab:TableOfNotation}
    }
\end{table}

\newpage

\section{Identifiability Theory - Nonparametric Case}

\subsection{Useful Lemmas}\label{app:lin_indep_func}

\begin{definition}[Regular closed set] A set $A \subseteq \sR^n$ is regular closed when it is equal to the closure of its interior, i.e. $\overline{A^\circ} = A$. 
\end{definition}

\begin{lemma}\label{lem:equal_derivatives}
    Let $A \subset \sR^n$ and $\vf:A \rightarrow \sR^m$ be a $C^k$ function. Then, its $k$ first derivatives is uniquely defined on $\overline{A^\circ}$ in the sense that they do not depend on the specific choice of $C^k$ extension.
\end{lemma}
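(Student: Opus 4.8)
The plan is to fix two arbitrary $C^k$ extensions of $\vf$ and show that their derivatives of every order $\leq k$ agree on $\overline{A^\circ}$. Concretely, let $\tilde\vf_1 : U_1 \to \sR^m$ and $\tilde\vf_2 : U_2 \to \sR^m$ be $C^k$ maps on open sets $U_1, U_2 \supseteq A$ satisfying $\tilde\vf_1 = \vf = \tilde\vf_2$ on $A$. The goal is to prove that for every multi-index $\alpha$ with $|\alpha| \leq k$ and every $\vx \in \overline{A^\circ}$ at which both extensions are defined (i.e. $\vx \in U_1 \cap U_2$), one has $\partial^\alpha \tilde\vf_1(\vx) = \partial^\alpha \tilde\vf_2(\vx)$. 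Note that when $A$ is regular closed --- the case relevant to Assumption~\ref{ass:smooth_trans} --- any such extension is defined on an open neighbourhood of $\overline{A^\circ} = A$, so the qualifier ``at which both extensions are defined'' is vacuous and the derivatives are genuinely defined on all of $\overline{A^\circ}$.

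First I would settle the claim on the interior $A^\circ$. Since $A^\circ \subseteq A$, the two extensions coincide on the open set $A^\circ$. On an open set, equal functions have equal derivatives: I would argue by induction on $|\alpha|$, computing each partial derivative as a limit of difference quotients involving only points of $A^\circ$ (which is legitimate because $A^\circ$ is open, so a whole ball around each point stays inside it). This yields $\partial^\alpha \tilde\vf_1 = \partial^\alpha \tilde\vf_2$ on $A^\circ$ for all $|\alpha| \leq k$.

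Second I would push this equality to the closure by continuity. Because both extensions are $C^k$, each derivative $\partial^\alpha \tilde\vf_i$ with $|\alpha| \leq k$ is continuous on $U_i$, hence continuous at every $\vx \in \overline{A^\circ} \cap U_i$. Given such an $\vx \in U_1 \cap U_2$, choose a sequence $\vx_n \in A^\circ$ with $\vx_n \to \vx$ (possible since $\vx \in \overline{A^\circ}$). Passing to the limit in the identity $\partial^\alpha \tilde\vf_1(\vx_n) = \partial^\alpha \tilde\vf_2(\vx_n)$ and invoking continuity of both sides gives $\partial^\alpha \tilde\vf_1(\vx) = \partial^\alpha \tilde\vf_2(\vx)$, which is exactly the asserted independence from the chosen extension.

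The argument is elementary, and the only point demanding care is the boundary step: it is essential that we compare derivatives only at points where both extensions are actually defined, and that continuity up to order $k$ --- guaranteed precisely by the $C^k$ hypothesis and \emph{not} by mere $k$-fold differentiability --- is what licenses the limiting passage from $A^\circ$ to $\overline{A^\circ}$. For regular closed $A$ this caveat disappears and one obtains genuine well-definedness on all of $\overline{A^\circ}$, which is what the applications require.
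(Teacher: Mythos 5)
Your proof is correct and follows essentially the same route as the paper's: both establish equality of the extensions' derivatives on the open set $A^\circ$ and then propagate it to $\overline{A^\circ}$ by continuity of the $C^k$ derivatives along sequences drawn from the interior. The only cosmetic differences are that you treat all orders $|\alpha|\le k$ at once by induction on multi-indices, where the paper instead recurses on $D\vf$ (using $\overline{\overline{A^\circ}^\circ}=\overline{A^\circ}$), and that you are slightly more careful than the paper in restricting attention to points where both extensions are defined --- a caveat that, as you note, vanishes in the regular-closed case actually used in Assumption~\ref{ass:smooth_trans}.
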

\begin{proof}
    Let $\vg:U \rightarrow \sR^n$ and $\vh:V \rightarrow \sR^n$ be two $C^k$ extensions of $\vf$ to $U \subset \sR^n$ and $V \subset\sR^n$ both open in $\sR^n$. By definition,
    \begin{align*}
        \vg(\vx) = \vf(\vx) = \vh(\vx),\ \forall \vx\in A\,. 
    \end{align*}
    The usual derivative is uniquely defined on the interior of the domain, so that
    \begin{align*}
        D\vg(\vx) = D\vf(\vx) = D\vh(\vx),\ \forall \vx\in A^\circ\,.
    \end{align*}
    Consider a point $\vx_0 \in \overline{A^\circ}$. By definition of closure, there exists a sequence $\{\vx_k\}_{k=1}^\infty \subset A^\circ$ s.t. $\lim_{k \rightarrow \infty}\vx_k = \vx_0$. We thus have that
    \begin{align*}
        \lim_{k\to\infty}D\vg(\vx_k) &= \lim_{k\to\infty}D\vh(\vx_k) \\
        D\vg(\vx_0) &= D\vh(\vx_0) \,,
    \end{align*}
    where we used the fact that the derivatives of $\vg$ and $\vh$ are continuous to go to the second line. Thus, all the $C^k$ extensions of $\vf$ must have equal derivatives on $\overline{A^\circ}$. This means we can unambiguously define the derivative of $\vf$ everywhere on $\overline{A^\circ}$ to be equal to the derivative of one of its $C^k$ extensions.

    Since $\vf$ is $C^k$, its derivative $D\vf$ is $C^{k-1}$, we can thus apply the same argument to get that the second derivative of $\vf$ is uniquely defined on $\overline{\overline{A^\circ}^\circ}$. It can be shown that $\overline{\overline{A^\circ}^\circ} = \overline{A^\circ}$. One can thus apply the same argument recursively to show that the first $k$ derivatives of $\vf$ are uniquely defined on $\overline{A^\circ}$.
\end{proof}

\begin{lemma}\label{lem:lin_indep_charac}
    Let $X$ be some set. A family of functions $(f_i: X \rightarrow \sR)_{i=1}^n$ is linearly independent if and only if there exists $x_1, ..., x_n \in X$ such that the family of vectors $((f_1(x_i), ..., f_n(x_i)))_{i=1}^n$ is linearly independent.
\end{lemma}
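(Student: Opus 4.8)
The plan is to reformulate both conditions in terms of a single $n \times n$ evaluation matrix. Given candidate points $x_1, \dots, x_n \in X$, define $M \in \sR^{n \times n}$ by $M_{ij} := f_j(x_i)$, so that the $i$th row of $M$ is exactly the vector $(f_1(x_i), \dots, f_n(x_i))$. Linear independence of the family of vectors $((f_1(x_i), \dots, f_n(x_i)))_{i=1}^n$ is then precisely the statement that the rows of $M$ are linearly independent, i.e.\ that $M$ is invertible. With this dictionary in place, the lemma becomes the assertion that $f_1, \dots, f_n$ are linearly independent if and only if there is a choice of points making $M$ invertible.

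The easy direction ($\Leftarrow$) I would dispatch immediately. Suppose points $x_1, \dots, x_n$ exist with $M$ invertible, and suppose $\sum_{j} \alpha_j f_j \equiv 0$ on $X$ for some coefficients $\alpha_1, \dots, \alpha_n$. Evaluating this identity at each $x_i$ gives $\sum_j \alpha_j M_{ij} = 0$ for all $i$, i.e.\ $M\alpha = 0$ for $\alpha := (\alpha_1, \dots, \alpha_n)^\top$. Since $M$ is invertible this forces $\alpha = 0$, so the $f_j$ are linearly independent.

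For the harder direction ($\Rightarrow$) I would construct the points one at a time by a greedy rank-increasing argument. Assume the $f_j$ are linearly independent and write $r_x := (f_1(x), \dots, f_n(x)) \in \sR^n$ for the evaluation vector at $x$. The claim to prove by induction is: for each $k < n$, if $x_1, \dots, x_k$ have been chosen so that $r_{x_1}, \dots, r_{x_k}$ are linearly independent (spanning a $k$-dimensional subspace $R \subseteq \sR^n$), then there exists a new point $x_{k+1}$ with $r_{x_{k+1}} \notin R$, which keeps the enlarged family independent. The main obstacle is establishing this single extension step. I would argue by contradiction: if no such $x_{k+1}$ existed, then $r_x \in R$ for every $x \in X$; since $k < n$, $R$ is a proper subspace, so there is a nonzero $\beta \in \sR^n$ orthogonal to $R$, whence $\sum_j \beta_j f_j(x) = \beta^\top r_x = 0$ for all $x \in X$, contradicting the linear independence of the $f_j$.

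Starting the induction from $k = 0$ (where $R = \{0\}$, which is proper because $n \geq 1$, so the same argument already produces a point with $r_{x_1} \neq 0$) and iterating the extension step $n$ times yields points $x_1, \dots, x_n$ whose evaluation vectors are linearly independent, i.e.\ the associated matrix $M$ is invertible, completing the proof. The only subtlety worth stating carefully is the maintained invariant that $k$ linearly independent vectors span a $k$-dimensional space, so that adjoining a vector outside their span preserves independence; everything else is routine linear algebra.
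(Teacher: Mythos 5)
Your proposal is correct, and it rests on the same key idea as the paper's proof: if all evaluation vectors $r_x = (f_1(x),\dots,f_n(x))$ lie in a proper subspace of $\sR^n$, then a nonzero vector $\beta$ in its orthogonal complement yields the dependence $\sum_j \beta_j f_j \equiv 0$, contradicting independence of the functions. The only difference is organizational — the paper applies this orthogonal-complement argument once, to the span of the entire set $\{r_x \mid x \in X\}$, whereas you wrap the identical argument in a greedy induction that adds one point at a time, which has the minor merit of making explicit the step the paper leaves implicit (that a spanning set of $\sR^n$ contains $n$ linearly independent evaluation vectors).
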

\begin{proof}
    We start by proving the ``if'' part. Assume the functions are linearly dependent. Then there exists $\bm\alpha \not=0$ such that, for all $x\in X$, $\sum_{i=1}^n \bm\alpha_i f_i(x)=0$. Choose distinct $x_1, ..., x_n \in X$. We thus have that for all $j \in [n]$, $\sum_{i=1}^n \bm\alpha_i f_i(x_j)=0$. This can be  written in matrix form:
    \begin{align*}
        \begin{bmatrix}
            f_1(x_1) & \cdots & f_1(x_n) \\
            \vdots & \ddots & \vdots \\
            f_n(x_1) & \cdots & f_n(x_n) 
        \end{bmatrix}\bm\alpha =\bm0 \,,
    \end{align*}
    which implies that the columns are linearly dependent.

    We now show the ``only if'' part. Suppose that for all $\{x_1, \dots, x_n\} \subseteq X$, the family of vectors $((f_1(x_i), ..., f_n(x_i)))_{i=1}^n$ is linearly dependent. This means that the set $U = \vecspan\{(f_1(x), ..., f_n(x)) \mid x \in X\}$ is a proper linear subspace of $\sR^n$. This means that there is a nonzero $u \in U^\perp$, the orthogonal complement of $U$. By definition, $u$ is orthogonal to all elements in $\{(f_1(x), ..., f_n(x)) \mid x \in X\}$. In other words, for all $x \in X$, $\sum_{i=1}^n u_i f_i(x) = 0$. Hence the $\vf_i$ are linearly dependent.
\end{proof}

\subsection{Proof of Proposition~\ref{prop:V_and_Dv}}\label{sec:V_and_Dv}

\JacobianEntanglement*

\begin{proof}
    The ``$\implies$'' direction holds since we can simply differentiate $\vh_i(\va) = \bar\vh_i(\va_{-j})$ w.r.t. $\va_j$ to get zero. 
    
    We now show the ``$\impliedby$'' direction. Suppose that for all $\va \in \sR^n$, $D\vh(\va)_{i,j} = 0$. We must now show that $\vh_i(\va)$ is constant in $\va_j$ for all $\va_{-j}$. Choose any $\va^0, \va^1 \in \sR^n$ such that $\va^0_{-j} = \va^1_{-j}$. 
    Thanks to the fundamental theorem of calculus, we can write
    \begin{align*}
        \vh_i(\va^1) - \vh_i(\va^0) &= \int_{[0,1]} \frac{d}{d\alpha} \vh_i((1 - \alpha)\va^0 + \alpha\va^1)) d\alpha \\
        &= \int_{[0,1]} \underbrace{D\vh((1 - \alpha)\va^0 + \alpha\va^1)_{i, \cdot}}_\text{zero at $j$} \cdot \underbrace{(\va^1 - \va^0)}_\text{zero except at $j$} d\alpha \\
        &= 0 \,.
    \end{align*}
    Since $\va^0$ and $\va^1$ were arbitrary points such that $\va^0_{-j} = \va^1_{-j}$, this means the function $\vh_i(\va)$ is constant in $\va_j$ for all values of $\va_{-j}$.
\end{proof}

\subsection{Proof of Proposition~\ref{prop:identDiffeo}}\label{app:identDiffeo}

In this section, we prove Proposition~\ref{prop:identDiffeo}. Before doing so, we first recall the definition of the support of a random variable (Definition~\ref{def:support}) and prove a useful lemma (Lemma~\ref{lemma:support_of_Y}).

\begin{definition}(Support of a random variable)\label{def:support}
Let $\vx$ be a random variable with values in $\mathbb{R}^n$ with distribution $\sP_\vx$. Let $\mathcal{O}_n$ be the standard topology of $\mathbb{R}^n$ (i.e. the set of open sets of $\mathbb{R}^n$). The support of $\vx$ is defined as
\begin{align*}
    \textnormal{supp($\vx$)} := \{\vx \in \mathbb{R}^n \mid \vx \in O \in \mathcal{O}_n \implies \sP_\vx(O) > 0\} \, .
\end{align*}
\end{definition}

\begin{lemma}\label{lemma:support_of_Y}
Let $\vz$ be a random variable with values in $\sR^m$ with distribution $\sP_\vz$ and $\vy:= \vf(\vz)$ where ${\vf:\textnormal{supp}(\vz) \rightarrow \sR^n}$ is a homeomorphism onto its image. Then
\begin{align*}
    \vf(\textnormal{supp}(\vz)) \subseteq \textnormal{supp}(\vy) \subseteq \overline{\vf(\textnormal{supp}(\vz))} \, .
\end{align*}
where the closure is taken w.r.t. the topology of $\sR^n$.
\end{lemma}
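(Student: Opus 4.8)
The plan is to work directly from the definition of support (Definition~\ref{def:support}) together with the pushforward identity relating $\sP_\vy$ to $\sP_\vz$. The key preliminary reduction is to observe that $\sP_\vz$ assigns full mass to $\textnormal{supp}(\vz)$: its complement is open, and by definition of support every point of it has an open neighborhood of zero measure; since $\sR^m$ is second countable (hence Lindel\"of), countably many such neighborhoods cover the complement, so $\sP_\vz(\textnormal{supp}(\vz)^c) = 0$. Because $\vf$ is defined on $\textnormal{supp}(\vz)$ and $\vy = \vf(\vz)$, this lets me write, for every open $O \subseteq \sR^n$,
\begin{align}
    \sP_\vy(O) = \sP_\vz(\{\vz \in \textnormal{supp}(\vz) \mid \vf(\vz) \in O\}) \,, \nonumber
\end{align}
where the set on the right is the preimage of $O$ under $\vf$ taken \emph{within} the domain $\textnormal{supp}(\vz)$.

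For the inclusion $\vf(\textnormal{supp}(\vz)) \subseteq \textnormal{supp}(\vy)$, I would fix $\vz_0 \in \textnormal{supp}(\vz)$, set $\vy_0 := \vf(\vz_0)$, and verify that every open $O \ni \vy_0$ has $\sP_\vy(O) > 0$. Since $\vf$ is continuous on $\textnormal{supp}(\vz)$, its preimage $\vf^{-1}(O)$ is open in the subspace topology, hence of the form $U \cap \textnormal{supp}(\vz)$ for some $U$ open in $\sR^m$ with $\vz_0 \in U$. As $\vz_0 \in \textnormal{supp}(\vz)$ we have $\sP_\vz(U) > 0$, and combining with the full-mass property gives $\sP_\vy(O) = \sP_\vz(U \cap \textnormal{supp}(\vz)) = \sP_\vz(U) > 0$. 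Thus $\vy_0 \in \textnormal{supp}(\vy)$.

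For the inclusion $\textnormal{supp}(\vy) \subseteq \overline{\vf(\textnormal{supp}(\vz))}$, I would argue by contraposition: if $\vy_0 \notin \overline{\vf(\textnormal{supp}(\vz))}$, there is an open $O \ni \vy_0$ with $O \cap \vf(\textnormal{supp}(\vz)) = \emptyset$, so its preimage within $\textnormal{supp}(\vz)$ is empty and $\sP_\vy(O) = 0$, whence $\vy_0 \notin \textnormal{supp}(\vy)$. The main obstacle here is bookkeeping rather than depth: one must be scrupulous that $\vf$ is defined only on $\textnormal{supp}(\vz)$, so all preimages live in the subspace topology, and one must carefully justify the full-mass reduction $\sP_\vz(\textnormal{supp}(\vz)) = 1$, which is the single place where second countability of $\sR^m$ is invoked. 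I would also note that only the continuity of $\vf$ is actually used (in the first inclusion), so the full homeomorphism hypothesis is stronger than what this lemma requires.
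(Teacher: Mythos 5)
Your proof is correct and follows essentially the same route as the paper's: both inclusions are obtained directly from Definition~\ref{def:support} and the pushforward identity $\sP_\vy = \sP_\vz \circ \vf^{-1}$, the first using continuity of $\vf$ and the second by exhibiting an open set disjoint from (the closure of) $\vf(\textnormal{supp}(\vz))$. The only difference is in your favor: you make explicit the full-mass property $\sP_\vz(\textnormal{supp}(\vz)) = 1$ (via the Lindel\"of argument) and the subspace-topology bookkeeping, both of which the paper's proof uses implicitly when it treats $\vf^{-1}(N)$ as an open neighborhood of $\vz^0$ and asserts $\sP_\vz(\vf^{-1}(N)) > 0$; your observation that only continuity of $\vf$, not the full homeomorphism hypothesis, is needed is also accurate.
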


\begin{proof}
We first prove that $f(\text{supp}(\vz)) \subseteq \text{supp}(\vy)$. Let $\vy^0 \in \vf(\text{supp}(\vz))$ and $N$ be an open neighborhood of $\vy^0$, i.e. $\vy^0 \in N \in \gO_n $. Note that there exists $\vz^0 \in \text{supp}(\vz)$ such that $\vf(\vz^0) = \vy^0$. Note that ${\vz^0 \in \vf^{-1}(\{\vy^0\}) \subseteq \vf^{-1}(N)}$ and that, by continuity of $\vf$, $\vf^{-1}(N)$ is an open neighborhood of $\vz^0$. Since $\vz^0 \in \text{supp}(\vz)$, we have 
\begin{align*}
    0 &< \sP_\vz(f^{-1}(N))\\
      &= \sP_\vz \circ f^{-1}(N) \\
      &= \sP_\vy(N)\,.
\end{align*}
Hence $\vy^0 \in \text{supp}(\vy)$, which concludes the ``$\subseteq$'' part. 

We now prove the other inclusion. Let $\vy^0 \in \supp(\vy)$ and suppose, by contradiction, that $\vy^0 \not\in \overline{\vf(\supp(\vz))}$. Since $\overline{\vf(\supp(\vz))}$ is closed in $\sR^n$, there exists $N$ s.t. $\vy^0 \in N \in \gO_n$ with $N \cap \overline{\vf(\supp(\vz))} = \emptyset$. Since $\vy^0 \in \supp(\vy)$, 
\begin{align*}
    0 &< \sP_\vy(N) \\
      &= \sP_\vz (\vf^{-1}(N)) \\
      &= \sP_\vz(\emptyset) = 0 \, .
\end{align*}
The above contradiction implies that $\vy^0 \in \overline{\vf(\supp(\vz))}$.
\end{proof}

\identDiffeo*

\begin{proof}

\textbf{Equality of Denoised Distributions.}
Given an arbitrary $\va^{< T} \in \gA^T$ and a parameter $\vtheta = (\bff, p, \mG)$, let $\sP_{\vx^{\leq T} \mid \va^{<T}; \vtheta}$ be the conditional probability distribution of $\vx^{\leq T}$, let $\sP_{\vz^{\leq T} \mid \va^{<T}; \vtheta}$ be the conditional probability distribution of $\vz^{\leq T}$ and let $\sP_{\vn^{\leq T}}$ be the probability distribution of $\vn^{\leq T}$ (the Gaussian noises added on $\bff(\vz^{\leq T})$, defined in Sec.~\ref{sec:model}). Let ${\vy^t := \bff(\vz^t)}$ and $\sP_{\vy^{\leq T} \mid \va^{<T}; \vtheta}$ be its conditional probability distribution. First, notice that
\begin{align*}
    \mathbb{P}_{\vx^{\leq T} \mid \va^{<T}; \vtheta} = \mathbb{P}_{\vy^{\leq T} \mid \va^{<T}; \vtheta} * \sP_{\vn^{\leq T}} \, ,
\end{align*}
where $*$ is the convolution operator between two measures. We now show that if two models agree on the observations, i.e. $\mathbb{P}_{\vx^{\leq T} \mid \va^{<T}; \vtheta} = \mathbb{P}_{\vx^{\leq T} \mid \va^{<T};\hat{\vtheta}}$, then $\mathbb{P}_{\vy^{\leq T} \mid \va^{<T}; \vtheta} = \mathbb{P}_{\vy^{\leq T} \mid \va^{<T}; \hat{\vtheta}}$. The following argument makes use of the Fourier transform $\mathcal{F}$ generalized to arbitrary probability measures. This tool is necessary to deal with measures which do not have a density w.r.t either the Lebesgue or the counting measure, as is the case of $\mathbb{P}_{\vy^{\leq T} \mid \va^{<T}; \vtheta}$ (all its mass is concentrated on the set $\bff(\sR^{d_z})$). See \citet[Chapter 8]{pollard_2001} for an introduction and useful properties. 
\begin{align}
    \mathbb{P}_{\vx^{\leq T} \mid \va^{<T}; \vtheta} &= \mathbb{P}_{\vx^{\leq T} \mid \va^{<T};\hat{\vtheta}} \label{eq:first_denoise}\\
    \mathbb{P}_{\vy^{\leq T} \mid \va^{<T}; \vtheta} * \sP_{\vn^{\leq T}} &= \mathbb{P}_{\vy^{\leq T} \mid \va^{<T};\hat{\vtheta}} * \sP_{\vn^{\leq T}} \\
    \mathcal{F}(\mathbb{P}_{\vy^{\leq T} \mid \va^{<T}; \vtheta} * \sP_{\vn^{\leq T}}) &= \mathcal{F}(\mathbb{P}_{\vy^{\leq T} \mid \va^{<T};\hat{\vtheta}} * \sP_{\vn^{\leq T}}) \label{eq:inversion_1} \\
    \mathcal{F}(\mathbb{P}_{\vy^{\leq T} \mid \va^{<T}; \vtheta})  \mathcal{F}(\sP_{\vn^{\leq T}}) &= \mathcal{F}(\mathbb{P}_{\vy^{\leq T} \mid \va^{<T};\hat{\vtheta}}) \mathcal{F}(\sP_{\vn^{\leq T}}) \label{eq:convolution_product}\\
    \mathcal{F}(\mathbb{P}_{\vy^{\leq T} \mid \va^{<T}; \vtheta}) &= \mathcal{F}(\mathbb{P}_{\vy^{\leq T} \mid \va^{<T};\hat{\vtheta}}) \label{eq:non_zero_transform}\\
    \mathbb{P}_{\vy^{\leq T} \mid \va^{<T}; \vtheta} &= \mathbb{P}_{\vy^{\leq T} \mid \va^{<T};\hat{\vtheta}} \label{eq:inversion_2}\, ,
\end{align}
where~\eqref{eq:inversion_1}~\&~\eqref{eq:inversion_2} use the fact that the Fourier transform is invertible, \eqref{eq:convolution_product} is an application of the fact that the Fourier transform of a convolution is the product of their Fourier transforms and~\eqref{eq:non_zero_transform} holds because the Fourier transform of a Normal distribution is nonzero everywhere. Note that the latter argument holds because we assume $\sigma^2$, the variance of the Gaussian noise added to $\vy^t$, is the same for both models. 
Notice that, since $\vf(\sR^{d_z})$ and $\hat\vf(\sR^{d_z})$ are closed in $\sR^{d_x}$ and the support of $\vz^{\leq T}$ is $\sR^{d_z \times T}$, Lemma~\ref{lemma:support_of_Y} implies that 
\begin{align*}
    \bff(\sR^{d_z \times T}) = \text{supp}(\mathbb{P}_{\vy^{\leq T} \mid \va^{<T}; \vtheta}) \quad\quad \& \quad\quad \text{supp}(\mathbb{P}_{\vy^{\leq T} \mid \va^{<T}; \hat{\vtheta}}) = \hat{\bff}(\sR^{d_z \times T})
\end{align*}
where we overloaded the notation by defining $\bff(\vz^{\leq T}) := ( \bff(\vz^1), ..., \bff(\vz^T))$ and analogously for $\hat{\bff}(\vz^{\leq T})$. Since both measure in \eqref{eq:inversion_2} are equal, their supports must also be. This implies that $\vf(\sR^{d_z}) = \hat\vf(\sR^{d_z})$, which is part of the definition of equivalence up to diffeomorphism (Definition~\ref{def:eqdiff}). 

\textbf{Equality of densities.}
Continuing with~\eqref{eq:inversion_2},
\begin{align}
\sP_{\vy^{\leq T} \mid \va^{<T} ; \vtheta} &= \sP_{\vy^{\leq T} \mid \va^{<T} ;\hat{\vtheta}} \nonumber\\
\sP_{\vz^{\leq T} \mid \va^{<T} ; \vtheta} \circ \bff^{-1} &= \sP_{\vz^{\leq T} \mid \va^{<T} ; \hat{\vtheta}} \circ \hat{\bff}^{-1} \nonumber\\
\sP_{\vz^{\leq T} \mid \va^{<T} ; \vtheta} \circ {\bff}^{-1} \circ \hat\bff &= \sP_{\vz^{\leq T} \mid \va^{<T} ; \hat{\vtheta}} \nonumber\\
\sP_{\vz^{\leq T} \mid \va^{<T} ; \vtheta} \circ \vv &= \sP_{\vz^{\leq T} \mid \va^{<T} ; \hat{\vtheta}} \, ,
\label{eq:measure_equality}
\end{align}
where $\bfv := {\bff}^{-1} \circ \hat\bff$ is a composition of diffeomorphisms and thus a diffeomorphism from $\sR^{d_z}$ to itself. Note that this composition is well defined because $\bff{(\sR^{d_z})}=\hat{\bff}(\sR^{d_z})$. 
We chose to work directly with measures (functions on sets), as opposed to manifold integrals in \citet{iVAEkhemakhem20a}, because it simplifies the derivation of~\eqref{eq:measure_equality} and avoids having to define densities w.r.t. measures concentrated on a manifold.

The density of $\sP_{\vz^{\leq T} \mid \va^{<T} ; {\vtheta}} \circ \bfv$ w.r.t. to the Lebesgue measure is given by the change-of-variable rule for random vectors (which can be applied because $\bfv$ is a diffeomorphism) and is given by $\prod_{t=1}^T {p}(\bfv(\vz^t) \mid \bfv(\vz^{<t}), \va^{<t}) |\det D\bfv(\vz^{t})|$, where $p$ refers to the density model with parameter $\vtheta$ and $D\bfv(\vz^{t})$ is the Jacobian matrix of $\bfv$. Since $\sP_{\vz^{\leq T} \mid \va^{<T} ; \vtheta} \circ \bfv = \sP_{\vz^{\leq T} \mid \va^{<T} ; \hat{\vtheta}}$, their respective densities w.r.t. Lebesgue must also agree:
\begin{align*}
    \prod_{t=1}^T \hat p(\vz^t \mid \vz^{<t}, \va^{<t}) = \prod_{t=1}^T {p}(\bfv(\vz^t) \mid \bfv(\vz^{<t}), \va^{<t}) |\det D\bfv(\vz^{t}) | \, ,
\end{align*}
where $\hat p$ refers to the conditional density of the model with parameter $\hat\vtheta$.

For a given $t_0$, we have
\begin{align}
    \prod_{t=1}^{t_0}\hat p(\vz^t \mid \vz^{<t}, \va^{<t}) = \prod_{t=1}^{t_0}{p}(\bfv(\vz^t) \mid \bfv(\vz^{<t}), \va^{<t}) |\det D\bfv(\vz^{t}) | \, ,  \label{eq:t0}
\end{align}
by integrating first $\vz^{T}$, then $\vz^{t\shortminus 1}$, then ..., up to $\vz^{t_0 + 1}$. Note that we can integrate $\vz^{t_0}$ and get
\begin{align}
     \prod_{t=1}^{t_0 - 1}\hat p(\vz^t \mid \vz^{<t}, \va^{<t}) =\prod_{t=1}^{t_0 - 1}{p}(\bfv(\vz^t) \mid \bfv(\vz^{<t}), \va^{<t}) |\det D\bfv(\vz^{t}) | \, . \label{eq:t0-1}
\end{align}
By dividing~\eqref{eq:t0} by~\eqref{eq:t0-1}, we get
\begin{align}
    \hat p(\vz^{t_0} \mid &\vz^{<{t_0}}, \va^{<{t_0}}) = {p}(\bfv(\vz^{t_0}) \mid \bfv(\vz^{<{t_0}}), \va^{<{t_0}}) |\det D\bfv(\vz^{{t_0}}) |  \, , \label{eq:densities_link}
\end{align}
which completes the proof.
\end{proof}

\subsection{The Consistency Relations (Definitions~\ref{def:a_consistent_models}~\&~\ref{def:z_consistent_models}) are Equivalence Relations} \label{sec:proof_consistence_equivalence}

In this section, we demonstrate that the relations $\eqcon^\va$ and $\eqcon^\vz$ are equivalence relations by leveraging the fact that the set of $\mG$-preserving diffeomorphisms form a group under composition (Proposition~\ref{prop:group_of_S_consistent_diffeo}). We start by showing a fact that will be useful in the following.
\begin{lemma}\label{lemma:pre_eq}
Let $\mG \in \{0,1\}^{m \times n}$.
\begin{enumerate}
    \item A map $\vc: \sR^m \rightarrow \sR^m$ is $\mG$-preserving if and only if $\vc$ is $\mG\mP$-preserving, where $\mP$ is an $n \times n$ permutation matrix.
    \item A map $\vc: \sR^m \rightarrow \sR^m$ is $\mG$-preserving if and only if $\mP\circ\vc \circ \mP^\top$ is $\mP\mG$-preserving, where $\mP$ is a $m \times m$ permutation matrix.
    \item When m = n, a map $\vc: \sR^m \rightarrow \sR^m$ is $\mG$-preserving if and only if $\mP\circ\vc\circ\mP^\top$ is $\mP\mG\mP^\top$-preserving, where $\mP$ is a $m \times m$ permutation matrix.
\end{enumerate}
\end{lemma}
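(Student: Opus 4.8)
The plan is to reduce all three statements to the combinatorial characterization of $\mG$-preservation from Proposition~\ref{prop:charac_G_preserving}: a map with dependency graph $\mC$ is $\mG$-preserving if and only if $\mG_{i,\cdot} \not\subseteq \mG_{j,\cdot} \implies \mC_{i,j} = 0$ for all $i,j$. The crucial observation is that this criterion only involves (i) the support-containment relation among the \emph{rows} of $\mG$ and (ii) the entries of the dependency graph $\mC$ of $\vc$. Parts 1 and 2 will then follow by tracking how each of these two ingredients transforms under the relevant permutation, and part 3 is obtained by composing them.

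For part 1, I would first note that right-multiplication by an $n\times n$ permutation matrix $\mP$ (with $\mP\ve_i = \ve_{\sigma(i)}$) permutes the \emph{columns} of $\mG$, so that $\mathrm{supp}((\mG\mP)_{i,\cdot})$ is the image of $\mathrm{supp}(\mG_{i,\cdot})$ under a single bijection $\sigma$ of the column indices, the same bijection for every row $i$. Since applying one fixed bijection uniformly to all rows preserves the containment relation, we get $(\mG\mP)_{i,\cdot} \subseteq (\mG\mP)_{j,\cdot}$ iff $\mG_{i,\cdot} \subseteq \mG_{j,\cdot}$. The dependency graph $\mC$ of $\vc$ is untouched, so the criterion of Proposition~\ref{prop:charac_G_preserving} holds for $\mG$ precisely when it holds for $\mG\mP$, which is part 1.

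For part 2, I would first establish the dependency-graph identity for the conjugated map: the dependency graph of $\mP\circ\vc\circ\mP^\top$ is $\mP\mC\mP^\top$. This follows from a direct inspection of Definition~\ref{def:dep_graph}: output $i$ of $\mP\circ\vc\circ\mP^\top$ equals $\vc_{\sigma^{-1}(i)}$ evaluated at $\mP^\top\vz$, and the input coordinate $\vz_j$ enters $\mP^\top\vz$ only through its $\sigma^{-1}(j)$-th slot, so output $i$ depends on input $j$ iff $\mC_{\sigma^{-1}(i),\sigma^{-1}(j)} = 1 = (\mP\mC\mP^\top)_{i,j}$. Left-multiplication by $\mP$ permutes the \emph{rows} of $\mG$, giving $(\mP\mG)_{i,\cdot} = \mG_{\sigma^{-1}(i),\cdot}$. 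Substituting these into the criterion for $\mP\mG$-preservation of $\mP\mC\mP^\top$ yields, for all $i,j$, the implication $\mG_{\sigma^{-1}(i),\cdot} \not\subseteq \mG_{\sigma^{-1}(j),\cdot} \implies \mC_{\sigma^{-1}(i),\sigma^{-1}(j)} = 0$; re-indexing by $i' := \sigma^{-1}(i)$ and $j' := \sigma^{-1}(j)$ (which ranges over all pairs as $i,j$ do) recovers exactly the criterion for $\mG$-preservation of $\mC$, proving part 2.

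Finally, part 3 follows by chaining the first two parts: by part 2, $\vc$ is $\mG$-preserving iff $\mP\circ\vc\circ\mP^\top$ is $\mP\mG$-preserving; and by part 1, applied to the matrix $\mP\mG$ with the $m\times m$ permutation $\mP^\top$, the latter is equivalent to $\mP\circ\vc\circ\mP^\top$ being $(\mP\mG)\mP^\top = \mP\mG\mP^\top$-preserving. The only real bookkeeping hurdle is verifying the conjugation identity $\mP\mC\mP^\top$ for the dependency graph with the paper's permutation convention $\mP\ve_i = \ve_{\sigma(i)}$; once that is pinned down, the rest is a transparent transport of the row-containment criterion and involves no nontrivial estimates.
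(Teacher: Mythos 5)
Your proof is correct, but it takes a different route from the paper's. The paper works directly with the subspace definition of $\mG$-preservation (Definition~\ref{def:g_preserving_mat}): each part is a one-line chain of equivalences using $\mC^\top\sR^{m\times n}_\mG \subseteq \sR^{m\times n}_\mG$ together with the identities $\sR^{m\times n}_{\mG\mP} = \sR^{m\times n}_\mG\mP$ and $\sR^{m\times n}_{\mP\mG} = \mP\sR^{m\times n}_\mG$ (for part 2, one left-multiplies by $\mP^\top$ and cancels $\mP^\top\mP = \mI$). You instead route everything through the combinatorial row-containment characterization of Proposition~\ref{prop:charac_G_preserving} and track how row supports and dependency-graph entries transform under the permutations. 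Both share the same skeleton — parts 1 and 2 proved independently, part 3 by chaining them (and your chaining, applying part 1 to $\mP\mG$ with the permutation $\mP^\top$, is exactly the intended consequence the paper leaves implicit) — and both hinge on the conjugation identity that the dependency graph of $\mP\circ\vc\circ\mP^\top$ is $\mP\mC\mP^\top$. A point in your favor: the paper merely asserts this identity ("notice that\dots"), whereas you actually verify it from Definition~\ref{def:dep_graph} under the convention $\mP\ve_i = \ve_{\sigma(i)}$, which is the only place where a sign/direction error could realistically creep in. The trade-off is concision: the paper's subspace-algebra argument is shorter and needs no index bookkeeping, while yours is more elementary and makes the underlying combinatorics (a fixed relabeling of columns or rows preserves the containment relation among row supports) fully explicit.
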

\begin{proof}
Let $\mC$ be the dependency graph of $\vc$.
    \begin{enumerate}
        \item $\mC^\top\sR^{m\times n}_\mG \subset \sR^{m\times n}_\mG \iff \mC^\top\sR^{m\times n}_\mG\mP \subset \sR^{m\times n}_\mG\mP \iff \mC^\top\sR^{m\times n}_{\mG\mP} \subset \sR^{m\times n}_{\mG\mP}$
        \item First, notice that the dependency graph of $\mP\circ\vc \circ \mP^\top$ is $\mP\mC\mP^\top$. 
        $$(\mP\mC\mP^\top)^\top\sR^{m\times n}_{\mP\mG} \subset \sR^{m\times n}_{\mP\mG} \iff  \mP\mC^\top\mP^\top\mP\sR^{m\times n}_{\mG} \subset \mP\sR^{m\times n}_{\mG} \iff \mC^\top\sR^{m\times n}_\mG \subset \sR^{m\times n}_\mG$$
        \item This is a consequence of the first two statements.
    \end{enumerate}
\end{proof}

We are now ready to show that the relation $\eqcon^\va$ (Definition~\ref{def:a_consistent_models}) is an equivalence relation.
\begin{proposition}\label{prop:a_consistence_equivalence}
The consistency relation, $\eqcon^\va$ (Def.~\ref{def:a_consistent_models}), is an equivalence relation.
\end{proposition}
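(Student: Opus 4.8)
The plan is to verify the three defining properties of an equivalence relation—reflexivity, symmetry, and transitivity—for $\eqcon^\va$, leveraging the group structure of $\mG^a$-preserving diffeomorphisms established in Proposition~\ref{prop:group_of_S_consistent_diffeo} together with the auxiliary facts in Lemma~\ref{lemma:pre_eq}. Recall that $\vtheta \eqcon^\va \tilde\vtheta$ requires (i) $\vtheta \eqdiff \tilde\vtheta$ with $\tilde\mG^a = \mP\mG^a$ for some permutation $\mP$, and (ii) the entanglement map $\vv = \vf^{-1}\circ\tilde\vf$ factors as $\vv = \vc\circ\mP^\top$ with $\vc$ being $\mG^a$-preserving. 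Since $\eqdiff$ is already known to be an equivalence relation (the set of diffeomorphisms forms a group under composition, as noted after Definition~\ref{def:eqdiff}), the work is entirely in tracking how the permutation $\mP$ and the $\mG^a$-preserving factor $\vc$ behave under the group operations.

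First I would handle reflexivity: taking $\mP = \mI$, we have $\mG^a = \mI\mG^a$, the entanglement map of $(\vf,\vf)$ is $\vv = \vf^{-1}\circ\vf = \mathrm{id} = \mathrm{id}\circ\mI^\top$, and the identity is trivially $\mG^a$-preserving (it is the identity element of the group in Proposition~\ref{prop:group_of_S_consistent_diffeo}). For symmetry, suppose $\vtheta\eqcon^\va\tilde\vtheta$ via $\mP$ and $\vc$, so $\vv = \vf^{-1}\circ\tilde\vf = \vc\circ\mP^\top$. The entanglement map for the reversed pair is $\tilde\vv := \tilde\vf^{-1}\circ\vf = \vv^{-1} = \mP\circ\vc^{-1} = \mP\circ\vc^{-1}\circ\mP^\top\circ\mP = (\mP\circ\vc^{-1}\circ\mP^\top)\circ\mP$. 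I would set $\tilde\mP := \mP^\top$ (the inverse permutation) so that $\tilde\vv = \tilde\vc\circ\tilde\mP^\top$ with $\tilde\vc := \mP\circ\vc^{-1}\circ\mP^\top$. To check $\tilde\vc$ is $\tilde\mG^a = \mP\mG^a$-preserving: by closure under inversion (Proposition~\ref{prop:group_of_S_consistent_diffeo}), $\vc^{-1}$ is $\mG^a$-preserving, and then by part~2 of Lemma~\ref{lemma:pre_eq}, $\mP\circ\vc^{-1}\circ\mP^\top$ is $\mP\mG^a$-preserving, exactly as required. The graph condition $\mG^a = \mP^\top\tilde\mG^a = \tilde\mP\tilde\mG^a$ also holds.

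For transitivity, suppose $\vtheta\eqcon^\va\tilde\vtheta$ via $(\mP_1,\vc_1)$ and $\tilde\vtheta\eqcon^\va\bar\vtheta$ via $(\mP_2,\vc_2)$, where $\vf^{-1}\circ\tilde\vf = \vc_1\circ\mP_1^\top$ and $\tilde\vf^{-1}\circ\bar\vf = \vc_2\circ\mP_2^\top$, with $\vc_1$ being $\mG^a$-preserving and $\vc_2$ being $\tilde\mG^a = \mP_1\mG^a$-preserving. The composite entanglement map is $\vf^{-1}\circ\bar\vf = (\vf^{-1}\circ\tilde\vf)\circ(\tilde\vf^{-1}\circ\bar\vf) = \vc_1\circ\mP_1^\top\circ\vc_2\circ\mP_2^\top$. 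I would rewrite this as $\vc_1\circ(\mP_1^\top\circ\vc_2\circ\mP_1)\circ(\mP_1^\top\mP_2^\top)$ and set $\bar\mP := \mP_2\mP_1$ so that $(\mP_1^\top\mP_2^\top) = \bar\mP^\top$. Setting $\bar\vc := \vc_1\circ(\mP_1^\top\circ\vc_2\circ\mP_1)$, I must show $\bar\vc$ is $\mG^a$-preserving. Since $\vc_2$ is $\mP_1\mG^a$-preserving, part~2 of Lemma~\ref{lemma:pre_eq} (applied with permutation $\mP_1^\top$) gives that $\mP_1^\top\circ\vc_2\circ\mP_1$ is $\mP_1^\top\mP_1\mG^a = \mG^a$-preserving; then closure under composition (Proposition~\ref{prop:group_of_S_consistent_diffeo}) yields that $\bar\vc$, being a composition of two $\mG^a$-preserving diffeomorphisms, is itself $\mG^a$-preserving. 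The graph condition $\bar\mG^a = \mP_2\tilde\mG^a = \mP_2\mP_1\mG^a = \bar\mP\mG^a$ follows directly.

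The main obstacle—and the only genuinely non-routine part—is the bookkeeping in symmetry and transitivity: the $\mG^a$-preserving property is stated relative to a \emph{fixed} graph, so when permutations are conjugated the reference graph changes, and one must confirm via Lemma~\ref{lemma:pre_eq} that the conjugated factors preserve the correct (permuted) graph so that everything recombines into a single $\mG^a$-preserving factor against the base graph $\mG^a$. Once the permutation algebra $\mP^\top\vc\mP$ is correctly matched against $\mP^\top\mG\mP$-preservation, the group axioms of Proposition~\ref{prop:group_of_S_consistent_diffeo} close the argument mechanically. The proof for $\eqcon^\vz$ would be entirely analogous, using part~3 of Lemma~\ref{lemma:pre_eq} (conjugation $\mP\mG\mP^\top$) and the fact that being both $\mG^z$- and $(\mG^z)^\top$-preserving is equivalent to being $[\mG^z\ (\mG^z)^\top]$-preserving, so that the same group structure applies.
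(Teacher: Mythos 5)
Your proof is correct and follows essentially the same route as the paper's: reflexivity via the identity map, and symmetry and transitivity via the group structure of $\mG^a$-preserving diffeomorphisms (Proposition~\ref{prop:group_of_S_consistent_diffeo}) combined with the conjugation facts of Lemma~\ref{lemma:pre_eq}. The only cosmetic difference is in transitivity, where you conjugate $\vc_2$ by $\mP_1^\top$ in a single step to land directly on a $\mG^a$-preserving factor, whereas the paper reaches the same conclusion through two successive conjugations (first by $\mP_2$, then by $\mP^\top$).
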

\begin{proof}

\textbf{Reflexivity.} It is easy to see that $\vtheta \eqcon^\va \vtheta$, simply by setting $\vv(\vz) := \vz$ with $\mP := \mI$.

\textbf{Symmetry.} Assume $\vtheta \eqcon^\va \tilde \vtheta$. Hence, we have $\mP\mG^a = \tilde \mG^a$ as well as
\begin{align}
    \vf(\sR^{d_z}) &= \tilde\vf(\sR^{d_z}) , \text{and}\\
    \tilde p(\vz^{t} \mid \vz^{<{t}}, \va^{<{t}}) &= p(\bfv(\vz^{t}) \mid \bfv(\vz^{<{t}}), \va^{<{t}}) |\det D\bfv(\vz^{{t}}) |\, , \label{eq:wt3345478}
\end{align}
where $\vv := \vf^{-1} \circ \tilde\vf$ can be written as $\vv := \vc \circ \mP^\top$, where $\vc$ is a $\mG^a$-preserving diffeomorphism and $\mP$ is a permutation. We can massage \eqref{eq:wt3345478} to get
\begin{align*}
    p(\vz^{t} \mid \vz^{<{t}}, \va^{<{t}}) = \tilde p(\vv^{-1}(\vz^t) \mid \vv^{-1}(\vz^{<t}), \va^{<t})|\det D \bfv^{-1} (\vz^t)| \, .
\end{align*}
Of course, we also have that $\tilde\mP\tilde\mG^a = \mG^a$, where $\tilde\mP := \mP^\top$. Now the only thing left to prove is that $\vv^{-1}$ can be written as $\tilde\vc\circ \tilde\mP^\top$ where $\tilde\vc$ is $\tilde\mG^a$-preserving. We know that 
\begin{align*}
    \vv^{-1} = \mP \circ \vc^{-1} = \underbrace{\mP \circ \vc^{-1} \circ \mP^\top}_{\tilde{\vc}:=} \circ \mP = \tilde\vc \circ \tilde\mP^\top \,.
\end{align*}
Note that $\vc^{-1}$ is $\mG^a$-preserving and thus, by Lemma~\ref{lemma:pre_eq}, $\tilde\vc$ is $\mP\mG^a$-preserving, i.e. $\tilde\mG^a$-preserving.
Hence, $\eqcon^\va$ is symmetric.

\textbf{Transitivity.} Suppose $\vtheta \eqcon^\va \tilde \vtheta$ and $\tilde\vtheta \eqcon^\va \hat\vtheta$. This means
\begin{align}
    \mP_1\mG^a &= \tilde \mG^a\,, \label{eq:G_rel_1} \\
    \vf(\sR^{d_z}) &= \tilde\vf(\sR^{d_z}),  \text{and}\\
    \tilde p(\vz^{t} \mid \vz^{<{t}}, \va^{<{t}}) &= p(\bfv_1(\vz^{t}) \mid \bfv_1(\vz^{<{t}}), \va^{<{t}}) |\det D\bfv_1(\vz^{{t}}) |\, , \label{eq:26548362}
\end{align}
where $\vv_1 := \vc_1 \circ \mP^\top_1$ with $\vc_1$ being $\mG^a$-preserving; and
\begin{align}
    \mP_2\tilde \mG^a &= \hat \mG^a\, \label{eq:G_rel_2},\\
    \tilde\vf(\sR^{d_z}) &= \hat\vf(\sR^{d_z}), \text{and}\\
    \hat p(\vz^{t} \mid \vz^{<{t}}, \va^{<{t}}) &= \tilde p(\bfv_2(\vz^{t}) \mid \bfv_2(\vz^{<{t}}), \va^{<{t}}) |\det D\bfv_2(\vz^{{t}}) |\, , \label{eq:43623534}
\end{align}
where $\vv_2 := \vc_2 \circ \mP^\top_2$ with $\vc_2$ being $\tilde\mG^a$-preserving.

To show that $\vtheta \eqcon^\va \hat\vtheta$, we first combine~\eqref{eq:G_rel_1} with~\eqref{eq:G_rel_2} to get
\begin{align}
    \underbrace{\mP_2\mP_1}_{\mP :=}\mG^a = \hat \mG^a \, . \label{eq:G_hatG}
\end{align}
Of course we also have that $\vf(\sR^{d_z}) = \tilde\vf(\sR^{d_z}) = \hat\vf(\sR^{d_z})$. By massaging both \eqref{eq:26548362} and \eqref{eq:43623534}, we get:
\begin{align*}
    \hat p(\vz^{t} \mid \vz^{<{t}}, \va^{<{t}}) &= p(\vv_1\circ\bfv_2(\vz^{t}) \mid \vv_1\circ\bfv_2(\vz^{<{t}}), \va^{<{t}}) |\det D(\vv_1\circ\bfv_2)(\vz^{{t}}) | \, .
\end{align*}
Define $\vv := \vv_1\circ\bfv_2$. We now want to show that $\vv$ can be written as $\vv = \vc \circ \mP^\top$ where $\vc$ is $\mG^a$-preserving. We have that
\begin{align*}
    \vv_1 \circ \vv_2 &= \vc_1 \circ \mP_1^\top \circ \vc_2 \circ \mP_2^\top \\
    &= \vc_1 \circ \underbrace{\mP_1^\top \circ \mP_2^\top}_{\mP^\top = } \circ \underbrace{\mP_2 \circ \vc_2 \circ \mP_2^\top}_{\hat\vc := } \\
    &= \vc_1 \circ \mP^\top \circ \hat\vc
\end{align*}
where, by Lemma~\ref{lemma:pre_eq}, $\hat\vc$ is $\mP_2\tilde\mG^a$-preserving, i.e. $\hat\mG^a$-preserving. We continue and get
\begin{align*}
    \vv_1 \circ \vv_2 &= \vc_1 \circ \mP^\top \circ \hat\vc \\
    &= \vc_1 \circ \underbrace{\mP^\top \circ \hat\vc \circ \mP}_{\vc' :=} \circ \mP^\top \\
    &= \vc_1 \circ \vc' \circ \mP^\top \, ,
\end{align*}
where, by Lemma~\ref{lemma:pre_eq}, $\vc'$ is $\mP^\top\hat\mG^a$-preserving, i.e. $\mG^a$-preserving (by \eqref{eq:G_hatG}). Since both $\vc_1$ and $\vc'$ are $\mG^a$-preserving, $\vc := \vc_1 \circ \vc'$ is $\mG^a$-preserving, which concludes the proof.
\end{proof}

The same can be shown for $\eqcon^\vz$ (Definition~\ref{def:z_consistent_models}).

\begin{proposition}\label{prop:z_consistence_equivalence}
The consistency relation, $\eqcon^\vz$ (Def.~\ref{def:z_consistent_models}), is an equivalence relation.
\end{proposition}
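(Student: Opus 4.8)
The plan is to follow the proof of Proposition~\ref{prop:a_consistence_equivalence} almost line for line, since $\eqcon^\vz$ shares its skeleton with $\eqcon^\va$; the only structural differences are that the graph now transforms by two-sided conjugation, $\tilde\mG^z = \mP\mG^z\mP^\top$, rather than by a single left multiplication, and that the inner diffeomorphism $\vc$ must be simultaneously $\mG^z$-preserving and $(\mG^z)^\top$-preserving. I would first record the elementary identity $\mP(\mG^z)^\top\mP^\top = (\mP\mG^z\mP^\top)^\top$, and note that the entire equivalence-up-to-diffeomorphism component of the argument — equality of images $\vf(\sR^{d_z}) = \tilde\vf(\sR^{d_z})$ and the transition-density identity written through the composed map — is identical to the $\eqcon^\va$ case and can be cited verbatim. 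This isolates the genuinely new content to the bookkeeping of the graph conjugation and of the two preservation properties, for which the key tools are Lemma~\ref{lemma:pre_eq}(3) and closure of $\mG$-preserving diffeomorphisms under composition and inversion (Proposition~\ref{prop:group_of_S_consistent_diffeo}).

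\emph{Reflexivity} is immediate: take $\mP = \mI$ and $\vv = \vc$ equal to the identity map, which is $\mG$-preserving for every $\mG$, and observe $\mG^z = \mI\mG^z\mI^\top$. For \emph{symmetry}, suppose $\vtheta \eqcon^\vz \tilde\vtheta$ via $\mP$, so $\tilde\mG^z = \mP\mG^z\mP^\top$ and $\vv = \vc\circ\mP^\top$ with $\vc$ both $\mG^z$- and $(\mG^z)^\top$-preserving. Setting $\tilde\mP := \mP^\top$ gives $\mG^z = \tilde\mP\tilde\mG^z\tilde\mP^\top$, and I would rewrite $\vv^{-1} = \mP\circ\vc^{-1} = (\mP\circ\vc^{-1}\circ\mP^\top)\circ\mP = \tilde\vc\circ\tilde\mP^\top$. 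Since $\vc^{-1}$ is $\mG^z$- and $(\mG^z)^\top$-preserving by closure under inversion, applying Lemma~\ref{lemma:pre_eq}(3) separately to $\mG^z$ and to $(\mG^z)^\top$ shows that $\tilde\vc := \mP\circ\vc^{-1}\circ\mP^\top$ is $\mP\mG^z\mP^\top = \tilde\mG^z$-preserving and $\mP(\mG^z)^\top\mP^\top = (\tilde\mG^z)^\top$-preserving, as required.

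\emph{Transitivity} carries the real bookkeeping. Given $\vtheta\eqcon^\vz\tilde\vtheta$ through $\mP_1$ and $\tilde\vtheta\eqcon^\vz\hat\vtheta$ through $\mP_2$, I set $\mP := \mP_2\mP_1$ and verify $\hat\mG^z = \mP_2\tilde\mG^z\mP_2^\top = \mP\mG^z\mP^\top$. For the map I would compose $\vv := \vv_1\circ\vv_2 = \vc_1\circ\mP_1^\top\circ\vc_2\circ\mP_2^\top$ and regroup it first as $\vc_1\circ\mP^\top\circ\hat\vc$, with $\hat\vc := \mP_2\circ\vc_2\circ\mP_2^\top$, and then as $\vc_1\circ\vc'\circ\mP^\top$, with $\vc' := \mP^\top\circ\hat\vc\circ\mP$, exactly as in the $\eqcon^\va$ proof. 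Two applications of Lemma~\ref{lemma:pre_eq}(3) then give that $\hat\vc$ is $\hat\mG^z$- and $(\hat\mG^z)^\top$-preserving and, subsequently, that $\vc'$ is $\mP^\top\hat\mG^z\mP = \mG^z$- and $(\mG^z)^\top$-preserving; finally $\vc := \vc_1\circ\vc'$ inherits both preservation properties because $\mG^z$-preserving (respectively $(\mG^z)^\top$-preserving) diffeomorphisms form a group under composition. I expect the main obstacle to be purely notational: keeping the transposed graph $(\mG^z)^\top$ and the two-sided conjugations straight at each substitution, and confirming that Lemma~\ref{lemma:pre_eq}(3), stated for a single square graph, is legitimately invoked on $\mG^z$ and on $(\mG^z)^\top$ separately — equivalently, on the concatenation $[\mG^z\ (\mG^z)^\top]$, whose preservation is equivalent to preserving each block, as already used in the proof of Theorem~\ref{thm:nonparam_dis_z}.
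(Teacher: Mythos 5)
Your proof is correct and follows exactly the route the paper intends: the paper's own proof of this proposition merely states that the argument is ``exactly analogous'' to that of Proposition~\ref{prop:a_consistence_equivalence}, with every statement of the form $\mP\mG^a = \tilde\mG^a$ replaced by $\mP\mG^z\mP^\top = \tilde\mG^z$ and ``$\mG^a$-preserving'' replaced by ``$\mG^z$- and $(\mG^z)^\top$-preserving'', leaving the details as an exercise. Your write-up is precisely that exercise carried out correctly, including the two genuinely new ingredients: substituting part~3 of Lemma~\ref{lemma:pre_eq} (conjugation) for part~2 (left multiplication), and applying the preservation and group-closure arguments (Proposition~\ref{prop:group_of_S_consistent_diffeo}) separately to $\mG^z$ and $(\mG^z)^\top$.
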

\begin{proof}
    The proof is exactly analogous to the proof that $\eqcon^\va$ is an equivalence relation. Essentially, every statement of the form ``$\mP\mG^a = \tilde\mG^a$'' becomes ``$\mP\mG^z\mP^\top = \tilde\mG^z$'' and statements of the form ``$\vc$ is $\mG^a$-preserving'' becomes ``$\vc$ is $\mG^z$-preserving and $(\mG^z)^\top$-preserving''. The full proof is left as an exercise to the reader.
\end{proof}

\subsubsection{Combining Equivalence Relations}\label{app:combining_eq_rel}

\CombineEquivalences*
\begin{proof}
    The ``only if'' part of the statement is trivial. We now show the ``if" part.
    
    Let $\vv := \vf^{-1} \circ \tilde\vf$. Since $\vtheta \eqcon^\va \tilde\vtheta$, we have that $\tilde\mG^a = \mP\mG^a$ and $\vv = \vc \circ \mP^\top$ where $\mP$ a permutation matrix and $\vc$ is $\mG^a$-preserving. Since $\vtheta \eqcon^\vz \tilde\vtheta$, we have that $\tilde\mG^z = \bar\mP\mG^z\bar\mP^\top$ and $\vv = \bar\vc \circ \bar\mP^\top$ where $\bar\mP$ is a permutation matrix and $\bar\vc$ is $\mG^z$-preserving and $(\mG^z)^\top$-preserving. Let $\mC$ and $\bar\mC$ be the dependency graphs $\vc$ and $\bar\vc$, respectively.

    Choose an arbitrary $\vz$. Since $D\vc(\vz)$ is invertible, Lemma~\ref{lemma:L_perm} implies that there exists a permutation $\mP_0$ such that $\mP_0^\top \subseteq D\vv(\vz)$, which in turn implies that $\mP_0^\top \subseteq \bar\mC$. Because $\mP_0^\top \subseteq \bar\mC$, we have that $\mP^\top_0$ is $\mG^z$-preserving and $(\mG^z)^\top$-preserving (Proposition~\ref{prop:charac_G_preserving}). By closure under composition and inversion, $\bar\vc\circ\mP_0$ is $\mG^z$- and $(\mG^z)^\top$-preserving. 

    Note that 
    \begin{align*}
        \vc \circ \mP^\top &= \bar\vc \circ \bar\mP^\top \\
        \implies \mC\mP^\top &= \bar\mC\bar\mP^\top \\
        \mC\mP^\top\bar\mP &= \bar\mC \supseteq \mP_0^\top \\
        \implies \mC &\supseteq \mP_0^\top\bar\mP^\top\mP\,.
    \end{align*}
    This means the permutation $\mP_0^\top\bar\mP^\top\mP$ must be $\mG^a$-preserving since $\mC$ is (Proposition~\ref{prop:charac_G_preserving}).
    
    This further implies that $\mC\mP^\top\bar\mP \mP_0 = \bar\mC\mP_0$ is $\mG^a$-preserving by closure under multiplication (recall $\mC$ is $\mG^a$-preserving too). Hence $\bar\vc \circ \mP_0$ is $\mG^a$-preserving.

    We thus have that $\vv = (\bar \vc \circ \mP_0)(\bar\mP\mP_0)^\top$ where $(\bar \vc \circ \mP_0)$ is $\mG^a$-, $\mG^z$- and $(\mG^z)^\top$-preserving. The only thing left to show is that $(\bar\mP\mP_0)\mG^a = \tilde\mG^a$ and that $(\bar\mP\mP_0)\mG^z(\bar\mP\mP_0)^\top = \tilde\mG^z$. The former holds since
    $$(\bar\mP\mP_0)\mG^a = \mP(\mP^\top\bar\mP\mP_0)\mG^a = \mP\mG^a = \tilde\mG^a \, ,$$
    where the second equality leverages the fact that $\mP^\top\bar\mP \mP_0$ is $\mG^a$-preserving. Furthermore,
    $$(\bar\mP\mP_0)\mG^z(\bar\mP\mP_0)^\top = \bar\mP\mG^z(\bar\mP\mP_0)^\top = \bar\mP\mG^z\mP_0^\top\bar\mP^\top = \bar\mP(\mP_0(\mG^z)^\top)^\top\bar\mP^\top = \bar\mP\mG^z\bar\mP^\top = \tilde\mG^z\, ,$$
    where the first and fourth equalities leveraged the fact that $\mP_0$ is $\mG^z$- and $(\mG^z)^\top$-preserving.
\end{proof}

\subsection{Technical Lemmas Leading to Theorems~\ref{thm:nonparam_dis_cont_a}, \ref{thm:nonparam_dis_disc_a} \& \ref{thm:nonparam_dis_z}}\label{app:technical_lemmas_ident}
The goal of this section is to introduce and prove Lemma~\ref{lem:same_permutation_almost} which was crucial in proofs of Theorems~\ref{thm:nonparam_dis_cont_a}, \ref{thm:nonparam_dis_disc_a} \& \ref{thm:nonparam_dis_z}. To prove it, we need a few more results, which we present next.

The following two lemmas are standard, but we provide them with proofs for completeness.
\begin{lemma}\label{lemma:extend_continuity_to_closure}
    Let $f: \sR^{n} \rightarrow \sR$ be continuous and $A \subseteq \sR^n$. If, for all $\vx \in A$, $\vf(\vx) = 0$, then the equality holds on $\overline{A}$.
\end{lemma}
\begin{proof}
    We have that $A \subseteq \vf^{-1}(\{0\})$. Since $\{0\}$ is closed, $\vf^{-1}(\{0\})$ is also closed by continuity of $f$. This means $\overline{A} \subseteq \vf^{-1}(\{0\})$ (since the closure of $\mA$ is the smallest closed set containing $\mA$).  
\end{proof}
\begin{lemma}\label{lemma:closure_almost}
    Let $\mu$ be the Lebesgue measure on $\sR^{d_z}$ and let $E_0 \subseteq \sR^{d_z}$ be a zero measure set, i.e. $\mu(E_0) = 0$. Then, $\overline{\sR^{d_z} \setminus E_0} = {\sR^{d_z}}$.
\end{lemma}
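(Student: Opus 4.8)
The plan is to establish the two inclusions separately. The inclusion $\overline{\sR^{d_z} \setminus E_0} \subseteq \sR^{d_z}$ is immediate, since every subset of $\sR^{d_z}$ is contained in $\sR^{d_z}$ and taking closure keeps us inside $\sR^{d_z}$. All the content is therefore in the reverse inclusion $\sR^{d_z} \subseteq \overline{\sR^{d_z} \setminus E_0}$, which I would prove pointwise.

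First I would fix an arbitrary point $\vx_0 \in \sR^{d_z}$ and show that it belongs to the closure by verifying the neighborhood characterization: every open set $U$ containing $\vx_0$ must intersect $\sR^{d_z} \setminus E_0$. I would argue by contradiction, assuming there is an open set $U$ with $\vx_0 \in U$ and $U \cap (\sR^{d_z} \setminus E_0) = \emptyset$, which is equivalent to $U \subseteq E_0$.

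The key step is that a nonempty open subset of $\sR^{d_z}$ has strictly positive Lebesgue measure. Indeed, since $U$ is open and contains $\vx_0$, it contains an open ball $B(\vx_0, r)$ for some $r > 0$, and such a ball has positive Lebesgue measure. Hence $\mu(U) > 0$. On the other hand, monotonicity of $\mu$ combined with $U \subseteq E_0$ gives $\mu(U) \leq \mu(E_0) = 0$, a contradiction. Thus no such $U$ exists, meaning every open neighborhood of $\vx_0$ meets $\sR^{d_z} \setminus E_0$, so $\vx_0 \in \overline{\sR^{d_z} \setminus E_0}$. Since $\vx_0$ was arbitrary, the reverse inclusion holds, completing the proof.

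There is no real obstacle here: the only ingredient beyond elementary set theory is the positivity of the Lebesgue measure of open balls, which is standard. I note that this lemma is the Lebesgue-measure specialization of the more general statement (sketched in the commented-out version of this appendix) for an arbitrary measure whose support is all of $\sR^{d_z}$; there the same argument runs with the definition of support (Definition~\ref{def:support}) playing the role of the positivity fact used above.
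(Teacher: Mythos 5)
Your proof is correct and follows essentially the same route as the paper's: both verify the neighborhood characterization of the closure at an arbitrary point and invoke the fact that nonempty open subsets of $\sR^{d_z}$ have positive Lebesgue measure, so a null set cannot swallow an open neighborhood. The only cosmetic difference is that you argue by contradiction ($U \subseteq E_0$ forces $\mu(U)=0$), whereas the paper argues directly via $\mu(U) = \mu(U \cap (\sR^{d_z} \setminus E_0)) \neq 0$.
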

\begin{proof}
    Clearly, $\overline{\sR^{d_z} \setminus E_0} \subseteq \sR^{d_z}$.

    We now show that $\sR^{d_z} \subseteq \overline{\sR^{d_z} \setminus E_0}$. Take $\vz_0 \in \sR^{d_z}$ and let $U$ be an open set of $\sR^{d_z}$ containing $\vz_0$. Every nonempty open sets have nonzero Lebesgue measure, so 
    \begin{align*}
        0 \not= \mu(U) = \mu(U \cap \sR^{d_z}) = \mu(U \cap (\sR^{d_z} \setminus E_0)) \implies U \cap (\sR^{d_z} \setminus E_0) \not= \emptyset \,.
    \end{align*}
    Since $U$ was arbitrary, this means $\vz_0 \in \overline{\sR^{d_z} \setminus E_0}$.
\end{proof}

This simple lemma will come in handy when proving Lemma~\ref{lem:same_permutation}.
\begin{lemma}\label{lemma:zero_on_diagonal}
    If a permutation $\mP$ is not $\mG$-preserving and $\mC$ is a $\mG$-preserving matrix, we have that $\mC\mP^\top$ and $\mP^\top\mC$ have a zero on their diagonal. 
    \end{lemma}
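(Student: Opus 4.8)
The plan is to reduce everything to the combinatorial characterization of $\mG$-preserving matrices provided by Proposition~\ref{prop:charac_G_preserving}, and then read off the two diagonal entries using the fact that $\mP$ has exactly one nonzero entry in each row and column. The key observation is that a permutation matrix coincides with its own dependency graph (Example~\ref{ex:dep_graph_lin}), so Proposition~\ref{prop:charac_G_preserving} applies directly to $\mP$ as a matrix. I would apply that proposition twice: once to $\mP$ to locate a ``forbidden'' nonzero entry, and once to $\mC$ to force the matching entry to vanish.

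First I would unpack the hypothesis that $\mP$ is \emph{not} $\mG$-preserving. By the contrapositive of Proposition~\ref{prop:charac_G_preserving}, this means there exist indices $i,j$ with $\mG_{i,\cdot}\not\subseteq\mG_{j,\cdot}$ yet $\mP_{i,j}=1$. Applying the same proposition to the $\mG$-preserving matrix $\mC$ at this very pair $(i,j)$ then gives $\mC_{i,j}=0$, precisely because $\mG_{i,\cdot}\not\subseteq\mG_{j,\cdot}$.

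It then remains to evaluate the relevant diagonal entries. Since row $i$ of $\mP$ has its unique nonzero entry at column $j$, one gets $(\mC\mP^\top)_{i,i}=\sum_k \mC_{i,k}\mP_{i,k}=\mC_{i,j}=0$, so $\mC\mP^\top$ has a zero at position $(i,i)$. Symmetrically, column $j$ of $\mP$ has its unique nonzero entry at row $i$, giving $(\mP^\top\mC)_{j,j}=\sum_k \mP_{k,j}\mC_{k,j}=\mC_{i,j}=0$, so $\mP^\top\mC$ has a zero at position $(j,j)$. This is exactly the claim.

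The proof is essentially bookkeeping, so there is no substantial obstacle; the only place to be careful is tracking which index ($i$ or $j$) indexes the vanishing diagonal entry of $\mC\mP^\top$ versus $\mP^\top\mC$, and keeping the inclusion $\mG_{i,\cdot}\not\subseteq\mG_{j,\cdot}$ oriented so that it is $\mC_{i,j}$ (and not $\mC_{j,i}$) that Proposition~\ref{prop:charac_G_preserving} forces to zero.
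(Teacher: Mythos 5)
Your proof is correct and follows essentially the same route as the paper's: both extract indices $(i,j)$ with $\mG_{i,\cdot}\not\subseteq\mG_{j,\cdot}$ and $\mP_{i,j}=1$ from the failure of $\mG$-preservation (via Proposition~\ref{prop:charac_G_preserving}), deduce $\mC_{i,j}=0$, and then compute $(\mC\mP^\top)_{i,i}=\mC_{i,j}=0$ and $(\mP^\top\mC)_{j,j}=\mC_{i,j}=0$. Your version is only slightly more explicit in citing Example~\ref{ex:dep_graph_lin} and the characterization, which the paper leaves implicit.
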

    \begin{proof}
       Assume $\mP$ is not $\mG$-preserving, hence there exists $i$, $j$ such that $\mG_{i,\cdot} \not\subseteq \mG_{j,\cdot}$, but $\mP_{i,j} = 1$. Now note that
       \begin{align*}
           (\mC\mP^\top)_{i,i} = \mC_{i, \cdot}(\mP_{i\cdot})^\top = \mC_{i, \cdot}\ve_j = \mC_{i,j} \,,
       \end{align*}
       which is equal to zero because $\mC$ is $\mG$-preserving and $\mG_{i,\cdot} \not\subseteq \mG_{j,\cdot}$. Similarly,
       \begin{align*}
           (\mP^\top\mC)_{j,j} = (\mP_{\cdot, j})^\top\mC_{\cdot, j} = \ve_i^\top \mC_{\cdot, j} = \mC_{i,j} = 0 \, .
       \end{align*}
       which concludes the proof.
    \end{proof}

The following lemma is the same as Lemma~\ref{lem:same_permutation_almost} which is used to prove Theorems~\ref{thm:nonparam_dis_cont_a}, \ref{thm:nonparam_dis_disc_a} \& \ref{thm:nonparam_dis_z}, except it does not take into account the ``almost everywhere'' subtlety. Lemma~\ref{lem:same_permutation_almost} will extend it to deal with this difficulty.

\begin{lemma}\label{lem:same_permutation}
    Let $\mG \in \{0,1\}^{m \times n}$, let $\gZ$ be a connected subset of some topological space and let $\mL: \gZ \rightarrow \sR^{m\times m}$ be a continuous function such that $\mL(\vz)$ is invertible for all $\vz \in \gZ$. Suppose that, for all $\vz \in \gZ$, there exists a permutation matrix $\mP(\vz)$ such that $\mL(\vz)\mP(\vz)$ is $\mG$-preserving. Then, there exists a permutation matrix $\mP$ such that, for all $\vz \in \gZ$, $\mL(\vz)\mP$ is $\mG$-preserving.
\end{lemma}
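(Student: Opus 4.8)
The plan is to fix a base point of $\gZ$ and show that one single permutation works on a subset of $\gZ$ that is both open and closed, and then invoke connectedness. Before doing so, I would first strengthen the hypothesis so that the per-point permutations can be taken to produce a \emph{nonzero diagonal}. Indeed, given any permutation $\mP'$ with $\mL(\vz)\mP'$ being $\mG$-preserving, this matrix is invertible, so Lemma~\ref{lemma:L_perm} supplies a permutation $\mR$ with $\mL(\vz)\mP'\mR$ having no zero on its diagonal. Writing $\mC := \mL(\vz)\mP'$ (which is $\mG$-preserving) and noting $\mC\mR = \mC(\mR^\top)^\top$, the contrapositive of Lemma~\ref{lemma:zero_on_diagonal} forces $\mR^\top$, hence $\mR$ itself by closure under inversion (Proposition~\ref{prop:group_of_S_consistent}), to be $\mG$-preserving. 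Thus $\mL(\vz)(\mP'\mR)$ is $\mG$-preserving with nonzero diagonal, so I may assume throughout that each $\mP(\vz)$ has both properties.

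Next, fix $\vz_0 \in \gZ$, set $\mP := \mP(\vz_0)$, and define $C := \{\vz \in \gZ : \mL(\vz)\mP \text{ is } \mG\text{-preserving}\}$, which contains $\vz_0$ and is therefore nonempty. Recalling that a matrix $\mM$ is $\mG$-preserving precisely when $\mM_{i,j}=0$ for every pair $(i,j)$ with $\mG_{i,\cdot}\not\subseteq\mG_{j,\cdot}$ (Definition~\ref{def:g_preserving_mat} combined with Proposition~\ref{prop:charac_G_preserving}), membership in $C$ amounts to the vanishing of finitely many continuous functions $\vz \mapsto (\mL(\vz)\mP)_{i,j}$; hence $C$ is closed in $\gZ$.

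The main work (and the crux of the argument) is showing that $C$ is \emph{open}. Take $\vz_1 \in C$ and write $\mP_1 := \mP(\vz_1)$, so that $\mL(\vz_1)\mP_1$ is $\mG$-preserving with nonzero diagonal. By continuity of $\mL$, the matrix $\mL(\vz)\mP_1$ retains a nonzero diagonal on some neighborhood $U$ of $\vz_1$. For $\vz \in U$, the per-point permutation gives $\mC(\vz) := \mL(\vz)\mP(\vz)$ which is $\mG$-preserving, and decomposing $\mL(\vz)\mP_1 = \mC(\vz)\,[\mP(\vz)^\top\mP_1]$, the contrapositive of Lemma~\ref{lemma:zero_on_diagonal} (the product has no zero on its diagonal while $\mC(\vz)$ is $\mG$-preserving) forces the permutation $\mP(\vz)^\top\mP_1$ to be $\mG$-preserving; hence $\mL(\vz)\mP_1$ is $\mG$-preserving on all of $U$. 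Finally, since $\vz_1 \in C$ gives that both $\mL(\vz_1)\mP$ and $\mL(\vz_1)\mP_1$ are $\mG$-preserving, the permutation $\mP_1^\top\mP = (\mL(\vz_1)\mP_1)^{-1}(\mL(\vz_1)\mP)$ is $\mG$-preserving by closure under products and inverses. Therefore $\mL(\vz)\mP = (\mL(\vz)\mP_1)(\mP_1^\top\mP)$ is $\mG$-preserving for every $\vz \in U$, so $U \subseteq C$ and $C$ is open.

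Having established that $C$ is nonempty, open, and closed in the connected set $\gZ$, I conclude $C = \gZ$, i.e.\ $\mL(\vz)\mP$ is $\mG$-preserving for all $\vz \in \gZ$, which is exactly the claim. I expect the openness step to be the genuine obstacle, as it is the only place where the a priori different per-point permutations must be reconciled into one. The whole argument hinges on using the nonvanishing of the diagonal as a continuity-stable certificate — through Lemma~\ref{lemma:zero_on_diagonal} — that a candidate permutation is $\mG$-preserving.
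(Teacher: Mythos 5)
Your proof is correct, and while it rests on the same three ingredients as the paper's proof --- Lemma~\ref{lemma:L_perm}, Lemma~\ref{lemma:zero_on_diagonal}, and the group structure of invertible $\mG$-preserving matrices (Proposition~\ref{prop:group_of_S_consistent}) --- it packages the connectedness argument in a genuinely different way. The paper pushes connectedness into matrix space: it observes that $\mL(\gZ)$ is connected, introduces an equivalence relation on permutations ($\pi \sim \pi'$ iff $\mP_\pi\mP_{\pi'}^\top$ is $\mG$-preserving), and shows that the invertible matrices of the form ($\mG$-preserving)$\,\times\,$(permutation) split into finitely many disjoint clopen pieces, one per equivalence class, so the connected image must lie in a single piece. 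You instead stay in the domain: after normalizing the pointwise permutations so that each $\mL(\vz)\mP(\vz)$ has a nonzero diagonal, you show that the set $C \subseteq \gZ$ on which one fixed permutation works is nonempty, closed (finitely many vanishing conditions, via the characterization in Proposition~\ref{prop:charac_G_preserving}), and open (the local step in which the nonvanishing diagonal --- a condition stable under perturbation of $\vz$ --- certifies through Lemma~\ref{lemma:zero_on_diagonal} that the two candidate permutations differ by a $\mG$-preserving one). Your route is somewhat more elementary: it dispenses with the equivalence relation, the explicit partition of matrix space, and even the finiteness of the permutation group (the paper needs finiteness to argue each piece is open; your openness step is purely local). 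The paper's version, in exchange, makes explicit a reusable structural fact, namely the clopen decomposition of the set of sparse-times-permutation matrices. One small point worth tightening in your openness step: Lemma~\ref{lemma:zero_on_diagonal} applied to $\mC(\vz)\,[\mP(\vz)^\top\mP_1]$ directly yields that $[\mP(\vz)^\top\mP_1]^\top$ is $\mG$-preserving, and you then need closure under inversion to pass to $\mP(\vz)^\top\mP_1$ itself --- the same one-line detour you spelled out in your normalization paragraph, so this is a gloss rather than a gap.
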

\begin{proof}
    The goal of this lemma is to show that in the statement above, one can change the order of the ``for all $\vz \in \gZ$'' and ``there exists a permutation''. To do that, we show that if $\gZ$ is connected and the map $\mL(\cdot)$ is continuous, then one can find a single permutation that works for all $\vz \in \gZ$. 
    
    Let $\gG$ be the set of $\mG^a$-preserving matrices. Recall that, by Proposition~\ref{prop:charac_G_preserving}, $\gG$ corresponds to all matrices that have some set of entries equal to zero. 

    First, since $\gZ$ is connected and $\mL$ is continuous, its image, $\mL(\gZ)$, must be connected (by \cite[Theorem 23.5]{Munkres2000Topology}). 

    Second, from the hypothesis of the lemma, we know that 
    \begin{align*}
        \mL(\gZ) \subseteq \gL := \left(\bigcup_{\pi \in \mathfrak{S}_m} \gG\mP_\pi \right) \setminus \{\text{singular matrices}\} \,,
    \end{align*}
    where $\mathfrak{S}_m$ is the set of permutations and $\gG\mP_\pi = \{\mL\mP_\pi \mid \mL \in \gG\}$. We can rewrite the set $\gL$ above as 
    \begin{align*}
       \gL = \left(\bigcup_{\pi \in \mathfrak{S}_m} \gG\mP_\pi \setminus \{\text{singular matrices} \}\right) \,.
    \end{align*}

    We now define an equivalence relation $\sim$ over permutations: $\pi \sim \pi'$ iff $\mP_{\pi}\mP_{\pi'}^\top$ is $\mG$-preserving. One can verify that the relation $\sim$ is indeed an equivalence relation by using the fact that invertible $\mG$-preserving matrices form a group (Proposition~\ref{prop:group_of_S_consistent}). We notice that
    \begin{align}
        \pi \sim \pi' \implies \gG = \gG\mP_{\pi}\mP_{\pi'}^\top \implies \gG\mP_{\pi'} = \gG\mP_{\pi} \, , \label{eq:same_set}
    \end{align}
    where the first implication holds because $\mG$-preserving matrices are closed under matrix multiplication (Proposition~\ref{prop:group_of_S_consistent}). Let $\mathfrak{S}_m / \sim$ be the set of equivalence classes induce by $\sim$ and let $\Pi$ stand for one such equivalence class. Thanks to~\eqref{eq:same_set}, we can define, for all $\Pi \in \mathfrak{S}_m / \sim$, the following set:
    \begin{align*}
        V_\Pi := \gG\mP_{\pi} \setminus \{\text{singular matrices}\},\ \text{for some $\pi \in \Pi$}\,,
    \end{align*}
    where the specific choice of $\pi \in \Pi$ is arbitrary (any $\pi' \in \Pi$ would yield the same definition, by~\eqref{eq:same_set}). 
    This construction allows us to write 
    \begin{align*}
        \gL = \bigcup_{\Pi \in \mathfrak{S}_m / \sim} V_\Pi \,.
    \end{align*}
    We now show that $\{V_\Pi\}_{\Pi \in \mathfrak{S}_m / \sim}$ forms a partition of $\gL$. Choose two distinct equivalence classes of permutations $\Pi$ and $\Pi'$ and let $\pi \in \Pi$ and $\pi' \in \Pi'$ be representatives. We will now prove that
    \begin{align}
        \gG\mP_\pi \cap \gG\mP_{\pi'} \subseteq \{\text{singular matrices}\}\, . \label{eq:w4554325}
    \end{align}
    To achieve this, we proceed by contradiction: Suppose there exists an invertible matrix $\mA \in \gG\mP_\pi \cap \gG\mP_{\pi'}$. By Lemma~\ref{lemma:L_perm}, there exists a permutation $\mP$ s.t. $\mA\mP^\top$ has no zero on its diagonal. Of course, the permutation $\mP$ belongs to only one equivalence class and thus either $\mP \not\in \Pi$ or $\mP \not\in \Pi'$. Without loss of generality, assume the former. We thus have that $\mP \not\sim \pi$ and thus $\mP_{\pi}\mP^\top$ is not $\mG$-preserving. We can thus write
    \begin{align*}
        \mA \in \gG\mP_\pi \cap \gG\mP_{\pi'} \implies \mA\mP^\top \in \gG\mP_\pi\mP^\top \cap \gG\mP_{\pi'}\mP^\top\, . 
    \end{align*}
    By Lemma~\ref{lemma:zero_on_diagonal}, all matrices in $\gG\mP_\pi\mP^\top$ have a zero on their diagonal. This is a contradiction with $\mA\mP^\top \in \gG\mP_\pi\mP^\top$, since, as we said, $\mA\mP^\top$ has no zero on its diagonal. We thus conclude that no invertible matrix is in the intersection $\gG\mP_\pi \cap \gG\mP_{\pi'}$ and thus \eqref{eq:w4554325} holds. 
    
    We thus have that 
    \begin{align*}
        V_\Pi \cap V_{\Pi'} = \emptyset \,,
    \end{align*}
    which shows that $\{V_\Pi\}_{\Pi \in \mathfrak{S}_m / \sim}$ is indeed a partition of $\gL$.

    Each $V_\Pi$ is closed in $\gL$ (w.r.t. the subset topology inherited from $\sR^{m\times m}$) since
    \begin{align*}
        V_\Pi = \gG\mP_{\pi} \setminus \{\text{singular matrices}\} = \gL \cap \underbrace{\gG\mP_{\pi}}_{\text{closed in $\sR^{m\times m}$}}.
    \end{align*}

    Moreover, $V_\Pi$ is open in $\gL$, since
    \begin{align*}
        V_\Pi = \gL \setminus \underbrace{\bigcup_{\Pi' \not= \Pi} V_{\Pi'}}_{\text{closed in $\gL$}}\,.
    \end{align*}
    Thus, for any $\Pi \in \mathfrak{S}(\gB) / \sim$, the sets $V_\Pi$ and $\bigcup_{\Pi' \not= \Pi} V_{\Pi'}$ forms a \textit{separation} (see \cite[Section 23]{Munkres2000Topology}). Since $\mL(\gZ)$ is a connected subset of $\gL$, it must lie completely in $V_\Pi$ or $\bigcup_{\Pi' \not= \Pi} V_{\Pi'}$, by \cite[Lemma 23.2]{Munkres2000Topology}. Since this is true for all $\Pi$, it must follow that there exists a $\Pi^*$ such that $\mL(\gZ) \subseteq V_{\Pi^*}$. Choose any representative $\mP_* \in \Pi^*$. We thus have that, for all $\vz \in \gZ$, $\mL(\vz) = \mC(\vz)\mP_*^\top$, where $\mC(\vz)$ is $\mG$-preserving, which completes the proof.
\end{proof}

The goal of the next result is to relax the conditions of Lemma~\ref{lem:same_permutation} so that $\mL(\vz)\mP(\vz)$ is $\mG$-preserving for \textit{almost all} $\vz \in \sR^{d_z}$. 

\begin{lemma}\label{lem:same_permutation_almost}
    Let $\mG \in \{0,1\}^{m \times n}$ and let $\mL: \sR^{d_z} \rightarrow \sR^{m\times m}$ be a continuous function such that $\mL(\vz)$ is invertible for all $\vz \in \sR^{d_z}$. Suppose that, for almost all $\vz \in \sR^{d_z}$ (i.e. except on a set $E_0$ of Lebesgue measure zero), there exists a permutation matrix $\mP(\vz)$ such that $\mL(\vz)\mP(\vz)$ is $\mG$-preserving. Then, there exists a permutation matrix $\mP$ such that, for all $\vz \in \sR^{d_z}$, $\mL(\vz)\mP$ is $\mG$-preserving.
\end{lemma}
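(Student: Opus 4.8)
The plan is to reduce the ``almost everywhere'' hypothesis to the ``everywhere'' hypothesis of Lemma~\ref{lem:same_permutation} by a closedness argument, and then invoke that lemma directly, using that $\sR^{d_z}$ itself is connected. The naive attempt---restricting $\mL$ to the set $\sR^{d_z} \setminus E_0$ on which the assumption holds and applying Lemma~\ref{lem:same_permutation} there---fails, because deleting a measure-zero set can disconnect $\sR^{d_z}$ (e.g.\ removing a single point when $d_z = 1$), so the connectedness required by Lemma~\ref{lem:same_permutation} is lost. This is the main obstacle, and the observation that circumvents it is that the set of points admitting \emph{some} good permutation is in fact closed, hence, being dense, must be all of $\sR^{d_z}$.

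Concretely, first I would, for each permutation matrix $\mP$, define $G_\mP := \{\vz \in \sR^{d_z} \mid \mL(\vz)\mP \text{ is } \mG\text{-preserving}\}$. By the characterization in Proposition~\ref{prop:charac_G_preserving}, the matrix $\mL(\vz)\mP$ is $\mG$-preserving exactly when $(\mL(\vz)\mP)_{i,j} = 0$ for every pair $(i,j)$ with $\mG_{i,\cdot} \not\subseteq \mG_{j,\cdot}$. Each map $\vz \mapsto (\mL(\vz)\mP)_{i,j}$ is continuous, since $\mL$ is continuous and $\mP$ acts linearly, so $G_\mP$ is a finite intersection of zero-sets of continuous functions and thus closed.

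Second, the hypothesis states that $\sR^{d_z} \setminus E_0 \subseteq \bigcup_{\mP} G_\mP$, where the union ranges over the finitely many permutation matrices. Being a finite union of closed sets, $\bigcup_{\mP} G_\mP$ is closed. Since $E_0$ has Lebesgue measure zero, Lemma~\ref{lemma:closure_almost} gives $\overline{\sR^{d_z} \setminus E_0} = \sR^{d_z}$; as a closed set containing $\sR^{d_z} \setminus E_0$ must contain its closure, we get $\bigcup_{\mP} G_\mP = \sR^{d_z}$. In words, for \emph{every} $\vz \in \sR^{d_z}$ there now exists a permutation matrix $\mP(\vz)$ with $\mL(\vz)\mP(\vz)$ being $\mG$-preserving.

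Finally, this is precisely the ``everywhere'' hypothesis of Lemma~\ref{lem:same_permutation} instantiated with $\gZ = \sR^{d_z}$, which is connected, together with $\mL$ continuous and invertible on all of $\sR^{d_z}$ as assumed in the statement. Invoking Lemma~\ref{lem:same_permutation} then yields a single permutation matrix $\mP$ such that $\mL(\vz)\mP$ is $\mG$-preserving for all $\vz \in \sR^{d_z}$, which is the desired conclusion.
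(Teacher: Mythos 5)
Your proof is correct and follows essentially the same route as the paper's: both reduce the almost-everywhere hypothesis to an everywhere statement via a continuity/closedness argument (the paper partitions $\sR^{d_z}\setminus E_0$ by the selected permutation and extends to closures via Lemma~\ref{lemma:extend_continuity_to_closure}, while you equivalently observe that each set $G_\mP$ is closed as a finite intersection of zero-sets), invoke Lemma~\ref{lemma:closure_almost} for density, and then conclude with Lemma~\ref{lem:same_permutation} on the connected space $\sR^{d_z}$. The only difference is cosmetic; your formulation with overlapping closed sets $G_\mP$ is, if anything, slightly more streamlined than the paper's partition-and-closure bookkeeping.
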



\begin{proof}
    We know that for all $\vz \in \sR^{d_z} \setminus E_0$, where $\mu(E_0) = 0$ (Lebesgue measure zero), there exists a permutation matrix $\mP(\vz)$ such that $\mL(\vz)\mP(\vz)$ is $\mG$-preserving. For all permutations $\mP$, define $\gZ^{(\mP)} := \{\vz\in \sR^{d_z} \setminus E_0 \mid \mP(\vz) = \mP\}$. The collection of all sets $\gZ^{(\mP)}$ is finite (since there are finitely many permutations) and forms a partition of $\sR^{d_z} \setminus E_0$. Of course, for all $\mP$, $\mL(\vz)\mP$ is $\mG$-preserving for all $\vz \in \gZ^{(\mP)}$. By Lemma~\ref{lemma:extend_continuity_to_closure}, we can extend this statement to the closure, i.e. for all $\vz \in \overline{\gZ^{(\mP)}}$, $\mL(\vz)\mP$ is $\mG$-preserving. 
    
    Furthermore, we have that $\bigcup_{\mP}\overline{\gZ^{(\mP)}} = \overline{\bigcup_{\mP}\gZ^{(\mP)}} = \overline{\sR^{d_z} \setminus E_0} = \sR^{d_z}$, where the first equality is a standard property of closure (which holds only for finite unions), and the last equality holds by Lemma~\ref{lemma:closure_almost}. We thus have that, for all $\vz \in \sR^{d_z}$, there exists a permutation $\mP(\vz)$ such that $\mL(\vz)\mP(\vz)$ is $\mG$-preserving. Since $\sR^{d_z}$ is connected we can apply Lemma~\ref{lem:same_permutation} to get the desired conclusion.
\end{proof}

\subsection{Connecting to the Graphical Criterion of~\citet{lachapelle2022disentanglement}} \label{sec:connect_lachapelle2022}
The goal of this section is to prove Proposition~\ref{prop:complete_disentanglement} which states if some graphical criterion holds (Assumption~\ref{def:graph_crit}), then $\vtheta \eqcon^{\vz,\va} \hat\vtheta$ implies $\vtheta \eqperm \hat\vtheta$, i.e. complete disentanglement. We recall Assumption~\ref{def:graph_crit}. 

\GraphCrit*

We note that the above assumption is slightly different from the original one from~\citet{lachapelle2022disentanglement}, since the intersections run over ${\bf Ch}_i^z$, ${\bf Pa}_i^z$ and ${\bf Pa}^\va_i$ instead of over some sets of indexes $\gI, \gJ \subset \{1, ..., d_z\}$ and $\gL \subset \{1, ..., d_a\}$. This slightly simplified criterion is equivalent to the original one, which we now demonstrate for the interested reader.

\begin{proposition}
Let $\mG = [\mG^z\ \mG^a] \in \{0,1\}^{d_z \times (d_z + d_a)}$. The criterion of Assumption~\ref{def:graph_crit} holds for $\mG$ if and only if the following holds for $\mG$: For all $i \in \{1, ..., d_z\}$, there exist sets $\gI, \gJ \subset \{1, ..., d_z\}$ and $\gL \subset \{1, ..., d_a\}$ such that
    \begin{align}
        \left( \bigcap_{j \in \gI} {\bf Pa}^\vz_j \right) \cap \left(\bigcap_{j \in \gJ} {\bf Ch}^\vz_j \right) \cap \left(\bigcap_{\ell \in \gL} {\bf Ch}^\va_\ell \right)  = \{i\} \, , \nonumber
    \end{align}
\end{proposition}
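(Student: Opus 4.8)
The plan is to prove the two implications separately, exploiting the observation that the canonical index sets appearing in Assumption~\ref{def:graph_crit} --- namely ${\bf Ch}^\vz_i$, ${\bf Pa}^\vz_i$ and ${\bf Pa}^\va_i$ --- are precisely the \emph{largest} index sets for which $i$ belongs to the relevant intersection. First I would record the elementary adjacency equivalences that follow immediately from the definitions of parents and children: for all $i,j \in [d_z]$ and $\ell \in [d_a]$,
\begin{align}
    j \in {\bf Ch}^\vz_i \iff i \in {\bf Pa}^\vz_j\,, \qquad j \in {\bf Pa}^\vz_i \iff i \in {\bf Ch}^\vz_j\,, \qquad \ell \in {\bf Pa}^\va_i \iff i \in {\bf Ch}^\va_\ell\,. \nonumber
\end{align}
An immediate consequence is that $i$ \emph{always} lies in the intersection of Assumption~\ref{def:graph_crit}: for every $j \in {\bf Ch}^\vz_i$ we have $i \in {\bf Pa}^\vz_j$, hence $i \in \bigcap_{j \in {\bf Ch}^\vz_i}{\bf Pa}^\vz_j$, and similarly for the other two factors. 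Thus the simplified intersection always contains $\{i\}$, and the criterion reduces to establishing the reverse inclusion.

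The ``only if'' direction (simplified $\Rightarrow$ general) is the easy one: I would simply set $\gI := {\bf Ch}^\vz_i$, $\gJ := {\bf Pa}^\vz_i$ and $\gL := {\bf Pa}^\va_i$. With this choice the general intersection coincides verbatim with the simplified one, which equals $\{i\}$ by hypothesis, so the required sets $\gI,\gJ,\gL$ exist.

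For the ``if'' direction (general $\Rightarrow$ simplified), suppose $\gI,\gJ,\gL$ witness the general criterion, so in particular $i$ belongs to the general intersection. Feeding this membership through the equivalences above forces $\gI \subseteq {\bf Ch}^\vz_i$, $\gJ \subseteq {\bf Pa}^\vz_i$ and $\gL \subseteq {\bf Pa}^\va_i$ (e.g.\ $i \in \bigcap_{j\in\gI}{\bf Pa}^\vz_j$ says $i \in {\bf Pa}^\vz_j$ for every $j \in \gI$, i.e.\ $j \in {\bf Ch}^\vz_i$). Since intersecting over a \emph{larger} index set can only shrink the result, these inclusions give
\begin{align}
    \Big( \bigcap_{j \in {\bf Ch}^\vz_i}{\bf Pa}^\vz_j \Big) \cap \Big(\bigcap_{j \in {\bf Pa}^\vz_i}{\bf Ch}^\vz_j \Big) \cap \Big(\bigcap_{\ell \in {\bf Pa}^\va_i}{\bf Ch}^\va_\ell\Big) \subseteq \Big( \bigcap_{j \in \gI}{\bf Pa}^\vz_j \Big) \cap \Big(\bigcap_{j \in \gJ}{\bf Ch}^\vz_j \Big) \cap \Big(\bigcap_{\ell \in \gL}{\bf Ch}^\va_\ell\Big) = \{i\}\,. \nonumber
\end{align}
Combining this with the fact (from the first paragraph) that $i$ always lies in the left-hand side yields equality with $\{i\}$, which is exactly Assumption~\ref{def:graph_crit}. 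The entire argument is set-theoretic and carries no analytic difficulty; the only delicate point is the bookkeeping of which membership statement constrains which of $\gI$, $\gJ$, $\gL$, so I expect no substantial obstacle beyond keeping the three parent/child directions straight.
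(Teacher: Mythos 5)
Your proof is correct and follows essentially the same route as the paper's: the easy direction by taking $\gI := {\bf Ch}^\vz_i$, $\gJ := {\bf Pa}^\vz_i$, $\gL := {\bf Pa}^\va_i$, and the converse by observing that any witnesses must satisfy $\gI \subseteq {\bf Ch}^\vz_i$, $\gJ \subseteq {\bf Pa}^\vz_i$, $\gL \subseteq {\bf Pa}^\va_i$, so the canonical intersection only adds sets that all contain $i$ and hence still equals $\{i\}$. Your write-up is in fact slightly more explicit than the paper's (spelling out that $i$ always lies in the canonical intersection and invoking monotonicity of intersection), but the underlying argument is identical.
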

\begin{proof}
The direction ``$\implies$'' is trivial, since we can simply choose $\gI := {\bf Ch}_i^z$, $\gJ:={\bf Pa}_i^z$ and $\gL:={\bf Pa}_i^a$.

To show the other direction, we notice that we must have $\gI \subset {\bf Ch}^\vz_i$, $\gJ \subset {\bf Pa}_i^z$ and $\gL \subset {\bf Pa}_i^a$, otherwise one of the sets in the intersection would not contain $i$, contradicting the criterion. Thus, the criterion of Def.~\ref{def:graph_crit} intersects the same sets or more sets. Moreover, these potential additional sets must contain $i$ because of the obvious facts that $j \in {\bf Ch}_i^z \iff i \in {\bf Pa}^\vz_j$ and $\ell \in {\bf Pa}_i^a \iff i \in {\bf Ch}^\va_\ell$, thus they do not change the result of the intersection.
\end{proof}

To prove Proposition~\ref{prop:complete_disentanglement}, we will need the following lemma.
\begin{lemma}\label{lemma:charac_S-consistent}
Let $\mG \in \{0,1\}^{m \times n}$ and $\vc$ be a diffeomorphism with a dependency graph given by $\mC \in \{0,1\}^{m \times m}$ (Definition~\ref{def:dep_graph}). The function $\vc$ is $\mG$-preserving (Definition~\ref{def:g_preserving_map}) if and only if 
$$\forall i, \mC_{i, \cdot} \subseteq \bigcap_{k \in \mG_{i, \cdot}}\mG_{\cdot, k} \, .$$
\end{lemma}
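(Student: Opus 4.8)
The plan is to observe that this lemma is essentially a bookkeeping reformulation of Proposition~\ref{prop:charac_G_preserving}, which already states that $\vc$ (with dependency graph $\mC$) is $\mG$-preserving if and only if $\mG_{i,\cdot} \not\subseteq \mG_{j,\cdot} \implies \mC_{i,j} = 0$ for all $i,j$. So all that remains is to verify, for each fixed row index $i$, that the displayed inclusion $\mC_{i,\cdot} \subseteq \bigcap_{k \in \mG_{i,\cdot}} \mG_{\cdot,k}$ is logically equivalent to the implication appearing in that proposition. No new analytic content is needed; the whole argument lives at the level of the index-set abuse of notation.

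First I would fix $i$ and rewrite the proposition's condition in contrapositive form: ``for all $j$, $\mG_{i,\cdot} \not\subseteq \mG_{j,\cdot} \implies \mC_{i,j} = 0$'' is the same as ``for all $j$, $\mC_{i,j} = 1 \implies \mG_{i,\cdot} \subseteq \mG_{j,\cdot}$'', i.e.\ ``$j \in \mC_{i,\cdot} \implies \mG_{i,\cdot} \subseteq \mG_{j,\cdot}$'' in the index-set notation. Then I would unfold the inclusion $\mG_{i,\cdot} \subseteq \mG_{j,\cdot}$ coordinate-wise: it says exactly that every $k$ with $\mG_{i,k} = 1$ also has $\mG_{j,k} = 1$, which in the column-support notation reads $j \in \mG_{\cdot,k}$ for every $k \in \mG_{i,\cdot}$, that is, $j \in \bigcap_{k \in \mG_{i,\cdot}} \mG_{\cdot,k}$. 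Chaining these, the row-$i$ condition becomes ``$j \in \mC_{i,\cdot} \implies j \in \bigcap_{k \in \mG_{i,\cdot}} \mG_{\cdot,k}$'', which is precisely $\mC_{i,\cdot} \subseteq \bigcap_{k \in \mG_{i,\cdot}} \mG_{\cdot,k}$. Quantifying over all $i$ then yields the claimed equivalence.

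The step I expect to demand the most care — though it is not truly an obstacle — is handling the degenerate case where row $i$ of $\mG$ is identically zero. When $\mG_{i,\cdot} = \emptyset$ the intersection $\bigcap_{k \in \mG_{i,\cdot}} \mG_{\cdot,k}$ is an empty intersection and must be read as all of $[m]$, making the displayed inclusion vacuously true; correspondingly, $\mG_{i,\cdot} \subseteq \mG_{j,\cdot}$ holds for every $j$, so Proposition~\ref{prop:charac_G_preserving} imposes no constraint on $\mC_{i,\cdot}$ either, and the two sides agree. I would state this convention explicitly so the equivalence is airtight. Everything else is a direct translation between the subset relation on rows of $\mG$ and membership in columns of $\mG$, requiring nothing beyond Proposition~\ref{prop:charac_G_preserving} (and hence, implicitly, the differentiable characterization Proposition~\ref{prop:V_and_Dv} that underlies it).
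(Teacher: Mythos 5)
Your proposal is correct and follows essentially the same route as the paper's own proof: both reduce the lemma to Proposition~\ref{prop:charac_G_preserving} and then verify the purely set-theoretic equivalence $\mG_{i,\cdot} \not\subseteq \mG_{j,\cdot} \iff j \not\in \bigcap_{k \in \mG_{i,\cdot}} \mG_{\cdot,k}$, which is exactly your contrapositive translation. Your explicit handling of the empty-row case is a small clarification the paper omits, but it does not change the argument.
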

\begin{proof}
We leverage Proposition~\ref{prop:charac_G_preserving}.    
    \begin{align}
    \mG_{i, \cdot} \not\subseteq \mG_{j, \cdot} & \iff \exists k \text{ s.t. } \mG_{i,k} = 1 \text{ and } \mG_{j, k} = 0 \label{eq:red_star}\\
    & \iff \exists k \in \mG_{i, \cdot} \text{ s.t. } j \not\in \mG_{\cdot, k} \nonumber \\
    & \iff j \not\in \bigcap_{k \in \mG_{i, \cdot}}\mG_{\cdot, k} \, . \nonumber
    \end{align}
\end{proof}

\CompleteDisentanglement*
\begin{proof}
By definition of $\eqcon^{\vz,\va}$, we know that the entanglement graph for $(\vf, \hat\vf)$ is given by $\mV = \mC\mP^\top$ where $\mP$ is a permutation and $\mC$ is a binary matrix that is $\mG^a$-preserving, $\mG^z$-preserving and $(\mG^z)^\top$-preserving. Using Lemma~\ref{lemma:charac_S-consistent}, we have that, for all $i$,
\begin{align*}
    \mC_{i, \cdot} &\subseteq {\left(\bigcap_{j \in \mG^\vz_{i, \cdot}} \mG^\vz_{\cdot, j}\right) \cap \left(\bigcap_{j \in \mG^\vz_{\cdot, i}} \mG^\vz_{j, \cdot}\right) \cap \left(\bigcap_{j \in \mG^\va_{i, \cdot}} \mG^\va_{\cdot, j}\right)}\\
    &= {\left(\bigcap_{j \in {\bf Pa}_i^z} {\bf Ch}^\vz_j \right) \cap \left( \bigcap_{j \in {\bf Ch}_i^z} {\bf Pa}^\vz_j \right) \cap \left(\bigcap_{\ell \in {\bf Pa}^\va_i} {\bf Ch}^\va_\ell \right)} \\
    &= {\{i\}} \, .
\end{align*}
Thus, $\mC$ is in fact the identity matrix, and hence $\vtheta \eqperm \hat\vtheta$.
\end{proof}

\section{Identifiability Theory - Exponential Family Case} \label{app:theory}

\subsection{Technical Lemmas and Definitions}
We recall the definition of a minimal sufficient statistic in an exponential family, which can be found in~\citet[p. 40]{WainwrightJordan08}.

\begin{definition}[Minimal sufficient statistic]\label{def:minimal_statistic}
Given a parameterized distribution in the exponential family, as in~\eqref{eq:z_transition}, we say that its sufficient statistic $\bfT_i$ is minimal when there is no $v \not= 0$ such that $v^\top\bfT_i(z)$ is constant for all $z \in \mathcal{Z}$.
\end{definition}

The following Lemma gives a characterization of minimality, which will be useful in the proof of Theorem~\ref{thm:linear}.

\begin{lemma}[Characterization of minimal $\bfT$]\label{lemma:charac_minimal}
A sufficient statistic of an exponential family distribution $\bfT: \gZ \rightarrow \sR^k$ is minimal if and only if there exists $\vz_{(0)}$, $\vz_{(1)}$, ..., $\vz_{(k)}$ belonging to the support $\gZ$ such that the following $k$-dimensional vectors are linearly independent:
\begin{align}
    \bfT(\vz_{(1)}) - \bfT(\vz_{(0)}), ..., \bfT(\vz_{(k)}) - \bfT(\vz_{(0)}) \, .\label{eq:T_vectors}
\end{align}
\end{lemma}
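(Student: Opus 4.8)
The plan is to collapse both sides of the stated equivalence onto a single geometric fact about the image $\bfT(\gZ) \subseteq \sR^k$: that its affine hull is all of $\sR^k$, equivalently that $\bfT(\gZ)$ is not contained in any affine hyperplane. First I would fix an arbitrary base point $\vz_{(0)} \in \gZ$ and introduce the linear subspace
\begin{align}
    U := \vecspan\{\bfT(\vz) - \bfT(\vz_{(0)}) \mid \vz \in \gZ\} \subseteq \sR^k \, . \nonumber
\end{align}
A short preliminary step records that $U$ does not depend on the choice of base point: for any other $\vz_{(0)}' \in \gZ$ and any $\vz$ we have $\bfT(\vz) - \bfT(\vz_{(0)}') = (\bfT(\vz) - \bfT(\vz_{(0)})) - (\bfT(\vz_{(0)}') - \bfT(\vz_{(0)}))$, so the span built from $\vz_{(0)}'$ is contained in $U$, and by symmetry the two spans coincide. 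This lets me treat $U$ as an intrinsic object attached to $\bfT$.

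Next I would establish the two bridging equivalences, both of which are elementary linear algebra. For the first: the existence of $\vz_{(1)}, \dots, \vz_{(k)}$ making the vectors $\bfT(\vz_{(r)}) - \bfT(\vz_{(0)})$ linearly independent is equivalent to $\dim U = k$, i.e.\ $U = \sR^k$. Indeed, $k$ linearly independent vectors of $U$ span $\sR^k$, and conversely any spanning family of a $k$-dimensional space contains $k$ independent members, which here are differences $\bfT(\vz_{(r)}) - \bfT(\vz_{(0)})$ (and by base-point independence the same $\vz_{(0)}$ may be used for all). For the second: unfolding Definition~\ref{def:minimal_statistic}, a nonzero $v$ with $v^\top \bfT(\vz)$ constant on $\gZ$ exists precisely when, setting $c := v^\top \bfT(\vz_{(0)})$, we have $v^\top(\bfT(\vz) - \bfT(\vz_{(0)})) = 0$ for all $\vz \in \gZ$, i.e.\ when $v \in U^\perp \setminus \{0\}$. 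Hence non-minimality is equivalent to $U^\perp \neq \{0\}$, i.e.\ to $U \neq \sR^k$; minimality is therefore equivalent to $U = \sR^k$. Chaining the two equivalences through the common condition $U = \sR^k$ yields the lemma.

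The argument is essentially free of the exponential-family structure and rests only on the definition of minimality and finite-dimensional linear algebra, so I expect no genuine obstacle; the one point requiring a moment of care is the base-point independence of $U$, which is what makes the quantifier in the statement (``there exists $\vz_{(0)}, \dots, \vz_{(k)}$'') match cleanly with the global condition $U = \sR^k$. If one prefers to avoid introducing $U$ explicitly, the same content can be phrased via Lemma~\ref{lem:lin_indep_charac}, reading the coordinate functions of $\bfT$ together with the constant function as a family whose linear independence is equivalent to the full-dimensionality of the affine hull; I would nonetheless favour the direct subspace argument above for transparency.
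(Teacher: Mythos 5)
Your proposal is correct and rests on exactly the same mathematics as the paper's proof: the span $U$ of difference vectors $\bfT(\vz)-\bfT(\vz_{(0)})$ and its orthogonal complement, with minimality failing precisely when $U^\perp \neq \{0\}$. The only difference is organizational — you chain two equivalences through the single condition $U = \sR^k$ (adding the base-point-independence observation to make the quantifiers match), whereas the paper proves the two directions separately by contradiction/contrapositive; the content is the same.
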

\textit{Proof.} We start by showing the ``if'' part of the statement. Suppose there exist $\vz_{(0)}, ..., \vz_{(k)}$ in $\gZ$ such that the vectors of \eqref{eq:T_vectors} are linearly independent. By contradiction, suppose that $\bfT$ is not minimal, i.e. there exist a nonzero vector $v$ and a scalar $b$ such that $v^\top \bfT(z) = b$ for all $z \in \gZ$. Notice that $b = v^\top \bfT(\vz_{(0)})$. Hence, $v^\top (\bfT(\vz_{(i)}) - \bfT(\vz_{(0)})) = 0$ for all $i=1, ..., k$. This can be rewritten in matrix form as
\begin{align*}
    v^\top [\bfT(\vz_{(1)}) - \bfT(\vz_{(0)})\ ...\ \bfT(\vz_{(k)}) - \bfT(\vz_{(0)})]  = 0 \, ,
\end{align*}
 which implies that the matrix in the above equation is not invertible. This is a contradiction.

We now show the ``only if '' part of the statement. Suppose that there is no $\vz_{(0)}$, ..., $\vz_{(k)}$ such that the vectors of~\eqref{eq:T_vectors} are linearly independent. Choose an arbitrary $\vz_{(0)} \in \gZ$. We thus have that $U := \text{span}\{\bfT(z) - \bfT(\vz_{(0)}) \mid z \in \gZ\}$ is a proper subspace of $\sR^k$. This means the orthogonal complement of $U$, $U^{\bot}$, has dimension 1 or greater. We can thus pick a nonzero vector $v \in U^\bot$ such that $v^\top(\bfT(z) - \bfT(\vz_0)) = 0$ for all $z \in \gZ$, which is to say that $v^\top\bfT(z)$ is constant for all $z \in \gZ$, and thus, $\bfT$ is not minimal. $\blacksquare$

\subsection{Proof of Linear Identifiability (Theorem~\ref{thm:linear})}\label{app:exp_linear_ident}

\linearExpo*

\begin{proof}
First, we apply Proposition~\ref{prop:identDiffeo} to get
\begin{align}\label{eq:density_v_e453}
\tilde p(\vz^{t} \mid \vz^{<{t}}, \va^{<{t}}) = p(\bfv(\vz^{t}) \mid \bfv(\vz^{<{t}}), \va^{<{t}}) |\det D\bfv(\vz^{{t}}) |\, .
\end{align}

\textbf{Linear relationship between $\bfT(\bff^{-1}(x))$ and $\bfT(\hat{\bff}^{-1}(x))$.}
By taking the logarithm on each side of~\eqref{eq:density_v_e453} and making explicit the exponential family form, we get
\begin{align}
    &\sum_{i=1}^{d_z} \log h_i(\vz_i^t) + \bfT_i(\vz_i^t)^\top \bflambda_i(\mG_i^z \odot \vz^{<t}, \mG_i^a \odot \va^{<t}) - \psi_i(\vz^{<t}, \va^{<t}) \label{eq:expo1}\\
    =&\sum_{i = 1}^{d_z} \log h_i(\bfv_i(\vz^t)) + \bfT_i(\bfv_i(\vz^t)))^\top \hat{\bflambda}_i(\hat{\mG}_i^z \odot \bfv(\vz^{<t}), \hat{\mG}_i^a \odot \va^{<t}) - \hat{\psi}_i(\bfv(\vz^{<t}), \va^{<t}) \nonumber\\
    &\ \ \ \ \ \ \ \ \ \ \ \ \ \ \ \ \ \ \ \ \ \ \ \ \ \ \ \ \ \ \ \ \ \ \ \ \ \ \ \ \ \ \ \ \ \ \ \ \ \ \ \ \ \ \ \ \ \ \ \ \ \ \ \ \ \ \ \ \ \ \ \ \ \ \ \ \ \ \ \ \ \ \ \ \ \ \ \ \ \  \ \ \ \ \ \ \ \ \ \ \ \ \ + \log |\det D\bfv(\vz^{{t}}) | \nonumber
\end{align}

Note that~\eqref{eq:expo1} holds for all $\vz^{<t}$ and $\va^{<t}$. In particular, we evaluate it at the points given in the assumption of sufficient variability of Theorem~\ref{thm:linear}. We evaluate the equation at $(\vz^t, \vz_{(r)}, \va_{(r)})$ and $(\vz^t, \vz_{(0)}, \va_{(0)})$ and take the difference which yields\footnote{Note that $\vz_{(0)}$ and $\vz_{(r)}$ can have different dimensionalities if they come from different time steps. It is not an issue to combine equations from different time steps, since~\eqref{eq:expo1} holds for all values of $t$, $\vz^t$, $\vz^{<t}$ and $\va^{<t}$.}
\begin{align}
    &\sum_{i = 1}^{d_z} \bfT_i(\vz_i^t)^\top [\bflambda_i({\mG}_i^z \odot \vz_{(r)}, {\mG}_i^a \odot \va_{(r)}) - \bflambda_i({\mG}_i^z \odot \vz_{(0)}, {\mG}_i^a \odot \va_{(0)})] - \psi_i(\vz_{(r)}, \va_{(r)}) + \psi_i(\vz_{(0)}, \va_{(0)}) \nonumber \\
    =&\sum_{i = 1}^{d_z} \bfT_i(\bfv_i(\vz^t))^\top [\hat{\bflambda}_i(\hat{\mG}_i^z \odot\bfv(\vz_{(r)}), \hat{\mG}_i^a \odot \va_{(r)}) - \hat{\bflambda}_i(\hat{\mG}_i^z \odot \bfv(\vz_{(0)}), \hat{\mG}_i^a \odot \va_{(0)})]  \label{eq:expo2}\\
    &\ \ \ \ \ \ \ \ \ \ \ \ \ \ \ \ \ \ \ \ \ \ \ \ \ \ \ \ \ \ \ \ \ \ \ \ \ \ \ \ \ \ \ \ \ \ \ \ \ \ \ \ \ \ \ \ \ \ \ \ \ \ \ \ \ \ \ \ \ \ \ \ \ \ - \hat{\psi}_i(\bfv(\vz_{(r)}), \va_{(r)}) + \hat{\psi}_i(\bfv(\vz_{(0)}), \va_{(0)}) \nonumber
\end{align}
We regroup all normalization constants $\psi$ into a term $d(\vz_{(r)}, \vz_{(0)}, \va_{(r)}, \va_{(0)})$ and write
\begin{align}
    &\bfT(\vz^t)^\top [\bflambda(\vz_{(r)}, \va_{(r)}) - \bflambda(\vz_{(0)}, \va_{(0)})] \nonumber\\
=&\bfT(\bfv(\vz^t))^\top [\hat{\bflambda}(\bfv(\vz_{(r)}), \va_{(r)}) - \hat{\bflambda}(\bfv(\vz_{(0)}), \va_{(0)})] + d(\vz_{(r)}, \vz_{(0)}, \va_{(r)}, \va_{(0)})\label{eq:expo3} \, .
\end{align}
Define
\begin{align*}
    \vw_{(r)} &:= \bflambda(\vz_{(r)}, \va_{(r)}) - \bflambda(\vz_{(0)}, \va_{(0)}) \\
    \hat{\vw}_{(r)} &:= \hat{\bflambda}(\bfv(\vz_{(r)}), \va_{(r)}) - \hat{\bflambda}(\bfv(\vz_{(0)}), \va_{(0)})\\
    d_{(r)} &:= d(\vz_{(r)}, \vz_{(0)}, \va_{(r)}, \va_{(0)}) \, , 
\end{align*}
which yields 
\begin{align}
    \bfT(\vz^t)^\top \vw_{(r)} = \bfT(\bfv(\vz^t))^\top \hat{\vw}_{(r)} + d_{(r)}\, . \label{eq:all_j}
\end{align}

We can regroup the $\vw_{(r)}$ into a matrix and the $d_{(r)}$ into a vector:
\begin{align*}
    \mW &:= [\vw_{(1)} ...\ \vw_{(kd_z)}] \in \sR^{kd_z\times kd_z} \\
    \hat{\mW} &:=  [\hat{\vw}_{(1)} ...\ \hat{\vw}_{(kd_z)}] \in \sR^{kd_z\times kd_z}\\
    \vd &:= [d_{(1)} ...\ d_{(kd_z)}] \in \sR^{1 \times kd_z} \, .
\end{align*}
Since~\eqref{eq:all_j} holds for all $1 \leq p \leq kd_z$, we can write
\begin{align*}
    \bfT(\vz^t)^\top \mW = \bfT(\bfv(\vz^t))^\top \hat{\mW} + \vd \, .
\end{align*}
Note that $\mW$ is invertible by the assumption of variability, hence
\begin{align*}
    \bfT(\vz^t)^\top = \bfT(\bfv(\vz^t))^\top \hat{\mW} \mW^{-1} + \vd\mW^{-1} \, .
\end{align*}
Let $\vb := (\vd\mW^{-1})^\top$ and $\mL:= (\hat{\mW} \mW^{-1})^\top$. We can thus rewrite this as
\begin{align}
    \bfT(\vz^t) = \mL \bfT(\bfv(\vz^t)) + \vb \, . \label{eq:lin_equivalence}
\end{align}

\textbf{Invertibility of $\mL$.}
We now show that $\mL$ is invertible. By Lemma~\ref{lemma:charac_minimal}, the fact that the $\bfT_i$ are minimal is equivalent to, for all $i \in \{1, ..., d_z\}$, having elements $\vz_i^{(0)}$, ..., $\vz_i^{(k)}$ in $\gZ$ such that the family of vectors
\begin{align*}
    \bfT_i(\vz_i^{(1)}) - \bfT_i(\vz_i^{(0)}),\ ...\ , \bfT_i(\vz_i^{(k)}) - \bfT_i(\vz_i^{(0)})
\end{align*}
is linearly independent. Define
\begin{align*}
    \vz^{(0)} := [\vz_1^{(0)} \hdots \vz_{d_z}^{(0)}]^\top \in \sR^{d_z}
\end{align*}
For all $i \in \{1, ..., d_z\}$ and all $p\in \{1, ..., k\}$, define the vectors 
\begin{align*}
    \vz^{(p, i)}:= [\vz_1^{(0)} \hdots \vz_{i-1}^{(0)}\ \vz_i^{(p)}\ \vz_{i+1}^{(0)} \hdots \vz_{d_z}^{(0)}]^\top \in \sR^{d_z} \, .
\end{align*}
For a specific $1 \leq p \leq k$ and $i \in \{1, ..., d_z\}$, we can take the following difference based on~\eqref{eq:lin_equivalence}
\begin{align}
    \bfT(\vz^{(p, i)}) - &\bfT(\vz^{(0)}) = \mL [\bfT(\bfv(\vz^{(p, i)})) - \bfT(\bfv(\vz^{(0)}))]\, , \label{eq:difference}
\end{align}
where the left hand side is a vector filled with zeros except for the block corresponding to $\bfT_i(\vz_i^{(p, i)}) - \bfT_i(\vz_i^{(0)})$. Let us define
\begin{align}
    \Delta \bfT^{(i)}  := [\bfT(&\vz^{(1, i)}) - \bfT(\vz^{(0)})\ \dots\ \bfT(\vz^{(k,i)}) - \bfT(\vz^{(0)})] \in \sR^{kd_z \times k} \nonumber\\
    \Delta \hat{\bfT}^{(i)}  := [{\bfT}(\bfv(&\vz^{(1, i)})) - {\bfT}(\bfv(\vz^{(0)}))\ \dots\ {\bfT}(\bfv(\vz^{(k,i)})) - {\bfT}(\bfv(\vz^{(0)}))] \in \sR^{kd_z \times k}\, . \nonumber
\end{align}
Note that the columns of $\Delta \bfT^{(i)}$ are linearly independent and all rows are filled with zeros except for the block of rows $\{(i-1)k + 1, ..., ik\}$. We can thus rewrite~\eqref{eq:difference} in matrix form
\begin{align*}
    \Delta \bfT^{(i)} = \mL \Delta \hat{\bfT}^{(i)}\, .
\end{align*}
We can regroup these equations for every $i$ by doing
\begin{align}
    [\Delta \bfT^{(1)}\ ...\ \Delta \bfT^{(d_z)}] = \mL [\Delta \hat{\bfT}^{(1)}\ ...\ \Delta \hat{\bfT}^{(d_z)}]\, . \label{eq:difference_matrix}
\end{align}
Notice that the newly formed matrix on the left hand side has size $kd_z \times kd_z$ and is block diagonal. Since every block is invertible, the left hand side of~\eqref{eq:difference_matrix} is an invertible matrix, which in turn implies that $\mL$ is invertible. This completes the proof.
\end{proof}

\subsection{Proof of Theorem~\ref{thm:expfam_dis_z}}\label{app:expfam_dis_z}
\begin{lemma}\label{lemma:derivatives_of_exponential}
    Let $\vtheta := (\vf, p, \mG)$ satisfy Assumptions~\ref{ass:diffeomorphism}, \ref{ass:cond_indep}, \ref{ass:graph}, \ref{ass:smooth_trans} \& \ref{ass:expo} and let $q := \log p$. Then
    \begin{align*}
    D^t_z q (\vz^t \mid \vz^{<t}, \va^{<t}) &= \bflambda(\vz^{<t}, \va^{<t})^\top D\vs(\vz^t) + D(\log h) (\vz^t)\\
    H^{t,\tau}_{z,a}q(\vz^t \mid \vz^{<t}, \va^{<t}) &= D\vs(\vz^t)^\top D^\tau_a\bflambda(\vz^{<t}, \va^{<t}) \\
    H^{t,\tau}_{z,z}q(\vz^t \mid \vz^{<t}, \va^{<t}) &= D\vs(\vz^t)^\top D^\tau_z\bflambda(\vz^{<t}, \va^{<t}) \, .
\end{align*}
\end{lemma}
\begin{proof}
    We have
    \begin{align*}
        \log p(\vz^t \mid \vz^{<t}, \va^{<t}) := \log h (\vz^t) + \vs(\vz^t)^\top\bflambda(\vz^{<t}, \va^{<t}) - \psi(\vz^{<t}, \va^{<t}) \\
        \log h (\vz^t) + \bflambda(\vz^{<t}, \va^{<t})^\top\vs(\vz^t) - \psi(\vz^{<t}, \va^{<t}) \, .
    \end{align*}
    We can differentiate the above w.r.t. $\vz^t$ to get
    \begin{align*}
        D^t_z q (\vz^t \mid \vz^{<t}, \va^{<t}) &= \bflambda(\vz^{<t}, \va^{<t})^\top D\vs(\vz^t) + D(\log h) (\vz^t)
    \end{align*}
    Differentiating the above w.r.t. $\vz^\tau$ or $\va^\tau$ yields the desired result.
\end{proof}

\TemporalExp*

\begin{proof}
Recall the equation we derived in Section~\ref{sec:insight_main}:
\begin{align*}
    H^{t, \tau}_{z, z} \hat q(\vz^{t} \mid \vz^{<{t}}, \va^{<{t}}) = D\vv(\vz^t)^\top H^{t,\tau}_{z, z} {q}(\bfv(\vz^{t}) \mid \bfv(\vz^{<{t}}), \va^{<{t}})D\vv(\vz^\tau)\, .
\end{align*}
    Using Lemma~\ref{lemma:derivatives_of_exponential}, we get
\begin{align}
    D\vs(\vz^t)^\top D^\tau_z\hat\bflambda(\vz^{<t}, \va^{<t}) = D\vv(\vz^t)^\top D\vs(\vv(\vz^t))^\top D^\tau_z\bflambda(\vv(\vz^{<t}), \va^{<t})D\vv(\vz^\tau) \, . \label{eq:1432048}
\end{align}    
    Note that Assumption~\ref{ass:graph} requires that $D^\tau_z\bflambda(\vz^{<t}, \va^{<t}) \subseteq \mG^z$ and $D^\tau_z\hat\bflambda(\vz^{<t}, \va^{<t}) \subseteq \hat\mG^z$. Theorem~\ref{thm:linear} implies that there exist an invertible matrix $\mL \in \sR^{d_z \times d_z}$ and a vector $\vb \in \sR^{d_z}$ such that
    \begin{align*}
        \vv(\vz) = \vs^{-1}(\mL\vs(\vz) + \vb) \, .
    \end{align*}
    Taking the derivative of the above w.r.t. $\vz$, we obtain
\begin{align}
    D\bfv(\vz) &= D\bfT^{-1}(\mL\bfT(\vz) + \vb)\mL D\bfT(\vz) \\
    &= D\bfT^{-1}(\bfT(\bfv(\vz)))\mL D\bfT(\vz) \\
    &= D\bfT(\bfv(\vz))^{-1}\mL D\bfT(\vz)\, , \label{eq:56346}
\end{align}
where we used $\vs(\vv(\vz)) = \mL\vs(\vz) + \vb$ to go from the first to the second line and used the inverse function theorem to go from the second to the third line. Plugging \eqref{eq:56346} into \eqref{eq:1432048} yields
\begin{align*}
    & D\vs(\vz^t)^\top D^\tau_z\hat\bflambda(\vz^{<t}, \va^{<t})  \\
    &= D\bfT(\vz^t)^\top\mL^\top D\bfT(\bfv(\vz^t))^{-\top}    D\vs(\vv(\vz^t))^\top D^\tau_z\bflambda(\vv(\vz^{<t}), \va^{<t})D\bfT(\bfv(\vz^\tau))^{-1}\mL D\bfT(\vz^\tau) \\
    &= D\bfT(\vz^t)^\top\mL^\top D^\tau_z\bflambda(\vv(\vz^{<t}), \va^{<t})D\bfT(\bfv(\vz^\tau))^{-1}\mL D\bfT(\vz^\tau) \,,
\end{align*}
which implies
\begin{align}
    D\vs(\vz^t)^\top D^\tau_z\hat\bflambda(\vz^{<t}, \va^{<t}) &= D\bfT(\vz^t)^\top\mL^\top D^\tau_z\bflambda(\vv(\vz^{<t}), \va^{<t})D\bfT(\bfv(\vz^\tau))^{-1}\mL D\bfT(\vz^\tau) \\
    D^\tau_z\hat\bflambda(\vz^{<t}, \va^{<t})D\bfT(\vz^\tau)^{-1} &= \mL^\top D^\tau_z\bflambda(\vv(\vz^{<t}), \va^{<t})D\bfT(\bfv(\vz^\tau))^{-1}\mL\,, \label{eq:05492}
\end{align}
where we right- and left-multiplied by $D\vs(\vz^t)^{-\top}$ and $D\bfT(\vz^\tau)$, respectively. Let us define
\begin{align*}
    \Lambda(\gamma) := D^\tau_z\bflambda(\vv(\vz^{<t}), \va^{<t})D\bfT(\bfv(\vz^\tau))^{-1} \quad 
    \hat\Lambda(\gamma) := D^\tau_z\hat\bflambda(\vv(\vz^{<t}), \va^{<t})D\bfT(\bfv(\vz^\tau))^{-1}\ \, ,\nonumber
\end{align*}
where $\gamma = (t, \tau, \vz^{<t}, \va^{<t})$. Note that because $D\bfT$ is diagonal, we have that $\Lambda(\gamma) \subseteq \mG^z$ and $\hat\Lambda(\gamma) \subseteq \hat\mG^z$. Using this notation, we can rewrite \eqref{eq:05492} as
\begin{align*}
    \underbrace{\hat\Lambda(\gamma)}_{\subseteq \hat\mG^z} = \mL^\top \underbrace{\Lambda(\gamma)}_{\subseteq\mG^z} \mL\, . 
\end{align*}
Thanks to Assumption~\ref{ass:temporal_suff_var_main}, we can apply the same argument as in Theorem~\ref{thm:expfam_dis_z} to show that $\mL = \mC\mP^\top$ where $\mC$ is a matrix that is both $\mG^z$-preserving and $(\mG^z)^\top$-preserving, as desired. 
\end{proof}

\subsection{Connecting to Sufficient Influence Assumptions of~\citet{lachapelle2022disentanglement}}\label{app:connecting_ass}

In this section, we relate the nonparametric sufficient influence assumptions of this work, i.e. Assumptions~\ref{ass:nonparam_suff_var_a}~\&~\ref{ass:nonparam_suff_var_z}, to the analogous assumptions of~\citet{lachapelle2022disentanglement} for exponential families, i.e. Assumptions~\ref{ass:temporal_suff_var_main}~\&~\ref{ass:action_suff_var_main}, the latter of which we recall below.

\begin{assumption}[Sufficient influence of $\va$~\citep{lachapelle2022disentanglement}]\label{ass:action_suff_var_main}
    Assume $k = 1$, i.e. the sufficient statistics $\vs_i$ are one-dimensional. For all $\ell \in \{1, ..., d_a\}$, there exist $\{(\vz_{(r)}, \va_{(r)}, \epsilon_{(r)}, \tau_{(r)})\}_{r=1}^{|{\bf Ch}^\va_\ell|}$ belonging to their respective support such that
    \begin{align}
        \vecspan \left\{\bflambda(\vz_{(r)}, \va_{(r)} + \epsilon_{(r)} \mE^{(\ell, \tau)}) - \bflambda(\vz_{(r)}, \va_{(r)})\right\}_{r=1}^{|{\bf Ch}^\va_\ell|} 
        =\sR^{d_z}_{{\bf Ch}^\va_\ell} \, , \nonumber
    \end{align}
where $\epsilon \in \sR$ and $\mE^{(\ell, \tau)} \in \sR^{d_a \times t}$ is the one-hot matrix with the entry $(\ell, \tau)$ set to one. 
\end{assumption}

The following proposition shows that, when the exponential family holds with $k=1$, we have that (i) for the ``sufficient influence of $\va$" assumptions, the nonparametric and exponential family versions are actually equivalent, and (ii) for the ``sufficient influence of $\vz$'' assumptions, the nonparametric version implies the exponential family version.

\begin{proposition}[Sufficient influence assumptions: nonparametric v.s. exponential] \label{prop:suff_var_nonparam_vs_exp}Let the parameter $\vtheta := (\vf, p, \mG)$ satisfy Assumptions~\ref{ass:diffeomorphism}, \ref{ass:cond_indep}, \ref{ass:graph} \& \ref{ass:expo}. Further, assume that $k=1$ and that $D\vs(\vz) \in \sR^{d_z \times d_z}$ is invertible everywhere. Then,
\begin{align}
    &\text{Sufficient influence of $\va$: Assumption~\ref{ass:nonparam_suff_var_a} (nonparametric)} \iff \text{Assumption~\ref{ass:action_suff_var_main} (exponential family)} \nonumber \\
    &\text{Sufficient influence of $\vz$: Assumption~\ref{ass:nonparam_suff_var_z} (nonparametric)} \implies \text{Assumption~\ref{ass:temporal_suff_var_main} (exponential family)}\nonumber
\end{align}
\end{proposition}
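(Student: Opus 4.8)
The plan is to reduce both statements to elementary facts about how an invertible diagonal matrix acts on the aligned subspaces $\sR^{d_z}_\vb$ and $\sR^{d_z\times d_z}_\mB$. The starting point is Lemma~\ref{lemma:derivatives_of_exponential}, which under Assumption~\ref{ass:expo} with $k=1$ gives $D^t_z q = \bflambda(\vz^{<t},\va^{<t})^\top D\vs(\vz^t) + D(\log h)(\vz^t)$, together with $H^{t,\tau}_{z,a}q = D\vs(\vz^t)^\top D^\tau_a\bflambda$ and $H^{t,\tau}_{z,z}q = D\vs(\vz^t)^\top D^\tau_z\bflambda$. Since each $\vs_i$ depends only on $\vz_i$, the Jacobian $D\vs(\vz)$ is diagonal, and by hypothesis invertible everywhere; in particular $D\vs(\vz)^\top = D\vs(\vz)$. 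I would first record the key observation that multiplying a matrix on the left (respectively right) by an invertible diagonal matrix merely rescales its rows (respectively columns) and hence leaves every support pattern unchanged, so such a multiplication maps each aligned subspace $\sR^{d_z}_\vb$ or $\sR^{d_z\times d_z}_\mB$ bijectively onto itself.

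For the $\va$ case I would compute the partial difference explicitly. The term $D(\log h)(\vz)$ is independent of $\va$ and cancels, so $(\Delta^{\tau,\epsilon}_{a,\ell}D^t_z q(\vz\mid\vz_{(r)},\va_{(r)}))^\top = D\vs(\vz)\,[\bflambda(\vz_{(r)},\va_{(r)}+\epsilon\mE^{(\ell,\tau)}) - \bflambda(\vz_{(r)},\va_{(r)})]$, the bracketed vector being exactly the quantity in Assumption~\ref{ass:action_suff_var_main}. Because $D\vs(\vz)$ is a diagonal isomorphism of $\sR^{d_z}_{\Ch^a_\ell}$ onto itself, the span of the left-hand vectors equals $\sR^{d_z}_{\Ch^a_\ell}$ if and only if the span of the $\bflambda$-differences does. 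The forward implication then follows by evaluating at any single $\vz$ in the full-measure set guaranteed by Assumption~\ref{ass:nonparam_suff_var_a}; the converse follows because, $D\vs(\vz)$ being invertible for \emph{every} $\vz$, the span identity for the $\bflambda$-differences forces the nonparametric span identity at \emph{all} $\vz$, a fortiori almost everywhere. This yields the claimed equivalence.

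For the $\vz$ case I would evaluate $H^{t_{(r)},\tau_{(r)}}_{z,z}q$ at the point prescribed by Assumption~\ref{ass:nonparam_suff_var_z}, where the constraint $\vz=\vz^{\tau_{(r)}}_{(r)}$ makes \emph{both} the outer variable $\vz^t$ and the inner variable $\vz^{\tau_{(r)}}$ equal to the same $\vz$; Lemma~\ref{lemma:derivatives_of_exponential} then gives $H^{t_{(r)},\tau_{(r)}}_{z,z}q(\vz\mid\vz_{(r)},\va_{(r)}) = D\vs(\vz)\,D^{\tau_{(r)}}_z\bflambda(\vz_{(r)},\va_{(r)})$. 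Left-multiplying the span by the single invertible diagonal $D\vs(\vz)^{-1}$ shows $\vecspan\{D^{\tau_{(r)}}_z\bflambda(\vz_{(r)},\va_{(r)})\}_r = \sR^{d_z\times d_z}_{\mG^z}$. Since all inner points coincide, $D\bfT(\vz^{\tau_{(r)}}_{(r)})^{-1}=D\vs(\vz)^{-1}$ is the \emph{same} matrix for every $r$, so it can be pulled out of the span on the right, and right-multiplication by an invertible diagonal again preserves $\sR^{d_z\times d_z}_{\mG^z}$; the same witnesses therefore satisfy Assumption~\ref{ass:temporal_suff_var_main}, giving the implication.

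Finally I would explain why only one direction holds here: Assumption~\ref{ass:temporal_suff_var_main} permits the inner evaluation points $\vz^{\tau_{(r)}}_{(r)}$ to vary with $r$, so the right factors $D\vs(\vz^{\tau_{(r)}}_{(r)})^{-1}$ are generally distinct and cannot be factored out of the span; it is precisely the nonparametric constraint $\vz=\vz^{\tau_{(r)}}_{(r)}$ collapsing them to a common matrix that makes the factoring possible. The main (minor) obstacle throughout is bookkeeping: keeping the transpose conventions straight, using that $D\vs$ is symmetric because it is diagonal, and correctly distinguishing the left- versus right-action of the diagonal factor in the matrix-valued span. Once the diagonal-preservation fact is isolated, no deeper difficulty remains.
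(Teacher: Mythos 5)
Your proof is correct and follows essentially the same route as the paper's: both rest on Lemma~\ref{lemma:derivatives_of_exponential} together with the fact that the diagonal, everywhere-invertible $D\vs(\vz)$ maps the aligned subspaces $\sR^{d_z}_{\Ch^a_\ell}$ and $\sR^{d_z\times d_z}_{\mG^z}$ bijectively onto themselves, with the constraint $\vz=\vz^{\tau_{(r)}}_{(r)}$ collapsing the right factors $D\vs(\vz^{\tau_{(r)}}_{(r)})^{-1}$ to a common matrix in the temporal case. Your handling of the ``almost all $\vz$'' bookkeeping and your closing remark on why the temporal implication is one-directional (the inner evaluation points may vary with $r$ in Assumption~\ref{ass:temporal_suff_var_main}) match the paper's discussion as well.
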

\begin{proof}
    We start by proving the first equivalence for the sufficient influence of $\va$ assumptions. By using Lemma~\ref{lemma:derivatives_of_exponential} we see that
    \begin{align}
        &\ \vecspan \left\{D^{t_{(r)}}_{z} \log p (\vz \mid \vz_{(r)}, \va_{(r)} + \epsilon_{(r)}\mE^{(\ell, \tau_{(r)})}) - D^{t_{(r)}}_{z} \log p (\vz \mid \vz_{(r)}, \va_{(r)})\right\}_{r=1}^{|{\bf Ch}^\va_\ell|} \nonumber\\
        =&\ \vecspan \left\{D\vs(\vz)^\top\bflambda(\vz_{(r)}, \va_{(r)} + \epsilon_{(r)}\mE^{(\ell, \tau_{(r)})}) - D\vs(\vz)^\top\bflambda(\vz_{(r)}, \va_{(r)})\right\}_{r=1}^{|{\bf Ch}^\va_\ell|} \nonumber\\
        =&\ D\vs(\vz)^\top\vecspan \left\{\bflambda(\vz_{(r)}, \va_{(r)} + \epsilon_{(r)}\mE^{(\ell, \tau_{(r)})}) - \bflambda(\vz_{(r)}, \va_{(r)})\right\}_{r=1}^{|{\bf Ch}^\va_\ell|}\, . \label{eq:rhs_4353}
    \end{align}
    We start by showing ``$\impliedby$''. Assumption~\ref{ass:action_suff_var_main} implies that \eqref{eq:rhs_4353} is equal to $D\vs(\vz^t)^\top \sR^{d_z}_{{\bf Ch}^\va_\ell}$ which is equal to $\sR^{d_z}_{{\bf Ch}^\va_\ell}$ since $D\vs(\vz^t)$ is invertible everywhere and is diagonal. To show ``$\implies$'', we can apply the same argument.

    We now show that Assumption~\ref{ass:nonparam_suff_var_z} implies Assumption~\ref{ass:temporal_suff_var_main}. we again use Lemma~\ref{lemma:derivatives_of_exponential} and see that
    \begin{align*}
        \sR^{d_z}_{\mG^z} &= \vecspan \left\{ H^{t_{(r)}, \tau_{(r)}}_{z, z} \log p(\vz \mid \vz_{(r)}, \va_{(r)}) \right\}_{r=1}^{||\mG^z||_0} \\
        &= \vecspan \left\{ D\vs(\vz)^\top D^{\tau_{(r)}}_z\bflambda(\vz_{(r)}, \va_{(r)}) \right\}_{r=1}^{||\mG^z||_0}\\
        &= D\vs(\vz)^\top \vecspan \left\{D^{\tau_{(r)}}_z\bflambda(\vz_{(r)}, \va_{(r)}) D\vs(\vz)^{-1} \right\}_{r=1}^{||\mG^z||_0} D\vs(\vz) \, .
    \end{align*}
    Now recall that, in Assumption~\ref{ass:nonparam_suff_var_z}, we had that $\vz = \vz^{\tau_{(r)}}$ for all $r=1, ..., ||\mG^z||_0$, which allows us to write
    \begin{align*}
        \sR^{d_z}_{\mG^z} &= D\vs(\vz)^\top \vecspan \left\{D^{\tau_{(r)}}_z\bflambda(\vz_{(r)}, \va_{(r)}) D\vs(\vz^{\tau_{(r)}})^{-1} \right\}_{r=1}^{||\mG^z||_0} D\vs(\vz) \, ,
    \end{align*}
    which implies 
    \begin{align*}
        \vecspan \left\{D^{\tau_{(r)}}_z\bflambda(\vz_{(r)}, \va_{(r)}) D\vs(\vz^{\tau_{(r)}})^{-1} \right\}_{r=1}^{||\mG^z||_0} = D\vs(\vz)^{-\top}\sR^{d_z}_{\mG^z}D\vs(\vz)^{-1} = \sR^{d_z}_{\mG^z}\, ,
    \end{align*}
    where the last equality holds because $D\vs(\vz)$ is diagonal and invertible everywhere.
\end{proof}

\section{Experiments}
\subsection{Synthetic Datasets} \label{sec:syn_data}
We now provide a detailed description of the synthetic datasets used in the experiments of Section~\ref{sec:exp}. 

For all experiments, the dimensionality of $\vx^t$ is $d_x = 20$ and the ground-truth $\bff$ is a random neural network with three hidden layers of $20$ units with Leaky-ReLU activations with a negative slope of 0.2. The weight matrices are sampled according to a 0-1 Gaussian distribution and, to ensure that $\bff$ is injective, as assumed in all theorems of this paper, we orthogonalize its columns. Inspired by typical weight initialization in NN~\citep{pmlr-v9-glorot10a}, we rescale the weight matrices by $\sqrt{\frac{2}{1 + 0.2^2}}\sqrt{\frac{2}{d_{in} + d_{out}}}$ . The standard deviation of the Gaussian noise added to $\bff(\vz^t)$ is set to $\sigma = 10^{-2}$ throughout. Since the goal of the experiments is to validate our identifiability results, which assume infinite data, all datasets considered here are very large: 1 million examples.

We now present the different choices of ground-truth ${p(\vz^t \mid \vz^{<t}, \va^{<t})}$ we explored in our experiments. In all cases considered (except for the experiment with $k=2$ of Table~\ref{tab:violation}), it is a Gaussian with covariance $0.0001I$ independent of $(\vz^{<t}, \va^{<t})$ and a mean given by some function $\mu(\vz^{t - 1}, \va^{t - 1})$. Notice that we hence are in the case where $k=1$ with monotonic sufficient statistics, which is not covered by the theory of~\citet{iVAEkhemakhem20a}. Throughout, we set $d_z = 10$ and, unless explicitly specified otherwise, we set $d_a = 10$. In all \textit{Time} datasets, sequences have length $T=2$. In \textit{Action} datasets, the value of $T$ has no consequence since we assume there is no time dependence.

\subsubsection{Datasets Satisfying the Graphical Criterion}
The datasets of this section satisfy the graphical criterion of Section~\ref{sec:graph_crit}. This means our theory predicts complete disentanglement (Definition~\ref{def:disentanglement}). Unless specified otherwise, all datasets satisfy their respective sufficient influence assumptions (Section~\ref{sec:proofs}). These can be checked using Remark~\ref{rem:suff_lin_indep_func} combined with standard facts about independence of the sine and cosine functions.\\

\noindent\textbf{ActionDiag (Figure~\ref{fig:clear_exp}).} In this dataset, $d_a = d_x$ and the connectivity matrix between $\va^{t\shortminus 1}$ and $\vz^t$ is diagonal, which trivially implies that the graphical criterion of Section~\ref{sec:graph_crit} is satisfied. The mean function is given by
\begin{align}
    \mu(\vz^{t\shortminus 1}, \va^{t\shortminus 1}) := \sin(\va^{t\shortminus 1}) \, , \nonumber
\end{align}
where $\sin$ is applied element-wise. Moreover, the components of the action vector $\va^{t\shortminus 1}$ are sampled independently and uniformly between $-2$ and $2$. The same sampling scheme is used for all following datasets. One can check that the sufficient influence assumption (Assumption~\ref{ass:nonparam_suff_var_a_cont}) holds.

\noindent\textbf{ActionNonDiag (Figure~\ref{fig:clear_exp}).} 
We consider a case where the graphical criterion of Section~\ref{sec:graph_crit} is satisfied non-trivially. Let 
\newcommand\bigzero{\makebox(0,0){\text{\huge0}}}
\begin{align}
    \mG^a := \left(
    \begin{array}{ccccc}
    1    &       &         &    & 1 \\ 
    1    &   1   &         &    &   \\ 
         &   1   & \ddots  &    &   \\ 
         &       & \ddots  &  1 &   \\ 
         &       &         &  1 & 1 \\ 
  \end{array}\right) \label{eq:double_diagonal}
\end{align}
be the adjacency matrix between $\va^{t\shortminus 1}$ and $\vz^t$. The $i$th row, denoted by $\mG^\va_i$, corresponds to the parents of $\vz_i^t$ in $\va^{t\shortminus 1}$. Note that it is analogous to the graph shown in Figure~\ref{fig:sparsity}, which satisfies the graphical criterion. The mean function is given by
\begin{align}
    \mu(\vz^{t\shortminus 1}, \va^{t\shortminus 1}) := \begin{bmatrix}
           \mG^\va_{1} \cdot \sin(\frac{3}{\pi}\va^{t\shortminus 1}) \\
           \mG^\va_{2}\cdot \sin(\frac{4}{\pi}\va^{t\shortminus 1} + 1)\\
           \vdots \\
           \mG^\va_{d_z}\cdot  \sin(\frac{d_z + 2}{\pi}\va^{t\shortminus 1} + d_z - 1)
         \end{bmatrix} \, . \label{eq:double_diagonal_mean_func}
\end{align}
One can check that the sufficient influence assumption (Assumption~\ref{ass:nonparam_suff_var_a_cont}) holds, due to the independence of sines with different frequencies.

\noindent\textbf{$\text{ActionNonDiag}_\text{NoSuffInf}$ (Table~\ref{tab:violation}).} This dataset has the same ground truth adjacency matrix as the above dataset~\eqref{eq:double_diagonal}, but a different transition function which does not satisfy the assumption of sufficient influence (Section~\ref{ass:nonparam_suff_var_a_cont}). We sampled a matrix $\mW$ with independent Normal 0-1 entries. The mean function is thus
\begin{align*}
    \mu(\vz^{t\shortminus 1}, \va^{t\shortminus 1}) := (\mG^a \odot \mW)\va^{t\shortminus 1} \, ,
\end{align*}
where $\odot$ is the Hadamard product (a.k.a. element-wise product).

\noindent\textbf{$\text{ActionNonDiag}_{k=2}$ (Table~\ref{tab:violation}).} This dataset has the ``double diagonal'' adjacency matrix of~\eqref{eq:double_diagonal} and the same mean function of~\eqref{eq:double_diagonal_mean_func}, but the variance of $\vz^t$ (we assume diagonal covariance) depends on $\va^{t-1}$ via
\begin{align}
    &\sigma^2(\vz^{t\shortminus 1}, \va^{t\shortminus 1}) := \frac{1}{10d_a} \begin{bmatrix}
           \exp{(\mG^\va_{1} \cdot \cos(\frac{3}{\pi}\va^{t\shortminus 1}))} \\
           \exp(\mG^\va_{2}\cdot \cos(\frac{4}{\pi}\va^{t\shortminus 1} + 1))\\
           \vdots \\
           \exp(\mG^\va_{d_z}\cdot  \cos(\frac{d_z + 2}{\pi}\va^{t\shortminus 1} + d_z - 1))
         \end{bmatrix} \, . \label{eq:triangular_var_func_}
\end{align}

\noindent\textbf{TimeDiag (Figure~\ref{fig:clear_exp}).} In this dataset, each $\vz_i^t$ has only $\vz_i^{t\shortminus 1}$ as parent. This trivially satisfies the graphical criterion of Section~\ref{sec:graph_crit}. The mean function is given by
\begin{align}
    \mu(\vz^{t\shortminus 1}, \va^{t\shortminus 1}) := \vz^{t\shortminus 1} + 0.5 \sin(\vz^{t\shortminus 1}) \, , \nonumber
\end{align}
where the $\sin$ function is applied element-wise. Notice that no auxiliary variables are required. One can check that the sufficient variability assumption (Assumption~\ref{ass:suff_var_expfam}) and sufficient influence assumption (Assumption~\ref{ass:temporal_suff_var_main}) of Theorem~\ref{thm:expfam_dis_z} (exponential family) holds.

\noindent\textbf{TimeNonDiag (Figure~\ref{fig:clear_exp}).} We consider a case where the graphical criterion of Section~\ref{sec:graph_crit} is satisfied non-trivially. Let
\begin{align}
    \mG^z := \left(
    \begin{array}{ccccc}
    1    &        &     &     &   \\ 
    1    &   1    &     &     &  \\ 
    \vdots    &       &   \ddots &     &  \\ 
    1    &      &    &     1&  \\ 
    1    &   1   &  \hdots  & 1 & 1 \\ 
  \end{array}\right) \label{eq:triangular_adj}
\end{align}
be the adjacency matrix between $\vz^t$ and $\vz^{t\shortminus 1}$. The $i$th row of $\mG^z$, denoted by $\mG^z_i$, corresponds to the parents of $\vz_i^t$. Notice that this connectivity matrix has no 2-cycles and all self-loops are present. Thus, by Proposition~\ref{prop:2_cycles}, it satisfies the graphical criterion of Section~\ref{sec:graph_crit}. The mean function in this case is given by
\begin{align}
    &\mu(\vz^{t\shortminus 1}, \va^{t\shortminus 1}) := \vz^{t\shortminus 1} + 0.5 \begin{bmatrix}
           \mG^\vz_{1} \cdot \sin(\frac{3}{\pi}\vz^{t\shortminus 1}) \\
           \mG^\vz_{2}\cdot \sin(\frac{4}{\pi}\vz^{t\shortminus 1} + 1)\\
           \vdots \\
           \mG^\vz_{d_z}\cdot  \sin(\frac{d_z + 2}{\pi}\vz^{t\shortminus 1} + d_z - 1)
         \end{bmatrix} \, , \label{eq:triangular_mean_func}
\end{align}
which is analogous to \eqref{eq:double_diagonal_mean_func}. One can verify that this transition model satisfies the sufficient variability assumption (Assumption~\ref{ass:suff_var_expfam}) and sufficient influence assumption (Assumption~\ref{ass:suff_var_expfam}) of Theorem~\ref{thm:expfam_dis_z} (exponential family) holds.

\noindent\textbf{$\text{TimeNonDiag}_\text{NoSuffInf}$ (Table~\ref{tab:violation}).}
This dataset has the same ground truth adjacency matrix as in~\eqref{eq:triangular_adj}, but a different transition function that does not satisfy the assumption of sufficient influence. We sampled a transition matrix $W$ with independent Normal 0-1 entries. The transition function is thus 
\begin{align*}
    &\mu(\vz^{t\shortminus 1}, \va^{t\shortminus 1}) := \vz^{t\shortminus 1} + 0.5 (\mG^z \odot W)\vz^{t-1} \, .
\end{align*}

\noindent\textbf{$\text{TimeNonDiag}_{k=2}$ (Table~\ref{tab:violation}).} This dataset has the lower triangular adjacency matrix of~\eqref{eq:triangular_adj} and the same mean function of~\eqref{eq:triangular_mean_func}, but the variance of $\vz^t$ (we assume diagonal covariance) depends on $\vz^{t-1}$ via
\begin{align}
    &\sigma^2(\vz^{t\shortminus 1}, \va^{t\shortminus 1}) := \frac{1}{10d_z} \begin{bmatrix}
           \exp{(\mG^\vz_{1} \cdot \cos(\frac{3}{\pi}\vz^{t\shortminus 1}))} \\
           \exp(\mG^\vz_{2}\cdot \cos(\frac{4}{\pi}\vz^{t\shortminus 1} + 1))\\
           \vdots \\
           \exp(\mG^\vz_{d_z}\cdot  \cos(\frac{d_z + 2}{\pi}\vz^{t\shortminus 1} + d_z - 1))
         \end{bmatrix} \, . \label{eq:triangular_var_func}
\end{align}

\subsubsection{Datasets that Violate the Graphical Criterion}
The transition mechanisms for the action and temporal datasets of Figure~\ref{fig:jmlr_exp} and Table~\ref{tab:random_graphs} are \eqref{eq:double_diagonal_mean_func} and \eqref{eq:triangular_mean_func}, respectively, except for the graphs which are different. 

\noindent\textbf{ActionBlockDiag and ActionBlockNonDiag (Figure~\ref{fig:jmlr_exp}).} The left graph corresponds to ActionBlockDiag while the right one corresponds to ActionBlockNonDiag.

\begin{align}
    \mG_{(1)}^a := \begin{bmatrix}
    1 &   &   &   &  \\
    1 &   &   &   &  \\
      & 1 &   &   &  \\
      & 1 &   &   &  \\
      &   & 1 &   &  \\
      &   & 1 &   &  \\
      &   &   & 1 &  \\
      &   &   & 1 &  \\
      &   &   &   & 1\\
      &   &   &   & 1
    \end{bmatrix}
    \ \ 
    \mG_{(2)}^a := \begin{bmatrix}
    1 &   &   &   &  \\
    1 &   &   &   &  \\
      & 1 &   &   &  \\
      & 1 &   &   &  \\
      &   & 1 &   &  \\
      &   & 1 &   &  \\
      &   &   & 1 &  \\
      &   &   & 1 &  \\
    1 &   &   &   & 1\\
    1 &   &   &   & 1
    \end{bmatrix} \nonumber
\end{align}

\noindent\textbf{TimeBlockDiag and TimeBlockNonDiag (Figure~\ref{fig:jmlr_exp}).} The left graph corresponds to TimeBlockDiag while the right one corresponds to TimeBlockNonDiag.
\begin{align}
    \mG_{(1)}^z := \begin{bmatrix}
    1 & 1 &   &   &   &   &   &   &   &   \\
    1 & 1 &   &   &   &   &   &   &   &   \\
      &   & 1 & 1 &   &   &   &   &   &   \\
      &   & 1 & 1 &   &   &   &   &   &   \\
      &   &   &   & 1 & 1 &   &   &   &   \\
      &   &   &   & 1 & 1 &   &   &   &   \\
      &   &   &   &   &   & 1 & 1 &   &   \\
      &   &   &   &   &   & 1 & 1 &   &   \\
      &   &   &   &   &   &   &   & 1 & 1 \\
      &   &   &   &   &   &   &   & 1 & 1
    \end{bmatrix}
    \ \ \ 
    \mG_{(2)}^z := \begin{bmatrix}
    1 & 1 &   &   &   &   &   &   & 1 & 1 \\
    1 & 1 &   &   &   &   &   &   & 1 & 1 \\
      &   & 1 & 1 &   &   &   &   &   &   \\
      &   & 1 & 1 &   &   &   &   &   &   \\
      &   &   &   & 1 & 1 &   &   &   &   \\
      &   &   &   & 1 & 1 &   &   &   &   \\
      &   &   &   &   &   & 1 & 1 &   &   \\
      &   &   &   &   &   & 1 & 1 &   &   \\
    1 & 1 &   &   & 1 & 1 &   &   & 1 & 1 \\
    1 & 1 &   &   & 1 & 1 &   &   & 1 & 1
    \end{bmatrix} \nonumber
\end{align}

\noindent\textbf{ActionRandomGraphs and TimeRandomGraphs (Table~\ref{tab:random_graphs}).} The transition mechanisms are the same as in the ActionNonDiag and TimeNonDiag datasets, i.e. they are given by \eqref{eq:double_diagonal_mean_func} \& \eqref{eq:triangular_mean_func}, respectively. However, the graphs are sampled randomly, with various levels of sparsity. For the ActionRandomGraphs dataset, we have $\mG^a_{i,j} \sim \text{Ber}(p)$ and independent. For the TimeRandomGraphs datasets, it is the same except for the diagonal elements, which are forced to be active, i.e. $\mG^z_{i,i} = 1$.

\subsection{Implementation Details of our Constrained VAE Method} \label{sec:ours_details}
All details of our implementation match those of~\citet{lachapelle2022disentanglement} (except for the constrained optimization introduced in Section~\ref{sec:estimation}). 

\textbf{Learned mechanisms.} Every coordinate $\vz_i$ of the latent vector has its own mechanism $\hat{p}(\vz_i^t \mid \vz^{<t}, \va^{<t})$ that is Gaussian with mean outputted by $\hat{\vmu}_i(\vz^{t-1}, \va^{t-1})$ (a multilayer perceptron with 5 layers of 512 units) and a learned variance which does not depend on the previous time steps. For learning, we use the typical parameterization of the Gaussian distribution with $\mu$ and $\sigma^2$ and not its exponential family parameterization. Throughout, the dimensionality of $\vz^t$ in the learned model always matches the dimensionality of the ground-truth (same for the baselines). Learning the dimensionality of $\vz^t$ is left for future work.

\textbf{Prior of $\vz^1$ in time-sparsity experiments.} In \textit{time-sparsity} experiments, the prior of the first latent $\hat{p}(\vz^1)$ (when $t=1$) is modeled separately as a Gaussian with learned mean and learned diagonal covariance. Note that this learned covariance at time $t=1$ is different from the subsequent learned conditional covariance at time $t>1$.


\textbf{Encoder/Decoder.} In all experiments, including baselines, both the encoder and the decoder is modeled by a neural network with 6 fully connected hidden layers of 512 units with LeakyReLU activation with negative slope $0.2$. For all VAE-based methods, the encoder outputs the mean and a diagonal covariance. Moreover, $p(\vx|\vz)$ has a \textit{learned} isotropic covariance $\sigma^2 I$. Note that $\sigma^2 I$ corresponds to the covariance of the independent noise $\vn^t$ in the equation $\vx^t = \bff(\vz^t) + \vn^t$.

\subsection{Baselines} \label{sec:baselines}
In synthetic experiments of Sec.~\ref{sec:exp}, all methods used a minibatch size of 1024 and the same encoder and decoder architecture: A MLP with 6 layers of 512 units with LeakyReLU activations (negative slope of 0.2). We manually tuned the learning rate of each method to ensure proper convergence. For VAE-based methods, i.e. TCVAE, SlowVAE and iVAE, we are always choosing $p(x|z)$ Gaussian with a covariance $\sigma^2 I$ and learn $\sigma^2$.

\textbf{$\beta$-TCVAE.} We used the implementation provided in the original paper by~\cite{tcvae}, which is available at \url{https://github.com/rtqichen/beta-tcvae}. We used a learning rate of 1e-4.

\textbf{iVAE.} We used the implementation available at \url{https://github.com/ilkhem/icebeem} from~\cite{iVAEkhemakhem20a}. In it, the mean of the prior $p(z|a)$ is fixed to zero, while its diagonal covariance is allowed to depend on $a$ through an MLP. We change this to allow the mean to also depend on $a$ through the neural network (with 5 layers and width 512). We also lower bounded its variance as well as the variance of $q(z\mid x, a)$ to improve the stability of learning. In the original implementation, the covariance of $p(x | z)$ was not learned. We found that learning it (analogously to what we do in our method) improved performance. We used a learning rate of 1e-4.

\textbf{SlowVAE.} We used the implementation provided in \url{https://github.com/bethgelab/slow_disentanglement}~\citep{slowVAE}. Like for other VAE-based methods, we modeled $p(x|z)$ as a Gaussian with covariance $\sigma^2 I$ and learned $\sigma^2$. 

\textbf{PCL.} We used the implementation provided here: \url{https://github.com/bethgelab/slow_disentanglement/tree/baselines}. PCL~\citep{PCL17} stands for ``permutation contrastive learning'' and works as follows: Given sequential data $\{\vx^t\}_{t=1}^T$, PCL trains a regression function $r((x',x))$ to discriminate between pairs of adjacent observations (positive pairs) and randomly matched pairs (negative pairs). The regression function has the form
\begin{align*}
    r((x, x')) = \sum_{i=1}^{d_z} B_i(h_i(x), h_i(x'))\, ,
\end{align*}
where $h: \sR^{d_x} \rightarrow \sR^{d_z}$ is the encoder and $B_i:\sR^2 \rightarrow \sR$ are learned functions. In our implementation, the functions $B_i$ are fully connected neural networks with 5 layers and 512 hidden units. We experimented with the less expressive function suggested in the original work, but found that the extra capacity improved performance across all datasets we considered.

\subsection{Unsupervised Hyperparameter Selection}
\label{sec:udr}

\begin{figure}
    \centering
    \includegraphics[width=\linewidth]{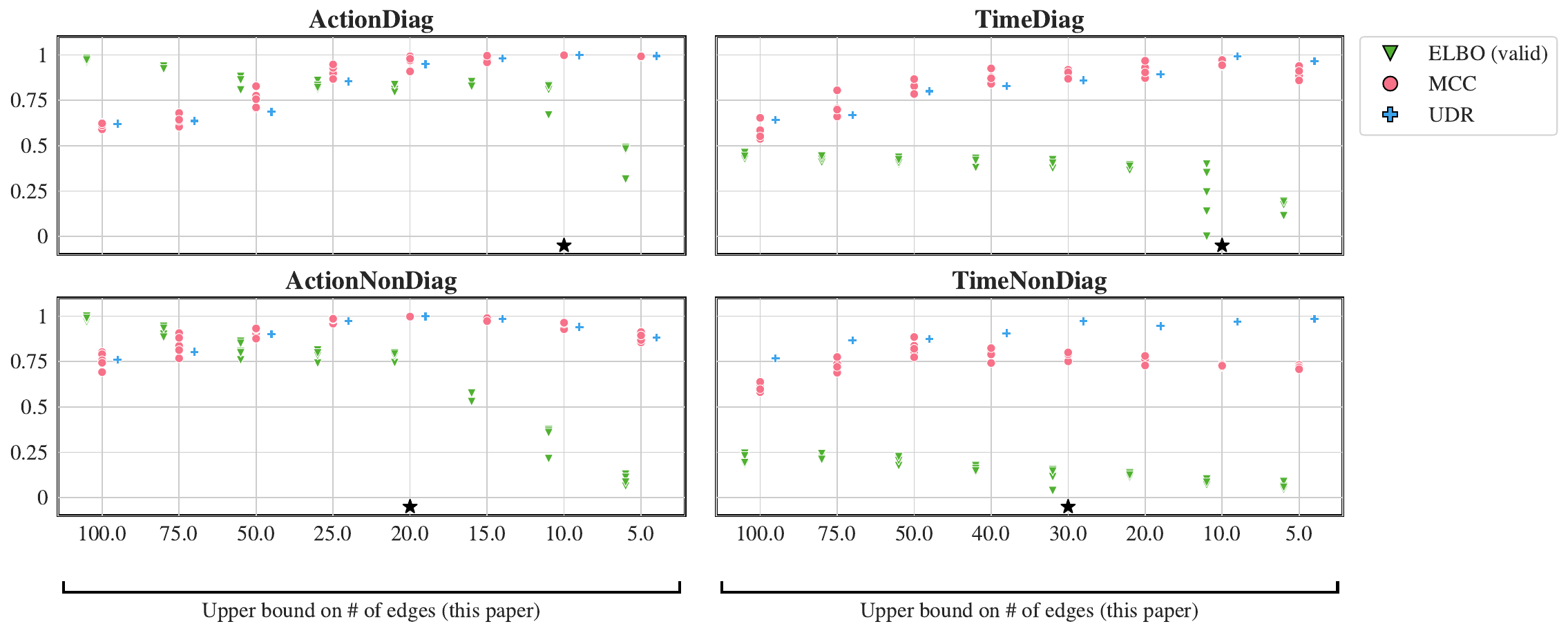}
    \caption{Investigating the link between goodness of fit (ELBO), disentanglement (MCC) and UDR. The ELBO is normalized so that it remains between 0 and 1.}
    \label{fig:elbo_mcc_udr}
\end{figure}

In practice, MCC cannot be measured since the ground-truth latent variables are not observed. Unlike in standard machine learning settings, hyperparameter selection for disentanglement cannot be performed simply by evaluating goodness of fit on a validation set and selecting the highest scoring model, since there is usually a trade-off between goodness of fit and disentanglement~\citep[Sec. 5.4]{pmlr-v97-locatello19a}. To avoid this problem, \cite{Duan2020UDR} introduced \textit{an unsupervised disentanglement ranking} (UDR), which, for all combinations of hyperparameters, measures how consistent are different random initializations of the algorithm. The authors argue that hyperparameters yielding disentangled representation typically yields consistent representations. In our experiments, the consistency of a given hyperparameter combination is measured as follows: for every pair of models, we compute the MCC between their representations. Then, we report the median of all pairwise MCC. This gives a UDR score for every hyperparameter values considered. Figure~\ref{fig:elbo_mcc_udr} reports the ELBO (normalized between zero and one), the MCC and the UDR score for the experiments of Figure~\ref{fig:clear_exp}. We can visualize the trade-off between ELBO and MCC. That being said, MCC and UDR correlate nicely except for the TimeNonDiag dataset, in which this correlation breaks for stronger regularization. We noticed that these specific runs correspond to excessively sparse graph, with fewer than 10 edges (out of 100 possible edges). The black star indicates the hyperparameter selected by UDR when excluding coefficient values which yields graphs with less than 10 edges (on average).

\textbf{Baselines.} Two of the baselines considered had hyperparameters to tune, SlowVAE~\citep{slowVAE} and TCVAE~\citep{tcvae}. For SlowVAE, we did a grid search on the following values, $\gamma \in \{ 1.0, 2.0, 4.0, 8.0, 16.0\}$ and $\alpha \in \{ 1, 3, 6, 10\}$. For TCVAE, we explored $\beta \in \{ 1, 2, 3, 4, 5\}$ but the optimal value in terms of disentanglement was almost always 1. Values of $\beta$ larger than 5 led to instabilities during training. The hyperparameters were selected using UDR, as described in the paragraph above. 

\section{Miscellaneous}
\subsection{On the Invertibility of the Mixing Function}\label{sec:diffeo_f}
Throughout this work as well as many others~\citep{TCL2016,PCL17, HyvarinenST19, iVAEkhemakhem20a, pmlr-v119-locatello20a, slowVAE}, it is assumed that the mixing function mapping the latent factors to the observation is a diffeomorphism onto its image. In this section, we briefly discuss the practical implications of this assumption.

Recall that a diffeomorphism is a differentiable bijective function with a differentiable inverse. We start by addressing the bijective part of the assumption. To understand it, we consider a plausible situation where the mapping $\bff$ is not invertible. Consider the minimal example of Fig.~\ref{fig:working_example} consisting of a tree, a robot and a ball. Assume that the ball can be hidden behind either the tree or the robot. Then, the mixing function $\bff$ is not invertible because, given only the image, it is impossible to know whether the ball is behind the tree or the robot. Thus, this situation is not covered by our theory. Intuitively, one could infer, at least approximately, where the ball is hidden based on previous time frames. Allowing for this form of occlusion is left as future work. See also \cite{mansouri2022objectcentric} for further discussion about how one can relax this assumption.

We believe the differentiable part of this assumption is only a technicality that could probably be relaxed to being piecewise differentiable. Our experiments were performed with data generated with a piecewise linear $\bff$, which is not differentiable only on a set of (Lebesgue) measure zero, but this was not an issue in practice.

\subsection{Contrasting with the Assumptions of \cite{iVAEkhemakhem20a} \& \cite{yao2022iclr}}\label{sec:implausible_ivae}

In this section, we discuss two identifiability results previously proposed in the literature that do not leverage sparsity~\citep{iVAEkhemakhem20a, yao2022neurips}. We show that these results do not apply to the simple homoscedastic Gaussian latent models of the form $p(\vz^t_i \mid \vz^{t-1}) = \mathcal{N}(\vz_i^t \mid \vmu_i(\vz^{t-1}), \sigma_i^2)$, contrary to our theory, as we saw in Examples~\ref{ex:a_target_one_z_cont_a}, \ref{ex:multi_target_a_cont_a} and \ref{ex:multinode_linear_gauss}. We will see that in the context of a Gaussian latent model, both results require the variance to vary sufficiently strongly. We believe that such a requirement is not well suited for nearly deterministic environments such as the one depicted in Figure~\ref{fig:working_example}.

\paragraph{\citet{iVAEkhemakhem20a}.} The most significant distinction between the theory of~\cite{iVAEkhemakhem20a} (iVAE) and ours is how identifiability up to permutation is obtained: Theorems~2~\&~3 from iVAE shows that if the assumptions of their Theorem~1 (which is essentially Theorem~\ref{thm:linear}) are satisfied and $\bfT_i$ has dimension $k>1$ \textbf{or} is non-monotonic, then the model is not just identifiable up to linear transformation but up to permutations (and rescalings). In contrast, our theory covers the case where $k=1$ and $\bfT_i$ is monotonic, like in the homoscedastic Gaussian case. Interestingly, \citet{iVAEkhemakhem20a} mentioned this specific case as a counterexample to their theory in their Proposition~3. 
The extra power of our theory comes from the extra \textit{structure} in the dependencies of the latent factors coupled with sparsity regularization. We note that, assuming the latent factors are Gaussian, the variability assumption of Theorem~\ref{thm:linear} combined with $k>1$ requires the variance to vary sufficiently, which is implausible in the nearly deterministic environment of Figure~\ref{fig:working_example}.


\paragraph{\citet{yao2022neurips}.} This work (Theorem 1) requires that, for each value of $\vz^t$, the $2d_z$ functions 
$$\frac{\partial^2}{\partial \vz^t_i\partial \vz^{t-1}}\log p(\vz_i^t \mid \vz^{t-1}) \text{ and } \frac{\partial^3}{(\partial \vz^t_i)^2\partial \vz^{t-1}}\log p(\vz_i^t \mid \vz^{t-1})\ \text{for }i=1 \dots d_z\,,$$
seen as functions from $\sR^{d_z}$ to $\sR^{d_z}$ are linearly independent. Indeed, if $p(\vz^t_i \mid \vz^{t-1}) = \mathcal{N}(\vz_i^t \mid \vmu_i(\vz^{t-1}), \sigma_i^2)$, one can easily derive that \
\begin{align*}
    \frac{\partial }{\partial \vz^t_i} \log p(\vz_i^t \mid \vz_i^{t-1}) &= - (\vz_i - \vmu_i(\vz^{t-1})) / \sigma_i^2 \\
    \frac{\partial^2 }{(\partial \vz^t_i)^2} \log p(\vz_i^t \mid \vz_i^{t-1}) &= - 1 / \sigma_i^2 \\
    \frac{\partial^2 }{(\partial \vz^t_i)^2\partial \vz^{t-1}} \log p(\vz_i^t \mid \vz_i^{t-1}) &= \bm 0 \,,
\end{align*}
which shows that the assumption of \citet[Theorem 1]{yao2022neurips} does not hold for homoscedastic Gaussian latent models. We further notice that, had the variance $\sigma^2_i$ depend on $\vz^{t-1}$, the identifiability result of \cite{yao2022iclr} could have applied.

\subsection{Derivation of the ELBO}\label{app:elbo}
In this section, we derive the evidence lower bound presented in Section~\ref{sec:estimation}.
\begin{align}
    \log &\ p(\vx^{\leq T} \mid \va^{<T}) =  \nonumber\\
    &\sE_{q(\vz^{\leq T} \mid \vx^{\leq T}, \va^{< T})}\left[\log \frac{q(\vz^{\leq T} \mid \vx^{\leq T}, \va^{<T})}{p(\vz^{\leq T} \mid \vx^{\leq T}, \va^{<T})} \right. \label{eq:kl_elbo} \\
    & \ \ \ \ \ \ \ \ \ \ \ \ \ \ \ \ \ \ \ \ \ \ \ \ \ \ \ \ \ \ \left.+ \log \frac{p(\vz^{\leq T}, \vx^{\leq T} \mid \va^{<T})}{q(\vz^{\leq T} \mid \vx^{\leq T}, \va^{<T})}\right] \nonumber \\
    &\geq \sE_{q(\vz^{\leq T} \mid \vx^{\leq T}, \va^{< T})}\left[\log \frac{p(\vz^{\leq T}, \vx^{\leq T} \mid \va^{<T})}{q(\vz^{\leq T} \mid \vx^{\leq T}, \va^{<T})}\right] \nonumber \\
    &= \sE_{q(\vz^{\leq T} \mid \vx^{\leq T}, \va^{<T})}\left[\log p(\vx^{\leq T} \mid \vz^{\leq T}, \va^{<T})\right]  \label{eq:rec_term}\\
    &\ \ \ \ \ - KL(q(\vz^{\leq T} \mid \vx^{\leq T}, \va^{<T}) || p(\vz^{\leq T} \mid \va^{<T})) \label{eq:kl_other}
\end{align}
where the inequality holds because the term at~\eqref{eq:kl_elbo} is a Kullback-Leibler divergence, which is greater or equal to 0. Notice that
\begin{align}
    p(\vx^{\leq T} \mid \vz^{\leq T}, \va^{<T}) &= p(\vx^{\leq T} \mid \vz^{\leq T}) =\prod_{t=1}^T p(\vx^{t} \mid \vz^{t}) \, . \label{eq:decoder_app}
\end{align}

Recall that we are considering a variational posterior of the following form:
\begin{align}
    q(\vz^{\leq T} \mid \vx^{\leq T}, \va^{<T}) := \prod_{t=1}^{T}q(\vz^t \mid \vx^t) \, . \label{eq:approx_post_app}
\end{align}

Equations~\eqref{eq:decoder_app}~\&~\eqref{eq:approx_post_app} allow us to rewrite the term in~\eqref{eq:rec_term} as
\begin{align*}
   \sum_{t=1}^T\mathop{\sE}_{\vz^t \sim q(\cdot| \vx^t)} [\log p(\vx^t \mid \vz^t) ]
\end{align*}

Notice further that
\begin{align}
    p(\vz^{\leq T} \mid \va^{<T}) &= \prod_{t=1}^T p(\vz^{t} \mid \vz^{<t}, \va^{<t})\, .\label{eq:prior}
\end{align}

Using~\eqref{eq:approx_post_app}~\&~\eqref{eq:prior}, the KL term~\eqref{eq:kl_other} can be broken down as a sum of KL as:
\begin{align*}
    \sum_{t=1}^T\mathop{\sE}_{\vz^{<t} \sim q(\cdot \mid \vx^{<t})} KL(q(\vz^t \mid \vx^t) || p(\vz^t \mid \vz^{<t}, \va^{<t}))
\end{align*}

Putting all together yields the desired ELBO:
\begin{align}
    &\log p(\vx^{\leq T}|\va^{<T}) \geq \sum_{t=1}^T\mathop{\sE}_{\vz^t \sim q(\cdot| \vx^t)} [\log p(\vx^t \mid \vz^t) ] \label{eq:elbo_app}\\ 
    \ \ \ &- \mathop{\sE}_{\vz^{<t} \sim q(\cdot \mid \vx^{<t})} KL(q(\vz^t \mid \vx^t) || p(\vz^t \mid \vz^{<t}, \va^{<t})) \, .\nonumber 
\end{align}

\section{Author Contributions}
\subsection{Contributions to the Extended Version}
\textbf{Sébastien Lachapelle} developed the idea, the theory and proofs behind mechanism sparsity regularization for disentanglement, wrote the crux of the paper, developed the regularized VAE-based method, and performed most of the experiments. \textbf{R\'emi Le Priol} provided valuable feedback on the clarity of the manuscript. \textbf{Simon Lacoste-Julien} helped with the overall presentation of the paper, clarified the conceptual framework and the motivation, and provided supervision.

\subsection{Contributions to the CLeaR Version~\citep{lachapelle2022disentanglement}}
\textbf{Sébastien Lachapelle} developed the idea, the theory and proofs behind mechanism sparsity regularization for disentanglement, wrote the first draft of the paper, and designed and implemented the regularized VAE-based method. \textbf{Pau Rodr\'iguez L\'opez} ran all experiments appearing in the paper, produced associated figures, and ran experiments with image data that are still work in progress. \textbf{Yash Sharma} contributed to the research process, the experimental design in particular, implemented and ran experiments on image data that did not make it in the final version, and contributed to the writing and the literature review. \textbf{Katie Everett} implemented and ran experiments on image data that did not make it in the final version and contributed to the writing and figures. \textbf{R\'emi Le Priol} reviewed the proofs of main theorems, simplified some arguments and the general presentation of the proof, and contributed to the writing and figures. \textbf{Alexandre Lacoste} produced image datasets that did not make it into the final version and provided supervision. \textbf{Simon Lacoste-Julien} helped with the overall presentation of the paper, clarified the conceptual framework and the motivation, and provided supervision.


\vskip 0.2in
\bibliography{biblio_thesis}

\end{document}